%% file: main_arxiv.tex
\documentclass[11pt]{article}
\usepackage{xcolor} 
\input{preamble}

\usepackage{hyperref}
\usepackage{url}
\usepackage[T1]{fontenc}

\usepackage{natbib}
\usepackage[margin=0.8in]{geometry}

\title{Overfitting of Spectral Gradient Descent: How Matrix Geometry shapes Generalization and Implicit Bias}

\author{
  Guillaume Braun \thanks{Equal contribution.}\\
  RIKEN AIP
  \and
  Ichiro Hashimoto\footnotemark[1]\\
  University of Toronto
  \and
  Masaaki Imaizumi\\
  University of Tokyo, RIKEN AIP, Kyoto University
}

\date{}

\begin{document}

\maketitle
\begin{abstract}
We study the generalization of spectral gradient descent (SpecGD) in overparameterized matrix classification with corrupted labels. Each input combines a shared low-rank signal with a rank-one sample-specific perturbation, referred to as a shortcut, that enables memorization but does not generalize. We contrast collapsed shortcuts, which share a singular direction, with dispersed shortcuts, which occupy distinct singular directions. Changing only this geometry can reverse the relative generalization of GD and SpecGD: collapsed shortcuts can favor SpecGD, while dispersed shortcuts can favor GD. In the dispersed regime, exact shortcut orthogonality eliminates the signal from the late-stage SpecGD direction, while vanishing random correlations collectively generate a small but generalization-relevant signal through a second-order effect. To identify the direction selected by SpecGD, which the spectral max-margin problem alone does not determine, we combine a refined analysis of its dual with the exponentiated-gradient dynamics of normalized loss weights. Finally, we show that a single SpecGD step can already interpolate and generalize well, while continued training converges to a direction with substantially worse generalization.
\end{abstract}

\input{intro}

\input{framework}

\input{results}

\input{outline}

\section{Discussion}
Our results show that generalization is shaped jointly by shortcut geometry and optimizer geometry: reorganizing the same memorization features across singular directions can reverse the relative performance of GD and SpecGD. Figure~\ref{fig:klambda} illustrates this transition continuously as the shortcut dimension \(K\) and strength \(\lambda\) vary, while additional experiments in Appendix~\ref{app:xp} explore the effects of sample size, signal rank, and training time. In the dispersed setting, we further show that vanishing random correlations can qualitatively change the SpecGD limit relative to exact orthogonality. SpecGD may also interpolate and generalize after a single step while converging to a substantially worse late-stage classifier, making early stopping beneficial. On the technical side, our second-order dual analysis and dynamical selection argument may extend to other steepest-descent methods with analogous implicit-bias questions, such as SignGD. Natural extensions include higher-rank shortcuts, practical Muon variants, and nonlinear models.

\vspace{-2mm}

\begin{figure}[!htbp]
    \centering
    \includegraphics[width=\textwidth]{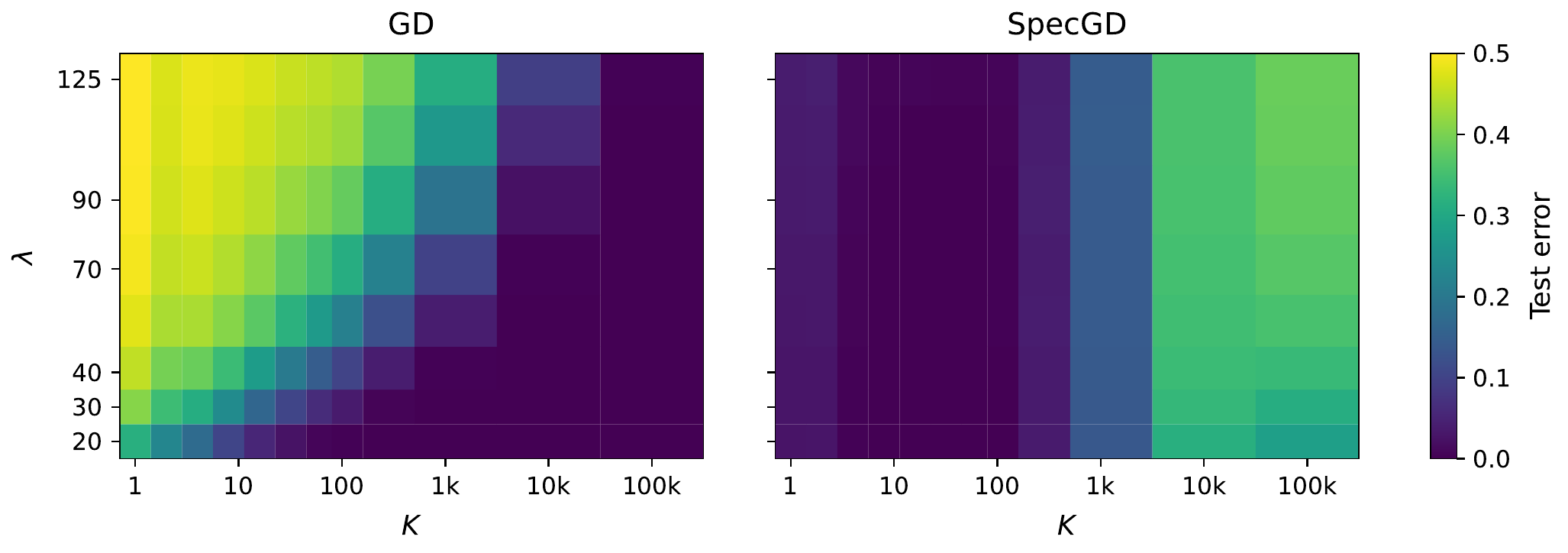}
    \vspace{-7mm}
    \caption{
Test error vs. shortcut dimension $K$ and strength $\lambda$ for GD and SpecGD. Parameters: $n=100$, $r=1$, $D=10^5$, $\rho=0.45$, $||S||_F=||S||_1=1$, $T=10^4$, $\eta=10^{-2}$.
}
    \label{fig:klambda}
\end{figure}

\section*{Acknowledgements} 

\subsection*{AI use statement}

We used generative AI tools to assist with mathematical derivations and proof checking, numerical code development and debugging, manuscript editing, and exploratory discussions. All AI-assisted outputs, including mathematical claims and experiments, were independently verified by the authors, who take full responsibility for the final content of the paper.

\newpage
\bibliographystyle{abbrvnat}
\bibliography{references}

\newpage

\appendix

\input{appendix}

\end{document}

%% file: preamble.tex
\usepackage[utf8]{inputenc}
\usepackage[english]{babel}
\usepackage{amsmath,amssymb,epsfig,amsthm}
\usepackage{graphicx}
\usepackage{comment}
\usepackage{array}
\usepackage{algorithm,algpseudocode}
\usepackage{enumerate}
\usepackage{enumitem}
\usepackage{algpseudocode}
\usepackage{xurl}
\usepackage{xargs}     
\usepackage{subcaption}
\usepackage{mwe}
\usepackage{booktabs}
\usepackage{multirow}

\usepackage{appendix}
\usepackage{hyperref}
\usepackage{textalpha}
\usepackage{thm-restate}

\newcommandx{\unsure}[2][1=]{\todo[linecolor=red,backgroundcolor=red!25,bordercolor=red,#1]{#2}}
\newcommandx{\change}[2][1=]{\todo[linecolor=blue,backgroundcolor=blue!25,bordercolor=blue,#1]{#2}}
\newcommandx{\info}[2][1=]{\todo[linecolor=OliveGreen,backgroundcolor=OliveGreen!25,bordercolor=OliveGreen,#1]{#2}}
\newcommandx{\improvement}[2][1=]{\todo[linecolor=Plum,backgroundcolor=Plum!25,bordercolor=Plum,#1]{#2}}
\newcommandx{\thiswillnotshow}[2][1=]{\todo[disable,#1]{#2}}

\allowdisplaybreaks[1]

\newtheorem{theorem}{Theorem}

\newtheorem{assumption}{Assumption}

\newtheorem{corollary}{Corollary}

\newtheorem{definition}{Definition}

\newtheorem{lemma}{Lemma}

\newtheorem{proposition}{Proposition}
\newtheorem{remark}{Remark}

\numberwithin{equation}{section}

\DeclareMathOperator{\diag}{diag}

\DeclareMathOperator{\spn}{span}

\DeclareMathOperator{\pol}{Polar}

\newcommand{\calD}{\ensuremath{\mathcal{D}}}

\newcommand{\calR}{\ensuremath{\mathcal{R}}}

\newcommand{\calU}{\ensuremath{\mathcal{U}}}

\newcommand{\calV}{\ensuremath{\mathcal{V}}}

\newcommand{\prob}{\ensuremath{\mathbb{P}}}

\newcommand{\E}{\ensuremath{\mathbf{E}}}
\newcommand{\Unif}{\operatorname{Unif}}

\definecolor{asparagus}{rgb}{0.53, 0.66, 0.42}
\definecolor{upforestgreen}{rgb}{0.0, 0.27, 0.13}
\definecolor{tealgreen}{rgb}{0.0, 0.51, 0.5}

\newcommand{\R}{\ensuremath{\mathbb{R}}}

\newcommand{\op}{\mathrm{op}}
\newcommand{\sgn}{\operatorname{sgn}}
\newcommand{\KL}{D_{\mathrm{KL}}}
\newcommand{\T}{\mathcal T}
\newcommand{\D}{\Delta}
\newcommand{\cG}{\mathcal G}
\newcommand{\cH}{\mathcal H}
\newcommand{\cQ}{\mathcal Q}

\newcommand{\ip}[2]{\left\langle #1,#2\right\rangle_F}

\newcommand{\dd}{\mathrm d}

%% file: intro.tex
\section{Introduction}\label{sec:intro}

The remarkable empirical success of Muon~\citep{jordan2024muon,liu2025muonscalablellmtraining,ai2025practicalefficiencymuonpretraining} has made it one of the most prominent alternatives to standard first-order optimizers for training large neural networks. Unlike stochastic gradient descent or Adam~\citep{KingmaB14}, which treat the parameters of a neural network as vectors, Muon explicitly exploits their matrix structure. At an idealized level, it replaces each gradient matrix by its polar factor, thereby equalizing its nonzero singular values rather than preserving their relative magnitudes.

By changing the geometry of the update, spectral gradient methods can substantially reshape training dynamics. In several tractable models, they mitigate the effects of anisotropy~\citep{braun2026spectral} or ill-conditioning~\citep{ma2026preconditioning}, and promote more uniform learning across spectral components~\citep{kang2026uniformspectralgrowthconvergence,vasudeva2025muonsspectraldesignbenefits}, which may in turn favor more robust and transferable feature learning~\citep{ruan2026muonlearnsrobusttransferable}. In a two-layer linear regression model, \citet{dragutinovic} identify a closely related phenomenon: spectral normalization suppresses the sequential simplicity bias of gradient descent by learning singular modes more uniformly. Across these works, the theoretical benefits of spectral normalization are primarily attributed to faster or more balanced learning of different components, rather than to the selection of a different predictor.

These results leave unresolved whether spectral normalization can affect generalization through the late-stage solution itself, rather than only through differences in the training trajectory. 
The closest work for this unresolved point is \citet{fan2025implicitbiasspectraldescent} for separable classification, which show that SpecGD asymptotically maximizes the margin under the spectral norm, revealing an implicit bias that can differ from that of GD. However, their characterization does not in general identify the particular direction selected by the dynamics, since the spectral max-margin problem need not have a unique solution, and therefore does not determine the generalization properties of the resulting classifier. 
This motivates our central question: when does SpecGD select a classifier with different generalization behavior from GD, and which properties of the matrix-valued data determine which optimizer generalizes better?

\textcolor{black}{
To address this question, we study GD and SpecGD in an overparameterized matrix classification model with corrupted labels. Each observation contains a common class signal and a sample-specific rank-one noise, which we refer to as a \textit{shortcut}. 
We vary how these shortcuts are organized, especially, mainly consider two setups: (i) shortcuts collapse along a common singular direction (CSM), and (ii) shortcuts disperse across many directions (DSM). %
This allows us to isolate the effect of matrix structure on generalization and to characterize regimes in which spectral normalization improves or degrades the population risk relative to GD.
}

\subsection{Contributions \& Technical novelty}

Our main contributions are as follows:
\begin{itemize}[leftmargin=*,topsep=2pt,itemsep=2pt,parsep=0pt]
    \item \textbf{%
    Shortcut geometry can reverse the late-stage generalization ordering of GD and SpecGD.} We show that the organization of the shortcuts across singular directions can reverse the relative generalization performance of GD and SpecGD, measured by the population $0$--$1$ risk in the late-stage training. In particular, while the risk of SpecGD $R_{\rm SpecGD}^\star$ is generally smaller than $R_{\rm GD}^\star$ in the collapsed shortcut model (CSM), the performance is reversed in a specific regime of shortcut strength with the dispersed shortcut model (DSM). Figure~\ref{tab:compare} shows the summary. This result implies that the spectral normalization can sometimes have a negative effect to its generalization.

    \vspace{-2mm}
    \begin{figure}[!htbp]
        \centering
        \includegraphics[width=0.98\linewidth]{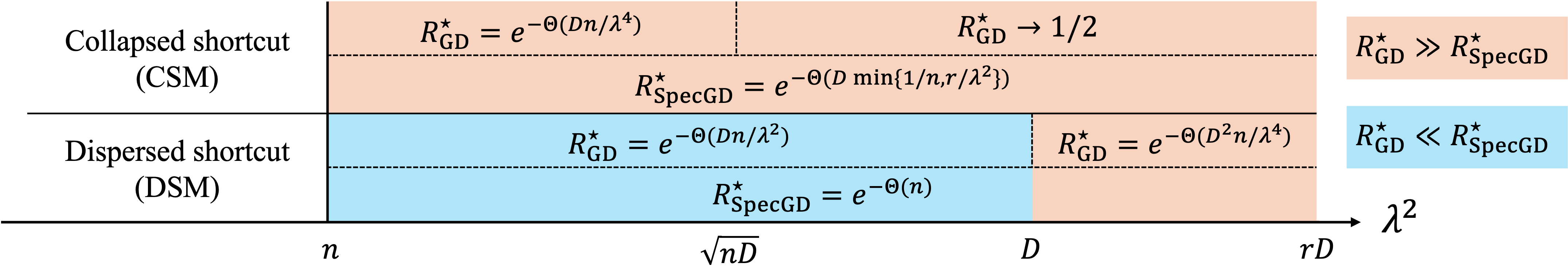}
        \vspace{-2mm}
        \caption{Late-stage test risk across shortcut geometries (CSM/DSM).
        Orange indicates $R_{\rm SpecGD}^\star \ll R_{\rm GD}^\star$, while blue indicates
        $R_{\rm GD}^\star\ll R_{\rm SpecGD}^\star$. Here $n,D,r,\lambda$ denote sample
        size, shortcut dimension, signal rank, and shortcut strength, respectively.}
        \label{tab:compare}
        \vspace{-2mm}
    \end{figure}
\item \textbf{Early-stopping avoids the deterioration of SpecGD in the late-stage.} In both shortcut geometries, we show that a single SpecGD step can already interpolate the corrupted training labels while generalizing well. The risk of SpecGD with early stopping can be smaller than $R_{\rm SpecGD}^\star$, despite already interpolating the corrupted training data. Thus, early stopping can avoid the deterioration induced by the late-stage implicit bias.
\item \textbf{Weak non-orthogonality of dispersed shortcuts improves SpecGD.}
In dispersed geometry, we prove that tiny correlations among dispersed shortcuts collectively generate a signal component to help generalization of SpecGD, whereas exact shortcut orthogonality eliminates the signal from the late-stage SpecGD direction. We identify that this difference is a genuinely second-order effect: a second-order expansion of the nuclear norm of the shortcut aggregate reveals curvature that is absent under exact orthogonality.  %
\end{itemize}

The technical novelty of our work is two-fold:
\begin{itemize}
[leftmargin=*,topsep=2pt,itemsep=2pt,parsep=0pt]
    \item \textbf{A dual-dynamical framework gives directional selection beyond static max-margin theory.}
    Recent work characterizes late-stage implicit of SpecGD in multiclass linear classification by spectral margin maximization. 
    However, a static spectral max-margin characterization is not sufficient to identify the selected solution since the primal max-margin problem is non-unique. Consequently, it does not determine the signal coefficient that governs generalization. We develop a dual-dynamical approach that establishes directional convergence of SpecGD without requiring uniqueness of the primal problem. The key observation is that the normalized exponential-loss weights follow exponentiated-gradient dynamics on the max-margin dual, allowing convergence of the dual iterates to determine the direction selected by SpecGD. 
    \item \textbf{Second-order geometry distinguishes exact from near orthogonality.} Under exact shortcut orthogonality, the dual objective is degenerate and fails to identify a unique dual optimizer. We show that the first nontrivial effect of random near-orthogonality appears at second order. A second-order expansion of the nuclear norm reveals a deterministic curvature term of order $D^{-1}$ together with random fluctuations and higher-order remainders. Sharp uniform concentration bounds show that this curvature dominates, leading to uniqueness and interiority of the dual minimizer, which are the key ingredients to establish directional convergence using the dual-dynamical framework.
    
\end{itemize}

\subsection{Related Work}

\paragraph{Implicit bias.}
On separable data, gradient descent converges in direction toward the Euclidean max-margin classifier~\citep{soudry2024implicitbiasgradientdescent}; more generally, the selected margin geometry can depend on the optimization norm~\citep{gunasekar18a}. Most closely related to our work, \citet{fan2025implicitbiasspectraldescent} show that SpecGD exhibits spectral-norm margin maximization in multiclass linear classification. However, the associated max-margin problem need not have a unique optimizer, so this characterization does not in general identify the direction selected by SpecGD or its generalization. Using a dual-dynamical perspective related to the primal-dual analysis of GD~\citep{ji2021characterizing}, we characterize the selected direction and its generalization despite non-uniqueness of the spectral max-margin solution.

\paragraph{Benign overfitting in classification.}
A broad literature studies when classifiers that interpolate corrupted labels nevertheless achieve small population error, particularly for high-dimensional linear max-margin classifiers~\citep{chatterji2021finitesample,cao21,wang2022binary,hashimoto2026universalitybenignoverfittingbinary}, with extensions to neural networks trained by gradient descent~\citep{frei2023benignoverfitting,hashimoto2026directional}. Related phenomena have also been studied in attention models and Transformers~\citep{magen2025benignattention}. We instead study matrix-linear predictors trained by SpecGD with intrinsically matrix-valued sample-specific features, showing that their organization across singular directions can determine whether interpolation is benign or catastrophic and can reverse the generalization ordering of GD and SpecGD.

\paragraph{Memorization features.}
Overparameterized neural networks can interpolate noisy or even randomly labeled training data \citep{zhang2017understanding,arpit2017closer}, motivating theoretical work on how statistically uninformative features can be used for memorization \citep{bombari2024spuriousfeatures}. Our model separates the shared predictive signal from sample-specific memorization features, allowing us to study how optimization exploits the latter to fit the training labels.%

%% file: framework.tex
\section{Statistical Framework}\label{sec:framework}

\paragraph{Data model.} We consider binary classification with matrix-valued covariates.  For each sample $i=1,\ldots, n $, the ground truth label $y_i \in \{ +1,-1\}$ is drawn independently and uniformly at random. The observed label is $\tilde y_i=\xi_i y_i$  where $\xi_i$ are i.i.d. random variables satisfying $\prob(\xi_i = -1)= \rho = 1 - \prob(\xi_i = 1)$ with $\rho \in (0,1/2)$. The matrix-valued covariates are generated as 
\begin{equation}
\label{eq:model}
X_i=y_iS+Z_i \in \mathbb{R}^{d\times d},
\end{equation}
where \(S\in\R^{d\times d}\) is a deterministic rank-\(r\) signal matrix, and \(Z_i\in\R^{d\times d}\) is a random rank-one sample-specific feature, which we refer to as a \emph{shortcut}. The component \(y_iS\) carries the class information shared across samples, whereas \(Z_i\) has no predictive value for unseen data but can nevertheless be exploited by an overparameterized classifier to memorize corrupted labels. We denote the observed training sample by $\mathcal D_n:=\{(X_i,\tilde y_i)\}_{i=1}^n$ and the uncorrupted and corrupted sample index sets by $I_+ = \{i\mid \xi_i = 1, i = 1, \ldots, n\}$ and $I_- = \{i\mid \xi_i = -1, i = 1, \ldots, n\}$. 

Our model can be viewed as a matrix-valued counterpart of a signal-plus-sample-specific-noise model $\boldsymbol{x}_i=y_i\boldsymbol{\mu}+\boldsymbol{z}_i$ for vectors $\boldsymbol{x}_i, \boldsymbol{\mu}, \boldsymbol{z}_i \in \mathbb{R}^d$.
This model for vectors has been widely studied in the benign-overfitting literature for binary classification \citep{chatterji2021finitesample,cao21,wang2022binary,shamir22,frei2023benignoverfitting,XuGu2023nonsmooth,hashimoto2026directional}. %

\paragraph{Shortcut geometry.}
\textcolor{black}{As a simple motivation, consider images with one spurious modified pixel per sample. If its location varies across the image, the shortcuts are dispersed; if it is restricted to a fixed column, they share a common right direction and collapse to a rank-one structure.}

Let $S=\sum_{l\leq r}\sigma_l\boldsymbol{p}_\ell \boldsymbol{q}_\ell^\top$ be the SVD of $S$. The polar factor of $S$ will be denoted by $P_S:=\operatorname{Polar}(S)
=\sum_{\ell=1}^r \boldsymbol{p}_\ell \boldsymbol{q}_\ell^\top$. We write $s:=\|S\|_1$ for the nuclear norm of $S$. Set $\calU=\spn \lbrace \boldsymbol{p}_1, \ldots, \boldsymbol{p}_r \rbrace^\perp $, $\calV=\spn \lbrace \boldsymbol{q}_1, \ldots, \boldsymbol{q}_r \rbrace^\perp $. Both subspaces have dimension $D=d-r$. For any subspace $\mathcal A$, let $\mathbb S(\mathcal A)$ denote its unit sphere. Let $K\in \lbrace 1, \ldots , D \rbrace $ and consider a subspace $\calV_K\subset \calV$ of dimension $K$. The $K$-dimensional shortcut model is
\begin{equation}
\label{eq:K-shortcut-model}
Z_i=\lambda \boldsymbol{u}_i \boldsymbol{v}_i^\top,
\qquad
\boldsymbol{u}_i\stackrel{\mathrm{i.i.d.}}{\sim}\Unif \bigl(\mathbb S(\calU)\bigr),
\qquad
\boldsymbol{v}_i\stackrel{\mathrm{i.i.d.}}{\sim}\Unif\bigl(\mathbb S(\calV_K)\bigr),
\end{equation}  
where the two families are mutually independent and $\lambda>0$ controls the shortcut strength. 

Our theoretical analysis focuses on the two extreme
cases $K=1$ and $K=D$, while simulations complement the theory by illustrating
the behavior in intermediate regimes $1<K<D$.  
We consider the following setups:
\begin{itemize}[leftmargin=*]
  \setlength{\parskip}{0cm}
  \setlength{\itemsep}{0cm}
    \item \textbf{Collapsed shortcut model} (CSM). We set $K=1$: all shortcuts share a common right singular direction, so any weighted sum of the shortcut matrices remains rank one.
    \item \textbf{Dispersed shortcut model} (DSM). We set \(K=D\): the right shortcut directions are sampled independently from the entire subspace \(\calV\), so it typically has rank \(n\).
    \item \textbf{Deterministic orthogonal benchmark} (DOB). Assume $D\geq n$, and fix orthonormal families $U=[\boldsymbol{u}_1,\ldots,\boldsymbol{u}_n]$ and $V=[\boldsymbol{v}_1,\ldots,\boldsymbol{v}_n]$. We then set $Z_i=\lambda \boldsymbol{u}_i\boldsymbol{v}_i^\top$.
 The shortcuts are now exactly orthogonal, contrast to shortcuts in DSM having small correlations.
\end{itemize}

\paragraph{Classifier and training.}
We consider the matrix linear classifier $f_W(X)=\langle W,X\rangle_F$ where $W\in\R^{d\times d},$
and minimize the empirical exponential loss
\[
\mathcal L(W)
=
\frac{1}{n}\sum_{i=1}^n
\exp\!\left(-\tilde y_i\langle W,X_i\rangle_F\right).
\]
Gradient descent (GD) and spectral gradient descent (SpecGD) iterates are defined by 
\begin{align}
W_{k+1}^{\mathrm{GD}}
&=
W_k^{\mathrm{GD}}
-\eta_k\nabla\mathcal L\!\left(W_k^{\mathrm{GD}}\right),
\\
W_{k+1}^{\mathrm{SpecGD}}
&=
W_k^{\mathrm{SpecGD}}
-\eta_k\operatorname{Polar}\!\Bigl(
\nabla\mathcal L\!\bigl(W_k^{\mathrm{SpecGD}}\bigr)
\Bigr)
\end{align}
for $k=0, 1, 2, \ldots$, with $W_0 = 0$ and stepsizes $(\eta_k)_{k\geq 0}$. Here, $\operatorname{Polar}(A)=A[(A^\top A)^\dagger]^{1/2}$, where $\dagger$ denotes the Moore-Penrose pseudoinverse. SpecGD is the idealized full-batch counterpart of Muon, in which the gradient is replaced by its polar factor.

 As in \citet{fan2025implicitbiasspectraldescent}, we impose the following stepsize condition.
\begin{assumption}
\label{ass:stepsize}
The stepsizes $(\eta_k)_{k\geq0}$ satisfy $\eta_k\downarrow0$ and
$\tau_k:=\sum_{t<k}\eta_t\to\infty$.
\end{assumption}

The following condition is used to establish the rate of directional convergence for \(W_k^{\mathrm{SpecGD}}\). It requires the stepsize to vary slowly across successive iterations, in the sense that \(\eta_{k+1}=\eta_k+o(\eta_k^2)\). In particular, polynomially decaying stepsizes \(\eta_k=(1+k)^{-a}\) with \(a\in(0,1)\) satisfy the condition.%

\begin{assumption}
\label{ass:stepsize1}
The stepsizes $(\eta_k)_{k\geq0}$ satisfy $\lim_{k\to \infty}\eta_k^{-2}(\eta_k - \eta_{k+1})=0$. %
\end{assumption}

\paragraph{Risk.} We measure interpolation using the empirical \(0\)-\(1\) error with respect to the observed labels,
\[
\widehat{\mathcal R}_n(W)
=
\frac{1}{n}\sum_{i=1}^n
\mathbf 1\left\{
\tilde y_i\langle W,X_i\rangle_F\leq 0
\right\}.
\]
We say that \(W\) interpolates the training data if $\widehat{\mathcal R}_n(W)=0$.
For an independent clean test sample \((X,y)\) generated according to the same data model, we define the population \(0\)-\(1\) error as
\[
\mathcal R(W)
=
\prob\left(
y\langle W,X\rangle_F\leq 0
\right).
\]
Thus, interpolation is measured with respect to the corrupted training labels, whereas generalization is measured with respect to the underlying clean labels.

\begin{remark}[Late-stage interpolation]
    For $D\ge n$, the training sample is linearly separable almost
    surely under the CSM and DSM (Lemma~\ref{lem:linear-separability}). The max-margin
    results of
    \citet{gunasekar18a} and \citet{fan2025implicitbiasspectraldescent} therefore imply eventual
    interpolation for GD and SpecGD, respectively.
\end{remark}

%% file: results.tex
\section{Main Results}\label{sec:main_res}
{\color{black}

We begin with an informal presentation of our main statistical finding: shortcut geometry can determine which of GD and SpecGD generalizes better. We then give a precise characterization of the late-stage direction selected by SpecGD, before comparing its generalization with GD. Finally, we investigate the first SpecGD iterate and show how early training can differ from the late-stage limit.

}

\subsection{Shortcut geometry can reverse the ordering of GD and SpecGD}
\label{sec:geometry-reversal}

We first illustrate the geometry-dependent reversal in Figure~\ref{fig:3plots}. We hold all model parameters fixed and change only the shortcut geometry. Both GD and SpecGD interpolate after the first update, so we focus on test error. With collapsed shortcuts, SpecGD generalizes nearly perfectly while GD remains close to chance; under random dispersed shortcuts, the ordering reverses, and GD generalizes substantially better. Thus, reorganizing the same sample-specific shortcuts across singular directions can switch which optimizer generalizes better after interpolation.

Within the dispersed setting, exact orthogonality and random near-orthogonality lead to sharply different late-stage behavior: SpecGD is close to chance in the former, while retaining nontrivial generalization in the latter.

\vspace{-2mm}
\begin{figure}[!htbp]
    \centering
    \includegraphics[width=\textwidth]{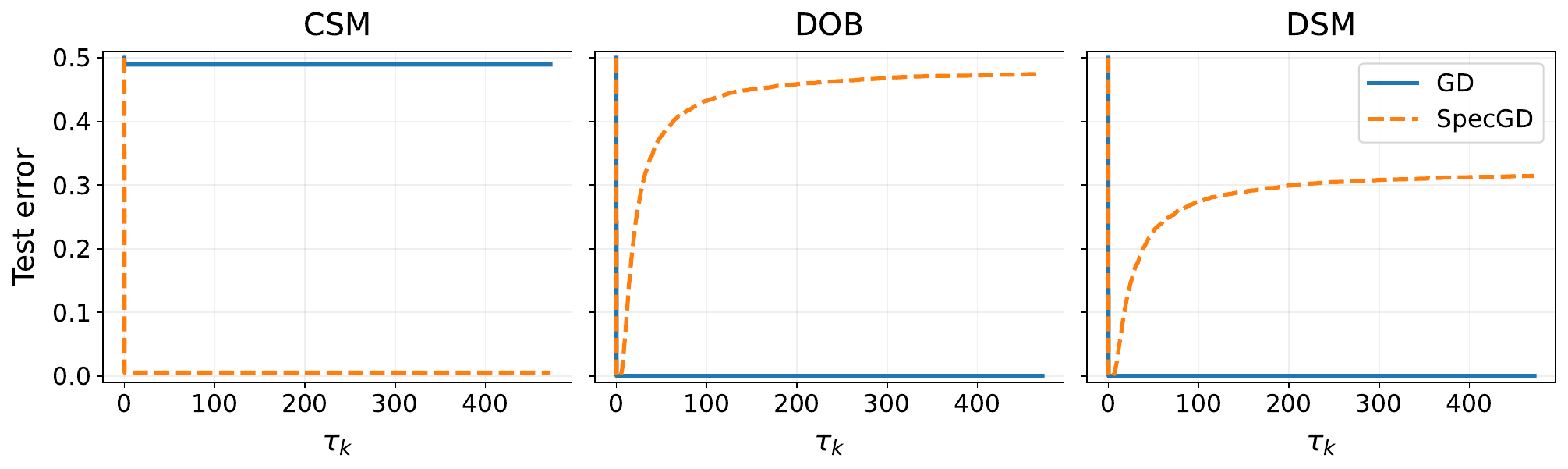}
    \vspace{-7mm}
    \caption{SpecGD generalizes for collapsed shortcuts; GD for dispersed shortcuts. Parameters: $n=100$, $r=1$, $D=10^5$, $\rho=0.45$,
$\lambda=125$, $\sigma_1=1$, $T=3D$ and $\eta_k=0.005(1+k)^{-0.1}$.}%
    \label{fig:3plots}
\vspace{-2mm}
\end{figure}

\subsection{Directional convergence and Generalization of SpecGD}

The two shortcut geometries lead to qualitatively different limiting directions as $k\to \infty$. We begin with the dispersed geometry (DSM and DOB), which exhibits a surprising phenomenon: exact orthogonality eliminates the signal from the convergent direction while tiny correlations under DSM generate a nonzero signal component. We then contrast this with the CSM.%

\begin{theorem}[Directional convergence under DSM]
\label{thm:main_dispersed} Assume the sample $\calD_n$ is generated from the DSM and that Assumption~\ref{ass:stepsize} holds.
Let \(0<\delta<e^{-1}\). 
There exist constants \(c,C>0\), depending only on $\rho$
such that, if $(\log \tfrac{n}{\delta})^5\leq cn, D\geq Cn(\log \tfrac{n}{\delta})^4,$ and $\frac{\lambda n}{sD}\leq c$
then, with probability at least $1-\delta$, the following statements hold.
\vspace{-1mm}
\begin{enumerate}
    \item \textbf{Directional convergence.}
    We have $W_k^{\mathrm{SpecGD}}/\tau_k
    \to
    W_\mathrm{DSM}^\star:=a_\mathrm{DSM}^\star P_S+H_\mathrm{DSM}^\star$, 
    where \(\operatorname{rank}(H_\mathrm{DSM}^\star)=n\) and all nonzero singular
    values of \(H_\mathrm{DSM}^\star\) are equal to \(1\). 
    
    \item \textbf{Signal strength.} Set $\Gamma_\rho := \frac18(1-2\rho)
    \left[
    1+4\rho(1-\rho)
    \right]$. $a_\mathrm{DSM}^\star$ in 1. satisfies
    \begin{equation}\label{eq:a-main-high-probability}
    \left|
    a_\mathrm{DSM}^\star
    -
    \Gamma_\rho\frac{\lambda n}{sD}
    \right|
    \leq
    C\frac{\lambda n}{sD}
    \left(
    \sqrt{\frac{\log(1/\delta)}{n}}
    +
    \frac{(\log \tfrac{n}{\delta})^3}{n}
    +
    \sqrt{\frac{n}{D}}
    \right).
    \end{equation}
    
\end{enumerate}
\end{theorem}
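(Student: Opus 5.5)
The argument has two layers: a deterministic dual-dynamical layer that pins down the SpecGD limit through the spectral max-margin dual, and a probabilistic layer that solves this dual under DSM by a second-order expansion. Write $\tilde y_i X_i = \tilde y_i y_i S + \tilde y_i Z_i = \xi_i S + \tilde y_i \lambda \boldsymbol u_i\boldsymbol v_i^\top$.

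\textbf{Dual dynamics and directional limit.} The gradient is $-\frac1n\sum_i \ell_i \tilde y_i X_i$ with $\ell_i=\exp(-\tilde y_i\langle W,X_i\rangle_F)$. The normalized weights $\alpha_k=\ell/\|\ell\|_1\in\Delta_n$ satisfy the multiplicative update
\[
\alpha_{k+1,i}\propto \alpha_{k,i}\exp\bigl(-\eta_k\langle \pol(G(\alpha_k)),\tilde y_iX_i\rangle_F\bigr),
\qquad G(\alpha)=\textstyle\sum_i\alpha_i\tilde y_iX_i .
\]
Since $\langle \pol(G),\tilde y_iX_i\rangle_F$ is the $i$-th partial derivative of the dual objective $\phi(\alpha)=\|G(\alpha)\|_1$, wherever $\phi$ is differentiable, this is exponentiated-gradient descent on $\phi$ over the simplex. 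The argument runs as follows.
\begin{enumerate}
\item Using Assumption~\ref{ass:stepsize} together with the standard mirror-descent Lyapunov function $\KL(\alpha^\star\|\alpha_k)$, show that $\alpha_k\to\alpha^\star$ whenever $\phi$ has a \emph{unique} minimizer $\alpha^\star$ in the interior of $\Delta_n$ at which $\phi$ is smooth (i.e.\ $G(\alpha^\star)$ has rank $r+n$).
\item Conclude $\pol(G(\alpha_k))\to\pol(G(\alpha^\star))$, and by Cesàro averaging $W_k/\tau_k\to W^\star=\pol(G(\alpha^\star))$.
\end{enumerate}

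\textbf{Structure of the dual optimizer.} Under DSM the left vectors $\boldsymbol u_i\in\calU$ and right vectors $\boldsymbol v_i\in\calV$ are orthogonal to the signal subspaces. Hence $G(\alpha)=(\sum_i\alpha_i\xi_i)S+\lambda U D_{\tilde y\alpha}V^\top$ is block-diagonal, and
\[
\phi(\alpha)=\Bigl|\sum_i\alpha_i\xi_i\Bigr|\,s+\lambda\bigl\|U\diag(\tilde y_i\alpha_i)V^\top\bigr\|_1 .
\]
Its polar factor is $\sgn(\cdot)P_S+H$, with $H$ the polar factor of the shortcut block, of rank $n$ with unit singular values. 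This gives item 1. The signal coefficient is
\[
a^\star=\Bigl\langle W^\star,\tfrac{S}{s}\cdot\Bigr\rangle\text{-type quantity},
\]
but a cleaner route is to read it off the primal KKT conditions: the margins $\langle W^\star,\tilde y_iX_i\rangle=\xi_i a^\star\sigma\text{-weighted}+\lambda\langle H,\tilde y_i\boldsymbol u_i\boldsymbol v_i^\top\rangle$ must all be equal on the support. Alternatively, scale so that $W^\star$ has unit spectral norm and $a^\star$ is determined by the balance between $|\sum\alpha_i\xi_i|$ and the shortcut term. To leading order the optimizer makes $\sum_i\alpha_i\xi_i$ vanish, and $a^\star$ is the Lagrange-multiplier correction.

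\textbf{Second-order expansion (the main obstacle).} Under exact orthogonality (DOB), $\|U\diag(\tilde y\alpha)V^\top\|_1=\sum_i\alpha_i=1$ is constant. The dual is then degenerate, the signal term is minimized on a whole face $\{\sum_i\alpha_i\xi_i=0\}$, and the signal vanishes. Under DSM I would proceed as follows.
\begin{enumerate}
\item Write $M=U\diag(\tilde y\alpha)V^\top$ and expand $\|M\|_1=\operatorname{tr}\sqrt{M^\top M}$ around the diagonal case using the Gram perturbations $U^\top U=I+E_U$ and $V^\top V=I+E_V$, whose off-diagonal entries are of order $D^{-1/2}$.
\item The first-order terms vanish since $\operatorname{diag}(E)=0$. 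The second-order term takes the form
\[
\|M\|_1\approx 1-\frac{1}{8}\sum_{i\neq j}\frac{(\alpha_i-\tilde y_i\tilde y_j\alpha_j)^2\text{-type}}{\alpha_i+\alpha_j}\bigl(\langle u_i,u_j\rangle^2+\langle v_i,v_j\rangle^2\bigr)+\text{cross terms},
\]
with expectation a deterministic strictly convex function of order $D^{-1}$ that depends on the sign pattern $\tilde y_i\tilde y_j$.
\item Minimizing $\lambda\times$(this curvature) plus $s|\sum_i\alpha_i\xi_i|$ gives $\alpha^\star$ near uniform with a small imbalance. The resulting multiplier yields $a^\star\approx\Gamma_\rho\lambda n/(sD)$. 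The factor $(1-2\rho)[1+4\rho(1-\rho)]/8$ should emerge from averaging over the four clean/corrupted sign combinations of pairs.
\item The hard part is the uniformity of the concentration. I need to show that the random fluctuations, of relative size $\sqrt{\log(1/\delta)/n}+(\log\frac n\delta)^3/n$, and the third-order remainder, of relative size $\sqrt{n/D}$, are dominated by the curvature uniformly over a neighborhood of $\alpha^\star$ in the simplex. This uses matrix Bernstein and Hanson--Wright bounds for $\langle u_i,u_j\rangle^2$ together with a net argument. This step yields uniqueness and interiority (all $\alpha_i^\star\asymp1/n$), which then feeds the dynamical step.
\end{enumerate}
The condition $\lambda n/(sD)\le c$ ensures that the signal term dominates, which keeps $\sum_i\alpha_i\xi_i$ near zero and the perturbative regime valid. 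The conditions on $D$ make the remainder small.
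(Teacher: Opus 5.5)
There is a genuine gap in the passage from dual convergence to the primal limit, which is exactly where the signal coefficient is decided.

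At the true dual minimizer the imbalance $\sum_i\alpha_i^\star\xi_i=2q^\star-1$ vanishes \emph{exactly}, not just to leading order. The dual reduces to $\min_q\{2s|q-\tfrac12|+\lambda\psi(q)\}$, and $\lambda n/(sD)\le c$ gives $\lambda|\psi'(1/2)|<2s$, so the cusp pins $q^\star=1/2$. Hence $G(\alpha^\star)=\lambda Z(\alpha^\star)$ has rank $n$, not $r+n$, and $\pol$ is discontinuous there. Indeed $\pol(G(\alpha_k))=\sgn(c_k)P_S+H_k$ with $c_k\to0$ and $\sgn(c_k)$ oscillating. Your conclusion $W_k/\tau_k\to\pol(G(\alpha^\star))$ would give signal coefficient $0$; under your rank-$(r+n)$ hypothesis it would give $\pm1$. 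Either way it contradicts item 2.

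Your ``Lagrange multiplier'' intuition is right statically: $a^\star$ is the element of $\partial|\cdot|$ at the kink that balances $\lambda\psi'(1/2)$. What is missing is an argument that $\tau_k^{-1}\sum_{t<k}\eta_t\sgn(c_t)$ actually converges to it. The paper uses the group log-weight contrast $\Lambda_k=\frac1{n_+}\sum_{i\in I_+}\log\alpha_{i,k}-\frac1{n_-}\sum_{i\in I_-}\log\alpha_{i,k}$. For this quantity the EG normalizer cancels, and $\Lambda_{k+1}-\Lambda_k=-\eta_k[2s\sigma_k+\lambda(\overline h_{+,k}-\overline h_{-,k})]$. Since $\alpha_k\to\alpha^\star\in\operatorname{int}(\Delta_n)$, $\Lambda_k$ converges. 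Dividing the telescoped sum by $\tau_k$ gives $a^\star=-\frac{\lambda}{2s}(h_+^\star-h_-^\star)$, which equals $\frac{\lambda}{2s}V'(1/2)$ by the envelope identity.

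Alternatively, your KKT idea works, but only if it \emph{replaces} the continuity claim:
\begin{itemize}
\item Write $W_k/\tau_k=\bar a_kP_S+\bar H_k$ with $|\bar a_k|\le1$ and $\bar H_k\to H^\star$, using continuity of $\pol$ at the rank-$n$ matrix $Z^\star$.
\item Invoke the spectral margin-maximization theorem of Fan et al.\ so that every limit point is max-margin.
\item Use complementary slackness at the interior $\alpha^\star$ (all margins equal), averaged over $I_+$ and $I_-$, to determine $a$ uniquely.
\end{itemize}
Step~1 itself needs no smoothness: the KL Lyapunov inequality only uses $\boldsymbol g_k\in\partial\phi(\alpha_k)$.

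Two secondary corrections. First, the second-order term is $-\frac12\sum_{i<j}\frac{\alpha_i\alpha_j}{\alpha_i+\alpha_j}\bigl(\langle\boldsymbol u_i,\boldsymbol u_j\rangle-\tilde y_i\tilde y_j\langle\boldsymbol v_i,\boldsymbol v_j\rangle\bigr)^2$. Its mean $\frac1D\sum_{i<j}h(\alpha_i,\alpha_j)$ does not depend on the sign pattern, because the cross term is centered. The parameter $\rho$ enters $\Gamma_\rho$ only through $n_\pm$, via $\theta_{n_+,n_-}=\mathcal H_{n_+,n_-}'(1/2)$ at the groupwise-uniform point with weights $1/(2n_\pm)$. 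Averaging over sign combinations as you propose would therefore give the wrong constant.

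Second, a net over an $n$-dimensional neighborhood of $\alpha^\star$ is unnecessary, and a union bound over an $e^{\Theta(n)}$-point net would swamp the $\log(n/\delta)$ factors. The paper instead:
\begin{itemize}
\item controls the centered fluctuations only along the one-dimensional path $q\mapsto\overline\alpha(q)$, using Minsker--Wei bounds for degenerate matrix $U$-statistics in $\zeta_{ij}$;
\item treats the small $\ell_\infty$-box around that path deterministically via $\max_{i<j}|\zeta_{ij}|\lesssim \log(n/\delta)/D$;
\item obtains the slice optimizer by a contraction argument.
\end{itemize}
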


Although $a_\mathrm{DSM}^\star$ is only of order $\lambda n/(sD)$, the corresponding
test error is independent of $D$ at leading order and decays exponentially
with $n$ as shown by the following corollary.

{\color{black}
\begin{corollary}[Generalization under DSM]
\label{cor:generalization-dispersed}
Assume the hypotheses of Theorem~\ref{thm:main_dispersed}. There exist constants \(c,C>0\), depending only on $\rho$ such that, if $(\log \tfrac{n}{\delta})^5\leq cn, D\geq Cn(\log \tfrac{n}{\delta})^4,$ and $\frac{\lambda n}{sD}\leq c$,
then, with probability at least \(1-\delta\) over the training sample
\(\mathcal D_n\), we have
\begin{equation}
\label{eq:generalization-two-sided}
c_0 e^{-C_0 n}
\leq
\mathcal R(W_{\mathrm{DSM}}^\star)
\leq
2e^{-C_1 n}
\end{equation}
for a universal constant $c_0$ and constants $C_0, C_1 >0$ which only depend on $\rho$. Moreover, \(\mathcal R(W_k^{\mathrm{SpecGD}})\to \mathcal R(W^\star_\mathrm{DSM})\).
\end{corollary}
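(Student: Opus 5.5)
We compute the test margin of $W^\star_{\mathrm{DSM}}=a^\star P_S+H^\star$ on a fresh clean sample $X=yS+\lambda\boldsymbol u\boldsymbol v^\top$. Since $\langle P_S,S\rangle_F=s$, and since $P_S$ is supported on $\spn\{\boldsymbol p_\ell\}\times\spn\{\boldsymbol q_\ell\}$ while $\boldsymbol u\in\calU$, $\boldsymbol v\in\calV$, we get $\langle P_S,\boldsymbol u\boldsymbol v^\top\rangle_F=0$. The plan is to also show that $H^\star$ lives in $\calU\times\calV$: it is built from the shortcuts $\boldsymbol u_i\boldsymbol v_i^\top$ through the dual aggregate, so $\langle H^\star,S\rangle_F=0$. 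Therefore
\[
y\langle W^\star_{\mathrm{DSM}},X\rangle_F = a^\star s + y\lambda\,\boldsymbol u^\top H^\star \boldsymbol v .
\]
Conditionally on $\mathcal D_n$, $H^\star$ is a fixed partial isometry of rank $n$ with unit singular values, so $\|H^\star\|_F^2=n$. Moreover, $\boldsymbol u,\boldsymbol v$ are independent and uniform on spheres of dimension $D$, and $y$ is an independent sign.

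First, I would compute the error exactly as the probability that $\boldsymbol u^\top H^\star\boldsymbol v\le -a^\star s/\lambda$. By Theorem~\ref{thm:main_dispersed}, on the high-probability event we have $a^\star s/\lambda = \Gamma_\rho\,\frac{n}{D}\,(1+o(1))$, which is bounded above and below by constants times $n/D$. Write $H^\star=\sum_{j\le n}\boldsymbol a_j\boldsymbol b_j^\top$ with orthonormal $\boldsymbol a_j\in\calU$ and $\boldsymbol b_j\in\calV$. Rotational invariance then gives $\boldsymbol u^\top H^\star \boldsymbol v \overset{d}{=} \sum_{j\le n} g_j h_j$, where $g=(\langle \boldsymbol u,\boldsymbol a_j\rangle)_j$ and $h=(\langle\boldsymbol v,\boldsymbol b_j\rangle)_j$ are the first $n$ coordinates of independent uniform vectors on $\mathbb S^{D-1}$. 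This product is $\langle g,h\rangle$, and its law is $\|g\|\,\|h\|\,\theta$, where $\theta$ is the cosine of the angle between two independent uniform directions in $\R^n$. Crucially, this is independent of the training sample beyond $a^\star$, so the error is essentially $\prob(\|g\|\|h\|\theta\le -\Gamma_\rho n/D)$. Since $\|g\|^2,\|h\|^2\approx n/D$ with Beta concentration, the event reduces to $\theta\lesssim -\Gamma_\rho(1+o(1))$. Here $\theta$ has density proportional to $(1-t^2)^{(n-3)/2}$, so $\prob(\theta\le -t)$ is between $c\,e^{-C n t^2}$-type bounds; in fact $\prob(\theta\le -t)\le e^{-(n-1)t^2/2}$ (up to constant) and $\ge c\,(1-t^2)^{(n-1)/2}/\sqrt n$ for fixed $t$.

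For the upper bound, I would split on the event $\{\|g\|\|h\|\le (1+\epsilon) n/D\}$. On this event the spherical cap bound gives $e^{-C_1 n}$, where $t=\Gamma_\rho/(2(1+\epsilon))$. The complement has probability at most $e^{-cn}$ by Beta/chi-square concentration (using $D\ge Cn$). This yields $2e^{-C_1n}$. For the lower bound, I would intersect with $\{\|g\|\|h\|\ge \tfrac12 n/D\}$, which has probability at least $1/2$, and use the cap lower bound with $t=2\Gamma_\rho(1+o(1))<1$ (since $\Gamma_\rho\le1/8$). This gives $c_0e^{-C_0n}$, where the $1/\sqrt n$ factor is absorbed into the exponent.

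Finally, for $\mathcal R(W_k)\to\mathcal R(W^\star)$: scale invariance gives $\mathcal R(W_k)=\mathcal R(W_k/\tau_k)$, and $W_k/\tau_k\to W^\star$ by the theorem. The test margin $\langle W,X\rangle_F$ is continuous in $W$, and its law under $W^\star$ has no atom at $0$, since $\langle g,h\rangle$ has a continuous distribution. Hence $\mathbf 1\{y\langle W_k/\tau_k,X\rangle_F\le 0\}\to \mathbf 1\{y\langle W^\star,X\rangle_F\le0\}$ almost surely, and dominated convergence concludes. The main obstacle I expect is rigorously justifying that $H^\star$ is supported in $\calU\times\calV$ and is independent of the test shortcut geometry. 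This should follow from the dual representation in the proof of Theorem~\ref{thm:main_dispersed}, where $H^\star=\operatorname{Polar}\bigl(\sum_i\alpha_i\tilde y_i Z_i\bigr)$. A secondary obstacle is carrying constants through the $o(1)$ error in $a^\star$ so that they depend only on $\rho$.
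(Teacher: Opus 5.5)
Your proof is correct, but you get the two-sided tail bound by a different route than the paper.

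\textbf{The paper's route.} It writes $\mathcal R(W^\star_{\mathrm{DSM}})=\frac12\prob\bigl(|T|\ge a^\star s/\lambda\mid\mathcal D_n\bigr)$ with $T=\langle H^\star,\boldsymbol u\boldsymbol v^\top\rangle_F$. It then applies the general bilinear tail bound of Lemma~\ref{lem:fresh-bilinear-tail}, which holds for any deterministic $H$. The upper tail comes from the Gaussian representation $T=S_H/(\|\boldsymbol g\|_2\|\boldsymbol h\|_2)$ and Bernstein's inequality. The lower tail comes from Paley--Zygmund and a Gaussian lower-tail estimate. The only inputs are $\|H^\star\|_F^2=n$, $\|H^\star\|_{\op}=1$, and $\frac{\Gamma_\rho}{2}\frac nD\le a^\star s/\lambda\le\frac{3\Gamma_\rho}{2}\frac nD<\frac nD$.

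\textbf{Your route.} You use that every nonzero singular value of $H^\star$ equals $1$. Conditionally on $\mathcal D_n$, the fluctuation is then exactly $\|g\|_2\|h\|_2\,\theta$, with $\theta$ a spherical cosine on $\mathbb S^{n-1}$ independent of the norms. Both bounds reduce to norm concentration plus elementary cap estimates.

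\textbf{Comparison.} Your argument is more elementary and sharper. It shows the conditional law of the test margin depends on the training sample only through $a^\star$. It also essentially identifies the exponent as $n\Gamma_\rho^2/2$. The paper's lemma does not need equal singular values, which is why it can be reused in the GD analysis (Corollary~\ref{cor:gd-dispersed}), where $B^\star_{\mathrm{GD}}$ is not a partial isometry.

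\textbf{Two small remarks.}
\begin{enumerate}
\item The step you flag as the main obstacle is immediate. $Z^\star=\sum_i\alpha_i^\star\tilde y_i\boldsymbol u_i\boldsymbol v_i^\top$ has column space in $\calU$ and row space in $\calV$, and $\pol$ preserves both. Hence $\langle H^\star,S\rangle_F=0$, which is the signal--noise block orthogonality the paper uses.
\item $\Gamma_\rho$ is not bounded by $1/8$; its maximum over $(0,1/2)$ is $\sqrt6/18\approx0.136$. Still, $3\Gamma_\rho<1/2$ holds, and that is all your lower-bound step needs. This requires the relative error in $a^\star$ to be at most $1/2$, which you get by shrinking $c$ and enlarging $C$ as the paper does.
\end{enumerate}
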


See Section~\ref{sec:outline} for a high-level description of the key proof techniques and Appendix~\ref{app:main_dispersed} for details.

The absence of $D$ from the test-error exponent is due to a matching scaling: while the signal contribution is of order $n/D$, the interaction of the rank-$n$ memorization component with a fresh shortcut is of order $\sqrt{n}/D$, yielding a signal-to-noise ratio of order $\sqrt{n}$.
The small signal in Theorem~\ref{thm:main_dispersed} is therefore statistically significant despite vanishing with $D$.

Residual shortcut correlations generate the small signal. Under DOB, exact orthogonality fixes the shortcut component of each update while the signal coefficient remains bounded. Indeed, we show

}

\begin{proposition}[Directional convergence under DOB]
\label{prop:implicitbias-orthogonal}
Assume $|I_+| > |I_-|>0$. Then, under DOB,
\(W_k^{\mathrm{SpecGD}}\) converges in direction to \(\sum_i \tilde y_i \boldsymbol{u}_i \boldsymbol{v}_i^\top\). 
\end{proposition}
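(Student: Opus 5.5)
The plan is to compute the SpecGD update in closed form, using the fact that in the DOB the gradient splits into two pieces acting on mutually orthogonal subspaces. This reduces the whole dynamics to a one-dimensional recursion for the signal coefficient, and directional convergence then follows from showing that this coefficient stays bounded while $\tau_k\to\infty$.

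\emph{Step 1 (block structure of the gradient).} Write $w_i(W)=\exp(-\tilde y_i\langle W,X_i\rangle_F)>0$. Since $\tilde y_i y_i=\xi_i$, we have
\[
-\nabla\mathcal L(W)=\frac1n\Bigl(A(W)\,S+\lambda\sum_{i=1}^n w_i(W)\,\tilde y_i\,\boldsymbol u_i\boldsymbol v_i^\top\Bigr),\qquad A(W):=\sum_{i=1}^n w_i(W)\,\xi_i .
\]
The first term has column space in $\spn\{\boldsymbol p_\ell\}$ and row space in $\spn\{\boldsymbol q_\ell\}$. The second term has column space in $\calU$ and row space in $\calV$, and these are the orthogonal complements of the first pair.

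\emph{Step 2 (polar factor of the gradient).} For two matrices $M_1,M_2$ with $M_1^\top M_2=0$ and $M_1M_2^\top=0$, the polar factor is additive: $\operatorname{Polar}(M_1+M_2)=\operatorname{Polar}(M_1)+\operatorname{Polar}(M_2)$. This holds because concatenating the two compact SVDs gives a valid compact SVD of the sum. I expect this to be the only point needing real care, namely checking that the pseudoinverse square root in the definition of $\operatorname{Polar}$ respects the orthogonal decomposition, and it is routine.

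Apply this to the gradient. First, $\operatorname{Polar}(A S)=\sgn(A)\,P_S$, with the convention $\sgn(0)=0$. Second, in the DOB the families $\{\boldsymbol u_i\}$ and $\{\boldsymbol v_i\}$ are orthonormal and the weights $\lambda w_i>0$. Hence $\lambda\sum_i w_i\tilde y_i\boldsymbol u_i\boldsymbol v_i^\top$ is already a compact SVD (after absorbing the signs $\tilde y_i$ into $\boldsymbol u_i$), and its polar factor is $H:=\sum_i\tilde y_i\boldsymbol u_i\boldsymbol v_i^\top$, independent of $W$. Therefore
\[
W_{k+1}=W_k+\eta_k\bigl(\sgn(A_k)P_S+H\bigr),
\]
and by induction from $W_0=0$,
\[
W_k=b_kP_S+\tau_kH,\qquad b_{k+1}=b_k+\eta_k\sgn(A_k),\quad b_0=0 .
\]

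\emph{Step 3 (scalar dynamics of $b_k$).} The margins are computed from $\langle P_S,S\rangle_F=s$, $\langle P_S,\boldsymbol u_i\boldsymbol v_i^\top\rangle_F=0$, $\langle H,S\rangle_F=0$ and $\langle H,\lambda\boldsymbol u_i\boldsymbol v_i^\top\rangle_F=\lambda\tilde y_i$. They are $\tilde y_i\langle W_k,X_i\rangle_F=\xi_i b_k s+\lambda\tau_k$, so
\[
A_k=e^{-\lambda\tau_k}\bigl(|I_+|e^{-b_ks}-|I_-|e^{b_ks}\bigr).
\]
Thus $\sgn(A_k)=+1$ iff $b_k<b^\star:=\frac1{2s}\log(|I_+|/|I_-|)$, $\sgn(A_k)=-1$ iff $b_k>b^\star$, and $\sgn(A_k)=0$ at equality. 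The hypothesis $|I_+|>|I_-|>0$ guarantees that $b^\star$ is finite and positive.

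Since $\eta_k$ is nonincreasing, a simple induction shows that $0\le b_k\le b^\star+\eta_0$ for all $k$. Indeed, $b_k$ moves toward $b^\star$ by steps of size at most $\eta_0$, so it can overshoot $b^\star$ by at most one step. In fact $|b_k-b^\star|\lesssim\eta_k$ eventually, although boundedness suffices here.

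\emph{Step 4 (conclusion).} Because $b_k$ is bounded and $\tau_k\to\infty$ by Assumption~\ref{ass:stepsize}, we get $W_k/\tau_k=(b_k/\tau_k)P_S+H\to H$. Hence $W_k/\|W_k\|_F\to H/\|H\|_F$, which is the claimed directional convergence to $\sum_i\tilde y_i\boldsymbol u_i\boldsymbol v_i^\top$. In particular, the signal coefficient stays bounded and vanishes after normalization by $\tau_k$, in contrast with Theorem~\ref{thm:main_dispersed}.
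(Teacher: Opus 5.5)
Your proof is correct and follows essentially the same route as the paper: additivity of the polar factor over the orthogonal signal and shortcut blocks gives the exact form $W_k=b_kP_S+\tau_kH$, the margins reduce $\sgn(A_k)$ to $\sgn(b^\star-b_k)$ with $b^\star=\frac{1}{2s}\log(|I_+|/|I_-|)$, and boundedness of $b_k$ together with $\tau_k\to\infty$ finishes the argument. One small slip: the lower bound $b_k\ge 0$ can fail when $\eta_0>b^\star$ (e.g.\ $b^\star=1$ with steps $10,4,3.9,3.8$ gives $b_4=-1.7$), but your overshoot argument does give $|b_k-b^\star|\le\max\{b^\star,\eta_0\}$, which is all you need; the paper instead starts the bound at the last step before the first crossing and obtains $|b_k-b^\star|\le\eta_K$.
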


The DSM and DOB spread shortcuts across $n$ right singular directions. We now turn to the opposite extreme, the CSM, where all shortcuts share a single right singular direction. 

\begin{theorem}[Directional convergence under CSM]
\label{thm:main_collapsed}
Assume that the sample \(\mathcal D_n\) is generated from the CSM and that Assumption~\ref{ass:stepsize} holds. Let
\(0<\delta<e^{-1}\).
There exist constants
\(c,C>0\), depending only on \(\rho\), such that, if $\log\frac{n}{\delta}\leq cn$ and $D\geq Cn\log\frac{n}{\delta}$ then, with probability at least \(1-\delta\) over %
\(\mathcal D_n\), the following statements hold.
\vspace{-1mm}
\begin{enumerate}
    \item \textbf{Directional convergence.}
   We have $W_k^{\mathrm{SpecGD}}/\tau_k
    \to
    W_\mathrm{CSM}^\star:=a_\mathrm{CSM}^\star P_S+H_\mathrm{CSM}^\star,$
    where $H_\mathrm{CSM}^\star=h^\star v_0^\top$ for some unit vector $h^\star$ explicitly characterized in equation~\eqref{eq:h_star_specgd}.

    \item \textbf{Signal strength.} Set $\Gamma'_\rho
    :=
    \frac{1-2\rho}{2\sqrt{\rho(1-\rho)}}.$
    $a_\mathrm{CSM}^\star$ in 1. satisfies
    \[
    a_{\mathrm{CSM}}^\star
    =
    \min\left\{
    1,\,
    \frac{\lambda}{s\sqrt n}
    \left(\Gamma_\rho'+\varepsilon_n\right)
    \right\},
    \qquad |\varepsilon_n|
    \leq
    C\left(
    \sqrt{\frac{\log(1/\delta)}{n}}
    +
    \sqrt{\frac nD}
    \right).
    \]
\end{enumerate}
\end{theorem}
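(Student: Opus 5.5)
We follow the dual-dynamical route already used for Theorem~\ref{thm:main_dispersed}, now in the CSM geometry. With $v_0$ the common right direction, every shortcut is $Z_i=\lambda \boldsymbol u_i v_0^\top$. The spectral max-margin problem is
\[
\max_{\|W\|_{\op}\le 1}\min_i \tilde y_i\langle W,X_i\rangle_F .
\]
Its dual, over weights $\boldsymbol\alpha$ in the simplex $\Delta_n$, is to minimize $g(\boldsymbol\alpha):=\bigl\|\sum_i\alpha_i\tilde y_iX_i\bigr\|_1$.

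Write $b:=\sum_i\alpha_i\xi_i$. Since $\tilde y_i y_i=\xi_i$ and $S$ is orthogonal (left and right) to the shortcut spans $\calU$ and $\calV$, the aggregate splits as
\[
\sum_i\alpha_i\tilde y_iX_i=b\,S+\lambda\Bigl(\sum_i\alpha_i\tilde y_i\boldsymbol u_i\Bigr)v_0^\top .
\]
The two pieces have orthogonal row and column spaces, so
\[
g(\boldsymbol\alpha)=|b|\,s+\lambda\|U\tilde D\boldsymbol\alpha\|_2,
\]
where $U=[\boldsymbol u_1,\dots,\boldsymbol u_n]$ and $\tilde D=\diag(\tilde y)$.

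\textbf{Step 1: strict convexity.} The map $\boldsymbol\alpha\mapsto\|U\tilde D\boldsymbol\alpha\|_2$ is the Euclidean norm composed with an injective linear map, since $U$ has full column rank almost surely when $D\ge n$. Away from the origin, $\|M\boldsymbol\alpha\|_2$ is strictly convex along every direction not parallel to $\boldsymbol\alpha$. Restricted to the simplex, where no two distinct points are parallel, $g$ is therefore strictly convex. This gives a unique dual minimizer $\boldsymbol\alpha^\star$, with no need for the second-order expansion of the DSM case.

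\textbf{Step 2: directional convergence.} We rerun the exponentiated-gradient argument on the normalized loss weights $\alpha_{k,i}\propto\exp(-\tilde y_i\langle W_k,X_i\rangle_F)$. The goal is to show $\boldsymbol\alpha_k\to\boldsymbol\alpha^\star$. Then $\operatorname{Polar}$ of the gradient converges to $\operatorname{Polar}\bigl(\sum_i\alpha^\star_i\tilde y_iX_i\bigr)$, which equals
\[
\sgn(b^\star)P_S+\frac{U\tilde D\boldsymbol\alpha^\star}{\|U\tilde D\boldsymbol\alpha^\star\|_2}\,v_0^\top .
\]
A Ces\`aro argument then gives $W_k/\tau_k\to W^\star_{\mathrm{CSM}}$, with $h^\star$ the normalized vector $U\tilde D\boldsymbol\alpha^\star$.

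One issue is that $\boldsymbol\alpha^\star$ may lie on the boundary of the simplex, or satisfy $b^\star=0$. If $b^\star=0$, the polar factor is not continuous there, and the limit is some $aP_S$ with $a\in[0,1]$. I would handle this by identifying $a^\star$ through the primal max-margin solution together with the KKT conditions, rather than through the polar factor alone.

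\textbf{Step 3: computing the signal coefficient.} We solve the KKT conditions for $\min_{\Delta_n} g$. The Gram matrix satisfies $U^\top U=I+E$ with $\|E\|_{\op}\lesssim\sqrt{n\log(n/\delta)/D}$. At leading order, $\|U\tilde D\boldsymbol\alpha\|_2\approx\|\boldsymbol\alpha\|_2$, so the problem reduces to
\[
\min_{\boldsymbol\alpha\in\Delta_n}\; s|b|+\lambda\|\boldsymbol\alpha\|_2 .
\]
By symmetry the minimizer has constant weights $\alpha_+$ on $I_+$ and $\alpha_-$ on $I_-$. The resulting two-variable problem trades off $|b|=|n_+\alpha_+-n_-\alpha_-|$ against $\|\boldsymbol\alpha\|_2$. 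Comparing the subgradient in $b$ with the gradient of the norm gives the signal coefficient
\[
a^\star=\min\Bigl\{1,\;\frac{\lambda}{s}\cdot\frac{n_+-n_-}{\sqrt n\,\sqrt{4n_+n_-/n}}\Bigr\}.
\]
This corresponds to the interior case $b^\star=0$ with the subgradient multiplier of $|b|$ equal to $a^\star$, or to $b^\star\neq0$ with $a^\star=1$. Concentration of $n_\pm$ around $(1-\rho)n$ and $\rho n$ turns the leading factor into $\Gamma'_\rho$, with fluctuation error $\sqrt{\log(1/\delta)/n}$.

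\textbf{Main obstacle.} The hardest part is the perturbation step: controlling the deviation of the true $\boldsymbol\alpha^\star$ and of the multiplier from the symmetric solution, uniformly over the simplex, in terms of $\|E\|_{\op}$. This is what produces the $\sqrt{n/D}$ term. I would first prove a strong-convexity lower bound for $g$ near the symmetric point, restricted to the face $\{b=0\}$. Combining it with the gradient perturbation bound $\lambda\|E\|_{\op}/\sqrt n$ should yield the stated $\varepsilon_n$.
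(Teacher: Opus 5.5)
Your Step~1 (strict convexity of $g$ on $\Delta_n$, hence a unique dual minimizer) is fine. However, the two points on which the theorem actually turns are left open.

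The first is \emph{interiority} of $\boldsymbol\alpha^\star$. You list it as ``one issue'' and never resolve it, although your route needs it twice. Theorem~\ref{thm:dual-convergence} requires an interior minimizer with local relative growth in $\KL(\boldsymbol\alpha^\star\|\cdot)$. Strict convexity is not enough for this: you need the Hessian of $\|U\tilde D\boldsymbol\alpha\|_2$ to be positive definite on $\{\boldsymbol 1^\top\boldsymbol v=0\}$ near $\boldsymbol\alpha^\star$, which your observation gives once made quantitative. Interiority is also needed to identify $a^\star$ when $b^\star=0$ (see below).

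The tool you propose cannot deliver interiority under $D\ge Cn\log(n/\delta)$. An operator-norm perturbation only gives $\|\boldsymbol\alpha^\star-\overline{\boldsymbol\alpha}\|_2\lesssim\varepsilon_{\op}/\sqrt n$ with $\varepsilon_{\op}\asymp\sqrt{n/D}$. The entries of the group-uniform point $\overline{\boldsymbol\alpha}$ are of order $1/n$, so positivity would follow only for $D\gtrsim n^2$.

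The paper removes both this difficulty and your ``main obstacle'' by solving the problem exactly. On each slice $q(\boldsymbol\alpha)=q$ the inner problem is an equality-constrained quadratic with minimizer $G^{-1}(a\boldsymbol 1+b\boldsymbol\xi)$, see \eqref{eq:collapsed-alpha-q}. Positivity comes from the leave-one-out (Schur complement) bound $\|(G^{-1}-I)\boldsymbol x\|_\infty\lesssim\sqrt{n\log(n/\delta)/D}$ for $\boldsymbol x\in\{\boldsymbol 1,\boldsymbol\xi\}$ (Lemma~\ref{lem:inverse-gram-concentration}). One then obtains the closed form $a^\star=\min\{1,\lambda B/(s\sqrt{C\Delta})\}$, with $A,B,C$ quadratic forms in $G^{-1}$. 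The $\sqrt{n/D}$ part of $\varepsilon_n$ is just Lipschitz continuity of $(a,b,c)\mapsto b/\sqrt{c(ac-b^2)}$ combined with $|A/n-1|,|B/n-(1-2\widehat\rho)|,|C/n-1|\le\varepsilon_{\op}$, with no stability analysis of the minimizer. For this you need the Gram bound $\sqrt{(n+\log(n/\delta))/D}$; your $\sqrt{n\log(n/\delta)/D}$ would leave a spurious logarithm in $\varepsilon_n$.

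The second gap is the \emph{selection} of $a^\star$ in the balanced case $b^\star=0$, i.e.\ the regime $\lambda\lesssim s\sqrt n$ in which $a^\star<1$. There $\sgn(b_t)$, with $b_t:=\sum_i\alpha_{i,t}\xi_i$, keeps oscillating. Appealing to ``the primal max-margin solution and the KKT conditions'' does not determine the limit. The spectral max-margin problem has many solutions: any $\boldsymbol h^\star\boldsymbol v_0^\top+W'$ with $W'^\top\boldsymbol h^\star=0$, $W'\boldsymbol v_0=0$, $\|W'\|_{\op}\le1$ and $\langle W',S\rangle_F=a^\star s$ is optimal. Moreover, nothing in your argument shows that the signal coefficient $\tau_k^{-1}\sum_{t<k}\eta_t\sgn(b_t)$ converges at all.

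The missing mechanism is the telescoping identity of Lemma~\ref{lem:telescoping}. Because $\boldsymbol\alpha_k\to\boldsymbol\alpha^\star\in\operatorname{int}(\Delta_n)$, the difference $\Lambda_k$ of group-averaged log-weights converges. This forces $\tau_k^{-1}\sum_{t<k}\eta_t\sgn(b_t)\to-\frac{\lambda}{2s}(h_+^\star-h_-^\star)$. The identity $h_+(q)-h_-(q)=\psi'(q)$ then gives $a^\star=-\frac{\lambda}{2s}\psi'(q^\star)$, which is the $|b|$-subgradient multiplier you guessed. In primal language: interior dual convergence makes all training margins of $W_k$ agree up to $O(1)$, and the iterates have the form $c_kP_S+\boldsymbol w_k\boldsymbol v_0^\top$, which pins down $c_k/\tau_k$.

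Finally, your displayed formula is off by a factor $\sqrt n$. The denominator must be $n\sqrt{4n_+n_-/n}=\sqrt{4nn_+n_-}$; as written you get $\min\{1,\lambda\Gamma'_\rho/s\}$ instead of $\min\{1,\lambda\Gamma'_\rho/(s\sqrt n)\}$.
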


In contrast to the DSM, the memorization component now remains rank one and the signal coefficient does not vanish with $D$. For $\lambda \gg s\sqrt n$, $a_\mathrm{CSM}^\star$ saturates to one. The test error nevertheless remains small when \(\lambda\ll s\sqrt D\), and approaches chance when \(\lambda\gg s\sqrt D\).

\begin{corollary}[Generalization under CSM]
\label{cor:generalization-collapsed}
Assume the hypotheses of
Theorem~\ref{thm:main_collapsed}. There exist constants
\(c,C>0\), depending only on \(\rho\), such that, if $\log\frac{n}{\delta}\leq cn$ and $D\geq Cn\log\frac{n}{\delta}$, then, with probability at least
\(1-\delta\) over \(\mathcal D_n\), we have
\[
c_o\exp\left(
-c'D
\min\left\{
\frac{1}{n},
\frac{s^2}{\lambda^2}
\right\}
\right) \leq \mathcal R(W_\mathrm{CSM}^\star)
\leq
2\exp\left(
-cD
\min\left\{
\frac{1}{n},
\frac{s^2}{\lambda^2}
\right\}
\right)
\] where $c_0, c, c'>0$ are constants. Moreover $\mathcal R\left(W_k^{\mathrm{SpecGD}}\right)
\longrightarrow
\mathcal R(W_\mathrm{CSM}^\star).$
\end{corollary}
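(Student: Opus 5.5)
Given Theorem~\ref{thm:main_collapsed}, I would work with the explicit limit $W_\mathrm{CSM}^\star=a^\star P_S+h^\star v_0^\top$, with $a^\star=a^\star_\mathrm{CSM}$, and compute the test margin for a fresh clean sample $X=yS+\lambda \boldsymbol{u}\boldsymbol{v}^\top$. Under the CSM, $\boldsymbol{v}=\pm v_0$, since $\calV_1$ is one-dimensional. Also $\langle P_S,S\rangle_F=s$, and $P_S$ is orthogonal to shortcuts supported on $\calU\times\calV$. Hence
\[
y\langle W_\mathrm{CSM}^\star,X\rangle_F = a^\star s + \lambda y\,\epsilon\,\langle h^\star,\boldsymbol{u}\rangle, \qquad \epsilon=\pm1 .
\]
Here $\boldsymbol{u}\sim\Unif(\mathbb S(\calU))$ is independent of $h^\star$ and of $y\epsilon$. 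I would first check from \eqref{eq:h_star_specgd} that $h^\star$ lies in $\calU$, or at least project it onto $\calU$ and track the norm. The product $y\epsilon\langle h^\star,\boldsymbol{u}\rangle$ is then distributed as $\langle e_1,\boldsymbol{u}\rangle\|h^\star_\calU\|$ with $\|h^\star_\calU\|\le1$. The risk therefore reduces to a one-dimensional spherical cap probability:
\[
\mathcal R(W_\mathrm{CSM}^\star)=\prob\bigl(\langle e_1,\boldsymbol{u}\rangle \ge a^\star s/(\lambda\|h_\calU^\star\|)\bigr).
\]

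For the upper bound I would use the standard cap estimate $\prob(\langle e_1,\boldsymbol{u}\rangle\ge t)\le e^{-Dt^2/2}$, together with the lower bound on $a^\star$ from Theorem~\ref{thm:main_collapsed}. On the high-probability event with $|\varepsilon_n|\le \Gamma'_\rho/2$, which holds after shrinking $c$ and enlarging $C$, we have $a^\star\ge \min\{1,\lambda\Gamma'_\rho/(2s\sqrt n)\}$. This gives
\[
\frac{(a^\star s)^2}{\lambda^2}\ge \min\Bigl\{\frac{s^2}{\lambda^2},\frac{\Gamma_\rho'^2}{4n}\Bigr\},
\]
and hence the claimed $2\exp(-cD\min\{1/n,s^2/\lambda^2\})$.

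For the lower bound I need $\|h^\star_\calU\|$ bounded below, and I expect this to be the main obstacle. It requires the explicit form \eqref{eq:h_star_specgd}, namely that $h^\star$ is a normalized combination $\sum_i \alpha_i\tilde y_i\epsilon_i\boldsymbol{u}_i$ of training left vectors. This combination lives in $\calU$, so $\|h^\star_\calU\|=1$. I also need the upper bound $a^\star\le \min\{1,2\lambda\Gamma'_\rho/(s\sqrt n)\}$, which gives $t:=a^\star s/\lambda\le C\min\{s/\lambda,1/\sqrt n\}$. If $t\ge1$, then $\min\{1/n,s^2/\lambda^2\}\gtrsim 1$ cannot be compatible with $D\ge Cn$ unless the lower bound is vacuous. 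To handle this I will use a reverse cap bound, $\prob(\langle e_1,\boldsymbol{u}\rangle\ge t)\ge c_0 e^{-c'Dt^2}$, valid for $t\le 1/2$, say, via the density $(1-x^2)^{(D-3)/2}$. This requires checking that $t\le 1/2$: indeed $t\le a^\star s/\lambda$ with $a^\star\lesssim\lambda/(s\sqrt n)$ gives $t\lesssim 1/\sqrt n$, which is small since $\log(n/\delta)\le cn$. The case $t\lesssim1/\sqrt D$ is absorbed into the constant $c_0$.

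Finally, for $\mathcal R(W_k^{\mathrm{SpecGD}})\to\mathcal R(W_\mathrm{CSM}^\star)$, I would use the directional convergence $W_k/\tau_k\to W^\star_\mathrm{CSM}$. The risk is invariant to positive scaling, so $\mathcal R(W_k)=\mathcal R(W_k/\tau_k)$. The map $W\mapsto\mathcal R(W)$ is continuous at $W^\star$ because the law of $\langle W^\star,X\rangle_F$ has no atom at $0$: its $\boldsymbol{u}$-marginal is absolutely continuous, since $h^\star\neq0$. Dominated convergence of the indicators, which converge off the null set $\{\langle W^\star,X\rangle_F=0\}$, then gives the limit.
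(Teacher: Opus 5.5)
Your proposal is correct and follows essentially the same route as the paper: reduce the test margin to $a^\star s+\lambda\langle h^\star,\boldsymbol u\rangle$ with $\|h^\star\|_2=1$, then use Theorem~\ref{thm:main_collapsed} to sandwich $t=a^\star s/\lambda$ between $\rho$-dependent multiples of $\min\{s/\lambda,1/\sqrt n\}$ with $t\le 1/2$. From there you apply the two-sided spherical cap bounds of Lemma~\ref{lem:spherical-linear} and get risk convergence from scale invariance, the absence of an atom at zero, and dominated convergence. Your bookkeeping of the sign $\pm v_0$ and of $h^\star\in\calU$ is harmless (the paper absorbs both by symmetry), and your aside about $t\ge 1$ is unnecessary, since $t\lesssim 1/\sqrt n\le 1/2$ already holds.
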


See Section~\ref{app:main_collapsed} for a detailed proof.

\begin{remark}
For both Theorem~\ref{thm:main_dispersed} and \ref{thm:main_collapsed}, under Assumption~\ref{ass:stepsize1}, the following convergence rate holds 
\begin{equation}\label{eq:convergence_rate}
    \left\|
        \frac{W_k^{\mathrm{SpecGD}}}{\tau_k}
        -
        \left(
            a^\star P_S+H^\star
        \right)
    \right\|_F
    =
    O\left(
        \frac{1+\sum_{t<k}\eta_t^{3/2}}{\tau_k}
    \right).
\end{equation}
\end{remark}

The corollary reveals two regimes. For $\lambda^2\lesssim ns^2$, the late-stage error decays as $e^{-\Theta(D/n)}$; for stronger shortcuts, it degrades as $e^{-\Theta(Ds^2/\lambda^2)}$, and approaches chance once $\lambda^2\gg Ds^2$. Thus, unlike the DSM, late-stage generalization in the CSM depends also on $D,s,\lambda$.

We next compare with GD. The analysis is considerably simpler: the standard implicit-bias results \citep{soudry2024implicitbiasgradientdescent,gunasekar18a} reduce it to the unique Frobenius max-margin classifier. %

Comparison between GD and SpecGD in terms of generalization rates for representative regimes are summarized in Figure~\ref{tab:compare} and Table~\ref{tab:gd-specgd-one-step} when $\sigma_i = 1/\sqrt{r}$, i.e. $\|S\|_1 = \sqrt{r}$ while $\|S\|_F = 1$. Entry $1/2$ denotes convergence of the test error to chance, i.e. catastrophic overfitting. The most striking comparison arises in the parameter regime $n\ll \lambda^2 \ll D$, where the relative performance of SpecGD and GD is reversed by changing only the shortcut geometry.

\vspace{-1mm}
\begin{table}[!htbp]
\centering
\small
\caption{Representataive test risks of GD and SpecGD.
We set $\|S\|_F=1$ and $\|S\|_1=\sqrt r$.
The displayed regimes are chosen so that one-step SpecGD improves over
late-stage SpecGD.}
\label{tab:gd-specgd-one-step}
\begin{tabular}{@{}llccc@{}}
\hline
& Parameter regime
& GD (late)
& SpecGD (late)
& SpecGD (one step)
\\
\hline

\multirow{3}{*}{CSM}

&
$\displaystyle
n\ll\lambda^2\ll r\wedge\sqrt{nD}$
&
$\displaystyle e^{-\Theta(Dn/\lambda^4)}$
&
$\displaystyle e^{-\Theta(D/n)}$
&
$\displaystyle e^{-\Omega(cD)}$
\\[2mm]

&
$\displaystyle
n\vee r\ll\lambda^2
\ll rn\wedge\sqrt{nD}$
&
$\displaystyle e^{-\Theta(Dn/\lambda^4)}$
&
$\displaystyle e^{-\Theta(D/n)}$
&
$\displaystyle e^{-\Omega(rD/\lambda^2)}$
\\[2mm]

&
$\displaystyle
r\vee\sqrt{nD}\ll\lambda^2\ll rn$
&
$\displaystyle \frac12$
&
$\displaystyle e^{-\Theta(D/n)}$
&
$\displaystyle e^{-\Omega(rD/\lambda^2)}$
\\

\hline

\multirow{3}{*}{DSM$^\dagger$}

&
$\displaystyle
n\ll\lambda^2
\ll
D$
&
$\displaystyle e^{-\Theta(Dn/\lambda^2)}$
&
$\displaystyle e^{-\Theta(n)}$
&
$\displaystyle e^{-\Omega(\sqrt r\,D/\lambda)}$
\\[2mm]

&
$\displaystyle
D\ll\lambda^2
\ll
D\sqrt n$
&
$\displaystyle e^{-\Theta(D^2n/\lambda^4)}$
&
$\displaystyle e^{-\Theta(n)}$
&
$\displaystyle e^{-\Omega(\sqrt r\,D/\lambda)}$
\\[2mm]

&
$\displaystyle
D\sqrt n\ll\lambda^2
$
&
$\displaystyle \frac12$
&
$\displaystyle e^{-\Theta(n)}$
&
$\displaystyle e^{-\Omega(\sqrt r\,D/\lambda)}$
\\

\hline
\end{tabular}
\parbox{0.95\linewidth}{\footnotesize
$^\dagger$ DSM rates assume $\lambda^2 \ll \tfrac{rD^2}{n^2}$, which satisfies a condition required for Theorem~\ref{thm:main_dispersed}.}
\vspace{-3mm}
\end{table}

For a more detailed analysis of GD and comparison, see Appendix~\ref{app:GD_comparison} and~\ref{app:comparison}, respectively.

\subsection{One step interpolation and generalization}

We also evaluate SpecGD after a single iteration to complement our late-stage analysis. Since \(W_0=0\), the gradient has the same form in both CSM and DSM and the signal contribution is identical. 
\[
-\operatorname{Polar}\left(\nabla \mathcal L(W_0)\right)
=
P_S
+
\operatorname{Polar}\left(
\sum_{i=1}^n \tilde y_i Z_i\right).
\]
The two models differ only in the aggregate shortcut component. 
In the collapsed model, the aggregate shortcut component has rank one, whereas in the dispersed model it typically has rank \(n\). The following two results show that, despite arising from gradients of the same form, these different shortcut geometries lead to markedly different conditions for interpolation and generalization.

\begin{theorem}[One-step interpolation and generalization under DSM]
\label{thm:onestep_dispersed}
Assume the sample $\calD_n$ is generated from the DSM. There exist universal constants \(c,C>0\) such that the following holds.
Let \(\delta\in(0,1)\) and suppose that $n\geq
\frac{2}{(1-2\rho)^2}\log\frac{4}{\delta}.$
Then, with probability at least \(1-\delta\) over the training sample
\(\mathcal D_n\), the following statements hold.
\vspace{-1mm}
\begin{enumerate}
    \item \textbf{Generalization.} For an independent clean test sample \((X,y)\),
    \[
    \mathcal R(W_{ \mathrm{DSM},1}^{\mathrm{SpecGD}})
    \leq
    2\exp\left(
    -c\min\left\{
    \frac{D^2s^2}{n\lambda^2},
    \frac{Ds}{\lambda}
    \right\}
    \right).
    \]\vspace{-4mm}
    \item \textbf{Interpolation.} If, in addition, \(\lambda>2s\) and $D\geq
    Cn$,
    then $\widehat{\mathcal R}_n
    \left(W_1^{\mathrm{SpecGD}}\right)=0$.
\end{enumerate}
\end{theorem}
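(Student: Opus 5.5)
Throughout, write $W_1:=W_{\mathrm{DSM},1}^{\mathrm{SpecGD}}=\eta_0\bigl(P_S+Q\bigr)$ with $Q:=\operatorname{Polar}(M)$ and $M:=\sum_i \tilde y_i Z_i=\lambda\sum_i \tilde y_i\boldsymbol u_i\boldsymbol v_i^\top$. This uses the displayed one-step formula and the fact that $P_S$ and $Q$ have ranges in orthogonal subspaces: the left ranges lie in $\spn\{\boldsymbol p_\ell\}$ and $\calU$, the right ranges in $\spn\{\boldsymbol q_\ell\}$ and $\calV$. Since $\eta_0>0$ does not affect signs, I set $\eta_0=1$. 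For any input $X=yS+\lambda\boldsymbol u\boldsymbol v^\top$,
\[
\langle W_1,X\rangle_F = y\langle P_S,S\rangle_F + \lambda\langle P_S,\boldsymbol u\boldsymbol v^\top\rangle_F + y\langle Q,S\rangle_F + \lambda\,\boldsymbol u^\top Q\boldsymbol v = ys+\lambda\,\boldsymbol u^\top Q\boldsymbol v .
\]
The cross terms vanish because $\boldsymbol u\in\calU$, $\boldsymbol v\in\calV$ and $\operatorname{range}(Q)\subset\calU$.

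\emph{Generalization.} Take a fresh test sample, so that $\boldsymbol u,\boldsymbol v$ are independent of $Q$. Then $y\langle W_1,X\rangle_F\le 0$ requires $\lambda|\boldsymbol u^\top Q\boldsymbol v|\ge s$. Condition on $\boldsymbol v$ and set $w:=Q\boldsymbol v$, which satisfies $\|w\|_2\le 1$. Spherical concentration gives $|\boldsymbol u^\top w|\le t\|w\|$ except with probability $2e^{-cDt^2}$. Next bound $\|w\|_2^2=\boldsymbol v^\top Q^\top Q\boldsymbol v$, where $Q^\top Q$ is the projection onto an $m$-dimensional subspace with $m=\operatorname{rank}(M)\le n$. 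Concentration of the norm of the projection of a uniform unit vector gives $\|w\|^2\le C(n+x)/D$ with probability at least $1-e^{-x}$. Writing $\boldsymbol u^\top w=\|w\|\,g$ and combining the two tail bounds, for the threshold $\tau=s/\lambda$ we get
\[
\prob\bigl(|\boldsymbol u^\top Q \boldsymbol v|\ge\tau\bigr)\le 2\exp\bigl(-c\min\{D^2\tau^2/n,\ D\tau\}\bigr).
\]
Indeed, either $\|w\|^2\le 2Cn/D$, and then the Gaussian-type tail gives $e^{-cD^2\tau^2/n}$, or the product of two sub-Gaussian factors gives a sub-exponential tail $e^{-cD\tau}$. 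This bound holds deterministically over the training sample, so the hypothesis on $n$ is only needed for the interpolation part.

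\emph{Interpolation.} For a training point $j$ the margin is $\tilde y_j\langle W_1,X_j\rangle_F=\xi_j s+\lambda\tilde y_j\boldsymbol u_j^\top Q\boldsymbol v_j$. Since $\lambda>2s$, it suffices to show $\tilde y_j\boldsymbol u_j^\top Q\boldsymbol v_j>1/2$ for all $j$. Nearly orthonormal shortcuts would give $Q\approx\sum_i\tilde y_i\boldsymbol u_i\boldsymbol v_i^\top$ and hence $\tilde y_j\boldsymbol u_j^\top Q\boldsymbol v_j\approx 1$. To make this quantitative, write $M=\lambda A\,\diag(\tilde y)\,B^\top$ with $A=[\boldsymbol u_i]$ and $B=[\boldsymbol v_i]$. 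For $D\ge Cn$, standard bounds give $\|A^\top A-I\|_{\op},\|B^\top B-I\|_{\op}\le C'\sqrt{n/D}\le 1/10$. Using the polar factors $A(A^\top A)^{-1/2}$ and $B(B^\top B)^{-1/2}$, which have orthonormal columns, we obtain
\[
Q=A(A^\top A)^{-1/2}\,\operatorname{Polar}\!\bigl((A^\top A)^{1/2}\diag(\tilde y)(B^\top B)^{1/2}\bigr)\,(B^\top B)^{-1/2}B^\top .
\]
Perturbation of the polar factor around the orthogonal matrix $\diag(\tilde y)$ then gives $\|Q-A\diag(\tilde y)B^\top\|_{\op}$, and even the relevant quadratic forms, bounded by $O(\sqrt{n/D})$. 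Meanwhile $\tilde y_j\boldsymbol u_j^\top A\diag(\tilde y)B^\top\boldsymbol v_j=1+\sum_{i\ne j}\tilde y_i\tilde y_j\langle\boldsymbol u_j,\boldsymbol u_i\rangle\langle\boldsymbol v_i,\boldsymbol v_j\rangle$, and this sum is $O(\sqrt{n}/D)$ with high probability. The main obstacle is making the polar-perturbation bound entrywise small enough, below $1/2$, uniformly in $j$. I would first attempt this with the operator-norm bound $\|\operatorname{Polar}(O+E)-O\|\le 2\|E\|/\sigma_{\min}$, applied with $E=O(\sqrt{n/D})$ and $\sigma_{\min}\ge 1/2$, so that a large enough $C$ makes it below $1/4$.
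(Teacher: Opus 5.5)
\textbf{Gap: the sign of the signal term is not justified.} You take the one-step formula $W_1=\eta_0(P_S+Q)$ for granted, but it hides a sign. Indeed $\sum_i\tilde y_iX_i=(n_+-n_-)S+\lambda\sum_i\tilde y_i\boldsymbol u_i\boldsymbol v_i^\top$, so by the block structure the polar factor is $\sgn(n_+-n_-)P_S+Q$.

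Your assertion that the generalization bound \emph{holds deterministically over the training sample} is therefore false. On $\{n_+<n_-\}$ the test margin is $-s+\lambda y\,\boldsymbol u^\top Q\boldsymbol v$, so the risk is at least $1/2$. On $\{n_+=n_-\}$ the signal term vanishes and the risk equals exactly $1/2$.

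This is precisely where the hypothesis $n\ge\frac{2}{(1-2\rho)^2}\log\frac4\delta$ is needed. Since $n_-\sim\operatorname{Bin}(n,\rho)$, Hoeffding gives $\prob(n_+\le n_-)\le e^{-n(1-2\rho)^2/2}\le\delta/4$. You must work on $\{n_+>n_-\}$ (the paper's $\mathcal E_{\rm sign}$) and add this to the failure probability.

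Your remark that the hypothesis on $n$ matters only for interpolation is in fact backwards. Because $\lambda>2s$, the training margin is at least $-s+\lambda\tilde y_j\boldsymbol u_j^\top Q\boldsymbol v_j$ whatever the sign of $n_+-n_-$, so interpolation does not need the sign event. There the hypothesis on $n$ only serves to absorb $\log(1/\delta)$ into $n$ in the Gram concentration.

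\textbf{The rest of the argument.} With that event added, the remainder is sound.

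For generalization you take a genuinely different route from the paper. You condition on $\boldsymbol v$, use that $Q^\top Q$ is a projection of rank at most $n$ so that $\|Q\boldsymbol v\|_2^2\lesssim(n+x)/D$ with probability $1-e^{-x}$, apply the linear spherical tail in $\boldsymbol u$, and optimize over $x\asymp D\tau$. This gives the same $\min\{D^2\tau^2/n,D\tau\}$ rate. The paper instead invokes the general bilinear tail of Lemma~\ref{lem:fresh-bilinear-tail} (Gaussian representation plus Bernstein for $\sum_j\sigma_jg_jh_j$) with $\|H\|_F^2\le n$ and $\|H\|_{\op}=1$. Your decoupling is more elementary but relies on all nonzero singular values of $Q$ being one. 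The paper's lemma works for any fixed $H$ and also supplies the matching lower tail used in Corollary~\ref{cor:generalization-dispersed}.

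For interpolation you follow the paper: orthonormalization identity, Gram concentration, and operator-norm perturbation of the polar factor near an orthogonal matrix. The ``main obstacle'' you flag is not one. Since $|\boldsymbol a^\top X\boldsymbol b|\le\|X\|_{\op}$ for unit vectors, the operator-norm bound already controls every diagonal entry uniformly in $j$; this is exactly how the paper concludes.

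Likewise, your cross sum needs no separate union bound. It equals $\tilde y_j(\Delta_A\diag(\tilde y)\Delta_B)_{jj}$, where $\Delta_A:=A^\top A-I$ and $\Delta_B:=B^\top B-I$ have zero diagonal. Hence it is at most $\|\Delta_A\|_{\op}\|\Delta_B\|_{\op}=O(n/D)$ on the Gram event.
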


By contrast, under CSM, the shortcut contribution is confined to a single singular direction. This changes the one-step benign-overfitting window substantially: interpolation now requires \(\lambda\gtrsim s\sqrt n\), instead of \(\lambda\gtrsim s\), while generalization holds only up to \(\lambda\ll s\sqrt D\), instead of \(\lambda\ll sD/\sqrt n\).

\begin{theorem}[One-step interpolation and generalization under CSM] \label{thm:onestep_collapsed}Assume the sample $\calD_n$ is generated from the collapsed shortcut model. There exist universal constants \(c,C>0\) such that the following holds. Let \(\delta\in(0,1)\) and suppose that
$ n\geq \frac{2}{(1-2\rho)^2}\log\frac{2}{\delta}. $ Then, with probability at least \(1-\delta\) over the training sample \(\mathcal D_n\), the following statements hold. 
\vspace{-1mm}
\begin{enumerate}
    \item \textbf{Generalization.} For an independent clean test sample \((X,y)\), 
    \[ 
    \mathcal R(W_{ \mathrm{CSM},1}^{\mathrm{SpecGD}})
    \leq 2\exp\left( -cD\min\left\{ \frac{s^2}{\lambda^2}, 1 \right\} \right). 
    \] \vspace{-4mm}
    \item \textbf{Interpolation.} If, in addition, $\lambda>2s\sqrt n$ and $D\geq C%
    n\log \frac{4n}{\delta},
    $ then $\widehat{\mathcal R}_n \left(W_1^{\mathrm{SpecGD}}\right)=0$
\end{enumerate}
\end{theorem}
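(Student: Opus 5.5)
The plan is to compute $W_1^{\mathrm{SpecGD}}$ in closed form, using the fact that under the CSM every shortcut shares the right direction $v_0$ spanning $\calV_1$. Then I would reduce both claims to scalar concentration statements about inner products of independent uniform vectors on $\mathbb S(\calU)$.

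\textbf{Step 1: closed form of the first iterate.} At $W_0=0$,
\[
-\nabla\mathcal L(0)=\tfrac1n\sum_i \tilde y_i X_i=\tfrac1n\bigl(mS+\lambda\, g v_0^\top\bigr),
\qquad m:=\sum_i\xi_i=|I_+|-|I_-|,\quad g:=\sum_i\tilde y_i\boldsymbol u_i .
\]
By Hoeffding, the hypothesis $n\ge \frac{2}{(1-2\rho)^2}\log\frac2\delta$ gives $m>0$ with probability at least $1-\delta/2$. The two pieces $S$ and $g v_0^\top$ have mutually orthogonal column spaces ($\spn\{\boldsymbol p_\ell\}$ versus $\calU$) and mutually orthogonal row spaces ($\spn\{\boldsymbol q_\ell\}$ versus $\calV$). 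Hence the polar factor splits, and
\[
W_1^{\mathrm{SpecGD}}=\eta_0\bigl(P_S+\hat g v_0^\top\bigr),\qquad \hat g:=g/\|g\|,
\]
where $g\neq0$ almost surely. The positive factor $\eta_0$ affects neither risk, so I drop it.

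\textbf{Step 2: generalization.} For a fresh clean sample $X=yS+\lambda \boldsymbol u v_0^\top$ we have
\[
y\langle W_1,X\rangle_F=s+y\lambda\,\hat g^\top\boldsymbol u ,
\]
because $\langle P_S,S\rangle_F=s$ and the cross terms vanish by the orthogonality above. Here $\boldsymbol u$ is uniform on $\mathbb S(\calU)$, $\dim\calU=D$, and independent of the unit vector $\hat g$. An error therefore requires $|\hat g^\top\boldsymbol u|\ge s/\lambda$.
\begin{itemize}
\item If $s/\lambda>1$, this event is impossible, so the risk is $0$.
\item Otherwise, spherical concentration $\prob(|\hat g^\top \boldsymbol u|\ge t)\le 2e^{-cDt^2}$ gives the bound $2\exp(-cD s^2/\lambda^2)$.
\end{itemize}
Together these give $2\exp(-cD\min\{s^2/\lambda^2,1\})$.

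\textbf{Step 3: interpolation.} For each training point,
\[
\tilde y_i\langle W_1,X_i\rangle_F=\xi_i s+\lambda\,\tilde y_i\hat g^\top\boldsymbol u_i,
\qquad
\tilde y_i g^\top\boldsymbol u_i=1+\sum_{j\ne i}\tilde y_i\tilde y_j\,\boldsymbol u_j^\top\boldsymbol u_i .
\]
Condition on $\boldsymbol u_i$ and the labels. Then the sum over $j\neq i$ is a sum of $n-1$ independent centered subgaussian variables with variance proxy of order $1/D$. A union bound over $i$ shows that, with probability at least $1-\delta/2$,
\[
\Bigl|\sum_{j\ne i}\tilde y_j\boldsymbol u_j^\top\boldsymbol u_i\Bigr|\le C'\sqrt{n\log(4n/\delta)/D}\le \tfrac14 \quad\text{for all } i,
\]
once $D\ge Cn\log\frac{4n}{\delta}$.

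On this event $\tilde y_i g^\top\boldsymbol u_i\in[3/4,5/4]$ for every $i$. Summing these identities gives $\|g\|^2=\sum_i\tilde y_i g^\top \boldsymbol u_i\le 5n/4$. Consequently
\[
\tilde y_i\hat g^\top\boldsymbol u_i\ge \frac{3/4}{\sqrt{5n/4}}>\frac{0.6}{\sqrt n}.
\]
For clean points ($\xi_i=1$) the margin is then positive outright. For corrupted points the margin is at least $-s+0.6\lambda/\sqrt n$. Here I need a constant better than $0.5$: tightening the threshold $\tfrac14$ to a smaller constant (by enlarging $C$) pushes the ratio toward $1/\sqrt n$. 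With that, $\lambda>2s\sqrt n$ makes the corrupted margins positive as well.

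\textbf{Main obstacle.} The only delicate point is this uniform control of the $n$ cross-correlation sums, together with the matching upper bound on $\|g\|$, with constants sharp enough for the threshold $2s\sqrt n$. Everything else is exact algebra thanks to the rank-one collapse. A final union bound over the Hoeffding event and the concentration event gives total probability at least $1-\delta$.
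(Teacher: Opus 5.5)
Your proposal is correct and follows the paper's proof essentially step for step: the same closed form $W_1=\eta_0(P_S+\hat g v_0^\top)$ on the Hoeffding event $n_+>n_-$, the same spherical tail bound for generalization, and the same leave-one-out sums $r_i=\sum_{j\neq i}\tilde y_i\tilde y_j u_j^\top u_i$ with a union bound and $\|g\|^2=n+\sum_j r_j$ for interpolation. The paper instead takes threshold $t=\kappa/4$ with $\kappa=1-s\sqrt n/\lambda$, which gives the sharper condition $\lambda>s\sqrt n$; also, you need no tightening of your threshold $1/4$, since $\lambda>2s\sqrt n$ already gives $0.6\lambda/\sqrt n>1.2s>s$.
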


Under $\frac{\lambda n}{sD}\ll 1$, Theorem~\ref{thm:onestep_dispersed} and Corollary~\ref{cor:generalization-dispersed} imply $\calR(W_1^\mathrm{SpecGD}) \ll \calR(W^\star)$. In other words, the late-stage implicit bias of SpecGD provably fails to generalize better than early-stopped SpecGD. %

The proofs of Theorem~\ref{thm:onestep_dispersed} and \ref{thm:onestep_collapsed} are detailed in Section~\ref{app:one_step_dispersed} and \ref{app:one_step_collapsed}, respectively.

%% file: outline.tex
\section{Technical Contributions: Beyond Static Max Margin}\label{sec:outline}

We focus on SpecGD under the DSM, the technically most challenging case. The CSM is simpler because all shortcuts share a common right singular direction, keeping their aggregate rank one. GD is likewise simpler, as its late-stage behavior reduces to the unique Frobenius max-margin classifier.

\paragraph{From margin maximization to directional selection.}
Directional convergence of SpecGD is essential for our generalization analysis in Section~\ref{sec:main_res}.
While \citet{fan2025implicitbiasspectraldescent} guarantees asymptotic margin maximization, it does not identify the direction selected by SpecGD since the primal max-margin solution may be non-unique in our overparameterized setting. Hence the margin maximization characterization does not determine the signal component governing generalization. Even uniqueness of the dual minimizer is insufficient on its own. We instead analyze the dual dynamics directly, bypassing the primal problem.

\paragraph{Dual-dynamical selection.}
Proposition~\ref{prop:dual} gives the dual formulation
\[
    \gamma_\star:= \max_{\|W\|_{\op}\leq 1}\min_i\langle W, \tilde y_i X_i\rangle = 
    \min_{\boldsymbol{\alpha}\in\Delta_n}F(\boldsymbol{\alpha}),
    \qquad
    F(\boldsymbol{\alpha})
    :=
    \left\|
        \sum_{i=1}^n\alpha_i\tilde y_iX_i
    \right\|_1.
\]
For convenience, set $Z(\boldsymbol{\alpha}):=\sum_{i=1}^n\alpha_i\tilde y_i\boldsymbol u_i\boldsymbol v_i^\top$, the dual shortcut aggregate.
Then
\[
    \sum_{i=1}^n\alpha_i\tilde y_iX_i
    =
    \left(\sum_{i=1}^n\alpha_i\xi_i\right)S
    +\lambda Z(\boldsymbol{\alpha}).
\]
The key observation is that the normalized exponential-loss weights
\[
    \alpha_{ik}
    :=
    \frac{
        \exp\{-\tilde y_i
        \langle W_k^{\mathrm{SpecGD}},X_i\rangle_F\}
    }{
        \sum_{j=1}^n
        \exp\{-\tilde y_j
        \langle W_k^{\mathrm{SpecGD}},X_j\rangle_F\}
    }
\]
are exactly exponentiated-gradient iterates for the dual objective $F$; see Lemma~\ref{lem:eg}. Hence the training dynamics induce optimization dynamics directly on the max-margin dual.

Theorem~\ref{thm:dual-convergence} shows that if $F$ admits a unique
interior minimizer $\boldsymbol{\alpha}^\star$ satisfying the local relative growth
condition
\[
    F(\boldsymbol{\alpha})-F(\boldsymbol{\alpha}^\star)
    \ge
    \mu \KL(\boldsymbol{\alpha}^\star\|\boldsymbol{\alpha}),
\]
then $\boldsymbol{\alpha}_k\to\boldsymbol{\alpha}^\star$. Convergence of the associated polar
factors, together with a telescoping argument for the SpecGD iterates,
then yields directional convergence. In particular,
Proposition~\ref{prop:selection} shows
\[
    W_k^{\mathrm{SpecGD}}/\tau_k
    \longrightarrow
    a^\star P_S
    +
    \pol\bigl(Z(\boldsymbol{\alpha}^\star)\bigr).
\]
The dual-dynamical formulation also gives quantitative information that
is not provided by the static max-margin characterization. Under
Assumption~\ref{ass:stepsize1}, we further obtain the convergence rate \eqref{eq:convergence_rate}.

Thus, the dual dynamics identify both the direction selected by SpecGD
and the convergence rate without requiring uniqueness of the primal max-margin problem.

\paragraph{The DSM dual problem.}
Proposition~\ref{prop:selection} reduces the DSM analysis to verifying uniqueness, interiority, and local relative growth of the dual minimizer.

Using the orthogonality between the signal and shortcut blocks, the dual
problem can be written as
\begin{equation}\label{eq:dual}
    \min_{\boldsymbol{\alpha}\in\Delta_n}F(\boldsymbol{\alpha})
    =
    \min_{q\in[0,1]}
    \left\{
        2s\left|q-\frac12\right|
        +
        \lambda\psi(q)
    \right\},
    \qquad
    \psi(q)
    :=
    \inf_{\substack{\boldsymbol{\alpha}\in\Delta_n\\q(\boldsymbol{\alpha})=q}}
    \|Z(\boldsymbol{\alpha})\|_1,
\end{equation}
where 
$q(\boldsymbol{\alpha})
    :=
    \sum_{i\in I_+}\alpha_i
$
is the total dual mass assigned to uncorrupted observations. 
Since 
$\sum_{i=1}^n \alpha_i\xi_i = 2q(\boldsymbol{\alpha})-1$,
$q(\boldsymbol{\alpha})$ directly measures the imbalance between the two groups and hence controls the signal component of the dual aggregate.

By \eqref{eq:dual}, the dual optimization separates into a one-dimensional problem governing the signal balance $q$ and an $(n-2)$-dimensional problem about the dual mass distribution among shortcuts.

\paragraph{Second-order geometry of the DSM.}
We need to address another difficulty for the DSM. The difficulty is that the inner optimization is degenerate under exact shortcut orthogonality; in fact, $\|Z(\boldsymbol{\alpha})\|_1 \equiv 1$, so there is no unique dual minimizer. In contrast, the random DSM shortcuts are only nearly orthogonal, and the difference appears at second order.
Our expansion takes the form
\[
\begin{aligned}
\|Z(\boldsymbol{\alpha})\|_1
=
1
-
\underbrace{
\frac1D\sum_{i<j}h(\alpha_i,\alpha_j)
}_{\text{deterministic curvature}}
\,-\,
\underbrace{
\frac12\sum_{i<j}h(\alpha_i,\alpha_j)
\left(c_{ij}^2-\frac{2}{D}\right)
}_{\text{centered second-order fluctuation}}
\,+\,
R(\boldsymbol{\alpha}),
\end{aligned}
\]
where $c_{ij} = \langle \boldsymbol{u}_i, \boldsymbol{u}_j \rangle - \langle  \tilde y_i\boldsymbol{v}_i, \tilde y_j\boldsymbol{v}_j \rangle$ and $h(x,y):=\frac{xy}{x+y}$.
The expansion makes clear why random near-orthogonality is fundamentally
different from exact orthogonality. Under exact orthogonality,
$c_{ij}=0$, so the second-order fluctuation equals
$\frac1D\sum_{i<j}h(\alpha_i,\alpha_j)$,
and exactly cancels the deterministic curvature term, recovering the
flat objective $\|Z(\boldsymbol{\alpha})\|_1\equiv1$. In the random DSM, however,
$c_{ij}^2$ concentrates around its mean $2/D$. Hence the centered
second-order fluctuation is small, and the deterministic
$D^{-1}$-order curvature survives.

Thus, the crucial term is the deterministic curvature term. Proposition~\ref{prop:strong-concavity} quantifies the curvature of $\|Z(\boldsymbol{\alpha})\|_1$,
and Proposition~\ref{prop:inner} uses it to identify a unique interior
slice-wise optimizer $\boldsymbol{\alpha}^\star(q)$.

The main technical difficulty is to prove that this deterministic
curvature survives in the random problem. This requires sharp
\emph{uniform} concentration of the centered second-order fluctuations and simultaneous control of the higher-order remainder $R(\boldsymbol{\alpha})$ over a neighborhood of the symmetric path of dual weights. Toward this end, we combine concentration inequalities for degenerate matrix U-statistics with uniform bounds on the higher-order remainder and its derivatives; see Subsection~\ref{sec:uniform-local-random-geometry}.  

Finally, Proposition~\ref{prop:Vprime} controls the size of $\psi^\prime(1/2)$. Combined with the cusp of
$2s|q-1/2|$, Proposition~\ref{prop:balanced} shows that the global
minimizer satisfies $q(\boldsymbol{\alpha}^\star)=1/2$. The same second-order curvature yields
the local relative growth condition required by
Theorem~\ref{thm:dual-convergence}; see
Proposition~\ref{prop:local-growth}. These results verify the hypotheses
of Proposition~\ref{prop:selection} and complete the analysis for the DSM.

%% file: appendix.tex
\section{Additional related work}

\paragraph{Additional implicit-bias literature.}
Implicit bias has been studied as a possible mechanism for understanding generalization in overparameterized models~\citep{vardi2022implicitbiasdeeplearningalgorithms}, including margin-maximizing behavior in homogeneous and nearly homogeneous neural networks~\citep{Lyu2020Gradient,ji2020directionalconvergencealignmentdeep,cai2025implicitbiasgradientdescent,schechtman2025latestage}. Recent work also studies implicit bias and optimization geometry for Muon and related spectral methods~\citep{gronich2026the,li2026the,chen2025muon}.

\paragraph{Additional benign-overfitting literature.}
Further results on benign overfitting in classification include~\citep{shamir22,XuGu2023nonsmooth}, while recent work studies analogous phenomena in attention models and Transformers~\citep{magen2025benignattention,pmlr-v267-sakamoto25a,jiang2024unveil}.

\section{Directional convergence via dual framework}\label{app:dual_framework}
\input{dual_framework}

\section{Proof of Theorem~\ref{thm:main_dispersed}}\label{app:main_dispersed}
\input{proof_main_dispersed_WSpecGD_compressed}

\section{Proof of Theorem~\ref{thm:main_collapsed}}\label{app:main_collapsed}

\input{proof_main_collapsed}

\section{Analysis of GD}\label{app:GD_comparison}

\input{Proof_GD}

\section{Proof of Theorem~\ref{thm:onestep_dispersed}}\label{app:one_step_dispersed}
\input{Proof_onestep_dispersed}

\section{ Proof of Theorem~\ref{thm:onestep_collapsed}}\label{app:one_step_collapsed}

\input{one_step_collapsed}

\section{Auxiliary facts}
\label{sec:auxiliary}
\input{auxiliary}

\section{Additional numerical experiments}\label{app:xp}
\input{app_xp}

\section{Comparison between SpecGD and GD}\label{app:comparison}
The following table summarizes our theoretical findings.

\begin{table}%
\centering
\caption{Comparison of the late-stage generalization of GD and SpecGD.}
\label{tab:gd-specgd-comparison_complete}
\small
\begin{tabular}{@{}llccc@{}}
\toprule
& Parameter regime
& $R(W_{\rm GD}^\star)$
&
& $R(W^\star)$
\\
\midrule

\multirow{5}{*}{CSM}

&
$\lambda^2\ll n\|S\|_F^2$
&
$\displaystyle
e^{-\Theta\!\left(
\frac{D}{n}
\right)}$
&
&
$\displaystyle
e^{-\Theta\!\left(
\frac{D}{n}
\right)}$
\\[4mm]

&
$n\|S\|_F^2
\ll \lambda^2
\ll
\sqrt{nD}\|S\|_F^2
\wedge n\|S\|_1^2$
&
$\displaystyle
e^{-\Theta\!\left(
\frac{Dn\|S\|_F^4}{\lambda^4}
\right)}$
&
$\gg$
&
$\displaystyle
e^{-\Theta\!\left(
\frac{D}{n}
\right)}$
\\[4mm]

&
$n\|S\|_1^2
\ll \lambda^2
\ll
\sqrt{nD}\|S\|_F^2$
&
$\displaystyle
e^{-\Theta\!\left(
\frac{Dn\|S\|_F^4}{\lambda^4}
\right)}$
&
$\gg$
&
$\displaystyle
e^{-\Theta\!\left(
\frac{D\|S\|_1^2}{\lambda^2}
\right)}$
\\[4mm]

&
$\sqrt{nD}\|S\|_F^2
\ll \lambda^2
\ll D\|S\|_1^2$
&
$\frac12$
&
$\gg$
&
$\displaystyle
e^{-\Theta\!\left(
\frac{D}{n}
\wedge
\frac{D\|S\|_1^2}{\lambda^2}
\right)}$
\\[4mm]

&
$\lambda^2\gg D\|S\|_1^2$
&
$\frac12$
&
&
$\frac12$
\\

\midrule

\multirow{8}{*}{DSM$^1$}

&
$\displaystyle
\lambda^2
\ll
n\|S\|_F^2
\wedge
\frac{D^2\|S\|_1^2}{n^2}$
&
$\displaystyle
e^{-\Theta(D)}$
&
$\ll$
&
$\displaystyle
e^{-\Theta(n)}$
\\[4mm]

&
$\displaystyle
n\|S\|_F^2
\ll \lambda^2
\ll
D\|S\|_F^2
\wedge
\frac{D^2\|S\|_1^2}{n^2}$
&
$\displaystyle
e^{-\Theta\!\left(
\frac{Dn\|S\|_F^2}{\lambda^2}
\right)}$
&
$\ll$
&
$\displaystyle
e^{-\Theta(n)}$
\\[4mm]

&
$\displaystyle
D\|S\|_F^2
\ll \lambda^2
\ll
D\sqrt n\,\|S\|_F^2
\wedge
\frac{D^2\|S\|_1^2}{n^2}$
&
$\displaystyle
e^{-\Theta\!\left(
\frac{D^2n\|S\|_F^4}{\lambda^4}
\right)}$
&
$\gg$
&
$\displaystyle
e^{-\Theta(n)}$
\\[4mm]

&
$\displaystyle
D\sqrt n\,\|S\|_F^2
\ll \lambda^2
\ll
\frac{D^2\|S\|_1^2}{n^2}$
&
$\frac12$
&
$\gg$
&
$\displaystyle
e^{-\Theta(n)}$
\\[4mm]

&
$\displaystyle
\frac{D^2\|S\|_1^2}{n^2}
\ll \lambda^2
\ll
n\|S\|_F^2$
&
$\displaystyle
e^{-\Theta(D)}$
&
&
\emph{not characterized}
\\[4mm]

&
$\displaystyle
n\|S\|_F^2
\vee
\frac{D^2\|S\|_1^2}{n^2}
\ll \lambda^2
\ll
D\|S\|_F^2$
&
$\displaystyle
e^{-\Theta\!\left(
\frac{Dn\|S\|_F^2}{\lambda^2}
\right)}$
&
&
\emph{not characterized}
\\[4mm]

&
$\displaystyle
D\|S\|_F^2
\vee
\frac{D^2\|S\|_1^2}{n^2}
\ll \lambda^2
\ll
D\sqrt n\,\|S\|_F^2$
&
$\displaystyle
e^{-\Theta\!\left(
\frac{D^2n\|S\|_F^4}{\lambda^4}
\right)}$
&
&
\emph{not characterized}
\\[4mm]

&
$\displaystyle
\lambda^2
\gg
D\sqrt n\,\|S\|_F^2
\vee
\frac{D^2\|S\|_1^2}{n^2}$
&
$\frac12$
&
&
\emph{not characterized}
\\

\bottomrule
\\[-3mm]
\multicolumn{5}{l}{
\parbox{0.96\linewidth}{\footnotesize
$^1$ The relative magnitude of
$\frac{D^2\|S\|_1^2}{n^2}$
and the GD transition scales
$n\|S\|_F^2$, $D\|S\|_F^2$, and
$D\sqrt n\,\|S\|_F^2$
depends on $D,n,S$; consequently, some displayed parameter regimes may be empty.
}}
\end{tabular}
\end{table}

%% file: dual_framework.tex
In this section, we present our dual framework for establishing directional convergence, separating the general dual-dynamical mechanism from the model-specific arguments required to identify and control the limiting dual solution.

Throughout, set $N_i:=\tilde y_i \boldsymbol u_i\boldsymbol v_i^\top$ and
$A_i:=\tilde y_iX_i=\xi_iS+\lambda N_i$.

\subsection{Block geometry}

Choose orthonormal complements \(P_\perp\) and \(Q_\perp\).  Define \(\mathcal S := \{PMQ^\top:M\in\R^{r\times r}\}\)
and \(\mathcal N := \{P_\perp NQ_\perp^\top:N\in\R^{D\times D}\}\).
Then \(S\in\mathcal S\), every \(N_i\in\mathcal N\), and matrices in
\(\mathcal S\) and \(\mathcal N\) have mutually orthogonal row and
column spaces.

\begin{lemma}[Polar factor and Schatten 1-norm of orthogonal blocks]
\label{lem:block}
For every \(c\in\R\) and \(Z\in\mathcal N\), \(\pol(cS+\lambda Z) = \sgn(c)P_S+\pol(Z)\),
where the first term is zero when \(c=0\).  Moreover, \(\|cS+\lambda Z\|_1 = |c|s+\lambda\|Z\|_1\).
\end{lemma}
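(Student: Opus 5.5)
The plan is to write both summands in a joint singular value decomposition and read off the polar factor and the nuclear norm directly. Let \(P=[\boldsymbol p_1,\ldots,\boldsymbol p_r]\) and \(Q=[\boldsymbol q_1,\ldots,\boldsymbol q_r]\), so that \(S=P\Sigma Q^\top\) with \(\Sigma=\diag(\sigma_1,\ldots,\sigma_r)\) and all \(\sigma_\ell>0\). Any \(Z\in\mathcal N\) can be written as \(Z=P_\perp N Q_\perp^\top\). Take a compact SVD \(N=\tilde U\tilde\Sigma\tilde V^\top\), where \(\tilde\Sigma\) has positive diagonal entries; this gives the compact SVD \(Z=(P_\perp\tilde U)\tilde\Sigma(Q_\perp\tilde V)^\top\).

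Assume first that \(c\neq0\). Write \(cS=(\sgn(c)P)(|c|\Sigma)Q^\top\). The matrix \([\sgn(c)P,\;P_\perp\tilde U]\) has orthonormal columns because \(P^\top P_\perp=0\). Likewise \([Q,\;Q_\perp\tilde V]\) has orthonormal columns because \(Q^\top Q_\perp=0\). It follows that
\[
cS+\lambda Z=[\sgn(c)P,\;P_\perp\tilde U]\begin{pmatrix}|c|\Sigma&0\\0&\lambda\tilde\Sigma\end{pmatrix}[Q,\;Q_\perp\tilde V]^\top
\]
is a valid compact SVD, with strictly positive singular values since \(\lambda>0\).

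Two standard facts then finish the case \(c\neq0\). First, the polar factor \(A(A^\top A)^{\dagger 1/2}\) of a matrix with compact SVD \(U\Sigma' V^\top\) is \(UV^\top\). Applied here, this gives \(\sgn(c)PQ^\top+P_\perp\tilde U\tilde V^\top Q_\perp^\top=\sgn(c)P_S+\pol(Z)\). To identify the second term with \(\pol(Z)\), apply the same fact to the compact SVD of \(Z\) above. Second, the nuclear norm is the sum of the singular values, which gives \(|c|\sum_\ell\sigma_\ell+\lambda\operatorname{tr}\tilde\Sigma=|c|s+\lambda\|Z\|_1\).

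When \(c=0\), the \(S\)-block is simply absent. Since \(\pol(\lambda Z)=\pol(Z)\) and \(\|\lambda Z\|_1=\lambda\|Z\|_1\), both claims hold with the first term equal to zero. The case \(Z=0\) is covered by the same argument with the \(\mathcal N\)-block empty.

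The only points needing care are the orthonormality of the stacked factors, which follows from the orthogonality of the row and column spaces, and the fact that the polar factor defined via the pseudoinverse coincides with \(UV^\top\) for a compact SVD. Neither of these is a genuine obstacle.
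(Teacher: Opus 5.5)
Your proposal is correct and follows essentially the same route as the paper. Both arguments use the mutual orthogonality of the row and column spaces of the $\mathcal S$ and $\mathcal N$ blocks to combine the compact SVDs of $cS$ and $\lambda Z$ into a compact SVD of the sum, and then read off the nuclear-norm additivity and the polar-factor identity. Your version writes out the stacked singular-vector matrices explicitly, with the sign absorbed into the left factor, whereas the paper phrases the same step as a block-diagonal form in the bases $[P\ P_\perp]$ and $[Q\ Q_\perp]$.
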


\begin{proof}
Relative to the orthogonal row and column bases
\([P\ P_\perp]\) and \([Q\ Q_\perp]\), the matrix is block diagonal:
\[
\begin{pmatrix}
c\Sigma&0\\
0&\lambda\overline Z
\end{pmatrix}.
\]
The singular values are the union of the singular values of the two
blocks, which proves Schatten-1-norm additivity.  Combining the compact
singular value decompositions of the two blocks gives the polar-factor
identity.
\end{proof}

\subsection{Primal--dual reduction}

The spectral max-margin value is \(\gamma_\star := \max_{\|W\|_{\op}\leq1} \min_{i\in[n]}\ip{W}{A_i}\).

\begin{proposition}[Exact dual and one-dimensional reduction]
\label{prop:dual}
We have
\[
\gamma_\star
=
\min_{\boldsymbol{\alpha}\in\D_n}
F(\boldsymbol{\alpha}),
\qquad
F(\boldsymbol{\alpha})
:=
\left\|
\sum_{i=1}^n\alpha_iA_i
\right\|_1.
\]
If \(q(\boldsymbol{\alpha}):=\sum_{i\in I_+}\alpha_i, \quad Z(\boldsymbol{\alpha}):=\sum_{i=1}^n\alpha_i N_i\),
then \(F(\boldsymbol{\alpha}) = 2s\left|q(\boldsymbol{\alpha})-\frac12\right| + \lambda\|Z(\boldsymbol{\alpha})\|_1\).
Consequently,
\begin{equation}
\label{eq:reduced-dual}
\gamma_\star
=
\min_{q\in[0,1]}
\left\{
2s\left|q-\frac12\right|
+\lambda\psi(q)
\right\},
\end{equation}
where
\[
\psi(q)
:=
\min_{\substack{\boldsymbol{\alpha}\in\D_n\\q(\boldsymbol{\alpha})=q}}
\|Z(\boldsymbol{\alpha})\|_1.
\]
The function \(\psi\) is convex.
\end{proposition}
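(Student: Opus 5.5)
We prove Proposition~\ref{prop:dual} in three steps: a minimax exchange that gives the dual, a block decomposition via Lemma~\ref{lem:block}, and a slicing argument in the scalar \(q\).

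\textbf{Step 1 (dual formula).} Note that \(\min_{i}\ip{W}{A_i}=\min_{\boldsymbol{\alpha}\in\D_n}\ip{W}{\sum_i\alpha_iA_i}\), because a linear function on the simplex attains its minimum at a vertex. Hence
\[
\gamma_\star=\max_{\|W\|_{\op}\le1}\ \min_{\boldsymbol{\alpha}\in\D_n}\Bigl\langle W,\sum_i\alpha_iA_i\Bigr\rangle_F .
\]
The objective is bilinear, and both the operator-norm unit ball and \(\D_n\) are compact and convex. By von Neumann's (or Sion's) minimax theorem we may exchange max and min. The inner maximum over \(\|W\|_{\op}\le 1\) is then the dual norm of the operator norm, namely the nuclear norm \(\|\sum_i\alpha_iA_i\|_1\). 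This gives \(\gamma_\star=\min_{\D_n}F\), and the minimum is attained because \(F\) is continuous on a compact set.

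\textbf{Step 2 (decomposition of \(F\)).} We have \(\sum_i\alpha_iA_i=(\sum_i\alpha_i\xi_i)S+\lambda Z(\boldsymbol{\alpha})\). Since \(\xi_i=1\) on \(I_+\) and \(\xi_i=-1\) on \(I_-\), the coefficient is \(\sum_i\alpha_i\xi_i=q-(1-q)=2q(\boldsymbol{\alpha})-1\). Each \(N_i\) lies in \(\mathcal N\), so \(Z(\boldsymbol{\alpha})\in\mathcal N\). Lemma~\ref{lem:block} therefore gives
\[
F(\boldsymbol{\alpha})=|2q-1|\,s+\lambda\|Z(\boldsymbol{\alpha})\|_1=2s\,|q-\tfrac12|+\lambda\|Z(\boldsymbol{\alpha})\|_1 .
\]

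\textbf{Step 3 (reduction to \(q\)).} Partition \(\D_n\) into the slices \(\{q(\boldsymbol{\alpha})=q\}\) for \(q\in[0,1]\). On each slice the first term of \(F\) is constant, so minimizing \(F\) over a slice yields \(2s|q-\frac12|+\lambda\psi(q)\). Minimizing over \(q\) then gives \eqref{eq:reduced-dual}.

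Some care is needed here. Each slice is nonempty when \(I_+\) and \(I_-\) are both nonempty; otherwise the only values of \(q\) that occur are \(q\in\{0\}\) or \(\{1\}\), and we read the min as taken over the feasible \(q\) (or note that \(\psi=+\infty\) off the feasible set). Each slice is compact and \(\|Z(\cdot)\|_1\) is continuous, so the minimum defining \(\psi\) is attained.

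\textbf{Convexity of \(\psi\).} Take \(q_0,q_1\) with minimizers \(\boldsymbol{\alpha}^0,\boldsymbol{\alpha}^1\), and let \(t\in[0,1]\). The point \(\boldsymbol{\alpha}^t=(1-t)\boldsymbol{\alpha}^0+t\boldsymbol{\alpha}^1\) lies in \(\D_n\). Since \(q\) is linear, \(q(\boldsymbol{\alpha}^t)=(1-t)q_0+tq_1\). Since \(\boldsymbol{\alpha}\mapsto Z(\boldsymbol{\alpha})\) is linear and the nuclear norm is convex,
\[
\psi\bigl((1-t)q_0+tq_1\bigr)\le\|Z(\boldsymbol{\alpha}^t)\|_1\le(1-t)\psi(q_0)+t\psi(q_1).
\]
This is the standard fact that a partial minimization of a jointly convex function under a linear constraint is convex.

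The only genuinely non-mechanical step is the minimax exchange in Step 1 together with identifying the dual of the spectral norm as the nuclear norm. Both are standard, so I expect no real obstacle beyond the bookkeeping for empty slices.
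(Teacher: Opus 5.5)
Your proposal is correct and follows the paper's proof essentially step for step. It rewrites the minimum over samples as a minimum over the simplex, exchanges max and min via Sion's minimax theorem with spectral--nuclear duality, applies Lemma~\ref{lem:block} using $\sum_i\alpha_i\xi_i=2q(\boldsymbol{\alpha})-1$, and proves convexity of $\psi$ by mixing feasible points from two slices. Your extra remark about empty slices when $I_+$ or $I_-$ is empty is a harmless refinement that the paper leaves implicit.
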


\begin{proof}
For every \(W\), \(\min_i\ip{W}{A_i} = \min_{\boldsymbol{\alpha}\in\D_n} \sum_i\alpha_i\ip{W}{A_i}\).
Sion's minimax theorem gives \(\gamma_\star = \min_{\boldsymbol{\alpha}\in\D_n} \max_{\|W\|_{\op}\leq1} \ip{W}{\sum_i\alpha_iA_i}\).
Spectral--Schatten-1 duality yields \(\max_{\|W\|_{\op}\leq1}\ip{W}{M} = \|M\|_1\).
This proves the first identity.

Furthermore, \(\sum_i\boldsymbol{\alpha}_iA_i = (2q(\boldsymbol{\alpha})-1)S+\lambda Z(\boldsymbol{\alpha})\),
so Lemma~\ref{lem:block} proves the decomposition and the reduced
problem.

To prove convexity of \(\psi\), let \(q_0,q_1\in[0,1]\), choose
feasible \(\boldsymbol{\alpha}^{(0)},\boldsymbol{\alpha}^{(1)}\), and let \(t\in[0,1]\).  Then \(t\boldsymbol{\alpha}^{(1)}+(1-t)\boldsymbol{\alpha}^{(0)}\)
is feasible for \(tq_1+(1-t)q_0\), and convexity of the Schatten 1-norm
gives \(\psi(tq_1+(1-t)q_0) \leq t\|Z(\boldsymbol{\alpha}^{(1)})\|_1 +(1-t)\|Z(\boldsymbol{\alpha}^{(0)})\|_1\).
Taking infima proves convexity.
\end{proof}

\subsection{Exact dual dynamics}

Define
\begin{equation}
\label{eq:alpha-def}
\alpha_{i,k}
:=
\frac{
\exp(-\ip{W_k^{\mathrm{SpecGD}}}{A_i})
}{
\sum_{j=1}^n\exp(-\ip{W_k^{\mathrm{SpecGD}}}{A_j})
}.
\end{equation}
Then \(\boldsymbol{\alpha}_k\in\operatorname{int}(\D_n)\).

Set \(M(\boldsymbol{\alpha}):=\sum_i\alpha_iA_i\).
Since \(-\nabla\mathcal L(W_k^{\mathrm{SpecGD}}) = \mathcal L(W_k^{\mathrm{SpecGD}})M(\boldsymbol{\alpha}_k)\),
we have \(G_k := \pol(-\nabla\mathcal L(W_k^{\mathrm{SpecGD}})) = \pol(M(\boldsymbol{\alpha}_k))\).
Define \(g_{i,k}:=\ip{G_k}{A_i}\).

\begin{lemma}[Exponentiated-gradient recursion]
\label{lem:eg}
For every \(i,k\), \(\alpha_{i,k+1} = \frac{ \alpha_{i,k}\exp(-\eta_kg_{i,k}) }{ \sum_j\alpha_{j,k}\exp(-\eta_kg_{j,k}) }\).
Moreover, \(\boldsymbol{g}_k:=(g_{1,k},\ldots,g_{n,k}) \in\partial F(\boldsymbol{\alpha}_k), \quad \|g_k\|_\infty\leq s+\lambda\).
\end{lemma}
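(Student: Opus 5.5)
Three things need proof: the exponentiated-gradient recursion, the subgradient inclusion, and the sup-norm bound. I will treat them in that order, using only the SpecGD update, the definition \eqref{eq:alpha-def}, and spectral--nuclear duality.

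\emph{Recursion.} Write $m_{i,k}:=\ip{W_k^{\mathrm{SpecGD}}}{A_i}$. Since $-\nabla\mathcal L(W_k)=\mathcal L(W_k)M(\boldsymbol{\alpha}_k)$ and the polar factor is invariant under positive scaling, the SpecGD update is $W_{k+1}=W_k+\eta_k G_k$. Hence $m_{i,k+1}=m_{i,k}+\eta_k g_{i,k}$, and
\[
\exp(-m_{i,k+1})=\exp(-m_{i,k})\exp(-\eta_k g_{i,k}).
\]
Dividing numerator and denominator of \eqref{eq:alpha-def} at step $k+1$ by $\sum_j\exp(-m_{j,k})$ turns each factor $\exp(-m_{i,k})$ into $\alpha_{i,k}$. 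This gives the claimed multiplicative-weights formula.

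\emph{Subgradient.} $F$ is the composition of the nuclear norm with the linear map $\boldsymbol{\alpha}\mapsto M(\boldsymbol{\alpha})$. By the chain rule for convex functions with linear maps, $\partial F(\boldsymbol{\alpha})\supseteq\{(\ip{G}{A_i})_i : G\in\partial\|\cdot\|_1(M(\boldsymbol{\alpha}))\}$; in fact equality holds, but only the inclusion is needed. It therefore suffices to show $\pol(M)\in\partial\|M\|_1$. Take a compact SVD $M=U\Sigma V^\top$, so $\pol(M)=UV^\top$. Then $\|UV^\top\|_{\op}\le1$ and $\ip{UV^\top}{M}=\operatorname{tr}\Sigma=\|M\|_1$. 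For any $N$, duality gives $\|N\|_1\ge\ip{UV^\top}{N}=\|M\|_1+\ip{UV^\top}{N-M}$, which is exactly the subgradient inequality. Applying this with $M=M(\boldsymbol{\alpha}_k)$ and $G=G_k$ gives $\boldsymbol g_k\in\partial F(\boldsymbol{\alpha}_k)$. If $M=0$ the polar factor is $0$, which is still a subgradient, so this case causes no trouble.

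\emph{Bound.} Since $\|G_k\|_{\op}\le1$, we have $|g_{i,k}|\le\|A_i\|_1$. By Lemma~\ref{lem:block} with $c=\xi_i$ and $Z=N_i$, $\|A_i\|_1=s+\lambda\|N_i\|_1=s+\lambda$, because $N_i$ is a rank-one matrix built from unit vectors. In the DOB, and generally whenever the shortcuts lie in $\mathcal N$, the same block argument applies.

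None of these steps is a real obstacle. The only points needing care are scale invariance of $\pol$, so that the loss factor drops out, and the degenerate case $M=0$; both are handled above.
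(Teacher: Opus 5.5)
Your proposal is correct and follows the paper's proof essentially step for step. The recursion comes from $\langle W_{k+1},A_i\rangle_F=\langle W_k,A_i\rangle_F+\eta_k g_{i,k}$, using $-\pol(\nabla\mathcal L(W_k))=\pol(M(\boldsymbol\alpha_k))$. The subgradient inclusion comes from $\|G_k\|_{\op}\le 1$ and $\langle G_k,M(\boldsymbol\alpha_k)\rangle_F=\|M(\boldsymbol\alpha_k)\|_1$, pulled back through the linear map $\boldsymbol\alpha\mapsto M(\boldsymbol\alpha)$. The bound is $|g_{i,k}|\le\|G_k\|_{\op}\|A_i\|_1=s+\lambda$, via Lemma~\ref{lem:block}. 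Your explicit treatment of the sign and scale invariance of $\pol$ and of the degenerate case $M(\boldsymbol\alpha_k)=0$ spells out details the paper leaves implicit.
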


\begin{proof}
The recursion follows directly from \(\exp(-\ip{W_{k+1}^{\mathrm{SpecGD}}}{A_i}) = \exp(-\ip{W_k^{\mathrm{SpecGD}}}{A_i})e^{-\eta_kg_{i,k}}\).

Because \(G_k=\pol(M(\boldsymbol{\alpha}_k))\),
we have \(\|G_k\|_{\op}\leq1, \quad \ip{G_k}{M(\boldsymbol{\alpha}_k)} = \|M(\boldsymbol{\alpha}_k)\|_1\).
Thus \(G_k\in\partial\|\cdot\|_1(M(\boldsymbol{\alpha}_k))\).  For
\(\boldsymbol{\beta}\in\D_n\), \(F(\boldsymbol{\beta}) \geq F(\boldsymbol{\alpha}_k)+ \sum_i(\beta_i-\alpha_{i,k})\ip{G_k}{A_i}\),
which proves \(g_k\in\partial F(\boldsymbol{\alpha}_k)\).

Finally, $|g_{i,k}|\leq \|G_k\|_{\op}\|A_i\|_1\leq s+\lambda$
by Lemma~\ref{lem:block}.
\end{proof}

\subsection{Dual convergence under local growth}

For \(\boldsymbol{\alpha},\boldsymbol{\beta}\in\operatorname{int}(\D_n)\), define \(\KL(\boldsymbol{\alpha}\|\boldsymbol{\beta}) := \sum_i\alpha_i\log\frac{\alpha_i}{\beta_i}\).

\begin{lemma}[Local quadratic upper bound for relative entropy]
\label{lem:local-kl-quadratic}
Let \(\boldsymbol{\alpha}^\star\in\operatorname{int}(\D_n)\).  Then there exist constants
\(C_{\mathrm{KL}}>0\) and \(\varepsilon_{\mathrm{KL}}>0\) such that \(\KL(\boldsymbol{\alpha}^\star\|\boldsymbol{\alpha}) \leq C_{\mathrm{KL}}\|\boldsymbol{\alpha}-\boldsymbol{\alpha}^\star\|_2^2\)
for every \(\boldsymbol{\alpha}\in\D_n\) satisfying
\(\|\boldsymbol{\alpha}-\boldsymbol{\alpha}^\star\|_2\leq\varepsilon_{\mathrm{KL}}\).
\end{lemma}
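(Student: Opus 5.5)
Since \(\boldsymbol{\alpha}^\star\) lies in the interior of the simplex, every coordinate is bounded below by \(m:=\min_i\alpha^\star_i>0\). The plan is to Taylor-expand \(\KL(\boldsymbol{\alpha}^\star\|\boldsymbol{\alpha})\) in \(\boldsymbol{\alpha}\) around \(\boldsymbol{\alpha}^\star\). We restrict to the ball \(\|\boldsymbol{\alpha}-\boldsymbol{\alpha}^\star\|_2\le\varepsilon_{\mathrm{KL}}\) with \(\varepsilon_{\mathrm{KL}}:=m/2\). Then every \(\boldsymbol{\alpha}\) in this ball satisfies \(\alpha_i\ge\alpha_i^\star-m/2\ge m/2>0\), so \(\KL(\boldsymbol{\alpha}^\star\|\boldsymbol{\alpha})\) is finite and smooth there.

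Write \(\boldsymbol{h}:=\boldsymbol{\alpha}-\boldsymbol{\alpha}^\star\). Since both vectors lie in \(\D_n\), we have \(\sum_ih_i=0\). Then
\[
\KL(\boldsymbol{\alpha}^\star\|\boldsymbol{\alpha})=\sum_i\alpha_i^\star\bigl(-\log(1+h_i/\alpha_i^\star)\bigr)=\sum_i\alpha_i^\star\phi(h_i/\alpha_i^\star)-\sum_ih_i,
\qquad \phi(t):=t-\log(1+t).
\]
The last sum vanishes. It therefore suffices to bound \(\phi(t)\) for \(|t|\le 1/2\), which covers our range because \(|h_i|/\alpha_i^\star\le (m/2)/m=1/2\).

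For the bound on \(\phi\), note that \(\phi(0)=\phi'(0)=0\) and \(\phi''(t)=(1+t)^{-2}\le 4\) on \([-1/2,1/2]\). Taylor's theorem with remainder gives \(\phi(t)\le 2t^2\). Consequently
\[
\KL(\boldsymbol{\alpha}^\star\|\boldsymbol{\alpha})\le\sum_i\frac{2h_i^2}{\alpha_i^\star}\le\frac{2}{m}\|\boldsymbol{h}\|_2^2,
\]
so the lemma holds with \(C_{\mathrm{KL}}=2/m\).

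None of these steps is a genuine obstacle. The only point requiring care is choosing the radius \(\varepsilon_{\mathrm{KL}}\) so that \(\boldsymbol{\alpha}\) stays uniformly away from the boundary of the simplex. That choice keeps the logarithm's second derivative bounded, and the cancellation of the linear term relies on both vectors having total mass \(1\).
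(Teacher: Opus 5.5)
Your proof is correct and follows essentially the same route as the paper: restrict to radius $\tfrac12\min_i\alpha_i^\star$ so that every $\alpha_i\ge\alpha_i^\star/2$, Taylor-expand to second order, cancel the linear term using $\sum_i(\alpha_i-\alpha_i^\star)=0$, and bound the second derivative by $4/\alpha_i^\star$, which gives the same constant $C_{\mathrm{KL}}=2/\min_i\alpha_i^\star$. The only cosmetic difference is that you expand the normalized function $\phi(t)=t-\log(1+t)$, whereas the paper expands $x\mapsto\alpha_i^\star\log(\alpha_i^\star/x)$ directly.
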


\begin{proof}
Let \(a_\star:=\min_i\alpha_i^\star>0\), and choose
\(\varepsilon_{\mathrm{KL}}\leq a_\star/2\).  If
\(\|\boldsymbol{\alpha}-\boldsymbol{\alpha}^\star\|_2\leq\varepsilon_{\mathrm{KL}}\), then, for every
\(i\),
\begin{equation}\label{eq:local-kl-coordinate-lower}
\alpha_i
\geq
\alpha_i^\star-\|\boldsymbol{\alpha}-\boldsymbol{\alpha}^\star\|_\infty
\geq
\frac{\alpha_i^\star}{2}.
\end{equation}
For \(f_i(x):=\alpha_i^\star\log(\alpha_i^\star/x)\), we have
\(f_i(\alpha_i^\star)=0\), \(f_i'(\alpha_i^\star)=-1\), and
\(f_i''(x)=\alpha_i^\star/x^2\).  By Taylor's theorem, for some
\(\widetilde\alpha_i\) between \(\alpha_i\) and \(\alpha_i^\star\),
\[
f_i(\alpha_i)
=
-(\alpha_i-\alpha_i^\star)
+
\frac12
\frac{\alpha_i^\star}{\widetilde\alpha_i^2}
(\alpha_i-\alpha_i^\star)^2.
\]
By \eqref{eq:local-kl-coordinate-lower}, we have
\(\widetilde\alpha_i\geq \alpha_i^\star/2\), and hence \(\frac{\alpha_i^\star}{\widetilde\alpha_i^2} \leq \frac{4}{\alpha_i^\star} \leq \frac{4}{a_\star}\).
Summing over \(i\) and using
\(\sum_i(\alpha_i-\alpha_i^\star)=0\), we obtain
\[
\KL(\boldsymbol{\alpha}^\star\|\boldsymbol{\alpha})
=
\sum_i f_i(\alpha_i)
\leq
\frac{2}{a_\star}
\sum_i(\alpha_i-\alpha_i^\star)^2.
\]
Thus the claim holds with \(C_{\mathrm{KL}}=2/a_\star\).
\end{proof}

\begin{definition}[Local relative growth]\label{def:localrelativegrowth}
A convex function \(F\) has local relative growth at an interior
minimizer \(\boldsymbol{\alpha}^\star\) if there exist \(r,\mu>0\) such that \(F(\boldsymbol{\alpha})-F(\boldsymbol{\alpha}^\star) \geq \mu\KL(\boldsymbol{\alpha}^\star\|\boldsymbol{\alpha})\)
whenever \(\KL(\boldsymbol{\alpha}^\star\|\boldsymbol{\alpha})\leq r\).
\end{definition}

\begin{lemma}[One-step relative-entropy inequality]
\label{lem:kl-step}
If \(\boldsymbol{\alpha}^\star\) minimizes \(F\), then
\[
\KL(\boldsymbol{\alpha}^\star\|\boldsymbol{\alpha}_{k+1})
\leq
\KL(\boldsymbol{\alpha}^\star\|\boldsymbol{\alpha}_k)
-
\eta_k\bigl(F(\boldsymbol{\alpha}_k)-F(\boldsymbol{\alpha}^\star)\bigr)
+
\frac{(s+\lambda)^2}{2}\eta_k^2.
\]
\end{lemma}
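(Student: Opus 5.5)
This is the standard mirror-descent (exponentiated-gradient) three-point inequality, applied to the recursion of Lemma~\ref{lem:eg}. Fix $k$ and write $\boldsymbol g:=\boldsymbol g_k$.

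\textbf{Step 1: exact change in relative entropy.} Lemma~\ref{lem:eg} gives $\log(\alpha_{i,k+1}/\alpha_{i,k}) = -\eta_k g_{i,k} - \log Z_k$, where $Z_k:=\sum_j\alpha_{j,k}e^{-\eta_k g_{j,k}}$. Since $\boldsymbol{\alpha}^\star\in\D_n$, we have $\sum_i\alpha_i^\star=1$, so
\[
\KL(\boldsymbol{\alpha}^\star\|\boldsymbol{\alpha}_{k+1})-\KL(\boldsymbol{\alpha}^\star\|\boldsymbol{\alpha}_k)
=\sum_i\alpha_i^\star\log\frac{\alpha_{i,k}}{\alpha_{i,k+1}}
=\eta_k\langle\boldsymbol{\alpha}^\star,\boldsymbol g\rangle+\log Z_k .
\]
This identity is valid even if $\boldsymbol{\alpha}^\star$ lies on the boundary, using the convention $0\log 0=0$.

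\textbf{Step 2: bound the log-partition term.} Set $\bar g:=\langle\boldsymbol{\alpha}_k,\boldsymbol g\rangle$. Then $\log Z_k=-\eta_k\bar g+\log\mathbb E_{i\sim\boldsymbol{\alpha}_k}e^{-\eta_k(g_{i}-\bar g)}$. Lemma~\ref{lem:eg} gives $|g_{i,k}|\le s+\lambda$, so each $g_{i,k}$ lies in an interval of length $2(s+\lambda)$. Hoeffding's lemma then bounds the last term by $\eta_k^2(2(s+\lambda))^2/8=\eta_k^2(s+\lambda)^2/2$. This is exactly the constant in the claim, so no sharper argument is needed. Combining with Step 1,
\[
\KL(\boldsymbol{\alpha}^\star\|\boldsymbol{\alpha}_{k+1})-\KL(\boldsymbol{\alpha}^\star\|\boldsymbol{\alpha}_k)\le -\eta_k\langle \boldsymbol{\alpha}_k-\boldsymbol{\alpha}^\star,\boldsymbol g\rangle+\frac{(s+\lambda)^2}{2}\eta_k^2 .
\]

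\textbf{Step 3: use the subgradient.} Lemma~\ref{lem:eg} gives $\boldsymbol g\in\partial F(\boldsymbol{\alpha}_k)$, so $F(\boldsymbol{\alpha}^\star)\ge F(\boldsymbol{\alpha}_k)+\langle\boldsymbol{\alpha}^\star-\boldsymbol{\alpha}_k,\boldsymbol g\rangle$, that is, $\langle\boldsymbol{\alpha}_k-\boldsymbol{\alpha}^\star,\boldsymbol g\rangle\ge F(\boldsymbol{\alpha}_k)-F(\boldsymbol{\alpha}^\star)$. Substituting this into Step 2 yields the claimed inequality.

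Every step is routine. The only point needing care is the constant in Step 2: it relies on the centered Hoeffding bound with range $2(s+\lambda)$, rather than the cruder bound $e^{x}\le 1+x+x^2$. The minimality of $\boldsymbol{\alpha}^\star$ is not used in the derivation itself; it only guarantees that the middle term is nonpositive.
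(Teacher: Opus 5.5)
Your proof is correct and matches the paper's argument step for step. Like the paper, you use the exact identity $\KL(\boldsymbol{\alpha}^\star\|\boldsymbol{\alpha}_{k+1})-\KL(\boldsymbol{\alpha}^\star\|\boldsymbol{\alpha}_k)=\eta_k\langle\boldsymbol{\alpha}^\star,\boldsymbol g_k\rangle+\log Z_k$ from the EG recursion, then Hoeffding's lemma on $[-(s+\lambda),s+\lambda]$ for the $\frac{(s+\lambda)^2}{2}\eta_k^2$ term, then the subgradient inequality from Lemma~\ref{lem:eg}; your remark that minimality of $\boldsymbol{\alpha}^\star$ is not needed (the bound holds for any $\boldsymbol{\alpha}^\star\in\D_n$) is also right.
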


\begin{proof}
Let \(Z_k^{\mathrm{EG}} := \sum_i\alpha_{i,k}e^{-\eta_kg_{i,k}}\).
The exponentiated-gradient recursion gives
\[
\KL(\boldsymbol{\alpha}^\star\|\boldsymbol{\alpha}_{k+1})
-\KL(\boldsymbol{\alpha}^\star\|\boldsymbol{\alpha}_k)
=
\eta_k\langle\boldsymbol{\alpha}^\star,\boldsymbol{g}_k\rangle
+\log Z_k^{\mathrm{EG}}.
\]
By Hoeffding's lemma, if a random variable \(X\) satisfies
\(X\in[a,b]\) almost surely, then \(\log \mathbb E e^{\theta(X-\mathbb EX)} \leq \frac{\theta^2(b-a)^2}{8}\).
We apply this to the finite random variable \(X\) defined by \(\mathbb P(X=g_{i,k})=\alpha_{i,k}\).
Since \(|g_{i,k}| = |\langle G_k,A_i\rangle_F| \leq \|G_k\|_{\op}\|A_i\|_1 \leq s+\lambda\),
we have \(X\in[-B,B]\), where \(B=s+\lambda\).  Taking
\(\theta=-\eta_k\), Hoeffding's lemma gives \(\log Z_k^{\mathrm{EG}} \leq -\eta_k\langle\boldsymbol{\alpha}_k,\boldsymbol{g}_k\rangle + \frac{(s+\lambda)^2}{2}\eta_k^2\).
Thus,
\[
\KL(\boldsymbol{\alpha}^\star\|\boldsymbol{\alpha}_{k+1})
-\KL(\boldsymbol{\alpha}^\star\|\boldsymbol{\alpha}_k)
 \leq \eta_k\langle\boldsymbol{\alpha}^\star - \boldsymbol{\alpha}_k,\boldsymbol{g}_k\rangle + \frac{(s+\lambda)^2}{2}\eta_k^2.
\]
The subgradient inequality gives \(F(\boldsymbol{\alpha}^\star) - F(\boldsymbol{\alpha}_k) \geq \langle \boldsymbol{g}_k,\boldsymbol{\alpha}^\star-\boldsymbol{\alpha}_k\rangle\),
and the conclusion follows.
\end{proof}

\begin{theorem}[Dual convergence]
\label{thm:dual-convergence}
Suppose \(F\) has a unique minimizer
\(\boldsymbol{\alpha}^\star\in\operatorname{int}(\D_n)\) and has local relative
growth $\mu$ at \(\boldsymbol{\alpha}^\star\). Then \(\boldsymbol{\alpha}_k\to\boldsymbol{\alpha}^\star\).

Moreover, if we further assume $\limsup_{k\to \infty}\frac{\eta_k - \eta_{k+1}}{\eta_k^2} < \mu$, then $\|\boldsymbol{\alpha}_k - \boldsymbol{\alpha}^\star\|_1 = O(\sqrt{\eta_k})$.
\end{theorem}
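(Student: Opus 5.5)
The plan is to run a Lyapunov argument with $\Phi_k:=\KL(\boldsymbol{\alpha}^\star\|\boldsymbol{\alpha}_k)$, driven by Lemma~\ref{lem:kl-step}. Write $\varepsilon_k:=F(\boldsymbol{\alpha}_k)-F(\boldsymbol{\alpha}^\star)\ge0$ and $B:=s+\lambda$. Summing the one-step inequality gives
\[
\sum_{t<k}\eta_t\varepsilon_t\le\Phi_0+\tfrac{B^2}{2}\sum_{t<k}\eta_t^2 .
\]
Since $\eta_t\downarrow0$, we have $\sum_{t<k}\eta_t^2=o(\tau_k)$. Together with $\tau_k\to\infty$, this yields $\liminf_k\varepsilon_k=0$. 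Moreover $\Phi_0=\KL(\boldsymbol{\alpha}^\star\|\boldsymbol{\alpha}_0)<\infty$, because $\boldsymbol{\alpha}_0$ is uniform.

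\textbf{Step 1: a subsequence reaches a small neighbourhood.} By compactness of $\D_n$, some subsequence converges to $\bar{\boldsymbol{\alpha}}$. Continuity of $F$ (a norm of a linear map) gives $F(\bar{\boldsymbol{\alpha}})=F(\boldsymbol{\alpha}^\star)$, and uniqueness then gives $\bar{\boldsymbol{\alpha}}=\boldsymbol{\alpha}^\star$. Since $\boldsymbol{\alpha}^\star$ is interior, $\Phi$ is continuous near it. Hence for every $\epsilon>0$ there are infinitely many $k$ with $\Phi_k\le\epsilon$. Lemma~\ref{lem:local-kl-quadratic} can be used to pass from $\|\cdot\|_2$-closeness to KL-smallness.

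\textbf{Step 2: trapping.} Fix $\epsilon<r/2$ and choose $k_0$ so large that $\tfrac{B^2}{2}\eta_k^2\le\mu\eta_k\epsilon$ for all $k\ge k_0$; this is possible since $\eta_k\to0$. Take $k\ge k_0$.
\begin{itemize}
\item If $\epsilon\le\Phi_k\le r$, local relative growth gives $\varepsilon_k\ge\mu\Phi_k\ge\mu\epsilon$, so $\Phi_{k+1}\le\Phi_k$.
\item If $\Phi_k<\epsilon$, then $\Phi_{k+1}\le\epsilon+\tfrac{B^2}{2}\eta_k^2\le2\epsilon\le r$.
\end{itemize}
So once the iterates enter $\{\Phi\le\epsilon\}$ after time $k_0$, they remain in $\{\Phi\le2\epsilon\}$ forever. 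Letting $\epsilon\to0$ gives $\Phi_k\to0$. Pinsker then yields $\|\boldsymbol{\alpha}_k-\boldsymbol{\alpha}^\star\|_1\to0$. The main obstacle is this trapping step: the growth condition is only local, and the $\eta_k^2$ error term must be beaten uniformly. Step 1 supplies the entry into the neighbourhood, and the ordering of the choices of $\epsilon$ and $k_0$ handles the uniformity.

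\textbf{Step 3: rate.} Once $\Phi_k\le r$, we have
\[
\Phi_{k+1}\le(1-\mu\eta_k)\Phi_k+\tfrac{B^2}{2}\eta_k^2 .
\]
The ansatz is $\Phi_k\le M\eta_k$. Induction requires
\[
(1-\mu\eta_k)M\eta_k+\tfrac{B^2}{2}\eta_k^2\le M\eta_{k+1}.
\]
Write $\eta_{k+1}=\eta_k-(\eta_k-\eta_{k+1})$ and choose $\kappa$ with $\limsup_k(\eta_k-\eta_{k+1})/\eta_k^2<\kappa<\mu$. For large $k$ we have $\eta_k-\eta_{k+1}\le\kappa\eta_k^2$, so it suffices that $\tfrac{B^2}{2}\le M(\mu-\kappa)$. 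This holds for $M$ large, which can also be chosen to cover the initial index. Hence $\Phi_k=O(\eta_k)$, and Pinsker gives $\|\boldsymbol{\alpha}_k-\boldsymbol{\alpha}^\star\|_1\le\sqrt{2\Phi_k}=O(\sqrt{\eta_k})$.
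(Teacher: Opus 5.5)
Your proposal is correct and follows the paper's proof essentially step for step: the KL Lyapunov function from Lemma~\ref{lem:kl-step}, $\liminf$ of the gap via $\sum\eta_t^2=o(\tau_k)$, compactness plus uniqueness for a subsequence, trapping via local relative growth, Pinsker, and the same $\Phi_k\le M\eta_k$ induction for the rate. Your trapping step is slightly more streamlined, reusing the subsequence for entry into $\{\Phi\le\epsilon\}$ instead of the paper's separate divergent-sum argument; the only thing to add is that $k_0$ (and the starting index of the rate induction) must also satisfy $\mu\eta_k\le 1$, which you use implicitly both to get $\tfrac{B^2}{2}\eta_k^2\le\epsilon$ and to keep $1-\mu\eta_k\ge 0$ in the induction.
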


\begin{proof}
Write
\[
d_k:=\KL(\boldsymbol{\alpha}^\star\|\boldsymbol{\alpha}_k),
\qquad
\delta_k:=F(\boldsymbol{\alpha}_k)-F(\boldsymbol{\alpha}^\star), \qquad C = \frac{(s+\lambda)^2}{2}. 
\]
Lemma~\ref{lem:kl-step} gives \(d_{k+1} \leq d_k-\eta_k\delta_k+C\eta_k^2\).
Rearrangigng and summing from \(k=0\) to \(K-1\) gives
\[
\sum_{k=0}^{K-1}\eta_k\delta_k
\leq
d_0-d_K
+
C\sum_{k=0}^{K-1}\eta_k^2
\leq
d_0
+
C\sum_{k=0}^{K-1}\eta_k^2.
\]
Let \(\tau_K:=\sum_{k=0}^{K-1}\eta_k\).
Dividing by \(\tau_K\), we get
\[
\frac{
\sum_{k=0}^{K-1}\eta_k\delta_k
}{
\tau_K
}
\leq
\frac{d_0}{\tau_K}
+
C
\frac{
\sum_{k=0}^{K-1}\eta_k^2
}{
\tau_K
}.
\]
Since \(\sum_k\eta_k=\infty\), we have \(\tau_K\to\infty\).  Moreover,
because \(\eta_k\to0\), \(\frac{ \sum_{k=0}^{K-1}\eta_k^2 }{ \sum_{k=0}^{K-1}\eta_k } \to0\).
Indeed, for any \(\varepsilon>0\), choose \(K_0\) such that
\(\eta_k\leq\varepsilon\) for every \(k\geq K_0\).  Then \(\sum_{k=0}^{K-1}\eta_k^2 \leq \sum_{k=0}^{K_0-1}\eta_k^2 + \varepsilon \sum_{k=K_0}^{K-1}\eta_k\).
Dividing by \(\tau_K\) and letting \(K\to\infty\), we obtain \(\limsup_{K\to\infty} \frac{ \sum_{k=0}^{K-1}\eta_k^2 }{ \tau_K } \leq \varepsilon\).
Since \(\varepsilon>0\) is arbitrary, the ratio converges to zero.
Therefore, \(\frac{ \sum_{k=0}^{K-1}\eta_k\delta_k }{ \tau_K } \to0\).
Since \(\delta_k\geq0\), there exists a subsequence \(k_j\to\infty\)
such that \(\delta_{k_j}\to0\).
Compactness of
\(\D_n\), uniqueness of the minimizer, and continuity of $F$ yield \(\boldsymbol{\alpha}_{k_j}\to\boldsymbol{\alpha}^\star\).
Furthermore, continuity of $\KL(\boldsymbol{\alpha}^\star\| \cdot)$ at the interior point $\boldsymbol{\alpha}^\star$ yields \(d_{k_j}\to0\).

Let \(r,\mu\) be local-growth constants.  Choose a sufficiently late
\(k_j\) so that \(d_{k_j}<r/2\) and \(C\eta_k\leq\frac{\mu r}{4}\)
for every \(k\geq k_j\).  If \(r/2\leq d_k\leq r\), then \(d_{k+1} \leq d_k-\mu\eta_kd_k+C\eta_k^2 \leq d_k-\frac{\mu r}{4}\eta_k<d_k\).
If \(d_k<r/2\), then sufficiently small \(\eta_k\) implies
\(d_{k+1}<r\).  Thus the iterates eventually remain in the local
region, where \(d_{k+1} \leq (1-\mu\eta_k)d_k+C\eta_k^2\).

For any $\varepsilon >0$, there exists $K$ such that $k\geq K$ implies $C\eta_k \leq \mu\varepsilon/2$ and $C\eta_k^2 \leq \varepsilon$ since $\eta_k \to 0$. If $d_k \geq \varepsilon$, then 
\[
\begin{aligned}
d_{k+1}
& \leq (1 - \mu \eta_k)d_k + C \eta_k^2 \\
& \leq d_k - \mu \eta_k \varepsilon + \mu\varepsilon\eta_k/2 \\
& = d_k - \mu\varepsilon\eta_k/2.
\end{aligned}
\]
Since $\sum_k \eta_k = \infty$, there exists $K_1 > K$ such that $d_{K_1} < \varepsilon$. Then, for $k\geq K_1$, $d_k<\varepsilon$ implies $d_{k+1}\leq 2\varepsilon$ due to $C\eta_k^2 \leq \varepsilon$ while $d_k \geq \varepsilon$ implies $d_{k+1} < d_k$, which by induction implies that $d_k \leq 2 \varepsilon$ for $k\geq K_1$. 

Hence \(d_k\to0\).  Pinsker's inequality gives \(\|\boldsymbol{\alpha}_k-\boldsymbol{\alpha}^\star\|_1^2\leq2d_k\to0\).

Suppose $\limsup_{k\to \infty}\frac{\eta_k - \eta_{k+1}}{\eta_k^2} < \mu$. Fix $\mu_0$ so that $\limsup_{k\to \infty}\frac{\eta_k - \eta_{k+1}}{\eta_k^2} <\mu_0< \mu$. Let $K$ so that $k\geq K$ implies \(d_{k+1} \leq (1-\mu\eta_k)d_k+C\eta_k^2\) and $\frac{\eta_k - \eta_{k+1}}{\eta_k^2} \leq \mu_0$. Set 
\[
B := \max\left\{\frac{d_K}{\eta_K}, \frac{C}{\mu - \mu_0}\right\}.
\]
We now claim that $d_k \leq B \eta_k$ holds for all $k\geq K$. The claim holds at $k = K$ by the definition of $B$. Now suppose $d_k \leq B \eta_k$ holds. By noting that the definition of $B$ implies that $B\mu-C \geq B\mu_0$, we have
\[
\begin{aligned}
    d_{k+1} & \leq (1-\mu\eta_k)d_k+C\eta_k^2 \\
    & \leq B(1 - \mu \eta_k)\eta_k + C\eta_k^2 = B \eta_k - (B\mu - C)\eta_k^2 \\
    & \leq B\eta_k - B\mu_0 \eta_k^2 = B\eta_{k+1} + B(\eta_k - \eta_{k+1} - \mu_0 \eta_k^2) \leq B\eta_{k+1}. 
\end{aligned}
\]
Again, by Pinsker's inequality, we obtain $\|\boldsymbol{\alpha}_k - \boldsymbol{\alpha}^\star\|_1^2 = O(\eta_k)$.
\end{proof}

\subsection{Which primal optimizer is selected?}

Let \(c_k:=\sum_i\alpha_{i,k}\xi_i, \quad Z_k:=\sum_i\alpha_{i,k}N_i\).
By Lemma~\ref{lem:block}, \(G_k = \sigma_kP_S+H_k, \quad \sigma_k:=\sgn(c_k), \quad H_k:=\pol(Z_k)\).
Therefore,
\begin{equation}
\label{eq:W-sum}
W_k^{\mathrm{SpecGD}}
=
\left(\sum_{t<k}\eta_t\sigma_t\right)P_S
+
\sum_{t<k}\eta_tH_t.
\end{equation}

Let $\boldsymbol{\alpha}^\star$ be the unique interior minimizer.
Set \(Z^\star:=\sum_i\alpha_i^\star N_i\) and
\(H^\star:=\pol(Z^\star)\).
Under DSM, since \(D>n\) , \(U=[\boldsymbol{u}_1,\ldots,\boldsymbol{u}_n]\) and
\(\widehat V=[\widehat{\boldsymbol{v}}_1,\ldots,\widehat{\boldsymbol{v}}_n]\) have full column rank
almost surely.  Since all \(\alpha_i^\star>0\), \(Z^\star\) has rank
\(n\) on its row and column spans. Lemma~\ref{lem:polar-orthonormalization} and \ref{lem:polar-continuity} implies that the polar factor is continuous at
\(Z^\star\). Thus, \(\boldsymbol{\alpha}_k\to\boldsymbol{\alpha}^\star\) implies \(H_k\to H^\star\). Under CSM, the convergence $H_k\to H^\star$ easily follows by noting $\pol(Z(\boldsymbol{\alpha})) = \frac{\sum_i \alpha_i \boldsymbol{u_i}}{\|\sum_i \alpha_i \boldsymbol{u_i}\|_2}\boldsymbol{v}_0^\top$ and that $\sum_i \alpha_i \boldsymbol{u_i} \neq 0$ since $\boldsymbol{\alpha} \neq \boldsymbol{0}$.

Define \(\Lambda_k := \frac{1}{n_+}\sum_{i\in I_+}\log\alpha_{i,k} - \frac{1}{n_-}\sum_{i\in I_-}\log\alpha_{i,k}\),
and
\[
\overline h_{+,k}
:=
\frac{1}{n_+}\sum_{i\in I_+}\ip{H_k}{N_i},
\qquad
\overline h_{-,k}
:=
\frac{1}{n_-}\sum_{i\in I_-}\ip{H_k}{N_i}.
\]

\begin{lemma}[Telescoping identity]
\label{lem:telescoping}
For every \(k\),
\[
\Lambda_{k+1}-\Lambda_k
=
-\eta_k
\left[
2s\sigma_k
+
\lambda(\overline h_{+,k}-\overline h_{-,k})
\right].
\]
\end{lemma}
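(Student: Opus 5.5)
The identity is a direct computation from the exponentiated-gradient recursion of Lemma~\ref{lem:eg}. Taking logarithms there gives, for every \(i\),
\[
\log\alpha_{i,k+1}=\log\alpha_{i,k}-\eta_k g_{i,k}-\log Z_k^{\mathrm{EG}},
\]
where \(Z_k^{\mathrm{EG}}=\sum_j\alpha_{j,k}e^{-\eta_kg_{j,k}}\) does not depend on \(i\). The plan is to average this over \(I_+\) and over \(I_-\) separately and subtract. Both averages have weights summing to one, so the common term \(\log Z_k^{\mathrm{EG}}\) cancels exactly. This leaves
\[
\Lambda_{k+1}-\Lambda_k=-\eta_k\Bigl(\tfrac1{n_+}\textstyle\sum_{i\in I_+}g_{i,k}-\tfrac1{n_-}\sum_{i\in I_-}g_{i,k}\Bigr).
\]
The recursion can also be read off directly from the definition \eqref{eq:alpha-def}: by \eqref{eq:W-sum}, \(W_{k+1}^{\mathrm{SpecGD}}=W_k^{\mathrm{SpecGD}}+\eta_kG_k\), so the normalizer is again common to all \(i\).

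Next we compute \(g_{i,k}=\ip{G_k}{A_i}\) using the block decomposition \(G_k=\sigma_kP_S+H_k\) from Lemma~\ref{lem:block}, together with \(A_i=\xi_iS+\lambda N_i\). Since \(S\in\mathcal S\) and \(N_i,H_k\in\mathcal N\) have mutually orthogonal row and column spaces, the cross terms vanish: \(\ip{P_S}{N_i}=0\) and \(\ip{H_k}{S}=0\). Also \(\ip{P_S}{S}=\sum_\ell\sigma_\ell=s\). Hence
\[
g_{i,k}=\sigma_k\xi_i s+\lambda\ip{H_k}{N_i}.
\]
For this step we need \(H_k=\pol(Z_k)\) to lie in \(\mathcal N\). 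This holds because the polar factor preserves the row and column spans of \(Z_k\), which lie in \(\calU\) and \(\calV\).

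Finally, we average over the two groups. On \(I_+\) we have \(\xi_i=1\), so the average of \(g_{i,k}\) is \(\sigma_ks+\lambda\overline h_{+,k}\). On \(I_-\) we have \(\xi_i=-1\), so the average is \(-\sigma_ks+\lambda\overline h_{-,k}\). Their difference is \(2s\sigma_k+\lambda(\overline h_{+,k}-\overline h_{-,k})\), which is exactly the claimed identity.

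There is no real obstacle in this argument. The only points needing care are the following:
\begin{itemize}
\item both \(I_+\) and \(I_-\) must be nonempty, so that \(\Lambda_k\) is defined; we assume this as elsewhere in the paper;
\item when \(c_k=0\) we have \(\sigma_k=0\), and the formula still holds because Lemma~\ref{lem:block} then drops the \(P_S\) term;
\item the orthogonality of the signal and shortcut blocks must be checked once.
\end{itemize}
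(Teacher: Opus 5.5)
Your proposal is correct and follows the paper's proof essentially verbatim. Like the paper, you take logarithms in the exponentiated-gradient recursion of Lemma~\ref{lem:eg}, note that $\log Z_k^{\mathrm{EG}}$ cancels between the two group averages, and substitute $g_{i,k}=s\sigma_k\xi_i+\lambda\langle H_k,N_i\rangle_F$ from the block decomposition. Your extra checks, that $H_k$ lies in the shortcut block and that the case $c_k=0$ is covered, are correct details the paper leaves implicit.
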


\begin{proof}
From Lemma~\ref{lem:eg}, \(\log\alpha_{i,k+1} = \log\alpha_{i,k} -\eta_kg_{i,k} -\log Z_k^{\mathrm{EG}}\).
The normalizing term cancels between the two group averages.  Moreover,
\(g_{i,k} = \ip{\sigma_kP_S+H_k}{\xi_iS+\lambda N_i} = s\sigma_k\xi_i+\lambda\ip{H_k}{N_i}\).
Substitution proves the identity.
\end{proof}

\begin{proposition}[Dynamical selection formula]
\label{prop:selection}
Suppose Assumption~\ref{ass:stepsize}. If the unique dual minimizer is interior and \(F\) has
local relative growth there, then \(\frac{W_k^{\mathrm{SpecGD}}}{\tau_k} \longrightarrow a^\star P_S+H^\star\),
where
\begin{equation}
\label{eq:a-h}
a^\star
=
-\frac{\lambda}{2s}
(h_+^\star-h_-^\star),
\end{equation}
with
\[
h_+^\star
:=
\frac{1}{n_+}\sum_{i\in I_+}\ip{H^\star}{N_i},
\qquad
h_-^\star
:=
\frac{1}{n_-}\sum_{i\in I_-}\ip{H^\star}{N_i}.
\]
If we further assume Assumption~\ref{ass:stepsize1}, then we have
\[
\left\|\frac{W_k^{\mathrm{SpecGD}}}{\tau_k} - (a^\star P_S+H^\star)\right\|_F = O\left(\frac{1 + \sum_{t<k}\eta_t^{3/2}}{\tau_k}\right)
\]
\end{proposition}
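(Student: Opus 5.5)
We will prove Proposition~\ref{prop:selection} by splitting \eqref{eq:W-sum} into its shortcut part and its signal part and treating each separately. For the shortcut part, Theorem~\ref{thm:dual-convergence} gives $\boldsymbol{\alpha}_k\to\boldsymbol{\alpha}^\star$. By the continuity of the polar factor at $Z^\star$ discussed above, this yields $H_k\to H^\star$. A Ces\`aro-type argument with weights $\eta_t$ (using $\tau_k\to\infty$) then gives $\tau_k^{-1}\sum_{t<k}\eta_tH_t\to H^\star$. It remains to show that $\tau_k^{-1}\sum_{t<k}\eta_t\sigma_t\to a^\star$. Here we cannot use $\sigma_t$ directly: since $c^\star=2q(\boldsymbol{\alpha}^\star)-1$ may be zero (as in the DSM), the signs $\sigma_t$ can oscillate. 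We therefore route the argument through the telescoping identity.

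\textbf{Signal coefficient via telescoping.} Summing Lemma~\ref{lem:telescoping} over $t<k$ gives
\[
\Lambda_k-\Lambda_0=-2s\sum_{t<k}\eta_t\sigma_t-\lambda\sum_{t<k}\eta_t(\overline h_{+,t}-\overline h_{-,t}).
\]
Since $\boldsymbol{\alpha}_k\to\boldsymbol{\alpha}^\star\in\operatorname{int}(\D_n)$, every $\log\alpha_{i,k}$ stays bounded, so $\Lambda_k$ is bounded and $\Lambda_k/\tau_k\to0$. Moreover, $H_t\to H^\star$ implies $\overline h_{\pm,t}\to h_\pm^\star$, so the Ces\`aro argument gives $\tau_k^{-1}\sum_{t<k}\eta_t(\overline h_{+,t}-\overline h_{-,t})\to h_+^\star-h_-^\star$. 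Dividing the summed identity by $-2s\tau_k$ then yields $\tau_k^{-1}\sum_{t<k}\eta_t\sigma_t\to-\frac{\lambda}{2s}(h_+^\star-h_-^\star)=a^\star$, which is \eqref{eq:a-h}. Note that $\Lambda_0=0$, since $W_0=0$ makes $\boldsymbol{\alpha}_0$ uniform.

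\textbf{Rate.} Under Assumption~\ref{ass:stepsize1}, $\eta_k^{-2}(\eta_k-\eta_{k+1})\to0<\mu$, so Theorem~\ref{thm:dual-convergence} gives $\|\boldsymbol{\alpha}_t-\boldsymbol{\alpha}^\star\|_1=O(\sqrt{\eta_t})$. We then need a local Lipschitz bound for the polar map near $Z^\star$. Because $Z^\star$ has rank $n$ with smallest nonzero singular value bounded away from zero, and $Z(\cdot)$ is linear in $\boldsymbol{\alpha}$, perturbation bounds of the type in Lemmas~\ref{lem:polar-orthonormalization}--\ref{lem:polar-continuity} should give $\|H_t-H^\star\|_F\le L\|\boldsymbol{\alpha}_t-\boldsymbol{\alpha}^\star\|_1=O(\sqrt{\eta_t})$ for all large $t$. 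The finitely many early terms contribute $O(1)$. Hence
\[
\Bigl\|\sum_{t<k}\eta_t(H_t-H^\star)\Bigr\|_F=O\Bigl(1+\sum_{t<k}\eta_t^{3/2}\Bigr),
\]
and the same bound holds for $\sum_{t<k}\eta_t(\overline h_{\pm,t}-h_\pm^\star)$ because $|\langle H_t-H^\star,N_i\rangle|\le\|H_t-H^\star\|_F$. Together with $|\Lambda_k|=O(1)$, the telescoping identity gives
\[
\Bigl|\sum_{t<k}\eta_t\sigma_t-a^\star\tau_k\Bigr|=O\Bigl(1+\sum_{t<k}\eta_t^{3/2}\Bigr).
\]
Dividing by $\tau_k$ and using $\|P_S\|_F=\sqrt r$ yields the stated rate.

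\textbf{Main obstacle.} We expect the main difficulty to be the quantitative Lipschitz continuity of the polar factor along the iterates, which is needed for the rate. This requires a uniform lower bound on the $n$-th singular value of $Z(\boldsymbol{\alpha}_t)$ near $\boldsymbol{\alpha}^\star$, which we would obtain by Weyl's inequality from $\sigma_n(Z^\star)>0$. It also requires a standard perturbation bound for $\pol$ between rank-$n$ matrices with the same row and column spans, where the spans do not depend on $\boldsymbol{\alpha}$ as long as all $\alpha_i>0$. Under the CSM this step is immediate from the explicit formula for $\pol(Z(\boldsymbol{\alpha}))$.
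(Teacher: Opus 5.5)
Your proposal is correct and follows essentially the same route as the paper: dual convergence plus continuity of the polar factor handles the shortcut part. The telescoping identity with bounded $\Lambda_k$ and weighted Ces\`aro averaging gives the signal coefficient, and the $O(\sqrt{\eta_t})$ dual rate combined with local Lipschitz continuity of $\boldsymbol\alpha\mapsto\pol(Z(\boldsymbol\alpha))$ gives the rate. The Lipschitz step you flag is exactly what the paper (tersely) takes from Lemmas~\ref{lem:polar-orthonormalization} and~\ref{lem:polar-continuity}: it writes $\pol(Z(\boldsymbol\alpha))=\overline U\,\pol(G_U^{1/2}\Lambda_{\boldsymbol\alpha}G_V^{1/2})\,\overline V^\top$ with fixed $\overline U,\overline V$, and the $n\times n$ core is invertible for interior $\boldsymbol\alpha$.
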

\begin{proof}
Theorem~\ref{thm:dual-convergence} gives
\(\boldsymbol{\alpha}_k\to\boldsymbol{\alpha}^\star\).  Hence \(\Lambda_k\) converges to a finite
limit and \(H_k\to H^\star\).  Summing
Lemma~\ref{lem:telescoping}, dividing by \(\tau_k\), and applying the
weighted Ces\`aro theorem gives \(\frac{\sum_{t<k}\eta_t\sigma_t}{\tau_k} \to -\frac{\lambda}{2s}(h_+^\star-h_-^\star)\).
Dividing \eqref{eq:W-sum} by \(\tau_k\) and applying weighted
Ces\`aro convergence to \(H_t\) proves the result.

Suppose $\limsup_{k\to \infty}\frac{\eta_k - \eta_{k+1}}{\eta_k^2} = 0$. Then, by Theorem~\ref{thm:dual-convergence} we have $\|\boldsymbol{\alpha}_k - \boldsymbol{\alpha}^\star\|_2 = O(\sqrt{\eta_k})$. 

Since $\boldsymbol{\alpha} \to \pol(Z(\boldsymbol{\alpha}))$ is Lipschitz continuous at $\boldsymbol{\alpha}^\star$, we have $\|H_k - H^\star\|_F = O(\sqrt{\eta_k})$. Then for the shortcut noise, we have
\[
\left\|\frac{\sum_{t<k}\eta_tH_t}{\tau_k} - H^\star\right\|_F = O\left(\frac{\sum_{t<k}\eta_t^{3/2}}{\tau_k}\right).
\]
Since $|\bar h_{+,k} - h_+^\star|, |\bar h_{+,k} - h_+^\star| = O(\|H_k - H^\star\|) = O(\sqrt{\eta_k})$, summing Lemma~\ref{lem:telescoping} and dividing by $\tau_k$ implies that
\[
\left|\frac{\sum_{t<k}\eta_t\sigma_t}{\tau_k} - \frac{\lambda}{2s}(h_-^\star-h_+^\star)\right| = O\left(\frac{1 + \sum_{t<k}\eta_t^{3/2}}{\tau_k}\right).
\]
This gives the convergence rate for the signal direction. Combining these estimates proves the desired convergence rate.
\end{proof}

%% file: proof_main_dispersed_WSpecGD_compressed.tex
By Proposition~\ref{prop:selection}, the proof of Theorem~\ref{thm:main_dispersed} is reduced to the uniqueness, interiority of the dual minimizer $\boldsymbol{\alpha}^\star$ and relative local growth at $\boldsymbol{\alpha}^\star$. Under DSM, the key ingredient is the second-order expansion of $\|Z(\boldsymbol{\alpha})\|_1$.

Throughout, set $\hat{\boldsymbol v}_i:=\tilde y_i \boldsymbol v_i$, so that $N_i=\boldsymbol u_i\hat{\boldsymbol v}_i^\top$.

\subsection{Second-order expansion}\label{app:second_order_exp}

This section derives the analytic expansion that turns the nuclear norm
of the noise combination into a tractable quadratic function of the
weights.  Recall that \(Z(\boldsymbol{\alpha})=\sum_i\alpha_i \boldsymbol u_i\widehat{\boldsymbol v}_i^\top\).
Since $\|\boldsymbol u_i\widehat{\boldsymbol v}_i^\top\|_1 = 1$, we have \(\|Z(\boldsymbol{\alpha})\|_1 \leq \sum_i\alpha_i=1\),
where the equality holds when the noise atoms are perfectly orthogonal. 

In the random model the atoms are only approximately
orthogonal. The goal of this section is to establish a second-order approximation of $\|Z(\boldsymbol{\alpha})\|_1$, which enables us to obtain a tight estimate of the noise norm gap 
\(1 - \|Z(\boldsymbol{\alpha})\|_1\).

Write
\begin{equation}\label{eq:noise-matrix-notation}
U=[\boldsymbol{u}_1,\ldots,\boldsymbol{u}_n],
\qquad
\widehat V=[\widehat{\boldsymbol v}_1,\ldots,\widehat{\boldsymbol v}_n],
\qquad
\Lambda_{\boldsymbol{\alpha}}:=\diag(\alpha_1,\ldots,\alpha_n).
\end{equation}
Then \(Z(\boldsymbol{\alpha})=U\Lambda_{\boldsymbol{\alpha}}\widehat V^\top\).  Also set
\begin{equation}\label{eq:noise-core-notation}
G_U:=U^\top U=I+\Delta_U,
\qquad
G_V:=\widehat V^\top\widehat V=I+\Delta_V,
\qquad
Y(\boldsymbol{\alpha}):=G_U^{1/2}\Lambda_{\boldsymbol{\alpha}} G_V^{1/2}.
\end{equation}
The matrices \(\Delta_U,\Delta_V\) are symmetric with zero diagonal.

Throughout this section, we write
\begin{equation}
\label{eq:normalized-schatten-functional}
\mathsf{N}(Y):=\frac1n\|Y\|_1
\end{equation}
for the normalized Schatten \(1\)-norm functional on \(n\times n\) matrices.
On nonsingular matrices we use the factorization
\begin{equation}
\label{eq:normalized-schatten-factorization}
\mathsf{N}=\Psi\circ G,
\qquad
G(Y):=Y^\top Y,
\qquad
\Psi(M):=\frac1n\operatorname{tr}(M^{1/2}).
\end{equation}

\begin{lemma}[Uniform derivatives of the matrix square-root map]
\label{lem:uniform-square-root-derivatives}
Fix constants \(0<\ell<u<\infty\) and an integer \(r\geq1\).  There
exists a constant \(C_r=C_r(\ell,u)<\infty\) such that the following
holds in every dimension \(n\).

Let \(A\in\mathbb R^{n\times n}\) be symmetric positive definite with \(\ell I\preceq A\preceq uI\).
Then the map \(A\mapsto A^{1/2}\)
is \(C^\infty\) in a neighborhood of \(A\), and for every \(1\leq q\leq r\),
\(\left\| \dd^q(A^{1/2})[H_1,\ldots,H_q] \right\|_{\op} \leq C_r \prod_{j=1}^q\|H_j\|_{\op}\).
The constant \(C_r\) depends only on \(\ell,u,r\), and not on \(n\).
\end{lemma}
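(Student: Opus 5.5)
The natural route is a Cauchy (Dunford) integral representation of the square root, which gives dimension-free derivative bounds. Since $\ell I\preceq A\preceq uI$, the spectrum of $A$ lies in $[\ell,u]$. Fix a contour $\Gamma$ in the right half-plane enclosing $[\ell,u]$, say the circle centred at $(\ell+u)/2$ with radius $(u-\ell)/2+\ell/2$. It stays at distance at least $\ell/2$ from $[\ell,u]$ and avoids the branch cut $(-\infty,0]$. For every symmetric $B$ with $\|B-A\|_{\op}\leq \ell/4$, the spectrum of $B$ lies in $[3\ell/4,u+\ell/4]$, which is still strictly inside $\Gamma$ at distance at least $\ell/4$. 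On this operator-norm ball we have
\[
B^{1/2}=\frac{1}{2\pi i}\oint_\Gamma z^{1/2}(zI-B)^{-1}\,\dd z ,
\]
using the principal branch of $z^{1/2}$, which is holomorphic near $\Gamma$.

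We then differentiate under the integral. The resolvent $R(z)=(zI-B)^{-1}$ is a real-analytic (indeed rational) function of $B$, so $B\mapsto B^{1/2}$ is $C^\infty$ on the ball. The first derivative is $\dd R(z)[H]=R(z)HR(z)$. By induction, the $q$-th derivative is
\[
\dd^q R(z)[H_1,\ldots,H_q]=\sum_{\pi\in S_q}R(z)H_{\pi(1)}R(z)H_{\pi(2)}\cdots H_{\pi(q)}R(z).
\]
This gives
\[
\dd^q(A^{1/2})[H_1,\ldots,H_q]=\frac{1}{2\pi i}\oint_\Gamma z^{1/2}\sum_{\pi\in S_q}R(z)H_{\pi(1)}R(z)\cdots H_{\pi(q)}R(z)\,\dd z .
\]
Differentiation under the integral is justified because the integrand is jointly continuous on the compact contour times a neighbourhood of $A$.

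It remains to bound this expression. Since $A$ is symmetric, $\|R(z)\|_{\op}=1/\operatorname{dist}(z,\operatorname{spec}A)\leq 2/\ell$ for every $z\in\Gamma$. On $\Gamma$ we also have $|z^{1/2}|\leq (u+\ell)^{1/2}$, and the length of $\Gamma$ is at most $\pi(u+\ell)$. Submultiplicativity of the operator norm then gives
\[
\bigl\|\dd^q(A^{1/2})[H_1,\ldots,H_q]\bigr\|_{\op}\leq \frac{\pi(u+\ell)}{2\pi}\,(u+\ell)^{1/2}\,q!\,(2/\ell)^{q+1}\prod_j\|H_j\|_{\op}.
\]
Taking $C_r$ to be the maximum of this constant over $1\leq q\leq r$ gives a bound that depends only on $\ell,u,r$.

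The one point needing care is dimension-independence. The operator-norm resolvent bound uses only the normality of $A$ and never a trace or a sum over eigenvalues. Likewise, the contour and the neighbourhood radius $\ell/4$ are chosen from $\ell$ and $u$ alone. Hence no factor of $n$ enters. I expect no real obstacle beyond this bookkeeping.
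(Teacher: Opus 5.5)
Your proposal is correct and follows essentially the same route as the paper: a Dunford--Cauchy contour representation of $A^{1/2}$ on a contour in the right half-plane, the resolvent derivative $R_zHR_z$ iterated into a sum over permutations, and the dimension-free bound $\frac{\operatorname{length}(\Gamma)}{2\pi}\sup_{\Gamma}|z^{1/2}|\,q!\,\delta^{-(q+1)}\prod_j\|H_j\|_{\op}$. The only difference is that you pick an explicit circle, of radius $u/2$ centred at $(\ell+u)/2$, which keeps distance $\ell/2$ from $[\ell,u]$ and so makes the paper's unspecified separation constant $\delta_{\ell,u}$ concrete.
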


\begin{proof}
Choose once and for all a positively oriented contour
\(\Gamma=\Gamma_{\ell,u}\) contained in the right half-plane, enclosing
\([\ell,u]\), and satisfying \(\operatorname{dist}(\Gamma,[\ell,u])\geq \delta_{\ell,u}>0\).
The length of \(\Gamma\), the quantity
\(\sup_{z\in\Gamma}|z^{1/2}|\), and \(\delta_{\ell,u}^{-1}\) depend only
on \(\ell,u\).  By the holomorphic functional calculus, \(A^{1/2} = \frac{1}{2\pi i} \int_\Gamma z^{1/2}(zI-A)^{-1}\,\dd z\).
Set \(R_z(A):=(zI-A)^{-1}\).
Since \(z\in\Gamma\) stays at distance at least \(\delta_{\ell,u}\) from
the spectrum of \(A\), \(\|R_z(A)\|_{\op}\leq \delta_{\ell,u}^{-1}\).
The resolvent derivative is \(\dd R_z(A)[H] = R_z(A)H R_z(A)\).
Iterating this identity gives
\[
\dd^q(A^{1/2})[H_1,\ldots,H_q]
=
\frac{1}{2\pi i}
\int_\Gamma
z^{1/2}
\sum_{\pi\in S_q}
R_z(A)H_{\pi(1)}R_z(A)
\cdots
H_{\pi(q)}R_z(A)
\,\dd z.
\]
Therefore,
\[
\begin{aligned}
\left\|
\dd^q(A^{1/2})[H_1,\ldots,H_q]
\right\|_{\op}
&\leq
\frac{\operatorname{length}(\Gamma)}{2\pi}
\sup_{z\in\Gamma}|z^{1/2}|\,
q!\,
\delta_{\ell,u}^{-(q+1)}
\prod_{j=1}^q\|H_j\|_{\op}.
\end{aligned}
\]
The right-hand side depends only on \(\ell,u,q\), not on \(n\).
Taking the maximum over \(1\leq q\leq r\) proves the claim.
\end{proof}

The following is the standard partition form of Faà di Bruno's formula
for Fréchet derivatives; see, for example, \cite{Levy2006FaaDiBruno}.

\begin{lemma}[Faà di Bruno formula for Fréchet derivatives]\label{lemm:faadibruno}
Let \(X,Y,Z\) be finite-dimensional normed vector spaces.  Let
\(U\subset X\) and \(V\subset Y\) be open sets, and let \(G:U\to V, \quad F:V\to Z\)
be \(C^q\).  Let \(\mathcal P_q\) denote the set of all partitions of
\(\{1,\ldots,q\}\).  For a block \(B\subset\{1,\ldots,q\}\), define \(G_B(x) := \dd^{|B|}G(x)[h_i:i\in B]\),
where the directions are taken in increasing order of their indices.
Then
\[
\dd^q(F\circ G)(x)[h_1,\ldots,h_q]
=
\sum_{\pi\in\mathcal P_q}
\dd^{|\pi|}F(G(x))
\left[
G_{B_1}(x),\ldots,G_{B_m}(x)
\right],
\]
where \(\pi=\{B_1,\ldots,B_m\}\) and \(|\pi|=m\).
Since the higher derivatives are symmetric, the order of the blocks and
the order inside each block are immaterial.

Equivalently, the \(q\)-th derivative of a composition is obtained by
summing over all ways of grouping the directions
\(\{h_1,\ldots,h_q\}\): each block \(B\) is first inserted into the
derivative \(\dd^{|B|}G(x)\), and the resulting outputs are then inserted
into the derivative of \(F\) whose order is the number of blocks.
\end{lemma}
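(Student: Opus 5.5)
We would prove the formula by induction on the order $k$, for $1\leq k\leq q$, with $x\in U$ and the directions $h_1,\ldots,h_k\in X$ fixed but arbitrary. The base case $k=1$ is the ordinary chain rule. The only partition of $\{1\}$ is the single block $\{1\}$, so the formula reads $\dd(F\circ G)(x)[h_1]=\dd F(G(x))[\dd G(x)[h_1]]$. Because $F$ and $G$ are $C^q$ on finite-dimensional spaces, every map $x\mapsto \dd^jF(G(x))$ and $x\mapsto \dd^jG(x)$ with $j\leq q$ is continuous into the finite-dimensional space of $j$-linear maps. For $j<q$ these maps are also differentiable. This is all the regularity the inductive step will use. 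We will also use Schwarz's theorem: for a $C^q$ map the derivatives of order at most $q$ are symmetric. This justifies the final remark that the order of the blocks and the order inside each block are immaterial, and it makes the right-hand side well defined independently of how the blocks are listed.

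For the inductive step, assume the formula holds at order $k<q$. Regard both sides as functions of $x$ and differentiate in the direction $h_{k+1}$. Each summand has the form $\dd^{m}F(G(x))[G_{B_1}(x),\ldots,G_{B_m}(x)]$. This is the evaluation map $(L,v_1,\ldots,v_m)\mapsto L[v_1,\ldots,v_m]$, which is bounded and $(m+1)$-linear, applied to $L=\dd^mF(G(x))$ and $v_j=G_{B_j}(x)$. The Leibniz rule for multilinear maps therefore splits its derivative into $m+1$ pieces.

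1. Differentiating the first argument $L$ gives, by the chain rule, $\dd^{m+1}F(G(x))[G_{B_1}(x),\ldots,G_{B_m}(x),\dd G(x)[h_{k+1}]]$. This is the term indexed by the partition $\pi\cup\{\{k+1\}\}$.
2. Differentiating the $j$-th slot turns $\dd^{|B_j|}G(x)[h_i:i\in B_j]$ into $\dd^{|B_j|+1}G(x)[h_i:i\in B_j,\,h_{k+1}]$. This is the term indexed by the partition obtained from $\pi$ by replacing $B_j$ with $B_j\cup\{k+1\}$.

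All derivatives that appear have order at most $k+1\leq q$, so the regularity hypothesis suffices.

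It remains to check that these terms reproduce exactly the sum over $\mathcal P_{k+1}$. Take any partition $\pi'$ of $\{1,\ldots,k+1\}$ and delete the element $k+1$; this gives a unique partition $\pi$ of $\{1,\ldots,k\}$. The original $\pi'$ is recovered from $\pi$ in one of two mutually exclusive ways: either $k+1$ was a singleton block, or $k+1$ was adjoined to exactly one block $B_j$ of $\pi$. Hence the map sending (partition $\pi$, choice among its $m+1$ options) to $\pi'$ is a bijection onto $\mathcal P_{k+1}$, and every term of the target formula appears exactly once.

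The main obstacle is this combinatorial bookkeeping. One must make sure that reordering arguments, whether placing $h_{k+1}$ last inside a block or listing the new block last, does not change any value. This is precisely where the symmetry of $\dd^jF$ and $\dd^jG$ for $j\leq q$ enters. A secondary point is the convention that the directions inside a block are taken in increasing index order; under symmetry this convention is harmless. Alternatively, as the paper does, we could simply cite \cite{Levy2006FaaDiBruno} for the standard statement, with the induction above serving as a self-contained justification.
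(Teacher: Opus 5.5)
Your proof is correct, but it takes a different route from the paper. The paper gives no proof of this lemma: it simply cites the standard partition form from \cite{Levy2006FaaDiBruno}. Your argument is the standard self-contained one, and it is complete. You apply the Leibniz rule to the bounded $(m+1)$-linear evaluation map $(L,v_1,\ldots,v_m)\mapsto L[v_1,\ldots,v_m]$. You then use the bijection between $\mathcal P_{k+1}$ and pairs consisting of a partition $\pi\in\mathcal P_k$ together with one of its $|\pi|+1$ ways of inserting $k+1$.

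What your version buys over the citation is transparency about which derivatives enter. Only $\dd^{|B|}G$ per block and $\dd^{|\pi|}F$ outside appear, all of order at most $q$, and each input direction lies in exactly one block. This is the structure that Lemma~\ref{lem:uniform-derivative-composition} and Lemma~\ref{lem:rank-sensitive-schatten-derivatives} later rely on when they track which block carries a Schatten-$1$ or Frobenius-controlled direction. The citation buys brevity.

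Two small refinements:
\begin{itemize}
\item Since $k+1$ exceeds every index in $B_j$, appending $h_{k+1}$ at the end of a block already respects the increasing-order convention. Symmetry is therefore genuinely needed only to reorder the blocks inside $\dd^{m}F$, including where the new argument of $\dd(\dd^mF)$ is placed.
\item To identify the derivative of the left-hand side with $\dd^{k+1}(F\circ G)(x)[h_1,\ldots,h_{k+1}]$, say explicitly why this is legitimate. For $k<q$, the order-$k$ identity, valid for all $x\in U$, exhibits $x\mapsto \dd^{k}(F\circ G)(x)$ as a $C^1$ map, so $F\circ G$ is $C^{k+1}$. Alternatively, cite that compositions of $C^q$ maps are $C^q$.
\end{itemize}
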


\begin{lemma}[Composition of uniformly bounded derivatives]
\label{lem:uniform-derivative-composition}
Let \(X,Y,Z\) be normed vector spaces, and let \(G:U\subset X\to Y, \quad F:V\subset Y\to Z\)
be \(C^r\) maps with \(G(U)\subset V\).  Suppose that for every
\(1\leq m\leq r\), there exist constants \(A_m,B_m<\infty\) such that \(\|\dd^m F(y)[u_1,\ldots,u_m]\|_Z \leq A_m\prod_{j=1}^m\|u_j\|_Y\)
for all \(y\in V\), and \(\|\dd^m G(x)[h_1,\ldots,h_m]\|_Y \leq B_m\prod_{j=1}^m\|h_j\|_X\)
for all \(x\in U\).  Then, for every \(1\leq q\leq r\), \(\|\dd^q(F\circ G)(x)[h_1,\ldots,h_q]\|_Z \leq C_q\prod_{j=1}^q\|h_j\|_X\),
where one may take \(C_q := \sum_{\pi\in\mathcal P_q} A_{|\pi|} \prod_{B\in\pi}B_{|B|}\).
In particular, \(C_q\) depends only on \(q,\quad A_1,\ldots,A_q,\quad B_1,\ldots,B_q\).
\end{lemma}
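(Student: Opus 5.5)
The plan is to derive the bound directly from the Faà di Bruno formula (Lemma~\ref{lemm:faadibruno}) applied to \(F\circ G\) at a point \(x\in U\), and then bound each term term-by-term using the triangle inequality together with the assumed multilinear bounds on \(\dd^mF\) and \(\dd^mG\). Since the statement permits general normed spaces while Lemma~\ref{lemm:faadibruno} is stated for finite-dimensional ones, I will note that the partition form of Faà di Bruno holds verbatim for \(C^q\) maps between arbitrary normed spaces. Its standard proof is an induction on \(q\) using only the chain and product rules for Fréchet derivatives. In our applications all spaces are finite-dimensional anyway, so I will simply invoke the lemma.

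Fix \(1\leq q\leq r\) and directions \(h_1,\ldots,h_q\in X\). By Lemma~\ref{lemm:faadibruno},
\[
\dd^q(F\circ G)(x)[h_1,\ldots,h_q]
=\sum_{\pi\in\mathcal P_q}\dd^{|\pi|}F(G(x))\bigl[G_{B_1}(x),\ldots,G_{B_m}(x)\bigr].
\]
For a fixed partition \(\pi=\{B_1,\ldots,B_m\}\), we have \(G(x)\in V\). The hypothesis on \(F\) with \(m=|\pi|\leq q\leq r\) therefore bounds the norm of the corresponding term by \(A_{|\pi|}\prod_{j=1}^m\|G_{B_j}(x)\|_Y\). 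For each block, the hypothesis on \(G\) with order \(|B_j|\leq r\) gives \(\|G_{B_j}(x)\|_Y\leq B_{|B_j|}\prod_{i\in B_j}\|h_i\|_X\). The blocks partition \(\{1,\ldots,q\}\), so the product over blocks of \(\prod_{i\in B_j}\|h_i\|_X\) is exactly \(\prod_{i=1}^q\|h_i\|_X\).

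Summing over \(\pi\in\mathcal P_q\) with the triangle inequality then yields the claimed bound with \(C_q=\sum_{\pi}A_{|\pi|}\prod_{B\in\pi}B_{|B|}\). This constant visibly depends only on \(q\) (through the finite set \(\mathcal P_q\)) and on \(A_1,\ldots,A_q\) and \(B_1,\ldots,B_q\), and not on \(x\) or on the dimensions of the spaces.

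There is no real obstacle here. The only point requiring care is the justification of Faà di Bruno in possibly infinite-dimensional normed spaces, which I would handle by the remark above, or by restricting to the finite-dimensional setting actually used, where Lemma~\ref{lemm:faadibruno} applies directly. One should also check that every order of derivative appearing lies in \(\{1,\ldots,r\}\) so that the hypotheses apply; this holds since \(1\leq|\pi|,|B|\leq q\leq r\).
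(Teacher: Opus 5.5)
Your proposal is correct and matches the paper's proof: expand \(\dd^q(F\circ G)(x)[h_1,\ldots,h_q]\) by Faà di Bruno, bound each partition term by \(A_{|\pi|}\prod_{B\in\pi}\|G_B(x)\|_Y\), then bound each \(\|G_B(x)\|_Y\) by \(B_{|B|}\prod_{i\in B}\|h_i\|_X\), and sum over \(\mathcal P_q\). Your remark that Lemma~\ref{lemm:faadibruno} is stated only for finite-dimensional spaces, while this statement allows general normed spaces, is a sound extra check that the paper leaves implicit; the formula extends verbatim, and only finite-dimensional spaces are used in the applications anyway.
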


\begin{proof}
By Faà di Bruno's formula,
\[
\dd^q(F\circ G)(x)[h_1,\ldots,h_q]
=
\sum_{\pi\in\mathcal P_q}
\dd^{|\pi|}F(G(x))
\left[
G_{B_1}(x),\ldots,G_{B_m}(x)
\right],
\]
where \(\pi=\{B_1,\ldots,B_m\}\), \(|\pi|=m\), and \(G_B(x) := \dd^{|B|}G(x)[h_i:i\in B]\).
For a fixed partition \(\pi\), the assumptions give
\[
\left\|
\dd^{|\pi|}F(G(x))
\left[
G_{B_1}(x),\ldots,G_{B_m}(x)
\right]
\right\|_Z
\leq
A_{|\pi|}
\prod_{B\in\pi}\|G_B(x)\|_Y.
\]
Moreover, \(\|G_B(x)\|_Y \leq B_{|B|} \prod_{i\in B}\|h_i\|_X\).
Hence the contribution of \(\pi\) is bounded by \(A_{|\pi|} \prod_{B\in\pi}B_{|B|} \prod_{j=1}^q\|h_j\|_X\).
Summing over all partitions gives the claim.
\end{proof}

\begin{lemma}[Dimension-uniform derivatives of the normalized Schatten \(1\)-norm]
\label{lem:dimension-uniform-schatten-derivatives}
Fix constants \(0<a<b<\infty\) and an integer \(r\geq1\).  There exists
a constant \(C_r<\infty\), depending only on \(a,b,r\), such that the
following holds for every dimension \(n\).

Let \(Y\in\mathbb R^{n\times n}\) satisfy \(a\leq \sigma_{\min}(Y)\leq \|Y\|_{\op}\leq b\).
For \(\mathsf{N}\) defined in \eqref{eq:normalized-schatten-functional}, equivalently \(\mathsf{N}(Y)=\frac1n\operatorname{tr}\bigl((Y^\top Y)^{1/2}\bigr)\),
\(\mathsf{N}\) is \(C^\infty\) in a neighborhood of \(Y\), and for every
\(1\leq q\leq r\), \(\left| \dd^q\mathsf{N}(Y)[E_1,\ldots,E_q] \right| \leq C_r \prod_{j=1}^q\|E_j\|_{\op}\).
In particular, the constant is independent of \(n\).
\end{lemma}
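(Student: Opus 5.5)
We factor \(\mathsf N=\Psi\circ G\) as in \eqref{eq:normalized-schatten-factorization}, with \(G(Y)=Y^\top Y\) and \(\Psi(M)=\frac1n\operatorname{tr}(M^{1/2})\). We then combine Lemma~\ref{lem:uniform-square-root-derivatives} with Lemma~\ref{lem:uniform-derivative-composition}, working throughout with the operator norm on \(n\times n\) matrices.

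\emph{Step 1: the neighborhood.} Fix a neighborhood \(\mathcal O:=\{Y':\|Y'-Y\|_{\op}<a/2\}\). By Weyl's inequality for singular values, every \(Y'\in\mathcal O\) satisfies \(a/2\leq\sigma_{\min}(Y')\leq\|Y'\|_{\op}\leq b+a/2\). Hence \(G(Y')\) lies in the set \(\mathcal V\) of symmetric \(M\) with \(\ell I\preceq M\preceq uI\), where \(\ell:=a^2/4\) and \(u:=(b+a/2)^2\). These constants depend only on \(a,b\).

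\emph{Step 2: bounds for \(G\).} The map \(G\) is a quadratic polynomial, so it is \(C^\infty\) with
\[
\dd G(Y')[E]=Y'^\top E+E^\top Y',\qquad \dd^2G[E_1,E_2]=E_1^\top E_2+E_2^\top E_1,
\]
and all higher derivatives vanish. This gives constants \(B_1=2(b+a/2)\), \(B_2=2\), and \(B_m=0\) for \(m\geq3\), uniformly on \(\mathcal O\) and independent of \(n\).

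\emph{Step 3: bounds for \(\Psi\).} Here the normalization \(1/n\) is essential. We have \(\dd^m\Psi(M)[H_1,\dots,H_m]=\frac1n\operatorname{tr}\bigl(\dd^m(M^{1/2})[H_1,\dots,H_m]\bigr)\). Using \(|\operatorname{tr}(A)|\leq n\|A\|_{\op}\) together with Lemma~\ref{lem:uniform-square-root-derivatives} on \(\mathcal V\), we get \(|\dd^m\Psi(M)[H_1,\ldots,H_m]|\leq C_r(\ell,u)\prod_j\|H_j\|_{\op}\). This is dimension free. Smoothness of \(\Psi\) near \(G(Y')\) also follows from that lemma, via the contour representation, since the spectrum stays in \([\ell,u]\). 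Because \(\mathcal V\) is not open, we apply the lemma on a slightly enlarged open set \(\{\ell/2\prec M\prec 2u\}\) containing \(G(\mathcal O)\), adjusting \(\ell,u\) by constant factors.

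\emph{Step 4: conclusion and main difficulty.} On \(\mathcal O\) the factorization \(\mathsf N=\Psi\circ G\) is valid, since \((Y'^\top Y')^{1/2}\) has trace \(\|Y'\|_1\). Lemma~\ref{lem:uniform-derivative-composition} then gives \(|\dd^q\mathsf N(Y)[E_1,\ldots,E_q]|\leq C_q\prod_j\|E_j\|_{\op}\) with \(C_q=\sum_{\pi}A_{|\pi|}\prod_{B\in\pi}B_{|B|}\), which depends only on \(a,b,q\). Taking the maximum over \(q\leq r\) finishes the proof. The main obstacle is dimension independence, and it is handled by two facts: the \(1/n\) normalization absorbs the factor \(n\) from the trace, and the square-root derivative bounds hold uniformly in operator norm via a fixed contour. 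The rest is bookkeeping of neighborhoods so that the spectral bounds persist.
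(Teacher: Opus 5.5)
Your proposal is correct and follows essentially the same route as the paper. You factor $\mathsf N=\Psi\circ G$ and bound the derivatives of $\Psi$ using Lemma~\ref{lem:uniform-square-root-derivatives} together with $\frac1n|\operatorname{tr}(L)|\leq\|L\|_{\op}$. You note that $G$ is quadratic, so $\dd G$ and $\dd^2G$ are bounded and the higher derivatives vanish, and you conclude with Lemma~\ref{lem:uniform-derivative-composition}. The only difference is that you make the neighborhood explicit, via Weyl's inequality and a slightly enlarged spectral window, whereas the paper leaves this implicit by working directly with spectra in $[a^2,b^2]$.
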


\begin{proof}
Recall the factorization \eqref{eq:normalized-schatten-factorization}.
On the set of symmetric positive-definite matrices whose spectra lie in
\([a^2,b^2]\), Lemma~\ref{lem:uniform-square-root-derivatives} implies that, for
each \(1\leq m\leq r\), there is a constant \(K_m(a,b)\), independent of
\(n\), such that \(\left\| \dd^m(M^{1/2})[H_1,\ldots,H_m] \right\|_{\op} \leq K_m(a,b) \prod_{j=1}^m\|H_j\|_{\op}\).
Since \(\frac1n|\operatorname{tr}(L)|\leq \|L\|_{\op}\),
it follows that \(\left| \dd^m\Psi(M)[H_1,\ldots,H_m] \right| \leq K_m(a,b) \prod_{j=1}^m\|H_j\|_{\op}\).

By \eqref{eq:normalized-schatten-factorization}, it remains to control
the derivatives of \(G(Y)=Y^\top Y\).  On the region \(\|Y\|_{\op}\leq b\), the map \(G\) satisfies
\(\dd G(Y)[E]=Y^\top E+E^\top Y, \quad \|\dd G(Y)[E]\|_{\op}\leq 2b\|E\|_{\op}\),
and
\[
\dd^2G(Y)[E_1,E_2]=E_1^\top E_2+E_2^\top E_1,
\qquad
\|\dd^2G(Y)[E_1,E_2]\|_{\op}
\leq 2\|E_1\|_{\op}\|E_2\|_{\op}.
\]
Moreover, \(\dd^mG=0\) for \(m\geq3\).  Therefore the derivatives of
\(G\) up to order \(r\) are bounded by constants depending only on
\(b\) and the derivative order.  Applying
Lemma~\ref{lem:uniform-derivative-composition} to the factorization \eqref{eq:normalized-schatten-factorization}
gives the desired bound.  All constants depend only on \(a,b,r\), and
not on \(n\).
\end{proof}

\begin{lemma}[Rank-sensitive derivatives of the normalized Schatten \(1\)-norm]
\label{lem:rank-sensitive-schatten-derivatives}
Fix constants \(0<a<b<\infty\) and an integer \(r\geq1\).  Consider \(\mathsf{N}\), defined in \eqref{eq:normalized-schatten-functional}, on the set of matrices satisfying
\(a\leq \sigma_{\min}(Y)\leq \|Y\|_{\op}\leq b\).
For every \(1\leq q\leq r\), there is a constant \(C_q=C_q(a,b)<\infty\),
independent of \(n\), such that the following bounds hold.

First, if one direction, say \(E_1\), is measured in Schatten \(1\)-norm
and the remaining directions are measured in operator norm, then
\[
\left|
\dd^q\mathsf{N}(Y)[E_1,E_2,\ldots,E_q]
\right|
\leq
\frac{C_q}{n}\,
\|E_1\|_1
\prod_{j=2}^q\|E_j\|_{\op}.
\]
Second, if two directions are measured in Frobenius norm and the
remaining directions are measured in operator norm, then
\[
\left|
\dd^q\mathsf{N}(Y)[E_1,E_2,E_3,\ldots,E_q]
\right|
\leq
\frac{C_q}{n}\,
\|E_1\|_F\|E_2\|_F
\prod_{j=3}^q\|E_j\|_{\op}.
\]
By symmetry of Fr\'echet derivatives, the distinguished directions may
appear in any slots.
\end{lemma}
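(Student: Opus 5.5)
The strategy is to write every derivative of \(\mathsf N\) as a normalized trace, \(\frac1n\operatorname{tr}\) of a single operator, and then convert the operator-norm bound of Lemma~\ref{lem:dimension-uniform-schatten-derivatives} into trace-class or Hilbert--Schmidt bounds. We use the factorization \(\mathsf N=\Psi\circ G\) from \eqref{eq:normalized-schatten-factorization} together with Faà di Bruno (Lemma~\ref{lemm:faadibruno}). Each partition term has the form \(\frac1n\operatorname{tr}\,\dd^m(M^{1/2})[H_1,\ldots,H_m]\), where \(M=Y^\top Y\). Each \(H_j\) is either \(Y^\top E_i+E_i^\top Y\) or \(E_i^\top E_{i'}+E_{i'}^\top E_i\). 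From the resolvent representation in the proof of Lemma~\ref{lem:uniform-square-root-derivatives}, each \(\dd^m(M^{1/2})\) is a contour integral of products
\[
R_z H_{\pi(1)} R_z\cdots H_{\pi(m)}R_z .
\]
The contour has bounded length, \(\|R_z\|_{\op}\le\delta^{-1}\), and all these quantities depend only on \(a,b\). The whole task therefore reduces to bounding \(\frac1n|\operatorname{tr}(R_zH_1R_z\cdots H_mR_z)|\) with the desired norms on the \(E\)'s.

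For the first bound, the direction \(E_1\) lies in exactly one block \(B\), so it enters exactly one factor \(H_j\). That factor is either \(Y^\top E_1+E_1^\top Y\), with \(\|\cdot\|_1\le 2b\|E_1\|_1\), or \(E_1^\top E_{i'}+E_{i'}^\top E_1\), with \(\|\cdot\|_1\le 2\|E_1\|_1\|E_{i'}\|_{\op}\). We then apply \(|\operatorname{tr}(AB)|\le\|A\|_1\|B\|_{\op}\) and Hölder for Schatten norms, \(\|A_1\cdots A_p\|_1\le\|A_{j}\|_1\prod_{k\ne j}\|A_k\|_{\op}\). This bounds the trace by a constant times \(\|E_1\|_1\prod_{j\ge2}\|E_j\|_{\op}\). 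The resolvents and the factor \(z^{1/2}\) contribute only constants depending on \(a,b\). Multiplying by \(1/n\) gives the claim.

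The second bound is handled the same way, but with two cases. If \(E_1\) and \(E_2\) sit in different factors \(H_j,H_k\), each of these has Frobenius norm at most \(2b\|E_i\|_F\) (or \(2\|E_i\|_F\|E_{i'}\|_{\op}\)). Hölder with exponents \((2,2,\infty,\ldots)\) then gives \(|\operatorname{tr}(\cdots)|\le\|H_j\|_F\|H_k\|_F\prod\|\cdot\|_{\op}\). If \(E_1\) and \(E_2\) sit in the same factor \(E_1^\top E_2+E_2^\top E_1\), we use \(\|E_1^\top E_2\|_1\le\|E_1\|_F\|E_2\|_F\) and reduce to the trace-class case above.

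Finally, we sum over partitions and over the \(m!\) orderings. Their number depends only on \(q\), so the constant \(C_q\) depends only on \(a,b,q\), and the dimension enters only through the prefactor \(\frac1n\). The only delicate step is ensuring that no trace is ever bounded by \(n\|\cdot\|_{\op}\); that is, the designated \(\|\cdot\|_1\) or \(\|\cdot\|_F\) norms must absorb the trace in every Faà di Bruno term. This holds because each \(E_i\) appears linearly, in exactly one factor, of every term.
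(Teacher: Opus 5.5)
Your proposal is correct and follows essentially the same route as the paper's proof. Both use the factorization $\mathsf N=\Psi\circ G$ with Fa\`a di Bruno, exploiting that $\dd^m G=0$ for $m\ge 3$. Both use the resolvent contour formula for $\dd^m\Psi$, in which the prefactor $\frac1n\operatorname{tr}$ is absorbed by one Schatten-$1$ factor or by two Frobenius factors. Both make the same case split: the two Frobenius directions either land in different arguments of $\Psi$, or they merge through $\dd^2G$ into a single trace-class argument $E_1^\top E_2+E_2^\top E_1$. The only cosmetic difference is that you apply Schatten--H\"older directly, whereas the paper first rotates the trace cyclically into the form $\operatorname{tr}(AH_1BH_2)$ and then uses $|\operatorname{tr}(AB)|\le\|A\|_F\|B\|_F$.
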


\begin{proof}
We use the factorization \eqref{eq:normalized-schatten-factorization}.
For \(M\) whose spectrum lies in \([a^2,b^2]\), the holomorphic
functional calculus gives \(M^{1/2} = \frac{1}{2\pi i} \int_\Gamma z^{1/2}(zI-M)^{-1}\,\dd z\),
where the contour \(\Gamma\) and its distance from \([a^2,b^2]\) depend
only on \(a,b\).  Hence every fixed-order derivative of \(\Psi\) is a
finite sum of contour integrals of normalized traces of products of
uniformly bounded resolvents and inserted directions.  More precisely,
\[
\dd^m \Psi(M)[H_1, \ldots, H_m]
=\frac{1}{2\pi i}
\int_\Gamma
z^{1/2}
\sum_{\pi\in S_m} \frac{1}{n}\mathrm{tr} \left(
R_z H_{\pi(1)}R_z\cdots H_{\pi(m)}R_z
\right) \dd z.
\]
Recall that the resolvent factors $R_z = (zI-M)^{-1}$ are uniformly bounded in operator norm.

If one inserted direction, say \(H_1\), is controlled in Schatten
\(1\)-norm, then cyclicity of trace and \(|\operatorname{tr}(AB)|\leq \|A\|_1\|B\|_{\op}\)
give \(\left| \dd^m\Psi(M)[H_1,H_2,\ldots,H_m] \right| \leq \frac{K_m}{n}\, \|H_1\|_1\prod_{j=2}^m\|H_j\|_{\op}\).
Similarly, if two inserted directions \(H_1,H_2\) are controlled in
Frobenius norm, then cyclicity of trace allows us to split each cyclic
product at those two factors.  More explicitly, after cyclically rotating
the trace, each summand can be written as \(\operatorname{tr}(A H_1 B H_2)\),
where the factors in \(A\) and \(B\) are resolvents and the remaining
inserted directions.  The resolvents are uniformly bounded in operator
norm, and the remaining inserted directions are measured in operator
norm.  Hence
\[
\|A H_1\|_F
\leq
K_m\|H_1\|_F\prod_{j\in J_1}\|H_j\|_{\op},
\qquad
\|B H_2\|_F
\leq
K_m\|H_2\|_F\prod_{j\in J_2}\|H_j\|_{\op},
\]
where \(J_1\cup J_2=\{3,\ldots,m\}\).  Applying \(|\operatorname{tr}(AB)|\leq \|A\|_F\|B\|_F\)
to the product \((AH_1)(BH_2)\) gives
\[
\left|
\dd^m\Psi(M)[H_1,H_2,H_3,\ldots,H_m]
\right|
\leq
\frac{K_m}{n}\,
\|H_1\|_F\|H_2\|_F
\prod_{j=3}^m\|H_j\|_{\op}.
\]

It remains to pass these estimates from \(\Psi\) to \(\mathsf{N}\) using
the factorization \eqref{eq:normalized-schatten-factorization}.
The derivatives of \(G(Y)=Y^\top Y\) are \(\dd G(Y)[E]=Y^\top E+E^\top Y, \quad \dd^2G(Y)[E,F]=E^\top F+F^\top E\),
and all higher derivatives vanish. Multiplication by matrices with
operator norm bounded by \(b\) preserves the above norm classes in the
following sense: \(\|AEB\|_1\leq \|A\|_{\op}\|E\|_1\|B\|_{\op}, \quad \|AEB\|_F\leq \|A\|_{\op}\|E\|_F\|B\|_{\op}\),
and
\[
\|E^\top F\|_1\leq \|E\|_F\|F\|_F,
\qquad
\|E^\top F\|_1\leq \|E\|_1\|F\|_{\op}, \qquad \|E^\top F\|_F \leq \|E\|_{\op}\|F\|_F.
\]

More precisely, if one direction $E_1$ is measured in Schatten 1-norm and the remaining directions are measured in operator norm, then the direction \(E_1\) enters exactly one argument passed
to a derivative of \(\Psi\) since each input direction belongs to exactly one block in the Faà di
Bruno expansion. Then, every Faà di Bruno term can be bounded in the desired form by noticing
\(\|\dd G(Y)[E_1]\|_1 \leq 2b\|E_1\|_1, \quad \|\dd^2 G(Y)[E_1, E_i]\|_1 \leq 2\|E_i\|_{\op}\|E_1\|_1\).

Similarly, if two directions $E_1, E_2$ are measured in Frobenius norm, and the remaining directions are measured in operator norm, every Faà di Bruno term can be bounded in the desired form by noticing
\[
\|\dd G(Y)[E_a]\|_F \leq 2b\|E_a\|_F, \qquad \|\dd^2 G(Y)[E_a, E_i]\|_F \leq 2\|E_i\|_{\op}\|E_a\|_F, \qquad a\in\{1, 2\},
\]
\(\|\dd^2 G(Y)[E_1, E_2]\|_1 \leq 2\|E_1\|_F\|E_2\|_F\).

Thus, in every nonzero Faà di Bruno term for \(\Psi\circ G\), a
Schatten \(1\) controlled input direction produces an argument of \(\Psi\) with
Schatten \(1\)-norm control, and two Frobenius-controlled input directions
either produce two Frobenius-controlled arguments of \(\Psi\) or one
Schatten-\(1\)-controlled argument through \(\dd^2G\).  All other
arguments are controlled in operator norm.  Applying the preceding
bounds for \(\Psi\) to each partition term and summing over the finitely
many partitions proves the claim.
\end{proof}

\begin{lemma}[Diagonal perturbation estimates for the normalized remainder]
\label{lem:diagonal-remainder-derivatives}
Fix constants \(0<c<C<\infty\).  There exist constants
\(\varrho_0=\varrho_0(c,C)>0\) and \(K=K(c,C)<\infty\), independent of
\(n\), such that the following holds.

Let \(\widetilde\Lambda\) be a positive diagonal \(n\times n\) matrix whose
entries lie in \([c,C]\).  Let \(L_t=(I+t\Delta_U)^{1/2}, \quad R_t=(I+t\Delta_V)^{1/2}, \quad 0\leq t\leq1\),
where \(\|\Delta_U\|_{\op}\leq \varrho, \quad \|\Delta_V\|_{\op}\leq \varrho, \quad 0<\varrho\leq\varrho_0\).
For diagonal matrices \(W_1,W_2\), define \(\varphi_{W_1,W_2}(t,s,r) := \mathsf N\bigl(L_t(\widetilde\Lambda+sW_1+rW_2)R_t\bigr)\)
and \(\mathcal R_{W_1,W_2}(s,r) := \frac12 \int_0^1 (1-t)^2 \partial_t^3\varphi_{W_1,W_2}(t,s,r) \,\dd t\).
Then \(|\mathcal R_{0,0}(0,0)|\leq K\varrho^3\).
Moreover, for every diagonal matrix \(W\), \(\left| \partial_s\mathcal R_{W,0}(0,0) \right| \leq K\varrho^3\frac{\|W\|_1}{n}\),
and, for every pair of diagonal matrices \(W_1,W_2\), \(\left| \partial_s\partial_r\mathcal R_{W_1,W_2}(0,0) \right| \leq K\varrho^3\frac{\|W_1\|_F\|W_2\|_F}{n}\).
Finally, the mixed Schatten-\(1\)/operator estimate
\begin{equation}\label{eq:diagonal-remainder-mixed-second-bound}
\left|
\partial_s\partial_r\mathcal R_{W_1,W_2}(0,0)
\right|
\leq
K\varrho^3\frac{\|W_1\|_1\|W_2\|_{\op}}{n}
\end{equation}
also holds.  By symmetry, the same estimate holds with \(W_1\) and \(W_2\)
interchanged.
\end{lemma}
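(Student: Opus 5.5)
The plan is to realize every quantity as a contour integral of normalized traces, isolate all $t$-dependence in the factors $L_t,R_t$, and read off the third power of $\varrho$ from three $t$-derivatives.

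\emph{Uniform nondegeneracy.} First choose $\varrho_0$ so that $L_t$ and $R_t$ stay uniformly well conditioned. If $\varrho_0\le 1/2$, then $I+t\Delta_U$ has spectrum in $[1/2,3/2]$. Consequently $\|L_t\|_{\op}\le \sqrt{3/2}$ and $\sigma_{\min}(L_t)\ge 1/\sqrt2$, and likewise for $R_t$. Hence $Y_{t,s,r}:=L_t(\widetilde\Lambda+sW_1+rW_2)R_t$ at $s=r=0$ satisfies $a\le\sigma_{\min}\le\|Y\|_{\op}\le b$, with $a=c/2$ and $b=3C/2$, say. These bounds are independent of $n$ and $t\in[0,1]$, and by continuity they persist for small $s,r$. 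This lets us invoke Lemma~\ref{lem:dimension-uniform-schatten-derivatives} and Lemma~\ref{lem:rank-sensitive-schatten-derivatives} uniformly along the path.

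\emph{$t$-derivatives of the inner map.} Write $L_t=f(t\Delta_U)$ with $f(x)=(1+x)^{1/2}$. By Lemma~\ref{lem:uniform-square-root-derivatives} on the spectral window $[1/2,3/2]$,
\[
\|\partial_t^m L_t\|_{\op}=\|\dd^m f(t\Delta_U)[\Delta_U,\ldots,\Delta_U]\|_{\op}\le C_m\varrho^m ,
\]
and similarly for $R_t$. Since $Y_t$ is a product of three factors, Leibniz gives
\[
\partial_t^m Y_{t,s,r}=\sum_{i+j=m}\binom{m}{i}\,\partial_t^iL_t\,(\widetilde\Lambda+sW_1+rW_2)\,\partial_t^jR_t ,
\]
so $\|\partial_t^mY\|_{\op}\lesssim\varrho^m$. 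Differentiating in $s$ (respectively $r$) replaces the middle factor by $W_1$ (respectively $W_2$). Hence $\partial_s\partial_t^mY=\sum\binom{m}{i}\partial_t^iL_t\,W_1\,\partial_t^jR_t$. Its Schatten $1$-norm is at most $C\varrho^m\|W_1\|_1$, its Frobenius norm at most $C\varrho^m\|W_1\|_F$, and its operator norm at most $C\varrho^m\|W_1\|_{\op}$. This uses only the ideal inequality $\|AEB\|_p\le\|A\|_{\op}\|E\|_p\|B\|_{\op}$. Note also that $\partial_s^2Y=0$.

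\emph{Faà di Bruno bookkeeping.} The third derivative $\partial_t^3\varphi$ is given by Lemma~\ref{lemm:faadibruno} applied to $\mathsf N$ and $t\mapsto Y_t$. It is a sum over partitions of $\{1,2,3\}$ of terms $\dd^m\mathsf N(Y)[\partial_t^{|B_1|}Y,\ldots]$, and the block sizes add up to $3$. Lemma~\ref{lem:dimension-uniform-schatten-derivatives} then bounds each term by $C\prod_B\varrho^{|B|}=C\varrho^3$. Integrating against $(1-t)^2/2$ gives $|\mathcal R_{0,0}(0,0)|\le K\varrho^3$.

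For the $s$-derivative, interchange $\partial_s$ with the integral; this is legitimate by smoothness. Then apply $\partial_s$ to the Faà di Bruno expansion. Either the $s$-derivative falls on $\dd^m\mathsf N(Y)$, producing $\dd^{m+1}\mathsf N(Y)[\partial_sY,\partial_t^{|B_1|}Y,\ldots]$. Or it falls on some $\partial_t^{|B|}Y$, producing $\partial_s\partial_t^{|B|}Y$. In both cases exactly one argument carries $W$, with Schatten-$1$ norm $\lesssim\varrho^{\cdot}\|W\|_1$, and the $\varrho$-powers still total at least $3$. Indeed, $\partial_sY$ at $t$ has no $\varrho$, but the three $t$-derivatives sit on other factors. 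The first bound of Lemma~\ref{lem:rank-sensitive-schatten-derivatives} then yields $K\varrho^3\|W\|_1/n$.

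The mixed $\partial_s\partial_r$ estimates go the same way. Now two arguments carry $W_1$ and $W_2$; since $\partial_s\partial_rY=0$ they always land in distinct slots. Using Frobenius control on both gives the second bound of Lemma~\ref{lem:rank-sensitive-schatten-derivatives}. Using Schatten-$1$ control on $W_1$ and operator norm on $W_2$ gives \eqref{eq:diagonal-remainder-mixed-second-bound}.

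\emph{Main obstacle.} The delicate point is making sure no term loses a factor of $\varrho$ or of $1/n$. Every term must contain exactly three $t$-derivatives distributed over the $L_t$ and $R_t$ factors, each contributing one $\varrho$. The $W$-carrying arguments must be measured in the rank-sensitive norms, never crudely in operator norm. For this I would organize the expansion by tracking, in each Leibniz--Faà di Bruno term, where the $\Delta$'s and the $W$'s enter. I would check that the constants depend only on $c$, $C$ and the finite combinatorics, and not on $n$.
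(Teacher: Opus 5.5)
Your proposal is correct and follows essentially the same route as the paper: uniform conditioning of $L_t,R_t$, the bound $\|\partial_t^m L_t\|_{\op}\lesssim\varrho^m$ from the contour-integral square-root estimates, and Leibniz for the inserted directions with Schatten-$1$/Frobenius/operator control of the $W$-carrying ones. You then combine Fa\`a di Bruno with the dimension-uniform and rank-sensitive derivative bounds, using $\partial_s\partial_r Y=0$ to force $W_1,W_2$ into distinct slots. The only cosmetic difference is that you differentiate the third-order Fa\`a di Bruno expansion in $s$ (and $r$) by the product rule, whereas the paper writes $\partial_s\partial_t^3\varphi$ directly as a fourth- (resp.\ fifth-) order Fa\`a di Bruno expansion over coordinate directions; the resulting terms are identical.
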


\begin{proof}
Choose \(\varrho_0\in(0,1)\) sufficiently small.  Since the entries of
\(\widetilde\Lambda\) lie in \([c,C]\), the singular values of \(L_t\widetilde\Lambda R_t\)
lie in a fixed interval \([a,b]\subset(0,\infty)\), depending only on
\(c,C\), uniformly over \(0\leq t\leq1\).  Thus the derivative estimates
from Lemmas~\ref{lem:dimension-uniform-schatten-derivatives} and
\ref{lem:rank-sensitive-schatten-derivatives} apply uniformly along this
path.

We first record the size of the matrix directions inserted into the
Fr\'echet derivatives of \(\mathsf N\).  By
Lemma~\ref{lem:uniform-square-root-derivatives}, for every fixed
\(m\geq0\), \(\|\partial_t^mL_t\|_{\op}\leq C_m\varrho^m, \quad \|\partial_t^mR_t\|_{\op}\leq C_m\varrho^m\).
Hence, by the product rule,
\[
\partial_t^m(L_t\widetilde\Lambda R_t)
=
\sum_{a+b=m}{m\choose a}
(\partial_t^aL_t)\widetilde\Lambda(\partial_t^bR_t),
\]
and therefore, for \(m=1,2,3\),
\begin{equation}
\label{eq:pure-t-direction-bound}
\|\partial_t^m(L_t\widetilde\Lambda R_t)\|_{\op}
\leq C_m\varrho^m.
\end{equation}
Similarly, if \(A_W(t,s):=L_t(\widetilde\Lambda+sW)R_t\),
then \(\partial_sA_W(t,s)=L_tWR_t\).
Thus, for every fixed \(a\geq0\), \(\partial_t^a\partial_sA_W(t,0) = \sum_{p=0}^a {a\choose p} (\partial_t^pL_t)W(\partial_t^{a-p}R_t)\).
Consequently,
\begin{equation}\label{eq:one-diagonal-direction-nuclear-bound}
\|\partial_t^a\partial_sA_W(t,0)\|_1
\leq C_a\varrho^a\|W\|_1,
\end{equation}
\begin{equation}\label{eq:one-diagonal-direction-op-bound}
\|\partial_t^a\partial_sA_W(t,0)\|_{\op}
\leq C_a\varrho^a\|W\|_{\op},
\end{equation}
and
\begin{equation}\label{eq:one-diagonal-direction-frobenius-bound}
\|\partial_t^a\partial_sA_W(t,0)\|_F
\leq C_a\varrho^a\|W\|_F.
\end{equation}

We now estimate the remainder itself.  Set \(A_0(t):=L_t\widetilde\Lambda R_t\).
Then \(\varphi_{0,0}(t,0,0)=\mathsf N(A_0(t))\).
The one-variable chain rule gives
\[
\begin{aligned}
\partial_t^3\varphi_{0,0}(t,0,0)
&=
\dd\mathsf N(A_0(t))[A_0^{(3)}(t)] \\
&\quad
+3\dd^2\mathsf N(A_0(t))[A_0''(t),A_0'(t)] \\
&\quad
+\dd^3\mathsf N(A_0(t))[A_0'(t),A_0'(t),A_0'(t)].
\end{aligned}
\]
By Lemma~\ref{lem:dimension-uniform-schatten-derivatives} and
\eqref{eq:pure-t-direction-bound}, \(|\partial_t^3\varphi_{0,0}(t,0,0)| \leq C\varrho^3\).
Therefore \(|\mathcal R_{0,0}(0,0)| \leq \frac12\int_0^1(1-t)^2C\varrho^3\,\dd t \leq K\varrho^3\).

Next we estimate the first derivative of the remainder in a diagonal
matrix direction \(W\).  Let \(e_t,e_s\) denote the coordinate directions
corresponding to \(t\) and \(s\).  By the coordinate-direction
interpretation of scalar derivatives, \(\partial_s\partial_t^3\varphi_{W,0}(t,0,0) = \dd^4(\mathsf N\circ A_W)(t,0)[e_t,e_t,e_t,e_s]\).
We apply the Fa\`a di Bruno formula from Lemma~\ref{lemm:faadibruno} to
\(\mathsf N\circ A_W\).  Each term is indexed by a partition
\(\pi\in\mathcal P_4\).  Since exactly one inserted coordinate direction
is \(e_s\), exactly one block of \(\pi\) contains \(e_s\).  Suppose this
block contains \(a\) copies of \(e_t\), where \(0\leq a\leq3\).  The
corresponding inserted direction is \(\partial_t^a\partial_sA_W(t,0)\),
and by \eqref{eq:one-diagonal-direction-nuclear-bound} it is controlled
in Schatten \(1\)-norm by \(C\varrho^a\|W\|_1\).
All other blocks contain only copies of \(e_t\).  If their sizes are
\(b_1,\ldots,b_q\), then their inserted directions are pure
\(t\)-derivatives and are controlled in operator norm, by
\eqref{eq:pure-t-direction-bound}, as \(C\varrho^{b_1},\ldots,C\varrho^{b_q}\).
Since there are three copies of \(e_t\) in total, \(a+b_1+\cdots+b_q=3\).
Applying the one-Schatten-
\(1\)-input estimate from
Lemma~\ref{lem:rank-sensitive-schatten-derivatives}, this partition term
is bounded by \(\frac{C}{n} \varrho^a\|W\|_1 \prod_{j=1}^q\varrho^{b_j} = \frac{C}{n}\varrho^3\|W\|_1\).
There are only finitely many partitions in \(\mathcal P_4\), so
\begin{equation}
\label{eq:first-diagonal-remainder-integrand-bound}
|\partial_s\partial_t^3\varphi_{W,0}(t,0,0)|
\leq
K\varrho^3\frac{\|W\|_1}{n}.
\end{equation}
Integrating in \(t\) gives \(\left| \partial_s\mathcal R_{W,0}(0,0) \right| \leq K\varrho^3\frac{\|W\|_1}{n}\).

Finally, we estimate the second derivative of the remainder in two
diagonal matrix directions.  Put \(A_{W_1,W_2}(t,s,r) := L_t(\widetilde\Lambda+sW_1+rW_2)R_t\).
Let \(e_t,e_s,e_r\) denote the coordinate directions corresponding to
\(t,s,r\).  Then
\[
\partial_s\partial_r\partial_t^3\varphi_{W_1,W_2}(t,0,0)
=
\dd^5(\mathsf N\circ A_{W_1,W_2})(t,0,0)
[e_t,e_t,e_t,e_s,e_r].
\]
Again we use Lemma~\ref{lemm:faadibruno}.  Since the dependence of
\(A_{W_1,W_2}\) on \(s\) and \(r\) is affine and separate,
\(\partial_t^a\partial_s\partial_rA_{W_1,W_2}(t,s,r)=0\) for every
\(a\geq0\).
Therefore every Fa\`a di Bruno term whose partition has a block
containing both \(e_s\) and \(e_r\) vanishes.

In every nonzero term, the block containing \(e_s\) and the block
containing \(e_r\) are distinct.  Suppose these two blocks contain
\(a\) and \(b\) copies of \(e_t\), respectively.  Their inserted
directions are \(\partial_t^a\partial_sA_{W_1,W_2}(t,0,0)\) and
\(\partial_t^b\partial_rA_{W_1,W_2}(t,0,0)\).  By
\eqref{eq:one-diagonal-direction-frobenius-bound}, these are controlled in
Frobenius norm by \(C\varrho^a\|W_1\|_F\) and \(C\varrho^b\|W_2\|_F\).
The remaining blocks contain only copies of \(e_t\).  If their sizes are
\(c_1,\ldots,c_q\), then the corresponding inserted directions are pure
\(t\)-derivatives and are controlled in operator norm by \(C\varrho^{c_1},\ldots,C\varrho^{c_q}\).
Since \(a+b+c_1+\cdots+c_q=3\),
the two-Frobenius-input estimate from
Lemma~\ref{lem:rank-sensitive-schatten-derivatives} gives, for this
partition term,
\[
\frac{C}{n}
\varrho^a\|W_1\|_F
\varrho^b\|W_2\|_F
\prod_{j=1}^q\varrho^{c_j}
=
\frac{C}{n}\varrho^3\|W_1\|_F\|W_2\|_F.
\]
Summing over the finitely many partitions in \(\mathcal P_5\), we obtain
\begin{equation}
\label{eq:second-diagonal-remainder-integrand-bound}
\left|
\partial_s\partial_r\partial_t^3
\varphi_{W_1,W_2}(t,0,0)
\right|
\leq
K\varrho^3\frac{\|W_1\|_F\|W_2\|_F}{n}.
\end{equation}
Integrating in \(t\) gives \(\left| \partial_s\partial_r\mathcal R_{W_1,W_2}(0,0) \right| \leq K\varrho^3\frac{\|W_1\|_F\|W_2\|_F}{n}\).

We also prove the mixed Schatten-\(1\)/operator estimate.  In every nonzero
Fa\`a di Bruno term, the block containing \(e_s\) and the block containing
\(e_r\) are distinct.  Suppose these two blocks contain \(a\) and \(b\)
copies of \(e_t\), respectively.  The inserted direction corresponding to
the \(e_s\)-block is \(\partial_t^a\partial_sA_{W_1,W_2}(t,0,0)\),
and by \eqref{eq:one-diagonal-direction-nuclear-bound} it is bounded in
Schatten \(1\)-norm by \(C\varrho^a\|W_1\|_1\).
The inserted direction corresponding to the \(e_r\)-block is \(\partial_t^b\partial_rA_{W_1,W_2}(t,0,0)\),
and by \eqref{eq:one-diagonal-direction-op-bound} it is bounded in operator
norm by \(C\varrho^b\|W_2\|_{\op}\).
All remaining blocks contain only copies of \(e_t\), and their inserted
directions are controlled in operator norm by \(C\varrho^{c_1},\ldots,C\varrho^{c_q}\).
Since \(a+b+c_1+\cdots+c_q=3\),
the one-Schatten-\(1\)-input estimate from
Lemma~\ref{lem:rank-sensitive-schatten-derivatives} gives, for this
partition term,
\[
\frac{C}{n}
\varrho^a\|W_1\|_1
\varrho^b\|W_2\|_{\op}
\prod_{\ell=1}^q\varrho^{c_\ell}
=
\frac{C}{n}\varrho^3\|W_1\|_1\|W_2\|_{\op}.
\]
There are only finitely many partitions in \(\mathcal P_5\), so
\[
\left|
\partial_s\partial_r\partial_t^3
\varphi_{W_1,W_2}(t,0,0)
\right|
\leq
K\varrho^3\frac{\|W_1\|_1\|W_2\|_{\op}}{n}.
\]
Integrating in \(t\) gives
\eqref{eq:diagonal-remainder-mixed-second-bound}.  Interchanging \(W_1\)
and \(W_2\) gives the symmetric version.
\end{proof}

Throughout the sequel, we use the pairwise weight
\begin{equation}\label{eq:h-definition}
h(x,y):=\frac{xy}{x+y},
\qquad x,y\geq 0,\ x+y>0,
\end{equation}
with the convention \(h(0,0)=0\).

\begin{lemma}[Dimension-uniform analytic remainder]
\label{lem:analytic-remainder}
Fix \(0<c<C<\infty\).  There exist constants
\(\varrho_0=\varrho_0(c,C)>0\) and \(K=K(c,C)<\infty\), independent of \(n\), such
that the following holds.  Suppose that \(\boldsymbol{\alpha}\in\mathbb R^n\) satisfies
\(\sum_i\alpha_i=1, \quad \frac cn\leq \alpha_i\leq \frac Cn \quad\text{for all }i\),
and that \(\Delta_U,\Delta_V\) are symmetric, have zero diagonal, and satisfy
\(\|\Delta_U\|_{\op}\leq \varrho, \quad \|\Delta_V\|_{\op}\leq \varrho, \quad 0<\varrho\leq \varrho_0\).
Then, with \(h\) as in \eqref{eq:h-definition},
\begin{equation}\label{eq:analytic-remainder-expansion}
\|Z(\boldsymbol{\alpha})\|_1
=
1
-
\frac12
\sum_{i<j}
h(\alpha_i,\alpha_j)
\bigl((\Delta_U)_{ij}-(\Delta_V)_{ij}\bigr)^2
+
R(\boldsymbol{\alpha}).
\end{equation}
The remainder satisfies
\begin{equation}\label{eq:analytic-remainder-bounds}
|R(\boldsymbol{\alpha})|\leq K\varrho^3,
\qquad
\|\nabla R(\boldsymbol{\alpha})\|_\infty\leq K\varrho^3,
\qquad
\|\nabla^2R(\boldsymbol{\alpha})\|_{\op}\leq Kn\varrho^3,
\qquad
\|\nabla^2R(\boldsymbol{\alpha})\|_{\infty\to\infty}\leq Kn\varrho^3 .
\end{equation}
Moreover, if \(q\mapsto\boldsymbol{\alpha}(q)\) is affine, \(\xi:=\boldsymbol{\alpha}'(q)\), and
\(W_\xi:=n\operatorname{diag}(\xi)\), then
\begin{equation}\label{eq:analytic-remainder-path-bound}
\left|
\frac{d}{dq}R(\boldsymbol{\alpha}(q))
\right|
\leq
K\|W_\xi\|_{\op}\varrho^3 .
\end{equation}
\end{lemma}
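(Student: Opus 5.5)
The idea is to write $\|Z(\boldsymbol\alpha)\|_1=\|Y(\boldsymbol\alpha)\|_1=n\,\mathsf N(L_1\Lambda_{\boldsymbol\alpha}R_1)$. This holds because $Z=U\Lambda_{\boldsymbol\alpha}\widehat V^\top$, and $U=\mathcal O_U G_U^{1/2}$, $\widehat V=\mathcal O_V G_V^{1/2}$ with partial isometries $\mathcal O_U,\mathcal O_V$ (polar decompositions). Hence $\|Z\|_1=\|G_U^{1/2}\Lambda_{\boldsymbol\alpha}G_V^{1/2}\|_1$. We then Taylor expand $t\mapsto\varphi(t):=\mathsf N(L_t\widetilde\Lambda R_t)$ at $t=0$ to second order, with $\widetilde\Lambda=n\Lambda_{\boldsymbol\alpha}$ having entries in $[c,C]$ by the hypothesis $c/n\le\alpha_i\le C/n$. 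The integral remainder is exactly $\mathcal R_{0,0}(0,0)$ of Lemma~\ref{lem:diagonal-remainder-derivatives}, and we set $R(\boldsymbol\alpha):=\mathcal R_{0,0}(0,0)$ (computed at $\widetilde\Lambda=n\Lambda_{\boldsymbol\alpha}$). Note $\mathsf N(\widetilde\Lambda)=\frac1n\operatorname{tr}\widetilde\Lambda=1$, which gives the leading term.

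\textbf{Step 1: first and second derivatives at $t=0$.} We have $L_t=I+\frac t2\Delta_U-\frac{t^2}{8}\Delta_U^2+O(t^3)$, and similarly for $R_t$. For a positive diagonal $\widetilde\Lambda$, the derivative of $\|\cdot\|_1$ at $\widetilde\Lambda$ is $\operatorname{tr}(E)$, which gives the first-order term $\frac1{2}\operatorname{tr}(\Delta_U\widetilde\Lambda+\widetilde\Lambda\Delta_V)=0$ since the diagonals vanish. For the second order we use the classical formula for the Hessian of the nuclear norm at a positive diagonal matrix $D=\mathrm{diag}(d_i)$:
\[
\dd^2\|\cdot\|_1(D)[E,E]=\sum_{i<j}\frac{(E_{ij}-E_{ji})^2}{d_i+d_j}.
\]
This can be derived from $\operatorname{tr}((D+E)^\top(D+E))^{1/2}$ via the Daleckii--Krein divided differences of $x\mapsto\sqrt x$. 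Here $E=\frac12(\Delta_U\widetilde\Lambda+\widetilde\Lambda\Delta_V)$, so $E_{ij}-E_{ji}=\frac12(d_j-d_i)((\Delta_U)_{ij}-(\Delta_V)_{ij})$. There is also the contribution of the second-order parts of $L_t,R_t$ and of the cross term $\frac14\Delta_U\widetilde\Lambda\Delta_V$ via the first derivative $\operatorname{tr}(\cdot)$: $\operatorname{tr}(-\frac18\Delta_U^2\widetilde\Lambda-\frac18\widetilde\Lambda\Delta_V^2+\frac14\Delta_U\widetilde\Lambda\Delta_V)=-\frac18\sum_{i\ne j}d_j\bigl((\Delta_U)_{ij}-(\Delta_V)_{ij}\bigr)^2$ after symmetrizing. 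Combining the two contributions pairwise, the algebra $\frac{(d_i-d_j)^2}{4(d_i+d_j)}-\frac{d_i+d_j}{4}=-\frac{d_id_j}{d_i+d_j}$ yields $-\frac12\sum_{i<j}h(d_i,d_j)(\Delta_U-\Delta_V)_{ij}^2$. Rescaling by $1/n$ (homogeneity of $h$) gives exactly the quadratic term in \eqref{eq:analytic-remainder-expansion}. This bookkeeping, including the factor $\tfrac12$ from Taylor's formula, is the step I expect to need the most care, though it is elementary.

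\textbf{Step 2: remainder bounds.} $|R|\le K\varrho^3$ is immediate from Lemma~\ref{lem:diagonal-remainder-derivatives}. For the gradient, $\partial_{\alpha_i}R$ corresponds to $s$-direction $W=n e_ie_i^\top$ with $\|W\|_1/n=1$, giving $\|\nabla R\|_\infty\le K\varrho^3$. For the Hessian, bilinear forms $\xi^\top\nabla^2R\,\zeta$ correspond to $W_1=n\,\mathrm{diag}(\xi)$, $W_2=n\,\mathrm{diag}(\zeta)$. The Frobenius estimate gives $|\xi^\top\nabla^2R\,\zeta|\le K\varrho^3 n\|\xi\|_2\|\zeta\|_2$, i.e. the operator bound. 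The mixed estimate \eqref{eq:diagonal-remainder-mixed-second-bound} gives $\le K\varrho^3 n\|\xi\|_1\|\zeta\|_\infty$, whose dual reading is the $\infty\to\infty$ bound. One must check that the lemma's hypotheses hold uniformly along the segment; this holds since the bounds are at the base point $\widetilde\Lambda$, which ranges over $[c,C]$.

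\textbf{Step 3: path bound.} For affine $\boldsymbol\alpha(q)$, $\frac{d}{dq}R=\partial_s\mathcal R_{W_\xi,0}$ with $W_\xi=n\,\mathrm{diag}(\xi)$. Instead of the Schatten-$1$ estimate, we use the operator-norm version of the same Faà di Bruno argument (bound \eqref{eq:one-diagonal-direction-op-bound} with Lemma~\ref{lem:dimension-uniform-schatten-derivatives}), giving $K\|W_\xi\|_{\op}\varrho^3$. This bound is useful because along the path $\sum_i\xi_i=0$ need not be small in $\ell^1/n$ terms.
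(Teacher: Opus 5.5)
Your proposal is correct and follows essentially the same route as the paper. You reduce to $\|G_U^{1/2}\Lambda_{\boldsymbol\alpha}G_V^{1/2}\|_1$, expand $t\mapsto\|L_t\Lambda_{\boldsymbol\alpha}R_t\|_1$ to second order with remainder $\mathcal R_{0,0}(0,0)$, combine the trace and Hessian terms pairwise into $-\tfrac12 h(\alpha_i,\alpha_j)$, and read off the derivative bounds from Lemma~\ref{lem:diagonal-remainder-derivatives}. The only deviation is the path bound: you rerun Fa\`a di Bruno with operator-norm inputs, while the paper uses the Schatten-$1$ estimate with $\|W_\xi\|_1\le n\|W_\xi\|_{\op}$; both work, and your stated reason for preferring the operator-norm version is backwards, since the Schatten-$1$ bound is the stronger one.
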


\begin{proof}
By Lemma~\ref{lem:polar-orthonormalization}, \(\|Z(\boldsymbol{\alpha})\|_1 = \|(I+\Delta_U)^{1/2}\Lambda_{\boldsymbol{\alpha}}(I+\Delta_V)^{1/2}\|_1\).
Set \(\widetilde{\Lambda}_{\boldsymbol{\alpha}}:=n\Lambda_{\boldsymbol{\alpha}}\).  Then
\(\Lambda_{\boldsymbol{\alpha}}=n^{-1}\widetilde{\Lambda}_{\boldsymbol{\alpha}}\), and the entries of
\(\widetilde{\Lambda}_{\boldsymbol{\alpha}}\) lie in \([c,C]\).  We use the normalized Schatten
functional \(\mathsf N\) from \eqref{eq:normalized-schatten-functional}.

For \(0\leq t\leq1\), write
\[
L_t:=(I+t\Delta_U)^{1/2},
\qquad
R_t:=(I+t\Delta_V)^{1/2},
\qquad
Y_{\boldsymbol{\alpha}}(t):=L_t\Lambda_{\boldsymbol{\alpha}} R_t,
\qquad
g_{\boldsymbol{\alpha}}(t):=\|Y_{\boldsymbol{\alpha}}(t)\|_1 .
\]
Then \(g_{\boldsymbol{\alpha}}(1)=\|Z(\boldsymbol{\alpha})\|_1\) and
\(g_{\boldsymbol{\alpha}}(0)=\|\Lambda_{\boldsymbol{\alpha}}\|_1\).  Taylor expansion of the square-root map gives
\[
L_t=I+\frac t2\Delta_U-\frac{t^2}{8}\Delta_U^2+S_U(t),
\qquad
R_t=I+\frac t2\Delta_V-\frac{t^2}{8}\Delta_V^2+S_V(t),
\]
with \(\|S_U(t)\|_{\op},\|S_V(t)\|_{\op}\leq Kt^3\varrho^3\).  Since
\(\|\Lambda_{\boldsymbol{\alpha}}\|_{\op}\leq C/n\), multiplication gives
\[
Y_{\boldsymbol{\alpha}}(t)=\Lambda_{\boldsymbol{\alpha}}+tE_1+t^2E_2+\mathcal R_Y(t),
\qquad
\|\mathcal R_Y(t)\|_{\op}\leq K\frac{t^3\varrho^3}{n},
\]
where
\[
E_1=\frac12(\Delta_U\Lambda_{\boldsymbol{\alpha}}+\Lambda_{\boldsymbol{\alpha}}\Delta_V),
\qquad
E_2=
-\frac18\Delta_U^2\Lambda_{\boldsymbol{\alpha}}
-\frac18\Lambda_{\boldsymbol{\alpha}}\Delta_V^2
+\frac14\Delta_U\Lambda_{\boldsymbol{\alpha}}\Delta_V .
\]
Thus \(Y_{\boldsymbol{\alpha}}'(0)=E_1\) and \(Y_{\boldsymbol{\alpha}}''(0)=2E_2\).

Since \(\Lambda_{\boldsymbol{\alpha}}\) is positive diagonal, Lemma~\ref{lem:gradient-schatten-one}
gives \(\dd\|\cdot\|_1(\Lambda_{\boldsymbol{\alpha}})[H]=\langle I,H\rangle_F\).  Hence
\(g_{\boldsymbol{\alpha}}'(0)=\langle I,E_1\rangle_F\), and the second-order chain rule gives
\(g_{\boldsymbol{\alpha}}''(0) = \dd^2\|\cdot\|_1(\Lambda_{\boldsymbol{\alpha}})[E_1,E_1] + \dd\|\cdot\|_1(\Lambda_{\boldsymbol{\alpha}})[2E_2]\).
Therefore
\begin{equation}\label{eq:analytic-remainder-taylor-main}
\|Z(\boldsymbol{\alpha})\|_1
=
\|\Lambda_{\boldsymbol{\alpha}}\|_1
+
\langle I,E_1+E_2\rangle_F
+
\frac12
\dd^2\|\cdot\|_1(\Lambda_{\boldsymbol{\alpha}})[E_1,E_1]
+
R(\boldsymbol{\alpha}),
\end{equation}
where \(R(\boldsymbol{\alpha})\) is the third-order Taylor remainder.  Since
\(\|\Lambda_{\boldsymbol{\alpha}}\|_1=\sum_i\alpha_i=1\), it remains to compute the two
second-order contributions in \eqref{eq:analytic-remainder-taylor-main}.

First,
\[
\langle I,E_1\rangle_F
=
\frac12\operatorname{tr}(\Delta_U\Lambda_{\boldsymbol{\alpha}})
+
\frac12\operatorname{tr}(\Lambda_{\boldsymbol{\alpha}}\Delta_V)
=
0
\]
because \(\Delta_U\) and \(\Delta_V\) have zero diagonal.  Also,
\[
\operatorname{tr}(E_2)
=
-\frac18\operatorname{tr}(\Delta_U^2\Lambda_{\boldsymbol{\alpha}})
-\frac18\operatorname{tr}(\Lambda_{\boldsymbol{\alpha}}\Delta_V^2)
+\frac14\operatorname{tr}(\Delta_U\Lambda_{\boldsymbol{\alpha}}\Delta_V),
\]
so the contribution of each pair \(i<j\) is
\[
-\frac{\alpha_i+\alpha_j}{8}(\Delta_U)_{ij}^2
-\frac{\alpha_i+\alpha_j}{8}(\Delta_V)_{ij}^2
+
\frac{\alpha_i+\alpha_j}{4}(\Delta_U)_{ij}(\Delta_V)_{ij}.
\]
Thus
\begin{equation}\label{eq:analytic-remainder-trace-term}
\langle I,E_2\rangle_F
=
-\sum_{i<j}
\frac{\alpha_i+\alpha_j}{8}
\bigl((\Delta_U)_{ij}-(\Delta_V)_{ij}\bigr)^2 .
\end{equation}

Next, for \(i\neq j\), \((E_1)_{ij}-(E_1)_{ji} = \frac12(\alpha_j-\alpha_i) \bigl((\Delta_U)_{ij}-(\Delta_V)_{ij}\bigr)\).
Therefore Lemma~\ref{lem:schatten-one-hessian-diagonal} gives
\begin{equation}\label{eq:analytic-remainder-hessian-term}
\frac12\dd^2\|\cdot\|_1(\Lambda_{\boldsymbol{\alpha}})[E_1,E_1]
=
\sum_{i<j}
\frac{(\alpha_i-\alpha_j)^2}{8(\alpha_i+\alpha_j)}
\bigl((\Delta_U)_{ij}-(\Delta_V)_{ij}\bigr)^2 .
\end{equation}
Combining \eqref{eq:analytic-remainder-trace-term} and
\eqref{eq:analytic-remainder-hessian-term}, the coefficient of
\(\bigl((\Delta_U)_{ij}-(\Delta_V)_{ij}\bigr)^2\) is
\[
-\frac{\alpha_i+\alpha_j}{8}
+
\frac{(\alpha_i-\alpha_j)^2}{8(\alpha_i+\alpha_j)}
=
-\frac12\frac{\alpha_i\alpha_j}{\alpha_i+\alpha_j}.
\]
Using \eqref{eq:h-definition}, this proves
\eqref{eq:analytic-remainder-expansion}.

It remains to prove \eqref{eq:analytic-remainder-bounds} and
\eqref{eq:analytic-remainder-path-bound}.  For diagonal \(W_1,W_2\), define
\(\varphi_{W_1,W_2}(t,s,r) := \mathsf N\bigl(L_t(\widetilde{\Lambda}_{\boldsymbol{\alpha}}+sW_1+rW_2)R_t\bigr)\),
and \(\mathcal R_{W_1,W_2}(s,r) := \frac12 \int_0^1(1-t)^2 \partial_t^3\varphi_{W_1,W_2}(t,s,r)\,\dd t\).
Then \(R(\boldsymbol{\alpha})=\mathcal R_{0,0}(0,0)\), so
Lemma~\ref{lem:diagonal-remainder-derivatives} gives \(|R(\boldsymbol{\alpha})|\leq K\varrho^3\).

We now translate the diagonal-matrix derivative bounds to derivatives in
\(\boldsymbol{\alpha}\).  A perturbation of \(\boldsymbol{\alpha}\) in direction \(\boldsymbol{v}\) corresponds to
\(W_{\boldsymbol{v}}:=n\operatorname{diag}(\boldsymbol{v})\), since \(\widetilde{\Lambda}_{\boldsymbol{\alpha}+s\boldsymbol{v}} = \widetilde{\Lambda}_{\boldsymbol{\alpha}}+sW_{\boldsymbol{v}}\).
Taking \(\boldsymbol{v}=\boldsymbol{e}_i\), \(W_i:=n\boldsymbol{e}_i\boldsymbol{e}_i^\top\), we have \(\partial_{\alpha_i}R(\boldsymbol{\alpha}) = \left.\partial_s\mathcal R_{W_i,0}(s,0)\right|_{s=0}\).
Since \(\|W_i\|_1=n\), Lemma~\ref{lem:diagonal-remainder-derivatives} gives
\(|\partial_{\alpha_i}R(\boldsymbol{\alpha})|\leq K\varrho^3\).  Taking the maximum over \(i\)
proves \(\|\nabla R(\boldsymbol{\alpha})\|_\infty\leq K\varrho^3\).

For the operator-norm Hessian bound, let \(\boldsymbol v,\boldsymbol w\in\mathbb R^n\) and set
\(W_{\boldsymbol{v}}:=n\operatorname{diag}(\boldsymbol{v})\), \(W_{\boldsymbol{w}}:=n\operatorname{diag}(\boldsymbol w)\).  Since
\(R(\boldsymbol \alpha+s \boldsymbol v+r \boldsymbol w)=\mathcal R_{W_{\boldsymbol{v}},W_{\boldsymbol{w}}}(s,r)\), Lemma~\ref{lem:diagonal-remainder-derivatives} gives
\(|\nabla^2R(\boldsymbol{\alpha})[\boldsymbol v,\boldsymbol w]| \leq K\varrho^3\frac{\|W_{\boldsymbol{v}}\|_F\|W_{\boldsymbol{w}}\|_F}{n} = Kn\varrho^3\|\boldsymbol v\|_2\|\boldsymbol w\|_2\).
Taking the supremum over unit \(\boldsymbol v,\boldsymbol w\) proves \(\|\nabla^2R(\boldsymbol{\alpha})\|_{\op}\leq Kn\varrho^3\).

For the row-sum bound, write \(H_R(\boldsymbol{\alpha}):=\nabla^2R(\boldsymbol{\alpha})\).  For each
\(i\), \(\sum_{j=1}^n |(H_R(\boldsymbol{\alpha}))_{ij}| = \sup_{\|\boldsymbol w\|_\infty\leq1}|\boldsymbol{e}_i^\top H_R(\boldsymbol{\alpha})\boldsymbol w|\).
Fix \(\|\boldsymbol w\|_\infty\leq1\).  With \(W_i:=n\boldsymbol{e}_i\boldsymbol{e}_i^\top\) and
\(W_{\boldsymbol{w}}:=n\operatorname{diag}(\boldsymbol w)\), we have \(\|W_i\|_1=n\) and
\(\|W_{\boldsymbol{w}}\|_{\op}\leq n\).  Therefore, by
\eqref{eq:diagonal-remainder-mixed-second-bound},
\[
|\boldsymbol{e}_i^\top H_R(\boldsymbol{\alpha})\boldsymbol vw|
=
|\nabla^2R(\boldsymbol{\alpha})[\boldsymbol{e}_i,\boldsymbol w]|
\leq
K\varrho^3\frac{\|W_i\|_1\|W_{\boldsymbol{w}}\|_{\op}}{n}
\leq
Kn\varrho^3.
\]
Taking the supremum over \(w\) and then \(i\) proves the
\(\infty\to\infty\) bound in \eqref{eq:analytic-remainder-bounds}.

Finally, let \(q\mapsto\boldsymbol{\alpha}(q)\) be affine, set \(\boldsymbol{\beta}:=\boldsymbol{\alpha}(q)\),
\(\boldsymbol \xi:=\boldsymbol \alpha'(q)\), and \(W_\xi:=n\operatorname{diag}(\boldsymbol{\xi})\).  Since
\(R(\boldsymbol \beta+s \boldsymbol\xi)=\mathcal R_{W_{\boldsymbol\xi},0}(s,0)\), \(\frac{d}{dq}(R\circ \boldsymbol \alpha)(q) = \left.\partial_s\mathcal R_{W_{\boldsymbol\xi},0}(s,0)\right|_{s=0}\).
Applying Lemma~\ref{lem:diagonal-remainder-derivatives} gives
\[
\left|
\frac{d}{dq}(R\circ\boldsymbol \alpha)(q)
\right|
\leq
K\varrho^3\frac{\|W_{\boldsymbol\xi}\|_1}{n}
\leq
K\|W_{\boldsymbol\xi}\|_{\op}\varrho^3,
\]
which proves \eqref{eq:analytic-remainder-path-bound}.
\end{proof}

\subsection{Deterministic averaged geometry}\label{app:deterministic_geometry}

In this section we isolate the deterministic geometry of the averaged
second-order term. 

For \(q\in[0,1]\), define the fixed group-mass affine set
\[
\mathcal A_q
:=
\left\{
\boldsymbol\alpha\in\R_+^n:
\sum_{i\in I_+}\alpha_i=q,\ 
\sum_{i\in I_-}\alpha_i=1-q
\right\},
\]
and the groupwise-uniform curve
\[
\overline{\boldsymbol\alpha}_i(q)
=
\begin{cases}
q/n_+,&i\in I_+,\\[1mm]
(1-q)/n_-,&i\in I_-.
\end{cases}
\]

With \(h\) as in \eqref{eq:h-definition}, define
\begin{equation}\label{eq:H-definition}
\cH(\boldsymbol \alpha):=\sum_{i<j}h(\alpha_i,\alpha_j),
\qquad
\cH_{n_+,n_-}(q):=\cH(\overline{\boldsymbol \alpha}(q)).
\end{equation}
The main point of this section is not merely concavity of
\(\cH\), but the quantitative fact that, near this groupwise-uniform curve,
\(\cH\) has curvature of order \(n^2\) in all within-group zero-sum directions.
This curvature will later provide the deterministic concavity controlling
within-group fluctuations.

The first lemma records the exact value of \(\cH\) along the
groupwise-uniform curve and its derivatives with respect to \(q\).

\begin{lemma}[Exact groupwise-uniform functional]
\label{lem:H-formula}
For \(q\in[0,1]\),
\begin{equation}\label{eq:H-formula}
\cH_{n_+,n_-}(q)
=
\frac{n_+-1}{4}q
+
\frac{n_--1}{4}(1-q)
+
\frac{n_+n_- q(1-q)}
{n_+(1-q)+n_- q}.
\end{equation}
Moreover,
\begin{equation}\label{eq:theta-definition}
\theta_{n_+,n_-}
:=
\cH_{n_+,n_-}'\left(\frac12\right)
=
\frac{
(n_+ - n_-)(n_+^2+6n_+n_-+n_-^2)
}{
4(n_+ + n_-)^2
}.
\end{equation}
Moreover,
\begin{equation}\label{eq:Hsecond}
\cH_{n_+,n_-}''(q)
=
-\frac{2n_+^2n_-^2}
{(n_+(1-q)+n_- q)^3}<0.
\end{equation}
\end{lemma}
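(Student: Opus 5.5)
The plan is a direct computation along the groupwise-uniform curve. I will split the pairs $i<j$ into three classes, evaluate $h$ on each, and then differentiate the resulting closed form in $q$. Throughout I write $L(q):=n_+(1-q)+n_-q$ and $d:=n_--n_+$, so that $L(q)=n_++dq$.

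\textbf{Step 1: evaluate $\cH_{n_+,n_-}(q)$.}
\begin{itemize}
\item \emph{Pairs inside $I_+$.} Here $h(q/n_+,q/n_+)=q/(2n_+)$. There are $n_+(n_+-1)/2$ such pairs, contributing $(n_+-1)q/4$.
\item \emph{Pairs inside $I_-$.} The same computation gives $(n_--1)(1-q)/4$.
\item \emph{Cross pairs.} Put $a=q/n_+$ and $b=(1-q)/n_-$. Then
\[
h(a,b)=\frac{q(1-q)/(n_+n_-)}{L(q)/(n_+n_-)}=\frac{q(1-q)}{L(q)}.
\]
Multiplying by the $n_+n_-$ cross pairs gives the last term of \eqref{eq:H-formula}.
\end{itemize}
The endpoints $q\in\{0,1\}$ use the convention $h(0,0)=0$ and $h(x,0)=0$, and the formula remains valid there.

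\textbf{Step 2: second derivative.} The first two terms of \eqref{eq:H-formula} are affine in $q$, so only $g(q):=q(1-q)/L(q)$ matters.
\begin{itemize}
\item \emph{Case $d\neq0$.} Divide the quadratic numerator by the linear $L$ to get $g(q)=(\text{affine in }q)+c/L(q)$. The constant $c$ is the numerator evaluated at the root $q_0=-n_+/d$ of $L$:
\[
c=q_0(1-q_0)=-\frac{n_+n_-}{d^2}.
\]
Since $(1/L)''=2d^2/L^3$, this gives $g''=-2n_+n_-/L^3$.
\item \emph{Case $d=0$.} Then $L\equiv n$ is constant, $g''=-2/n$, and the same formula holds.
\end{itemize}
Multiplying by $n_+n_-$ yields \eqref{eq:Hsecond}. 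Strict negativity follows because $L(q)>0$ on $[0,1]$, as it is a convex combination of $n_+,n_->0$.

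\textbf{Step 3: first derivative at $1/2$.} By the quotient rule,
\[
g'(q)=\frac{(1-2q)L(q)-q(1-q)d}{L(q)^2}.
\]
At $q=1/2$ we have $L(1/2)=(n_++n_-)/2$, so $g'(1/2)=-d/(n_++n_-)^2$. The affine terms contribute $(n_+-n_-)/4$, hence
\[
\theta_{n_+,n_-}=(n_+-n_-)\,\frac{(n_++n_-)^2+4n_+n_-}{4(n_++n_-)^2}.
\]
Expanding $(n_++n_-)^2+4n_+n_-=n_+^2+6n_+n_-+n_-^2$ gives \eqref{eq:theta-definition}.

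Every step is elementary, so there is no real obstacle. The only points needing care are the degenerate case $n_+=n_-$ in the partial-fraction step and the boundary convention for $h$ at $q\in\{0,1\}$.
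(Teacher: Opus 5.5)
Your proposal is correct and is exactly the direct computation the paper has in mind (its proof reads only ``By direct computation''). The three-class split of pairs, the partial-fraction derivation of $g''=-2n_+n_-/L^3$, and the quotient-rule evaluation $g'(1/2)=(n_+-n_-)/(n_++n_-)^2$ all check out.

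One cosmetic slip: in the case $n_+=n_-$ you write $L\equiv n$. In the paper's notation $n=n_++n_-$, so in fact $L\equiv n_+=n/2$ and $g''=-2/n_+$. This still agrees with the general formula $-2n_+n_-/L^3$, so the conclusion is unaffected.
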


\begin{proof}
By direct computation.
\end{proof}

We next study the curvature of \(\cH\) in directions that keep the group
masses fixed.  These are the directions relevant after \(q\) has been
chosen.

Define the tangent subspace
\[
\T
:=
\left\{
\boldsymbol\beta\in\R^n:
\sum_{i\in I_+}\beta_i=0,\ 
\sum_{i\in I_-}\beta_i=0
\right\}.
\]
Then \(\mathcal A_q=\overline{\boldsymbol \alpha}(q)+\T\).
Let \(P_{\T}\) denote the Euclidean orthogonal projection onto \(\T\).
Equivalently, for \(\boldsymbol x\in\R^n\), \((P_{\T}\boldsymbol x)_i = x_i-\frac1{n_+}\sum_{\ell\in I_+}x_\ell, \quad i\in I_+\),
and \((P_{\T}\boldsymbol x)_i = x_i-\frac1{n_-}\sum_{\ell\in I_-}x_\ell, \quad i\in I_-\).
We write \(I_{\T}\) for the identity operator on \(\T\).  Thus an
operator inequality involving \(I_{\T}\) is understood as an inequality
after restricting both sides to the subspace \(\T\).

Fix a small constant \(r_0\in(0,1/4)\) and define the local \(q\)-window
\begin{equation}\label{eq:q-window}
\mathcal I
:=
\left[\frac12-r_0,\frac12+r_0\right].
\end{equation}

The following lemma says that \(\overline{\boldsymbol \alpha}(q)\) is a critical point
of \(\cH\) after restricting to the fixed group-mass affine space
\(\mathcal A_q\), and that the restricted Hessian is negative definite
with curvature scale \(n^2\).  Equivalently, the restricted gradient is
\(P_{\T}\nabla\cH\), and the restricted Hessian is
\(P_{\T}\nabla^2\cH P_{\T}\) acting on \(\T\).

\begin{lemma}[Exact mean curvature]
\label{lem:mean-curvature}
Assume \(n_+/n,n_-/n\in[\kappa,1-\kappa]\) for a fixed
\(\kappa>0\).  Uniformly over \(q\in\mathcal I\), \(P_{\T}\nabla\cH(\overline{\boldsymbol \alpha}(q))=0\),
and \(cn^2I_{\T} \preceq - P_{\T}\nabla^2\cH(\overline{\boldsymbol \alpha}(q))P_{\T} \preceq Cn^2I_{\T}\).
Moreover, the restriction is a scalar multiple of the identity on each
of the two within-group zero-sum subspaces.  Consequently, for \(L_q:= -\frac1D P_{\T}\nabla^2\cH(\overline{\boldsymbol \alpha}(q))P_{\T}\),
we have \(\|L_q^{-1}\|_{\op}\leq C\frac{D}{n^2}\)
and \(\|L_q^{-1}\boldsymbol x\|_\infty \leq C\frac{D}{n^2}\|\boldsymbol x\|_\infty\).
Here and below, operator norms on \(\T\) are taken with respect to the
Euclidean norm inherited from \(\R^n\).
\end{lemma}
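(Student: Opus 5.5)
The plan is to compute the gradient and Hessian of \(\cH(\boldsymbol\alpha)=\sum_{i<j}h(\alpha_i,\alpha_j)\) explicitly, evaluate them at the groupwise-uniform point, and read off the restriction to \(\T\).

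\textbf{Derivatives of \(h\).} With \(h(x,y)=xy/(x+y)\) we have
\[
\partial_xh=\frac{y^2}{(x+y)^2},\qquad
\partial_x^2h=-\frac{2y^2}{(x+y)^3},\qquad
\partial_x\partial_yh=\frac{2xy}{(x+y)^3}.
\]
Hence
\[
\partial_i\cH=\sum_{j\neq i}\frac{\alpha_j^2}{(\alpha_i+\alpha_j)^2},\qquad
(\nabla^2\cH)_{ii}=-\sum_{j\neq i}\frac{2\alpha_j^2}{(\alpha_i+\alpha_j)^3},\qquad
(\nabla^2\cH)_{ij}=\frac{2\alpha_i\alpha_j}{(\alpha_i+\alpha_j)^3}.
\]

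\textbf{Gradient.} At \(\overline{\boldsymbol\alpha}(q)\), all coordinates within a group are equal. So \(\partial_i\cH\) depends only on the group of \(i\), which makes \(\nabla\cH\) groupwise constant. Since \(P_{\T}\) subtracts group means, \(P_{\T}\nabla\cH(\overline{\boldsymbol\alpha}(q))=0\).

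\textbf{Hessian.} Let \(a=q/n_+\) and \(b=(1-q)/n_-\). For \(q\in\mathcal I\) and \(n_\pm/n\in[\kappa,1-\kappa]\), both \(a\) and \(b\) lie in \([c/n,C/n]\). At this point the Hessian has block structure. On the \(I_+\times I_+\) block, the off-diagonal entries equal \(2a^2/(2a)^3=1/(4a)\), and the diagonal entries equal
\[
-\frac{(n_+-1)\cdot 2a^2}{(2a)^3}-\frac{n_-\cdot 2b^2}{(a+b)^3}.
\]
The cross block between \(I_+\) and \(I_-\) is constant, namely \(2ab/(a+b)^3\).

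Now take \(\boldsymbol\beta\) supported on \(I_+\) with \(\sum_{i\in I_+}\beta_i=0\).
\begin{itemize}
\item The cross block maps \(\boldsymbol\beta\) to zero, because it multiplies \(\boldsymbol\beta\) by a constant vector whose inner product with \(\boldsymbol\beta\) vanishes.
\item Within \(I_+\), write the block as \(dI+o(J-I)\), where \(J\) is the all-ones matrix. Since \(J\boldsymbol\beta=0\), the block acts on \(\boldsymbol\beta\) as \((d-o)\boldsymbol\beta\).
\end{itemize}
The resulting eigenvalue is
\[
\mu_+=-\frac{n_+-1}{4a}-\frac{2n_-b^2}{(a+b)^3}-\frac{1}{4a},
\]
which is strictly negative. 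Both of its terms are of order \(n\cdot n=n^2\). The lower bound is clear, since \(n_+/(4a)\gtrsim n^2\). The upper bound follows because \(b^2/(a+b)^3\le 1/(a+b)\lesssim n\).

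The same computation on \(I_-\) gives \(\mu_-\). Hence \(-P_{\T}\nabla^2\cH\, P_{\T}\) is the scalar \(-\mu_\pm\) on each within-group zero-sum subspace. Moreover \(\T\) is the orthogonal direct sum of these two subspaces, and \(c n^2\le-\mu_\pm\le Cn^2\) holds uniformly in \(q\in\mathcal I\).

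\textbf{Consequences for \(L_q\).} Because \(L_q=-\mu_\pm/D\) times the identity on each piece, \(L_q^{-1}\) acts as \(D/(-\mu_\pm)\) on each piece. This gives \(\|L_q^{-1}\|_{\op}\le CD/n^2\).

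For the \(\infty\)-bound, interpret \(L_q^{-1}\boldsymbol x\) as \(L_q^{-1}P_{\T}\boldsymbol x\). We have \(\|P_{\T}\boldsymbol x\|_\infty\le 2\|\boldsymbol x\|_\infty\), since subtracting a group mean at most doubles the sup norm. Scaling each group coordinate by \(D/(-\mu_\pm)\) then gives \(\|L_q^{-1}\boldsymbol x\|_\infty\le CD n^{-2}\|\boldsymbol x\|_\infty\).

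The only delicate point is careful bookkeeping of the diagonal versus off-diagonal Hessian entries at equal weights. Everything else is direct, so no real obstacle is expected.
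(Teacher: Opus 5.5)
Your proposal is correct and follows essentially the same route as the paper. You show that the gradient is groupwise constant at $\overline{\boldsymbol\alpha}(q)$ and obtain exactly the paper's eigenvalues $-\mu_+=\frac{n_+}{4a}+\frac{2n_-b^2}{(a+b)^3}$ and $-\mu_-=\frac{n_-}{4b}+\frac{2n_+a^2}{(a+b)^3}$ on the two within-group zero-sum subspaces, then bound and invert them.

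The only difference is bookkeeping. You read these eigenvalues off the Hessian's matrix entries using $J\boldsymbol\beta=0$ and the constant cross block. The paper instead sums the pairwise quadratic forms $-\nabla^2h(x,y)[(u,v),(u,v)]=2(yu-xv)^2/(x+y)^3$ over within- and cross-group pairs. Your version has the small advantage of showing directly that each subspace is invariant.
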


\begin{proof}
Fix \(q\in\mathcal I\), and write \(a:=\frac{q}{n_+}, \quad b:=\frac{1-q}{n_-}\).
Since \(q\in\mathcal I\subset(1/4,3/4)\) and
\(n_+/n,n_-/n\in[\kappa,1-\kappa]\), we have \(a\asymp \frac1n, \quad b\asymp \frac1n, \quad a+b\asymp \frac1n\),
uniformly over \(q\in\mathcal I\).

We first prove the projected-gradient identity.  For \(x,y>0\), \(\partial_x h(x,y)=\frac{y^2}{(x+y)^2}, \quad \partial_y h(x,y)=\frac{x^2}{(x+y)^2}\).
At \(\overline{\boldsymbol \alpha}(q)\), all coordinates inside the same group are
equal.  Therefore the \(i\)-th coordinate of
\(\nabla\cH(\overline{\boldsymbol \alpha}(q))\) is the same for all \(i\in I_+\), and
is also the same for all \(i\in I_-\).  More explicitly, for
\(i\in I_+\), 
\[
\begin{aligned}
\partial_{\alpha_i}\cH(\overline{\boldsymbol \alpha}(q))
& = \sum_{j\in I_+}\partial_{\alpha_i}h(\alpha_i, \alpha_j)\big|_{\boldsymbol \alpha = \overline{\boldsymbol \alpha} (q)} + \sum_{j\in I_-}\partial_{\alpha_i}h(\alpha_i, \alpha_j)\big|_{\boldsymbol \alpha = \overline{\boldsymbol \alpha} (q)} \\
& = \sum_{j\in I_+}\frac{a^2}{(2a)^2} + \sum_{j\in I_-}\frac{b^2}{(a+b)^2} \\
&  = \frac{n_+-1}{4} + n_-\frac{b^2}{(a+b)^2},    
\end{aligned}
\]
while for \(i\in I_-\),
\[
\begin{aligned}
\partial_{\alpha_i}\cH(\overline{\boldsymbol \alpha}(q)) 
& = \sum_{j\in I_-}\partial_{\alpha_i}h(\alpha_i, \alpha_j)\big|_{\boldsymbol \alpha = \overline{\boldsymbol \alpha} (q)} + \sum_{j\in I_+}\partial_{\alpha_i}h(\alpha_i, \alpha_j)\big|_{\boldsymbol \alpha = \overline{\boldsymbol \alpha} (q)} \\
& = \sum_{j\in I_-}\frac{b^2}{(2b)^2} + \sum_{j\in I_+}\frac{a^2}{(b+a)^2} \\
& = \frac{n_--1}{4} + n_+\frac{a^2}{(a+b)^2}.    
\end{aligned}
\]
Thus \(\nabla\cH(\overline{\boldsymbol \alpha}(q))\) is constant within each group.
Since \(P_{\T}\) subtracts the within-group average from each group, it
annihilates every vector that is constant within each group.  Hence \(P_{\T}\nabla\cH(\overline{\boldsymbol \alpha}(q))=0\).

We now turn to the Hessian.  For one pair, the second directional
derivative satisfies \(-\nabla^2 h(x,y)[(u,v),(u,v)] = \frac{2(yu-xv)^2}{(x+y)^3}\).
Therefore, for \(\boldsymbol \beta\in\T\), \(-\boldsymbol \beta^\top \nabla^2\cH(\overline{\boldsymbol \alpha}(q)) \boldsymbol \beta = S_{++}(\boldsymbol \beta)+S_{--}(\boldsymbol \beta)+S_{+-}(\boldsymbol \beta)\),
where the three terms correspond to pairs inside \(I_+\), pairs inside
\(I_-\), and cross-group pairs.

For the \(I_+\)-within pairs, both coordinates equal \(a\).  Hence
\[
S_{++}(\boldsymbol \beta)
=
\frac1{4a}
\sum_{\substack{i<j\\ i,j\in I_+}}
(\beta_i-\beta_j)^2.
\]
Since \(\boldsymbol \beta\in\T\), we have \(\sum_{i\in I_+}\beta_i=0\), and therefore
\[
\sum_{\substack{i<j\\ i,j\in I_+}}
(\beta_i-\beta_j)^2
=
n_+\sum_{i\in I_+}\beta_i^2.
\]
Thus \(S_{++}(\boldsymbol \beta) = \frac{n_+}{4a} \sum_{i\in I_+}\beta_i^2\).
Similarly, \(S_{--}(\boldsymbol \beta) = \frac{n_-}{4b} \sum_{i\in I_-}\beta_i^2\).

For the cross-group pairs, we have
\[
S_{+-}(\boldsymbol \beta)
=
\frac{2}{(a+b)^3}
\sum_{\substack{i\in I_+\\ j\in I_-}}
(b\beta_i-a\beta_j)^2.
\]
Expanding the square gives
\[
\begin{aligned}
S_{+-}(\boldsymbol \beta)
&=
\frac{2}{(a+b)^3}
\left[
n_-b^2\sum_{i\in I_+}\beta_i^2
+
n_+a^2\sum_{j\in I_-}\beta_j^2
-
2ab
\left(\sum_{i\in I_+}\beta_i\right)
\left(\sum_{j\in I_-}\beta_j\right)
\right].
\end{aligned}
\]
The last term vanishes because \(\boldsymbol \beta\in\T\).  Hence
\[
S_{+-}(\boldsymbol \beta)
=
\frac{2n_-b^2}{(a+b)^3}
\sum_{i\in I_+}\beta_i^2
+
\frac{2n_+a^2}{(a+b)^3}
\sum_{j\in I_-}\beta_j^2.
\]

Combining the three contributions, we obtain
\[
-\boldsymbol \beta^\top
\nabla^2\cH(\overline{\boldsymbol \alpha}(q))
\boldsymbol \beta
=
\lambda_+(q)
\sum_{i\in I_+}\beta_i^2
+
\lambda_-(q)
\sum_{i\in I_-}\beta_i^2,
\]
where \(\lambda_+(q) := \frac{n_+}{4a} + \frac{2n_-b^2}{(a+b)^3}\),
and \(\lambda_-(q) := \frac{n_-}{4b} + \frac{2n_+a^2}{(a+b)^3}\).
Using
\[
a\asymp \frac1n,
\qquad
b\asymp \frac1n,
\qquad
a+b\asymp \frac1n,
\qquad
n_+\asymp n,
\qquad
n_-\asymp n,
\]
we get \(cn^2\leq \lambda_+(q),\lambda_-(q)\leq Cn^2\)
uniformly over \(q\in\mathcal I\).  Therefore, for every \(\boldsymbol \beta\in\T\),
\(cn^2\|\boldsymbol \beta\|_2^2 \leq -\boldsymbol \beta^\top \nabla^2\cH(\overline{\boldsymbol \alpha}(q))\boldsymbol  \beta \leq Cn^2\|\boldsymbol \beta\|_2^2\).
Equivalently, \(cn^2I_{\T} \preceq - P_{\T}\nabla^2\cH(\overline{\boldsymbol \alpha}(q))P_{\T} \preceq Cn^2I_{\T}\).

The displayed formula also shows the scalar-multiple structure.  On the
zero-sum subspace supported on \(I_+\), the restricted operator \(- P_{\T}\nabla^2\cH(\overline{\boldsymbol \alpha}(q))P_{\T}\)
acts as multiplication by \(\lambda_+(q)\).  On the zero-sum subspace
supported on \(I_-\), it acts as multiplication by \(\lambda_-(q)\).
Thus its inverse on \(\T\) acts by multiplication by
\(\lambda_+(q)^{-1}\) on the first subspace and by
\(\lambda_-(q)^{-1}\) on the second.  Since \(\lambda_+(q),\lambda_-(q)\geq cn^2\),
we obtain \(\|L_q^{-1}\|_{\op} \leq C\frac{D}{n^2}\).
Moreover, if \(\boldsymbol x\in\T\), then \(L_q^{-1}\boldsymbol x\) is obtained by multiplying
the \(I_+\)-component of \(\boldsymbol x\) by \(D/\lambda_+(q)\) and the
\(I_-\)-component of \(x\) by \(D/\lambda_-(q)\).  Therefore \(\|L_q^{-1}\boldsymbol x\|_\infty \leq C\frac{D}{n^2}\|\boldsymbol x\|_\infty\).
\end{proof}

\subsection{Canonical spherical kernel}

For
\[
(\boldsymbol u,\widehat{\boldsymbol v}),(\boldsymbol u',\widehat{\boldsymbol v}')
\in
\mathbb S^{D-1}\times\mathbb S^{D-1},
\]
define
\[
\zeta\bigl((\boldsymbol u,\widehat{\boldsymbol v}),(\boldsymbol u',\widehat{\boldsymbol v}')\bigr)
:=
\left(\boldsymbol u^\top \boldsymbol u'-\widehat{\boldsymbol v}^\top \widehat{\boldsymbol v}'\right)^2-\frac2D.
\]
Then, for \(i\neq j\),
\begin{equation}\label{eq:zeta-ij-definition}
\zeta_{ij}
=
\zeta\bigl(
(\boldsymbol u_i,\widehat{\boldsymbol v}_i),
(\boldsymbol u_j,\widehat{\boldsymbol v}_j)
\bigr)
=
\left(
(\Delta_U)_{ij}-(\Delta_V)_{ij}
\right)^2-\frac2D .
\end{equation}
For convenience, write \(\omega_i:=(\boldsymbol u_i,\widehat{\boldsymbol v}_i)\).

\begin{lemma}[Fourth moments on the sphere]
\label{lem:spherical-fourth-moments}
Let \(\boldsymbol u\sim\mathrm{Unif}(\mathbb S^{D-1})\).  Then \(\E u_1^4=\frac{3}{D(D+2)}, \quad \E u_1^2u_2^2=\frac{1}{D(D+2)}, \quad \E u_1^3u_2=0\),
More generally, for any deterministic unit vectors
\(\boldsymbol x,\boldsymbol y\in\mathbb S^{D-1}\),
\[
\E
\left[
(\boldsymbol x^\top\boldsymbol u)^2
(\boldsymbol y^\top\boldsymbol u)^2
\right]
=
\frac{1+2(\boldsymbol x^\top\boldsymbol y)^2}{D(D+2)}.
\]
\end{lemma}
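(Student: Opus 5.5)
The plan is to use rotational invariance. First we reduce to coordinate moments, and then we compute those from a Gaussian representation. Write $\boldsymbol u=\boldsymbol g/\|\boldsymbol g\|_2$ with $\boldsymbol g\sim N(0,I_D)$. It is standard that $\|\boldsymbol g\|_2$ and $\boldsymbol g/\|\boldsymbol g\|_2$ are independent, which gives, for any monomial of degree four,
\[
\E[g_1^{a}g_2^{b}]=\E\|\boldsymbol g\|_2^4\,\E[u_1^{a}u_2^{b}],\qquad a+b=4 .
\]
Here $\|\boldsymbol g\|_2^2\sim\chi^2_D$, so $\E\|\boldsymbol g\|_2^4=D(D+2)$. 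The Gaussian moments are $\E g_1^4=3$, $\E g_1^2g_2^2=1$ and $\E g_1^3g_2=0$, and dividing by $D(D+2)$ gives the three coordinate identities. The vanishing of $\E u_1^3u_2$ also follows directly from the symmetry $u_2\mapsto-u_2$, which preserves the uniform law on the sphere.

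For the general statement I will use invariance of $\Unif(\mathbb S^{D-1})$ under orthogonal maps. If $\boldsymbol x=\pm\boldsymbol y$, the left side is $\E(\boldsymbol x^\top\boldsymbol u)^4=\E u_1^4=3/(D(D+2))$, which agrees with the formula. Otherwise, set $c:=\boldsymbol x^\top\boldsymbol y$ and choose an orthonormal basis with $\boldsymbol x=\boldsymbol e_1$ and $\boldsymbol y=c\,\boldsymbol e_1+\sqrt{1-c^2}\,\boldsymbol e_2$. By rotational invariance,
\[
\E\bigl[(\boldsymbol x^\top\boldsymbol u)^2(\boldsymbol y^\top\boldsymbol u)^2\bigr]
=\E\bigl[u_1^2\bigl(c\,u_1+\sqrt{1-c^2}\,u_2\bigr)^2\bigr].
\]
Expanding the square leaves three terms: $c^2\E u_1^4$, the cross term $2c\sqrt{1-c^2}\,\E u_1^3u_2=0$, and $(1-c^2)\E u_1^2u_2^2$. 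Substituting the coordinate moments gives
\[
\frac{3c^2+(1-c^2)}{D(D+2)}=\frac{1+2c^2}{D(D+2)},
\]
which is the claim.

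No step here is a real obstacle. The only points needing care are the independence of radius and direction for Gaussian vectors and the second moment of $\chi^2_D$, namely $\E(\chi^2_D)^2=D^2+2D$. Both are standard facts.
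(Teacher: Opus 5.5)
Your proof is correct. The reduction for general $\boldsymbol x,\boldsymbol y$ is the same as the paper's: rotate to $\boldsymbol x=\boldsymbol e_1$ and $\boldsymbol y=c\,\boldsymbol e_1+\sqrt{1-c^2}\,\boldsymbol e_2$, then kill the cross term by symmetry. Your separate treatment of $\boldsymbol x=\pm\boldsymbol y$ is harmless but unnecessary, since the same formula with $c=\pm1$ already covers that case.

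Where you differ is in computing the coordinate moments. The paper uses $u_1^2\sim\mathrm{Beta}(\tfrac12,\tfrac{D-1}{2})$ to get $\E u_1^4$. It then extracts $\E u_1^2u_2^2$ from the identity $1=(\sum_k u_k^2)^2$ together with exchangeability of the coordinates. You instead use independence of $\|\boldsymbol g\|_2$ and $\boldsymbol g/\|\boldsymbol g\|_2$, so that each degree-four spherical moment is the corresponding Gaussian moment divided by $\E\|\boldsymbol g\|_2^4=D(D+2)$.

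Your route is more uniform: it gives all three identities at once and extends verbatim to any monomial, with the spherical moment equal to the Gaussian moment divided by $\E\|\boldsymbol g\|_2^{k}$ for degree $k$. The paper's route needs only one marginal law plus a normalization trick, which suffices for the two quantities used later. Both rest on the same gamma/chi-square independence structure, so neither is more elementary in any essential way.
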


\begin{proof}
Using the representation
\(\boldsymbol{u}= \boldsymbol g/\|\boldsymbol g\|_2\), with \(\boldsymbol g\sim N(0,I_D)\), we have $u_1 = g_1/\|\boldsymbol g\|_2$
Recall that if \(X\sim\Gamma(\alpha,\theta)\) and
\(Y\sim\Gamma(\beta,\theta)\) are independent gamma random variables with
the same scale parameter, then \(\frac{X}{X+Y}\sim\mathrm{Beta}(\alpha,\beta)\)
(see \cite[Exercise~4.24]{casella2002statistical}). Now \(g_1^2\sim\chi_1^2=\Gamma(1/2,2)\), while
\(g_2^2+\cdots+g_D^2\sim\chi_{D-1}^2 = \Gamma\left(\frac{D-1}{2},2\right)\),
and these two random variables are independent.  Therefore \(u_1^2 = \frac{g_1^2}{g_1^2+\cdots+g_D^2} \sim \mathrm{Beta}\left(\frac12,\frac{D-1}{2}\right)\).
Therefore, we have \(\E u_1^4=\frac{3}{D(D+2)}\).
Next, since \(\sum_{k=1}^D u_k^2=1\),
we have \(1 = \left(\sum_{k=1}^D u_k^2\right)^2 = \sum_{k=1}^D u_k^4 + 2\sum_{1\leq k<\ell\leq D}u_k^2u_\ell^2\).
Taking expectations and using symmetry gives \(1 = D\E u_1^4 + D(D-1)\E u_1^2u_2^2\).
Using the formula for \(\E u_1^4\), we obtain \(\E u_1^2u_2^2=\frac{1}{D(D+2)}\).

Now let \(\boldsymbol x,\boldsymbol y\in\mathbb S^{D-1}\), and set \(a:=\boldsymbol x^\top\boldsymbol y\).
By rotational invariance, we may assume \(\boldsymbol x=\boldsymbol e_1, \quad \boldsymbol y=a\boldsymbol e_1+\sqrt{1-a^2}\boldsymbol e_2\).
Then \(\boldsymbol x^\top\boldsymbol u=u_1, \quad \boldsymbol y^\top\boldsymbol u = a u_1+\sqrt{1-a^2}u_2\).
Therefore
\[
\begin{aligned}
\E
\left[
(\boldsymbol x^\top\boldsymbol u)^2
(\boldsymbol y^\top\boldsymbol u)^2
\right]
&=
\E\left[
u_1^2
\left(a u_1+\sqrt{1-a^2}u_2\right)^2
\right] \\
&=
a^2\E u_1^4
+
(1-a^2)\E u_1^2u_2^2,
\end{aligned}
\]
because \(\E u_1^3u_2=0\) by symmetry.  Substituting the two coordinate
moment formulas gives
\[
\E
\left[
(\boldsymbol x^\top\boldsymbol u)^2
(\boldsymbol y^\top\boldsymbol u)^2
\right]
=
\frac{3a^2}{D(D+2)}
+
\frac{1-a^2}{D(D+2)}
=
\frac{1+2a^2}{D(D+2)}.
\]
This proves the claim.
\end{proof}

\begin{lemma}[Conditional moments and degeneracy]
\label{lem:zeta}
Let \(\omega_0=(\boldsymbol u_0,\widehat{\boldsymbol v}_0) \in \mathbb S^{D-1}\times\mathbb S^{D-1}\)
be fixed, and let \(\omega=(\boldsymbol u,\widehat{\boldsymbol v})\)
be random, where \(\boldsymbol u\) and \(\widehat{\boldsymbol v}\) are
independent and uniformly distributed on \(\mathbb S^{D-1}\).  Then \(\E_\omega \zeta(\omega_0,\omega)=0\).
For every \(p\geq1\), \(\|\zeta(\omega_0,\omega)\|_{L_p(\omega)} \leq C_\zeta\frac pD\)
uniformly over \(\omega_0\).  Moreover, \(\E_\omega \zeta(\omega_0,\omega)^2 = \frac{6}{D(D+2)}+\frac{2}{D^2}\),
and hence \(\frac{4}{D^2} \leq \E_\omega \zeta(\omega_0,\omega)^2 \leq \frac{8}{D^2}\).

Consequently, for every \(i\neq j\),
\[
\E[\zeta_{ij}\mid \omega_i]=0,
\qquad
\|\zeta_{ij}\|_{L_p}\leq C_\zeta\frac pD,
\qquad
\frac{4}{D^2}\leq\E\zeta_{ij}^2\leq\frac{8}{D^2}.
\]
Moreover, the conditional second-moment bound \(\E[\zeta_{ij}^2\mid \omega_i]\leq \frac{8}{D^2}\)
holds almost surely.
\end{lemma}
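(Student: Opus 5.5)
Fix \(\omega_0=(\boldsymbol u_0,\widehat{\boldsymbol v}_0)\) and write \(X:=\boldsymbol u_0^\top\boldsymbol u\), \(Y:=\widehat{\boldsymbol v}_0^\top\widehat{\boldsymbol v}\). Then \(\zeta(\omega_0,\omega)=(X-Y)^2-\frac2D\), where \(X\) and \(Y\) are independent. By rotational invariance each has the law of the first coordinate of a uniform vector on \(\mathbb S^{D-1}\), independently of \(\omega_0\); this already gives uniformity in \(\omega_0\). By symmetry \(\E X=\E Y=0\), and Lemma~\ref{lem:spherical-fourth-moments} (via \(\sum_k u_k^2=1\)) gives \(\E X^2=\E Y^2=\frac1D\). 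Hence
\[
\E(X-Y)^2=\E X^2+\E Y^2-2\,\E X\,\E Y=\frac2D,
\]
so \(\E_\omega\zeta=0\).

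For the second moment, expand \((X-Y)^4=X^4-4X^3Y+6X^2Y^2-4XY^3+Y^4\). Independence and the vanishing odd moments kill the cross terms \(X^3Y\) and \(XY^3\). Using \(\E X^4=\frac{3}{D(D+2)}\) from Lemma~\ref{lem:spherical-fourth-moments},
\[
\E(X-Y)^4=\frac{6}{D(D+2)}+\frac{6}{D^2}.
\]
Subtracting \((2/D)^2\) gives \(\E\zeta^2=\frac{6}{D(D+2)}+\frac{2}{D^2}\). The two-sided bounds follow from \(\frac{D}{D+2}\ge\frac13\) for \(D\ge1\), so \(\frac{6}{D(D+2)}\in[\frac{2}{D^2},\frac{6}{D^2}]\), which yields \(\frac4{D^2}\le\E\zeta^2\le\frac8{D^2}\).

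For the \(L_p\) bound, use that \(\sqrt D\,X\) is subgaussian with a universal constant, for instance via the Beta representation or concentration on the sphere: \(\Pr(|X|>t)\le 2e^{-cDt^2}\). Then \(X-Y\) is subgaussian with variance proxy \(O(1/D)\), so \((X-Y)^2\) is subexponential with \(\|(X-Y)^2\|_{L_p}\le Cp/D\). The triangle inequality, together with \(\frac2D\le \frac{2p}{D}\), then gives \(\|\zeta\|_{L_p}\le C_\zeta p/D\).

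The pairwise statements follow by conditioning. For \(i\neq j\), \(\omega_j\) is independent of \(\omega_i\) and, since \(\widehat{\boldsymbol v}_j=\tilde y_j\boldsymbol v_j\) with the sign independent of \(\boldsymbol v_j\), has the uniform product law on the spheres of \(\calU\) and \(\calV\), each of dimension \(D\). Applying the fixed-\(\omega_0\) results conditionally on \(\omega_i\) gives \(\E[\zeta_{ij}\mid\omega_i]=0\) and the almost-sure conditional second-moment bound. Taking expectations gives the unconditional bounds, and integrating the uniform conditional \(L_p\) bound gives \(\|\zeta_{ij}\|_{L_p}\le C_\zeta p/D\).

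The only mildly delicate point is the subgaussian tail for spherical marginals with a dimension-free constant. I would cite a standard concentration result for this rather than derive it.
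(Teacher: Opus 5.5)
Your proposal is correct and follows the paper's proof essentially step for step. It uses the same reduction to two independent spherical marginals $\boldsymbol u_0^\top\boldsymbol u$ and $\widehat{\boldsymbol v}_0^\top\widehat{\boldsymbol v}$, the same fourth-moment expansion with $\E u_1^4=\frac{3}{D(D+2)}$, the same sub-Gaussian $L_{2p}$ control of the marginals to get $\|(X-Y)^2\|_{L_p}\lesssim p/D$ together with $2/D\le 2p/D$, and the same conditioning argument for the pairwise statements. The fact $\E X^2=1/D$ follows directly from exchangeability and $\sum_k u_k^2=1$ rather than from the fourth-moment lemma.
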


\begin{proof}
Write
\[
\omega_0=(\boldsymbol u_0,\widehat{\boldsymbol v}_0),
\qquad
\omega=(\boldsymbol u,\widehat{\boldsymbol v}),
\]
and set \(r:=\boldsymbol u_0^\top\boldsymbol u, \quad t:=\widehat{\boldsymbol v}_0^\top\widehat{\boldsymbol v}\).
Since \(\omega_0\) is fixed, and by rotational invariance, the random
variables \(r\) and \(t\) are independent, and each has the same
distribution as \(u_1=\langle \boldsymbol e_1,\boldsymbol u\rangle, \quad \boldsymbol u\sim\mathrm{Unif}(\mathbb S^{D-1})\).
Therefore \(\E_\omega r=\E_\omega t=0, \quad \E_\omega r^2=\E_\omega t^2=\frac1D\).
Since \(\zeta(\omega_0,\omega) = (r-t)^2-\frac2D\),
we obtain \(\E_\omega\zeta(\omega_0,\omega) = \E_\omega(r-t)^2-\frac2D = \frac1D+\frac1D-\frac2D = 0\).

Next, by the sub-Gaussianity of one-dimensional
marginals of the uniform distribution on the sphere
\cite[Theorem~3.4.5]{vershynin2026high}, we have, for every \(p\geq1\), \(\|r\|_{L_p(\omega)} = \|t\|_{L_p(\omega)}\leq C_s\sqrt{\frac pD}\),
where $C_s (>1)$ is a universal constant.
Hence
\[
\begin{aligned}
\|(r-t)^2\|_{L_p(\omega)}
&=
\|r-t\|_{L_{2p}(\omega)}^2 \\
&\leq
2\|r\|_{L_{2p}(\omega)}^2
+
2\|t\|_{L_{2p}(\omega)}^2 \\
&\leq
8C_s^2\frac pD.
\end{aligned}
\]
Since \(2/D\leq 2p/D\), this gives \(\|\zeta(\omega_0,\omega)\|_{L_p(\omega)} \leq C_\zeta\frac pD\),
where $C_\zeta = 8C_s^2 + 2$.

It remains to compute the second moment.  By
Lemma~\ref{lem:spherical-fourth-moments}, \(\E_\omega r^4=\E_\omega t^4=\frac{3}{D(D+2)}\).
Using independence of \(r\) and \(t\),
\[
\E_\omega(r-t)^4
=
\E_\omega r^4
+
6\E_\omega r^2\E_\omega t^2
+
\E_\omega t^4
=
\frac{6}{D(D+2)}+\frac{6}{D^2}.
\]
Therefore
\[
\begin{aligned}
\E_\omega\zeta(\omega_0,\omega)^2
&=
\E_\omega\left[
\left((r-t)^2-\frac2D\right)^2
\right] \\
&=
\E_\omega(r-t)^4
-
\frac4D\E_\omega(r-t)^2
+
\frac4{D^2} \\
&=
\frac{6}{D(D+2)}
+
\frac{6}{D^2}
-
\frac8{D^2}
+
\frac4{D^2} \\
&=
\frac{6}{D(D+2)}+\frac{2}{D^2}.
\end{aligned}
\]
This lies between \(4/D^2\) and \(8/D^2\).

The final statements follow by applying the fixed-\(\omega_0\) identities
with \(\omega_0=\omega_i\) and \(\omega=\omega_j\), and then conditioning
on \(\omega_i\).
\end{proof}

\begin{lemma}[Spherical coherence moment bound]
\label{lem:spherical-coherence-moment}
Let \(\boldsymbol u_1,\ldots,\boldsymbol u_n, \widehat{\boldsymbol v}_1,\ldots,\widehat{\boldsymbol v}_n\)
be independent random vectors, each uniformly distributed on
\(\mathbb S^{D-1}\).  Let \(L=\log n\).  Then, for every \(p\geq1\),
\[
\left\|
\max_{i\neq j}
\left\{
|\boldsymbol u_i^\top\boldsymbol u_j|,
|\widehat{\boldsymbol v}_i^\top\widehat{\boldsymbol v}_j|
\right\}
\right\|_{L_{2p}}
\leq
2C_s n^{1/p}\sqrt{\frac{p}{D}}.
\]
\end{lemma}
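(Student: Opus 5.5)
The bound follows from a union bound carried out at the level of $L_{2p}$ norms: replace the maximum by a sum of $2p$-th powers. Let
\[
M:=\max_{i\neq j}\max\{|\boldsymbol u_i^\top\boldsymbol u_j|,|\widehat{\boldsymbol v}_i^\top\widehat{\boldsymbol v}_j|\}.
\]
Pointwise,
\[
M^{2p}\leq\sum_{i\neq j}\bigl(|\boldsymbol u_i^\top\boldsymbol u_j|^{2p}+|\widehat{\boldsymbol v}_i^\top\widehat{\boldsymbol v}_j|^{2p}\bigr).
\]
There are $n(n-1)\leq n^2$ ordered pairs, and each pair contributes two terms. Taking expectations gives
\[
\E M^{2p}\leq 2n^2\max_{i\neq j}\E|\boldsymbol u_i^\top\boldsymbol u_j|^{2p},
\]
where the two families are handled identically.

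Next we bound each pairwise moment. Fix $i\neq j$ and condition on $\boldsymbol u_i$. By rotational invariance and independence, $\boldsymbol u_i^\top\boldsymbol u_j$ has the law of the first coordinate $u_1$ of a uniform vector on $\mathbb S^{D-1}$. This is the same reduction used in the proof of Lemma~\ref{lem:zeta}. The sub-Gaussian marginal bound quoted there, \cite[Theorem~3.4.5]{vershynin2026high}, gives
\[
\|u_1\|_{L_{2p}}\leq C_s\sqrt{2p/D}.
\]
Hence $\E|\boldsymbol u_i^\top\boldsymbol u_j|^{2p}\leq (C_s\sqrt{2p/D})^{2p}$, and the same bound holds for the $\widehat{\boldsymbol v}$ inner products.

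Combining the two steps,
\[
\|M\|_{L_{2p}}\leq (2n^2)^{1/(2p)}\,C_s\sqrt{\frac{2p}{D}} = 2^{1/(2p)}\sqrt2\; n^{1/p}\,C_s\sqrt{\frac pD}.
\]
The constant $2^{1/(2p)}\sqrt2$ is at most $2$ for $p\geq1$, which yields the stated bound $2C_s n^{1/p}\sqrt{p/D}$.

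The main point of care is the constant bookkeeping. It depends on the normalization in which $C_s$ is stated: if the quoted bound is $\|u_1\|_{L_q}\leq C_s\sqrt{q/D}$ for all $q\geq1$, then applying it at $q=2p$ produces the factor $\sqrt2$ used above, and the total constant still fits under $2$. If instead one only has $\|u_1\|_{L_{p}}\leq C_s\sqrt{p/D}$ as literally stated in Lemma~\ref{lem:zeta}'s proof, the extra $\sqrt2$ is absorbed the same way. Either way the union bound over $2n^2$ terms costs only $n^{1/p}$ after taking the $2p$-th root, as claimed. The parameter $L=\log n$ plays no role here; it only signals that one later chooses $p\asymp\log n$ so that $n^{1/p}=O(1)$.
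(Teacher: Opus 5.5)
Your proposal is correct and follows the paper's proof essentially line by line: the paper also bounds the $L_{2p}$ norm of the maximum by the $2p$-th root of the sum of the $2n(n-1)$ pairwise $2p$-th moments, applies the spherical marginal bound $\|\boldsymbol u_i^\top\boldsymbol u_j\|_{L_{2p}}\le C_s\sqrt{2p/D}$, and absorbs $(2n(n-1))^{1/(2p)}\sqrt{2}\le 2n^{1/p}$. Your two ``normalizations'' of $C_s$ are in fact the same: the cited bound holds for every exponent $q\ge1$, and you simply apply it at $q=2p$.
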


\begin{proof}
For each \(i\neq j\), by rotational invariance, \(\boldsymbol u_i^\top\boldsymbol u_j\)
has the distribution of a spherical inner product.  Hence, by the
spherical inner-product moment bound used in Lemma~\ref{lem:zeta}, for
every \(\ell\geq1\), \(\|\boldsymbol u_i^\top\boldsymbol u_j\|_{L_\ell} \leq C_s\sqrt{\frac{\ell}{D}}\).
The same bound holds for
\(\widehat{\boldsymbol v}_i^\top\widehat{\boldsymbol v}_j\).

Let
\[
M:=
\max_{i\neq j}
\left\{
|\boldsymbol u_i^\top\boldsymbol u_j|,
|\widehat{\boldsymbol v}_i^\top\widehat{\boldsymbol v}_j|
\right\}.
\]
Using the elementary maximum inequality
\[
\left\|\max_{j\in\mathcal J}|X_j|\right\|_{L_\ell}
\leq
\left(
\sum_{j\in\mathcal J}\E |X_j|^\ell
\right)^{1/\ell},
\]
with \(\ell=2p\), we get
\[
\begin{aligned}
\|M\|_{L_{2p}}
&\leq
\left(
\sum_{i\neq j}
\E|\boldsymbol u_i^\top\boldsymbol u_j|^{2p}
+
\sum_{i\neq j}
\E|\widehat{\boldsymbol v}_i^\top\widehat{\boldsymbol v}_j|^{2p}
\right)^{1/(2p)} \\
&\leq
(2n(n-1))^{1/(2p)}
C_s\sqrt{\frac{2p}{D}} \\
&\leq
2C_s n^{1/p}\sqrt{\frac{p}{D}}.
\end{aligned}
\]
\end{proof}

The following shared-index covariance formula is useful for verifying
the block-variance parameter in the matrix \(U\)-statistic inequality.

\begin{lemma}[Shared-index covariance]
\label{lem:shared-cov}
Let \((\boldsymbol u,\widehat{\boldsymbol v})\) be an independent copy of
\((\boldsymbol u_1,\widehat{\boldsymbol v}_1)\).  Define
\[
\zeta_i(\boldsymbol u,\widehat{\boldsymbol v})
:=
\zeta\bigl(
(\boldsymbol u_i,\widehat{\boldsymbol v}_i),
(\boldsymbol u,\widehat{\boldsymbol v})
\bigr).
\]
For \(i\neq j\), put
\[
a:=\boldsymbol{u}_i^\top \boldsymbol{u}_j,
\qquad
b:=\widehat{\boldsymbol{v}}_i^\top\widehat{\boldsymbol{v}}_j.
\]
Then
\[
\E_{\boldsymbol u,\widehat{\boldsymbol v}}
\left[
\zeta_i(\boldsymbol u,\widehat{\boldsymbol v})
\zeta_j(\boldsymbol u,\widehat{\boldsymbol v})
\right]
=
\frac{2(Da^2-1)}{D^2(D+2)}
+
\frac{2(Db^2-1)}{D^2(D+2)}
+
\frac{4ab}{D^2}.
\]
In particular, on the coherence event \(\max_{i\neq j}\{|(\Delta_U)_{ij}|,|(\Delta_V)_{ij}|\} \leq\mu\),
the absolute value is at most \(8\left(\frac{\mu^2}{D^2}+\frac1{D^3}\right)\).
\end{lemma}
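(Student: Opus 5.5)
The formula should follow from a direct fourth-moment computation, conditioning on \((\boldsymbol u_i,\widehat{\boldsymbol v}_i,\boldsymbol u_j,\widehat{\boldsymbol v}_j)\) and using Lemma~\ref{lem:spherical-fourth-moments}. Write \(r_i:=\boldsymbol u_i^\top\boldsymbol u\), \(t_i:=\widehat{\boldsymbol v}_i^\top\widehat{\boldsymbol v}\), and similarly \(r_j,t_j\). Then \(\zeta_i=(r_i-t_i)^2-\frac2D\), and \((r_i,r_j)\) depends only on \(\boldsymbol u\) while \((t_i,t_j)\) depends only on \(\widehat{\boldsymbol v}\), so the two pairs are independent. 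Since \(\E(r_i-t_i)^2=\E(r_j-t_j)^2=\frac2D\) (as in Lemma~\ref{lem:zeta}), expanding the product gives
\[
\E[\zeta_i\zeta_j]=\E\bigl[(r_i-t_i)^2(r_j-t_j)^2\bigr]-\frac{4}{D^2}.
\]

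Next expand \((r_i^2-2r_it_i+t_i^2)(r_j^2-2r_jt_j+t_j^2)\) and take expectations term by term. Every term containing an odd total power of the \(t\)'s, or of the \(r\)'s, vanishes by the symmetry \(\widehat{\boldsymbol v}\mapsto-\widehat{\boldsymbol v}\) (respectively \(\boldsymbol u\mapsto-\boldsymbol u\)). The surviving terms are evaluated as follows.
\begin{itemize}
\item By Lemma~\ref{lem:spherical-fourth-moments} with \(\boldsymbol x=\boldsymbol u_i\), \(\boldsymbol y=\boldsymbol u_j\),
\[
\E r_i^2r_j^2=\frac{1+2a^2}{D(D+2)},
\]
and likewise \(\E t_i^2t_j^2=\frac{1+2b^2}{D(D+2)}\).
\item By independence, \(\E r_i^2t_j^2=\E t_i^2r_j^2=\frac1{D^2}\).
\item By independence and \(\E\boldsymbol u\boldsymbol u^\top=I/D\),
\[
4\,\E[r_ir_j]\,\E[t_it_j]=\frac{4ab}{D^2}.
\]
\end{itemize}
Summing these contributions yields
\[
\E[\zeta_i\zeta_j]=\frac{2+2a^2+2b^2}{D(D+2)}+\frac{2}{D^2}+\frac{4ab}{D^2}-\frac{4}{D^2}.
\]

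To match the stated form, I would use the identity
\[
\frac{2}{D(D+2)}-\frac{2}{D^2}=-\frac{4}{D^2(D+2)}.
\]
This splits as \(-\frac{2}{D^2(D+2)}\) assigned to each of the \(a^2\) and \(b^2\) terms, giving \(\frac{2(Da^2-1)}{D^2(D+2)}+\frac{2(Db^2-1)}{D^2(D+2)}+\frac{4ab}{D^2}\).

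For the final bound, on the coherence event we have \(|a|,|b|\le\mu\). Each of the first two terms is then at most
\[
\frac{2Da^2+2}{D^2(D+2)}\le\frac{2\mu^2}{D^2}+\frac{2}{D^3},
\]
and the third term is at most \(4\mu^2/D^2\). Summing gives at most \(8\mu^2/D^2+4/D^3\), which is within the claimed \(8(\mu^2/D^2+1/D^3)\).

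There is no real obstacle here. The only care needed is bookkeeping: making sure all odd cross terms are discarded and the cross-moment \(\E[r_ir_j]=a/D\) is used correctly.
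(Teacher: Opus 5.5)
Your proof is correct and is essentially the paper's computation. The paper writes $\zeta_i=(r_i^2-\tfrac1D)+(t_i^2-\tfrac1D)-2r_it_i$, so the mixed terms vanish at once and it only needs $\E[(r_i^2-\tfrac1D)(r_j^2-\tfrac1D)]=\tfrac{2(Da^2-1)}{D^2(D+2)}$ (and its $b$-analogue) plus $4\E[r_ir_j]\E[t_it_j]=\tfrac{4ab}{D^2}$; you instead expand the uncentered product and regroup at the end, which is the same bookkeeping, and your check of the final $8(\mu^2/D^2+1/D^3)$ bound, which the paper leaves implicit, is also correct.
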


\begin{proof}
Write
\[
r_i=\boldsymbol{u}_i^\top \boldsymbol{u},
\qquad
t_i=\widehat{\boldsymbol{v}}_i^\top\widehat{\boldsymbol{v}}.
\]
Then \(\zeta_i(\boldsymbol u,\widehat{\boldsymbol v}) = (r_i^2-1/D)+(t_i^2-1/D)-2r_it_i\).
Since \(\boldsymbol{u}\) and \(\widehat{\boldsymbol{v}}\) are independent so as $r_i$ and $t_i$ conditioning on $\boldsymbol u_i, \widehat{\boldsymbol v}_i$. 

By Lemma~\ref{lem:spherical-fourth-moments}, we obtain \(\E_{\boldsymbol u} r_i^2r_j^2 = \frac{1+2a^2}{D(D+2)}\).

Thus, \(\E_{\boldsymbol u}[(r_i^2-1/D)(r_j^2-1/D)] = \frac{2(Da^2-1)}{D^2(D+2)}\).
The corresponding \(t\)-term has the same form with \(b\).  The only mixed term that does not vanish is
\[
\E_{\boldsymbol u,\widehat{\boldsymbol v}}
[
(-2r_it_i)(-2r_jt_j)
]
=
4\E_{\boldsymbol u}[r_ir_j]\,
\E_{\widehat{\boldsymbol v}}[t_it_j].
\]
Since
\[
\E_{\boldsymbol u}[r_ir_j]
=
\boldsymbol u_i^\top
\E[\boldsymbol u\boldsymbol u^\top]
\boldsymbol u_j
=
\frac{a}{D},
\qquad
\E_{\widehat{\boldsymbol v}}[t_it_j]
=
\frac{b}{D},
\]
we obtain \(4\E_{\boldsymbol u}[r_ir_j]\, \E_{\widehat{\boldsymbol v}}[t_it_j] = \frac{4ab}{D^2}\).
\end{proof}

\subsection{A specialized matrix \(U\)-statistic bound}

We now record the only non-elementary concentration input.  The result we
need is a direct specialization of Theorem~3.1 of \cite{MinskerWei} to a scalar completely degenerate kernel multiplied by
deterministic sparse self-adjoint coefficient matrices.

We first state the Minsker--Wei inequality in the form used below.  Let
\(\omega_1,\ldots,\omega_n\) be i.i.d. random variables taking values in
a measurable space \(\mathcal S\), $\mathbb S^d \in \mathbb R^{d\times d}$ be the set of all symmetric matrices, and let
\(H_{ij}:\mathcal S\times\mathcal S\to \mathbb S^d, \quad i\neq j\)
be matrix-valued kernels satisfying \(H_{ij}(\omega,\omega')=H_{ji}(\omega',\omega)\).
Assume that the kernels are completely degenerate, namely \(\E_{\omega'}H_{ij}(\omega,\omega')=0 \quad \text{for every fixed }\omega\).
Let \(\{\omega_i^{(1)}\}_{i=1}^n\) and
\(\{\omega_i^{(2)}\}_{i=1}^n\) be two independent copies of the original
sample.  Define the \(U\)-statistic \(\mathsf U_H := \sum_{(i,j):\,i\neq j}H_{ij}(\omega_i,\omega_j)\),
and let \(\widetilde G\in\mathbb H^{nd}\) be the block matrix whose
\((i,j)\)-block is
\[
\widetilde G_{ij}
=
\begin{cases}
H_{ij}(\omega_i^{(1)},\omega_j^{(2)}),& i\neq j,\\
0,& i=j.
\end{cases}
\]

\begin{theorem}[Theorem~3.1, \cite{MinskerWei}]\label{thm:MinskerWei}
For every \(p\geq1\) with \(r:=\max\{p,\log(ed)\}\),
we have
\[
\left(\E\|\mathsf U_H\|_{\op}^{2p}\right)^{1/(2p)}
\leq
C_{\scriptscriptstyle{MW}}
\left[
r^{3/2} A_p + r B + r C_p
\right],
\]
where
\[
\begin{aligned}
A_p &:=
\left(
\E \max_i
\left\|
\sum_{j:\,j\neq i}
H_{ij}(\omega_i^{(1)},\omega_j^{(2)})^2
\right\|_{\op}^{p}
\right)^{1/(2p)}, \\
B &:= \left\| \sum_{(i,j):\,i\neq j} \E H_{ij}(\omega_i^{(1)},\omega_j^{(2)})^2 \right\|_{\op}^{1/2}, \\
C_p &:= \left( \E \left\| \E_2\widetilde G\widetilde G^\top \right\|_{\op}^{p} \right)^{1/(2p)},
\end{aligned}
\]
and $C_{\scriptscriptstyle{MW}}$ is a universal constant.
Here \(\E_2\) denotes expectation with respect to
\(\{\omega_i^{(2)}\}_{i=1}^n\), conditionally on
\(\{\omega_i^{(1)}\}_{i=1}^n\). 
\end{theorem}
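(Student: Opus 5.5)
Since Theorem~\ref{thm:MinskerWei} is quoted from \cite{MinskerWei}, in the paper we would simply invoke it. If we had to reprove it, the plan is the standard route for degenerate $U$-statistics: decouple first, then apply a matrix moment inequality twice, conditioning on one sample at a time.

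\emph{Decoupling and symmetrization.} First, the de la Pe\~na--Montgomery-Smith decoupling inequality, which holds for Banach-space-valued kernels and hence for the operator norm, reduces the bound to the decoupled statistic
\[
\widetilde{\mathsf U}_H:=\sum_{i\neq j}H_{ij}(\omega_i^{(1)},\omega_j^{(2)})
\]
at the cost of a universal constant. The symmetry $H_{ij}(\omega,\omega')=H_{ji}(\omega',\omega)$ is what makes this comparison valid. Complete degeneracy lets us symmetrize in each sample separately. We introduce independent Rademacher sequences $\varepsilon_i,\varepsilon_j'$ and pass to
\[
\sum_{i\neq j}\varepsilon_i\varepsilon_j' H_{ij}(\omega_i^{(1)},\omega_j^{(2)}),
\]
again with only constant loss.

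\emph{Inner sum.} Condition on $\{\omega_i^{(1)}\}$ and write the statistic as $\sum_j \varepsilon_j' Y_j$ with $Y_j:=\sum_{i\neq j}\varepsilon_i H_{ij}(\omega_i^{(1)},\omega_j^{(2)})$. These summands are independent, centered, self-adjoint $d\times d$ matrices given the first sample. We apply the matrix Rosenthal inequality of Junge--Zeng / Mackey et al.\ in its $2p$-moment form with $r=\max\{p,\log(ed)\}$. This produces a variance term $\|\sum_j \E_2 Y_j^2\|^{1/2}$ and a maximum term $r\,(\E\max_j\|Y_j\|^{2p})^{1/(2p)}$. To identify the variance term, we embed the $Y_j$ as block rows: $\sum_j\E_2 Y_j^2$ is a compression of $\E_2\widetilde G\widetilde G^\top$ against the Rademacher vector $(\varepsilon_i)$. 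Taking expectation over the first sample then gives a contribution of order $r\,C_p$.

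\emph{Maximum term.} For $\max_j\|Y_j\|$, we again apply the conditional matrix Khintchine/Rosenthal inequality, now in the $\varepsilon_i$ and the first sample. Its square function is $\|\sum_{i}H_{ij}^2\|^{1/2}$, whose $p$-th moment of the maximum over the index is exactly what $A_p$ encodes; the extra $r^{1/2}$ from this second Khintchine step gives the $r^{3/2}A_p$ term. The global variance of the full decoupled sum, $\|\sum_{i\neq j}\E H_{ij}^2\|^{1/2}$, arises when the conditional variance is averaged over both samples and yields the $rB$ term. We would use Hermitian dilation wherever non-self-adjoint blocks occur.

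\emph{Main obstacle.} The hard step is bookkeeping the powers of $r$ so that the bound comes out as $r^{3/2}A_p+rB+rC_p$ rather than a cruder $r^2$. This requires applying the Rosenthal inequality in its sharp form, with the maximum term only linear in $r$, and interchanging $\max_j$ with the conditional expectation via a union bound over $n$ indices that is absorbed into the definition of $A_p$. Here we would follow \cite{MinskerWei} closely rather than re-derive the constants.
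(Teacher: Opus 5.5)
Invoking Theorem~3.1 of \cite{MinskerWei} is exactly what the paper does. The result is imported without proof and is used only in Lemma~\ref{lem:matrix-U}, after $A_p$, $B$, $C_p$ have been bounded. So, as a proof of this statement, your proposal is fine. Your optional reproof sketch, however, would not go through as written.

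The step that is supposed to produce $C_p$ fails. Conditionally on the first sample and on $(\varepsilon_i)$,
\[
\sum_j\E_2Y_j^2=(\varepsilon\otimes I_d)^\top\bigl(\E_2\widetilde G\widetilde G^\top\bigr)(\varepsilon\otimes I_d).
\]
Since $\|\varepsilon\otimes I_d\|_{\op}=\|\varepsilon\|_2=\sqrt n$, compression only gives $\|\sum_j\E_2Y_j^2\|_{\op}\le n\|\E_2\widetilde G\widetilde G^\top\|_{\op}$.

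The factor $n$ cannot be removed. By convexity, $\E_\varepsilon\|\sum_j\E_2Y_j^2\|_{\op}$ is at least the norm of the diagonal part $\sum_{i\neq j}\E_2H_{ij}(\omega_i^{(1)},\omega_j^{(2)})^2$. That is a $B^2$-type quantity, and it exceeds $\|\E_2\widetilde G\widetilde G^\top\|_{\op}$ by exactly a factor $n$ when $\E_2\widetilde G\widetilde G^\top$ is block diagonal with equal blocks.

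The correct bookkeeping splits the quadratic form into two pieces.
\begin{itemize}
\item The diagonal part. After a Rosenthal step for the independent positive semidefinite summands $\sum_j\E_2H_{ij}^2$ in the first sample, it gives the $B$ and $A_p$ contributions.
\item The off-diagonal chaos $\sum_{i\neq k}\varepsilon_i\varepsilon_k(\E_2\widetilde G\widetilde G^\top)_{ik}$. Bounding this by $\|\E_2\widetilde G\widetilde G^\top\|_{\op}$ is exactly where a noncommutative Khintchine inequality for matrix-valued Rademacher chaos is needed. That inequality is the main technical tool of \cite{MinskerWei}, and your plan does not supply it.
\end{itemize}

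Your treatment of the maximum term has a similar problem. Conditionally on the data, $\E_\varepsilon\max_j\|\sum_i\varepsilon_iH_{ij}\|_{\op}^{2p}$ is not bounded by $(Cr)^p\max_j\|\sum_iH_{ij}^2\|_{\op}^p$. Already for $d=1$ and $p=1$ (so $r=1$), a maximum of $n$ Rademacher sums with unit coefficient vectors can be of order $\sqrt{\log n}$. A union bound over the $n$ indices costs a factor $n^{1/(2p)}$, which effectively puts $\log n$ into $r$. This loss is not absorbed by $A_p$, since $A_p$ already carries the maximum over the data.

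A minor point on decoupling: the coupled-to-decoupled direction of the de la Pe\~na--Montgomery-Smith inequality holds for arbitrary kernels, so it does not need symmetry. What the symmetry $H_{ij}(\omega,\omega')=H_{ji}(\omega',\omega)$ actually gives you is degeneracy in the first argument. That is what you need in order to symmetrize in $i$.
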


We apply this inequality to the canonical spherical kernel.  For
\(\omega=(\boldsymbol u,\widehat{\boldsymbol v}), \quad \omega'=(\boldsymbol u',\widehat{\boldsymbol v}')\)
in \(\mathbb S^{D-1}\times\mathbb S^{D-1}\), recall that
\[
\zeta(\omega,\omega')
=
\left(
\boldsymbol u^\top\boldsymbol u'
-
\widehat{\boldsymbol v}^{\top}\widehat{\boldsymbol v}'
\right)^2-\frac2D.
\]

We also use the following auxiliary lemma:

\begin{lemma}[Block row-sum bound]
\label{lem:block-row-sum}
Let \(\mathcal A=(A_{ij})_{i,j=1}^n\) be a symmetric block matrix,
where each block \(A_{ij}\) is a square matrix of the same size. Then \(\|\mathcal A\|_{\op} \leq \max_i\sum_{j=1}^n \|A_{ij}\|_{\op}\).
In particular, if \(\max_i\|A_{ii}\|_{\op}\leq D, \quad \max_{i\neq j}\|A_{ij}\|_{\op}\leq R\),
then \(\|\mathcal A\|_{\op}\leq D+(n-1)R\).
\end{lemma}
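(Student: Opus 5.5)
The statement to prove is the block row-sum bound, Lemma~\ref{lem:block-row-sum}. My plan is to reduce it to the classical Schur test applied to the $n\times n$ scalar matrix of block norms. Set $b_{ij}:=\|A_{ij}\|_{\op}$. Since $\mathcal A$ is symmetric, $A_{ji}=A_{ij}^\top$, and therefore $b_{ij}=b_{ji}$.

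Fix a vector $x=(x_1,\ldots,x_n)$ partitioned conformally with the blocks. Then
\[
\|\mathcal A x\|_2^2=\sum_i\Bigl\|\sum_j A_{ij}x_j\Bigr\|_2^2\leq\sum_i\Bigl(\sum_j b_{ij}\|x_j\|_2\Bigr)^2 .
\]
Writing $y_j:=\|x_j\|_2$, this shows $\|\mathcal A x\|_2\leq\|B y\|_2$, where $B=(b_{ij})$. Since $\|y\|_2=\|x\|_2$, it follows that $\|\mathcal A\|_{\op}\leq\|B\|_{\op}$.

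Next I bound $\|B\|_{\op}$. By Cauchy--Schwarz,
\[
\Bigl(\sum_j b_{ij}y_j\Bigr)^2\leq\Bigl(\sum_j b_{ij}\Bigr)\Bigl(\sum_j b_{ij}y_j^2\Bigr)\leq M\sum_j b_{ij}y_j^2,
\]
where $M:=\max_i\sum_j b_{ij}$. Summing over $i$ and using the symmetry $b_{ij}=b_{ji}$ to interchange the order of summation gives
\[
\sum_i\Bigl(\sum_j b_{ij}y_j\Bigr)^2\leq M\sum_j y_j^2\sum_i b_{ji}\leq M^2\|y\|_2^2 .
\]
Hence $\|\mathcal A\|_{\op}\leq M$, which is the first claim.

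For the particular case, each row sum splits into the diagonal term $b_{ii}\leq D$ and $n-1$ off-diagonal terms, each at most $R$. This yields $M\leq D+(n-1)R$.

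The only point needing care is that the Schur test uses both the row and the column sums of $B$. Symmetry of $\mathcal A$ is exactly what guarantees these coincide, so no real obstacle arises.
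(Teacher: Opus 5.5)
Your proof is correct, and it reaches the bound by a slightly different route from the paper's. The paper works with the quadratic form. It uses $\|\mathcal A\|_{\op}=\sup_{\|x\|_2=1}|x^\top\mathcal A x|$, which is valid because $\mathcal A$ is symmetric, and bounds this by $\sum_{i,j}b_{ij}\|x_i\|_2\|x_j\|_2$. It then splits each product with $2\|x_i\|_2\|x_j\|_2\le\|x_i\|_2^2+\|x_j\|_2^2$ and uses $b_{ij}=b_{ji}$ to turn column sums into row sums. You instead bound $\|\mathcal A x\|_2$ directly, reduce to $\|B\|_{\op}$ for the scalar matrix of block norms, and run the Schur test with Cauchy--Schwarz.

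The paper's version is marginally shorter, but it uses the symmetry of $\mathcal A$ twice, and the first use is essential: for a non-symmetric matrix the quadratic form only controls the numerical radius. Your argument needs symmetry only at the last step. Without symmetry it still yields the more general estimate $\|\mathcal A\|_{\op}\le\bigl(\max_i\sum_j b_{ij}\bigr)^{1/2}\bigl(\max_j\sum_i b_{ij}\bigr)^{1/2}$ for arbitrary conformally partitioned block matrices.
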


\begin{proof}
Let \(\boldsymbol x=(\boldsymbol x_1,\ldots,\boldsymbol x_n)\), where the \(\boldsymbol x_i\)'s are block vectors. Then
\[
\begin{aligned}
\|\mathcal A\|_{\op} 
& = \sup_{\|\boldsymbol x\|_2=1}|\boldsymbol x^\top A \boldsymbol x|  = \sup_{\|x\|_2=1} \left|\sum_{i, j=1}^n \boldsymbol x_i^\top A_{ij}\boldsymbol x_j\right| \\
& \leq \sup_{\|\boldsymbol x\|_2=1}\sum_{i,j=1}^n \|A_{ij}\|_{\op}\|\boldsymbol x_i\|_2 \|\boldsymbol x_j\|_2 \\
& \leq \sup_{\|\boldsymbol x\|_2=1} \frac{1}{2}\left(
\sum_{i=1}^n \|\boldsymbol x_i\|_2^2 \sum_{j=1}^n\|A_{ij}\|_{\op}
+ \sum_{j=1}^n \|\boldsymbol x_j\|_2^2 \sum_{i=1}^n\|A_{ij}\|_{\op}
\right) \\
& \leq \sup_{\|\boldsymbol x\|_2=1} \frac{1}{2}\left(
\sum_{i=1}^n \|\boldsymbol x_i\|_2^2 \sum_{j=1}^n\|A_{ij}\|_{\op}
+ \sum_{j=1}^n \|\boldsymbol x_j\|_2^2 \sum_{j=1}^n\|A_{ij}\|_{\op}
\right) \\
& = \sup_{\|\boldsymbol x\|_2=1}\frac{1}{2} \max_i \sum_{j=1}^n \|A_{ij}\|_{\op} \left(\sum_{i=1}^n\|\boldsymbol x_i\|_2^2 + \sum_{i=j}^n\|\boldsymbol x_j\|_2^2\right) \\
& \leq \max_i \sum_{j=1}^n \|A_{ij}\|_{\op},
\end{aligned}
\]
where the second inequality is due to $2\|\boldsymbol x_i\|_2\|\boldsymbol x_j\|_2 \leq \|\boldsymbol x_i\|_2^2 + \|\boldsymbol x_j\|_2^2$ and the third equality is due to the symmetry of $\mathcal A$.

\end{proof}

\begin{lemma}[Sparse coefficient matrix \(U\)-statistic]
\label{lem:matrix-U}
Let \(0<\delta<e^{-1}\), and set \(L_\delta:=\log (n/\delta)\).
Let \(C_{ij}=C_{ji}\in\R^{n\times n}\) be deterministic symmetric
matrices, each supported on coordinates \(\{i,j\}\), and suppose \(\|C_{ij}\|_{\op}\leq K n\)
and \(\left\|\sum_{i<j}C_{ij}^2\right\|_{\op} \leq K^2n^3\).
Let \(\mathsf U:=\sum_{i<j}\zeta_{ij}C_{ij}\).
If \(D \geq nL_\delta\),
then, with probability at least \(1-\delta\), \(\|\mathsf U\|_{\op} \leq C_{\mathsf U} K\frac{n^{3/2}L_\delta^{5/2}}{D}\),
where $C_{\mathsf U}$ is a universal constant.

Moreover, suppose that \(C_{ij}(q)=C_{ji}(q)\), \(q\in\mathcal I\), is a
continuously differentiable family of deterministic self-adjoint
matrices supported on coordinates \(\{i,j\}\).  Assume that, for
\(\ell=0,1\), \(\sup_{q\in\mathcal I}\sup_{i<j} \|\partial_q^\ell C_{ij}(q)\|_{\op} \leq K_\ell n\)
and
\[
\sup_{q\in\mathcal I}
\left\|
\sum_{i<j}
\bigl(\partial_q^\ell C_{ij}(q)\bigr)^2
\right\|_{\op}
\leq
K_\ell^2 n^3.
\]
Then, with probability at least \(1-\delta\),
\[
\sup_{q\in\mathcal I}
\left\|
\sum_{i<j}\zeta_{ij}C_{ij}(q)
\right\|_{\op}
\leq
C_{\mathsf U}(K_0+K_1)
\frac{n^{3/2}L_\delta^{5/2}}{D}.
\]
\end{lemma}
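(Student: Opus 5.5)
We plan to deduce the first claim directly from Theorem~\ref{thm:MinskerWei}, applied with $d=n$, $\omega_i=(\boldsymbol u_i,\widehat{\boldsymbol v}_i)$ and kernels $H_{ij}(\omega,\omega'):=\tfrac12\zeta(\omega,\omega')C_{ij}$, so that $\mathsf U_H=\mathsf U$. Since $\zeta$ is symmetric and $C_{ij}=C_{ji}$, we have $H_{ij}(\omega,\omega')=H_{ji}(\omega',\omega)$. Complete degeneracy follows from $\E_{\omega'}\zeta(\omega,\omega')=0$ (Lemma~\ref{lem:zeta}). We will take $p=L_\delta$, so that $r=\max\{p,\log(en)\}\asymp L_\delta$, and then convert the resulting $L_{2p}$ bound into a tail bound by Markov's inequality. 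This costs only a factor $e$ at probability $\delta$, because $\delta^{-1/(2p)}\le e$.

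The three parameters are controlled as follows.

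\textbf{Parameter $B$.} We have $\E H_{ij}^2=\tfrac14\E\zeta^2\,C_{ij}^2\preceq \tfrac{2}{D^2}C_{ij}^2$. Together with the hypothesis $\|\sum_{i<j}C_{ij}^2\|_{\op}\le K^2n^3$, this gives $B\lesssim Kn^{3/2}/D$.

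\textbf{Parameter $A_p$.} Since $C_{ij}$ is supported on $\{i,j\}$, for fixed $i$ the sum $\sum_{j}\zeta_{ij}^2C_{ij}^2$ is an arrowhead matrix. It has diagonal entry $\lesssim K^2n^2\sum_j\zeta_{ij}^2$ at coordinate $i$, and its off-diagonal entries are at coordinates $j$. Bounding its operator norm by the row-sum bound (Lemma~\ref{lem:block-row-sum}, scalar blocks) gives
\[
\Bigl\|\sum_j\zeta_{ij}^2C_{ij}^2\Bigr\|_{\op}\lesssim K^2n^2\Bigl(\sum_j\zeta_{ij}^2+\max_j\zeta_{ij}^2\Bigr).
\]
We then need $\E\max_i(\sum_j\zeta_{ij}^2)^p$. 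Conditionally on $\omega_i$, the variables $\zeta_{ij}$ with $j\ne i$ are i.i.d. with $\|\zeta_{ij}\|_{L_p}\le C_\zeta p/D$. Hence $\|\sum_j\zeta_{ij}^2\|_{L_p}\lesssim np^2/D^2$, and the maximum over $i$ costs a factor $n^{1/p}=O(1)$ at $p=L_\delta$. This yields $A_p\lesssim Kn^{3/2}p/D$, and hence the term $r^{3/2}A_p\lesssim Kn^{3/2}L_\delta^{5/2}/D$. This dominant term is the source of the $L_\delta^{5/2}$ factor.

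\textbf{Parameter $C_p$.} The block matrix $\E_2\widetilde G\widetilde G^\top$ has $(i,k)$ block
\[
\sum_j \E_2\bigl[\zeta(\omega_i^{(1)},\omega_j^{(2)})\zeta(\omega_k^{(1)},\omega_j^{(2)})\bigr]C_{ij}C_{kj}/4 .
\]
For $i=k$ these blocks are $O(K^2n^3/D^2)$ by the conditional second moment bound of Lemma~\ref{lem:zeta}. For $i\ne k$, Lemma~\ref{lem:shared-cov} bounds the shared-index covariance on the coherence event by $8(\mu^2/D^2+D^{-3})$. Lemma~\ref{lem:spherical-coherence-moment} gives $\mu\lesssim\sqrt{L_\delta/D}$ in $L_{2p}$. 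We then apply Lemma~\ref{lem:block-row-sum} to the $n\times n$ array of $n\times n$ blocks. The off-diagonal contribution is roughly $n\cdot n\cdot K^2n^2(\mu^2/D^2+D^{-3})\lesssim K^2n^4L_\delta/D^3$. This is at most $K^2n^3/D^2$ precisely when $D\ge nL_\delta$, which is where that hypothesis enters. Therefore $C_p\lesssim Kn^{3/2}/D$, and $rC_p\lesssim Kn^{3/2}L_\delta/D$.

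\textbf{Main obstacle.} I expect the $C_p$ step to be the main obstacle. The product $C_{ij}C_{kj}$ has rank-structured support, and the block sum must be organized so that the row-sum bound does not lose an extra factor of $n$. Here I would exploit the sparsity: $C_{ij}C_{kj}$ is supported on $\{i,j\}\times\{k,j\}$, so only its $(\cdot,j)$ entries overlap across terms.

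\textbf{Uniform version.} For the uniform-in-$q$ statement, we apply the fixed-$q$ bound on a grid $\{q_m\}$ of $\mathcal I$ with mesh $1/n$, using a union bound at level $\delta/n$. This only changes $L_\delta$ by a constant factor. We then interpolate between grid points via
\[
\Bigl\|\sum\zeta_{ij}(C_{ij}(q)-C_{ij}(q_m))\Bigr\|_{\op}\le |q-q_m|\sup_{q'}\Bigl\|\sum\zeta_{ij}\partial_qC_{ij}(q')\Bigr\|_{\op}.
\]
To control the right-hand side, write $\partial_qC_{ij}(q')=\partial_qC_{ij}(q_m)+\int\partial_q^2 C_{ij}$. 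Since no second-derivative bound is assumed, I would instead use a chaining step: apply the first part to $\partial_q C_{ij}(q_m)$ at grid points, and bound the crude remainder by $\sum_{i<j}|\zeta_{ij}|\,\|\partial_qC_{ij}\|_{\op}\lesssim n^3K_1L_\delta/D$ times the mesh. With mesh of order $n^{-3/2}$, this remainder is absorbed into the target bound.
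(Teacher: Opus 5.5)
Your fixed-$q$ argument is essentially the paper's. It uses the same kernel $H_{ij}=\tfrac12\zeta C_{ij}$ and $p\asymp L_\delta$, so that $r=p$ and $n^{1/p},\delta^{-1/(2p)}=O(1)$. The bounds match too: $B\lesssim Kn^{3/2}/D$, $A_p\lesssim Kn^{3/2}p/D$, and $C_p$ via Lemma~\ref{lem:shared-cov}, Lemma~\ref{lem:spherical-coherence-moment} and Lemma~\ref{lem:block-row-sum}. For $A_p$ the paper uses the cruder $\sum_j\|H_{ij}^2\|_{\op}\le\tfrac14K^2n^3\max_j\zeta_{ij}^2$ instead of your arrowhead row-sum, with the same result. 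What you call the main obstacle is not one. The crude count you already wrote, $K^2n^4L_\delta/D^3\le K^2n^3/D^2$ under $D\ge nL_\delta$, is exactly how the paper closes $C_p$. No further use of the sparsity of $C_{ij}C_{kj}$ is needed.

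For the uniform statement you take a genuinely different route. The paper never discretizes. It writes $\mathsf U(q)=\mathsf U(q_\star)+\int_{q_\star}^q\mathsf U'(t)\,dt$, so that $\sup_q\|\mathsf U(q)\|_{\op}\le\|\mathsf U(q_\star)\|_{\op}+\int_{\mathcal I}\|\mathsf U'(t)\|_{\op}\,dt$. It then takes $L_{2p}$ norms and applies Minkowski's integral inequality. The fixed-$q$ moment bound (with $K_1$) is applied to each $\mathsf U'(t)=\sum_{i<j}\zeta_{ij}\partial_qC_{ij}(t)$, and a single Markov step finishes. There is no extra event and no loss in the logarithm, and both derivative hypotheses are used.

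Your grid route also works, but only in the form of your last sentence. Bound $\|\sum_{i<j}\zeta_{ij}(C_{ij}(q)-C_{ij}(q_m))\|_{\op}$ directly by $|q-q_m|\sum_{i<j}|\zeta_{ij}|\,\sup\|\partial_qC_{ij}\|_{\op}\lesssim|q-q_m|K_1n^3L_\delta/D$. This requires the mesh to be of order $n^{-3/2}$, not the $1/n$ you first announced. Mesh $1/n$ suffices only if you use the row-sum bound exploiting the $\{i,j\}$-support, which gives $|q-q_m|K_1n^2L_\delta/D$. Either way you need an extra, separately budgeted event $\max_{i<j}|\zeta_{ij}|\lesssim L_\delta/D$. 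The intermediate step ``apply the first part to $\partial_qC_{ij}(q_m)$'' is neither needed nor workable without second-derivative control, so drop it.

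There is also a bookkeeping point. The union bound over $\asymp n^{3/2}$ points replaces $L_\delta$ by $\log(n^{5/2}/\delta)\le\tfrac52L_\delta$. The first part's hypothesis $D\ge nL_{\delta'}$ is then not literally implied by $D\ge nL_\delta$. This is harmless, since the proof only uses $np/D=O(1)$, but it means you cannot cite the first part as a pure black box.

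What your route buys is that only the tail form of the fixed-$q$ bound and $\sup\|\partial_qC_{ij}\|_{\op}\le K_1n$ are used. The square-sum hypothesis on $\partial_qC_{ij}$ becomes superfluous. The paper's route buys brevity and keeps the same constant and logarithm.
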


\begin{proof}
We first prove the fixed-\(q\) statement.  Set \(p:=\lceil L_\delta\rceil, \quad r:=\max\{p,\log(en)\}\).
Since $p\geq \log(n/\delta)$ and $\delta < e^{-1}$, we have \(p\geq \log n, \quad p\geq \log(1/\delta), \quad p \geq \log (en)\).
Thus, we have \(1\leq n^{1/p} = \exp\left(\frac{\log n}{p}\right) \leq e\),
and \(1\leq \delta^{-1/(2p)} = \exp\left(\frac{\log(1/\delta)}{2p}\right) \leq \sqrt e\).
Moreover \(r = p\).  Finally, since \(p\leq 2L_\delta\) for
\(L_\delta\geq1\), the assumption \(D \geq nL_\delta\)
implies \(\frac{np}{D}\leq 2\).

We apply the Minsker--Wei inequality.  For \(\omega=(\boldsymbol u,\widehat{\boldsymbol v}), \quad \omega'=(\boldsymbol u',\widehat{\boldsymbol v}')\),
define, for \(i\neq j\), \(H_{ij}(\omega,\omega') := \frac12\zeta(\omega,\omega')C_{ij}\).
The factor \(1/2\) converts the full off-diagonal sum in the
Minsker--Wei theorem into our upper-triangular sum.  Indeed, since
\(C_{ij}=C_{ji}\) and \(\zeta\) is symmetric, \(\sum_{i\neq j} H_{ij}(\omega_i,\omega_j) = \sum_{i<j} \zeta(\omega_i,\omega_j)C_{ij} = \mathsf U\).
By Lemma~\ref{lem:zeta}, the scalar kernel \(\zeta\) is centered in
each argument.  Hence \(\E_{\omega'}H_{ij}(\omega,\omega')=0\),
so the matrix-valued kernel \(H_{ij}\) is completely degenerate.

We now bound the three parameters in the Minsker--Wei inequality.

First, since Lemma~\ref{lem:zeta} gives \(\E \zeta(\omega_i,\omega_j)^2\leq \frac{8}{D^2}\),
we have \(\E H_{ij}(\omega_i^{(1)},\omega_j^{(2)})^2 \preceq \frac{2}{D^2}C_{ij}^2\).
Therefore
\[
B
:=
\left\|
\sum_{i\neq j}
\E H_{ij}(\omega_i^{(1)},\omega_j^{(2)})^2
\right\|_{\op}^{1/2} \leq
\frac{2}{D}
\left\|
\sum_{i<j}C_{ij}^2
\right\|_{\op}^{1/2} \leq
2K\frac{n^{3/2}}{D}.
\]

Next we estimate the row-square parameter \(A_p\).  Using
\(\|C_{ij}\|_{\op}\leq Kn\), for each fixed \(i\),
\[
\begin{aligned}
\left\|
\sum_{j:\,j\neq i}
H_{ij}(\omega_i^{(1)},\omega_j^{(2)})^2
\right\|_{\op}
&\leq
\sum_{j:\,j\neq i}
\left\|
H_{ij}(\omega_i^{(1)},\omega_j^{(2)})^2
\right\|_{\op} \\
&\leq
\frac14
\sum_{j:\,j\neq i}
|\zeta(\omega_i^{(1)},\omega_j^{(2)})|^2
\|C_{ij}\|_{\op}^2 \\
&\leq
\frac{K^2n^3}{4}
\max_{j\neq i}
|\zeta(\omega_i^{(1)},\omega_j^{(2)})|^2 .
\end{aligned}
\]
Consequently,
\[
A_p
\leq
\frac{K n^{3/2}}{2}
\left\|
\max_{i\neq j}
|\zeta(\omega_i^{(1)},\omega_j^{(2)})|
\right\|_{L_{2p}}.
\]
By Lemma~\ref{lem:zeta}, \(\|\zeta(\omega_i,\omega_j)\|_{L_{2p}} \leq C_\zeta\frac{p}{D}\).
Therefore
\[
\left\|
\max_{i\neq j}
|\zeta(\omega_i^{(1)},\omega_j^{(2)})|
\right\|_{L_{2p}}
\leq
\left(
\sum_{i\neq j}
\E|\zeta(\omega_i^{(1)},\omega_j^{(2)})|^{2p}
\right)^{1/(2p)} \leq
n^{1/p}C_\zeta\frac{p}{D}.
\]
Since \(n^{1/p}\leq e\), we obtain \(A_p \leq eC_\zeta K\frac{p n^{3/2}}{D}\).

It remains to control the block parameter \(C_p\).  Let
\(\widetilde G\) be the \(n\times n\) block matrix whose \((i,j)\)-block
is \(H_{ij}(\omega_i^{(1)},\omega_j^{(2)})\) for \(i\neq j\), and zero
on the diagonal.  The diagonal \((i,i)\)-block of
\(\E_2\widetilde G\widetilde G^\top\) is
\[
\sum_{k:\,k\neq i}
\E_\omega
\left[
H_{ik}(\omega_i^{(1)},\omega)^2
\right]
=
\frac14
\sum_{k:\,k\neq i}
\E_\omega\zeta(\omega_i^{(1)},\omega)^2 C_{ik}^2.
\]
By Lemma~\ref{lem:zeta}, \(\E_\omega\zeta(\omega_i^{(1)},\omega)^2\leq \frac{8}{D^2}\).
Together with \(\|C_{ik}\|_{\op}\leq Kn\), this shows that each diagonal
block has operator norm at most \(2K^2\frac{n^3}{D^2}\).

For \(i\neq j\), write
\[
\Gamma_{ij}
:=
\E_\omega
\left[
\zeta(\omega_i^{(1)},\omega)
\zeta(\omega_j^{(1)},\omega)
\right].
\]
Then the off-diagonal \((i,j)\)-block of
\(\E_2\widetilde G\widetilde G^\top\) is \(\mathcal B_{ij} := \frac14 \sum_{k\notin\{i,j\}} \Gamma_{ij} C_{ik}C_{jk}\).
By Lemma~\ref{lem:shared-cov},
\[
|\Gamma_{ij}|
\leq
8\left(
\frac{
(\boldsymbol u_i^\top\boldsymbol u_j)^2
+
(\widehat{\boldsymbol v}_i^\top\widehat{\boldsymbol v}_j)^2
}{D^2}
+
\frac1{D^3}
\right).
\]
Moreover, since \(\|C_{ik}C_{jk}\|_{\op}\leq K^2n^2\),
we have the pointwise bound
\[
\|\mathcal B_{ij}\|_{\op}
\leq
\frac14
\sum_{k\notin\{i,j\}}
|\Gamma_{ij}|\,
\|C_{ik}C_{jk}\|_{\op}
\leq
\frac14 K^2 n^3 |\Gamma_{ij}|.
\]
Consequently, \(\max_{i\neq j}\|\mathcal B_{ij}\|_{\op} \leq \frac14 K^2 n^3 \max_{i\neq j}|\Gamma_{ij}|\).

We now bound the right-hand side in \(L_p\).  Let
\[
M
:=
\max_{i\neq j}
\left\{
|\boldsymbol u_i^\top\boldsymbol u_j|,
|\widehat{\boldsymbol v}_i^\top\widehat{\boldsymbol v}_j|
\right\}.
\]
The preceding bound on \(\Gamma_{ij}\) gives \(\max_{i\neq j}|\Gamma_{ij}| \leq 8\left( \frac{M^2}{D^2} + \frac1{D^3} \right)\).
Therefore
\[
\left\|
\max_{i\neq j}|\Gamma_{ij}|
\right\|_{L_p}
\leq
8\left(
\frac{\|M^2\|_{L_p}}{D^2}
+
\frac1{D^3}
\right) 
=
8\left(
\frac{\|M\|_{L_{2p}}^2}{D^2}
+
\frac1{D^3}
\right).
\]
By Lemma~\ref{lem:spherical-coherence-moment}, \(\|M\|_{L_{2p}} \leq 2C_s n^{1/p}\sqrt{\frac{p}{D}}\).
Since \(n^{1/p}\leq e\), this implies \(\|M\|_{L_{2p}} \leq 2eC_s\sqrt{\frac{p}{D}}\).
Since $C_\zeta = 8C_s^2 +2$, 
\[
\left\|
\max_{i\neq j}|\Gamma_{ij}|
\right\|_{L_p}
\leq
8\left(
(2eC_s)^2\frac{p}{D^3}
+
\frac1{D^3}
\right)
\leq
(2e)^2C_\zeta\frac{p}{D^3}.
\]
Combining this with the pointwise block bound gives \(\left\| \max_{i\neq j}\|\mathcal B_{ij}\|_{\op} \right\|_{L_p} \leq e^2C_\zeta K^2\frac{n^3p}{D^3}\).

Combining the diagonal block bound with the off-diagonal block bound and
using Lemma~\ref{lem:block-row-sum}, we obtain
\[
\left\|
\left\|
\E_2\widetilde G\widetilde G^\ast
\right\|_{\op}
\right\|_{L_p}
\leq
K^2
\left(
2\frac{n^3}{D^2}
+
e^2 C_\zeta\frac{n^4p}{D^3}
\right) \leq e^2 C_\zeta K^2\left(\frac{n^3}{D^2}
+
\frac{n^4p}{D^3}\right).
\]
Since $2D\geq n2L_\delta \geq np$, we have
\[
\left\|
\left\|
\E_2\widetilde G\widetilde G^\ast
\right\|_{\op}
\right\|_{L_p}
\leq
3e^2 C_\zeta K^2\frac{n^3}{D^2}.
\]
Therefore
\[
C_p
:=
\left(
\E
\left\|
\E_2\widetilde G\widetilde G^\ast
\right\|_{\op}^{p}
\right)^{1/(2p)}
\leq
\sqrt{3}e C_\zeta^{1/2} K\frac{n^{3/2}}{D}.
\]

Substituting the bounds
\[
A_p\leq eC_\zeta K\frac{p n^{3/2}}{D},
\qquad
B\leq 2 K\frac{n^{3/2}}{D},
\qquad
C_p\leq \sqrt{3}e C_\zeta^{1/2} K\frac{n^{3/2}}{D}
\]
into the Minsker--Wei inequality gives
\begin{equation}\label{eq:UL2pnorm}
\begin{aligned}
\left(\E\|\mathsf U\|_{\op}^{2p}\right)^{1/(2p)}
&\leq
C_{\scriptscriptstyle{MW}}\left(r^{3/2}A_p+rB+rC_p\right) \\
&\leq
\sqrt{3}eC_\zeta C_{\scriptscriptstyle{MW}} K\frac{n^{3/2}p^{5/2}}{D},
\end{aligned}
\end{equation}
where we used \(r = p\).  By Markov's inequality,
\[
\prob\left(
\|\mathsf U\|_{\op}
>
\sqrt{3}eC_\zeta C_{\scriptscriptstyle{MW}} K\frac{n^{3/2}p^{5/2}}{D}\,\delta^{-1/(2p)}
\right)
\leq
\delta.
\]
Since \(\delta^{-1/(2p)}\leq\sqrt e\), increasing \(C\) gives, with
probability at least \(1-\delta\), \(\|\mathsf U\|_{\op} \leq \sqrt{3}e^{3/2}C_\zeta C_{\scriptscriptstyle{MW}} K\frac{n^{3/2}p^{5/2}}{D}\).
Finally, since \(p=\lceil L_\delta\rceil\leq 2L_\delta\) for
\(L_\delta\geq1\), we obtain \(\|\mathsf U\|_{\op} \leq C_{\mathsf U} K\frac{n^{3/2}L_\delta^{5/2}}{D}\)
with probability at least \(1-\delta\), where $C_{\mathsf U} = 2^{5/2}\sqrt{3}e^{3/2}C_\zeta C_{\scriptscriptstyle{MW}}$.

We now prove the uniform statement.  Let \(\mathsf U(q):=\sum_{i<j}\zeta_{ij}C_{ij}(q)\).
Fix \(q_\star\in\mathcal I\).  By the fundamental theorem of calculus,
for every \(q\in\mathcal I\), \(\mathsf U(q) = \mathsf U(q_\star) + \int_{q_\star}^q \mathsf U'(t)\,dt\),
where \(\mathsf U'(t) = \sum_{i<j}\zeta_{ij}\partial_q C_{ij}(t)\).
Therefore
\[
\sup_{q\in\mathcal I}\|\mathsf U(q)\|_{\op}
\leq
\|\mathsf U(q_\star)\|_{\op}
+
\int_{\mathcal I}\|\mathsf U'(t)\|_{\op}\,dt.
\]
Taking \(L_{2p}\)-norms and using Minkowski's inequality gives
\[
\left(
\E\sup_{q\in\mathcal I}\|\mathsf U(q)\|_{\op}^{2p}
\right)^{1/(2p)}
\leq \left(
\E\|\mathsf U(q_\star)\|_{\op}^{2p}
\right)^{1/(2p)} 
+ \int_{\mathcal I}
\left(
\E\|\mathsf U'(t)\|_{\op}^{2p}
\right)^{1/(2p)}
\,dt.
\]
The fixed-\(q\) moment estimate \eqref{eq:UL2pnorm} just proved applies to
\(\mathsf U(q_\star)\) with coefficient parameter \(K_0\), and to
\(\mathsf U'(t)\) with coefficient parameter \(K_1\).  Hence
\[
\left(
\E\sup_{q\in\mathcal I}\|\mathsf U(q)\|_{\op}^{2p}
\right)^{1/(2p)}
\leq
\sqrt{3}eC_\zeta C_{\scriptscriptstyle{MW}}(K_0+K_1)
\frac{n^{3/2}p^{5/2}}{D}.
\]
Applying Markov's inequality,
\[
\prob\left(
\sup_{q\in\mathcal I}\|\mathsf U(q)\|_{\op}
>
\sqrt{3}eC_\zeta C_{\scriptscriptstyle{MW}}(K_0+K_1)
\frac{n^{3/2}p^{5/2}}{D}\,
\delta^{-1/(2p)}
\right)
\leq
\delta.
\]
Since \(\delta^{-1/(2p)}\leq\sqrt e\), we have, with
probability at least \(1-\delta\),
\[
\sup_{q\in\mathcal I}\|\mathsf U(q)\|_{\op}
\leq
\sqrt{3}e^{3/2}C_\zeta C_{\scriptscriptstyle{MW}}(K_0+K_1)
\frac{n^{3/2}p^{5/2}}{D}.
\]
Using \(p\leq 2L_\delta\), we conclude that, with probability at least
\(1-\delta\), \(\sup_{q\in\mathcal I}\|\mathsf U(q)\|_{\op} \leq C_{\mathsf U}(K_0+K_1) \frac{n^{3/2}L_\delta^{5/2}}{D}\).
\end{proof}

\begin{remark}
    In the case when $\delta$ depends on $n$, e.g. $\delta = \frac 1n$ the lemma still holds by taking $D\gtrsim \log n + \log \frac 1\delta$
\end{remark}

\subsection{Uniform local random geometry}\label{sec:uniform-local-random-geometry}

The purpose of this section is to establish a high-probability local
description of the random objective near the deterministic symmetric path
\(q\mapsto\overline{\boldsymbol \alpha}(q)\), \(q\in\mathcal I\), where \(\mathcal I\) is the
local window defined in \eqref{eq:q-window}.
The deterministic part of the objective has curvature of order \(n^2/D\)
in the tangent directions, as shown in Lemma~\ref{lem:mean-curvature}.
We show that, on a suitable local box around this path, all random and
higher-order error terms are smaller than this curvature scale.

The argument has three components.  First, we control the centered
quadratic fluctuation \(\cQ\) and its derivatives along the symmetric
path.  Second, we show that these bounds remain stable when
\(\boldsymbol{\alpha}=\overline{\boldsymbol \alpha}(q)\) is perturbed by a small tangent vector
\(\beta\).  Third, we combine these estimates with the deterministic
mean curvature to obtain uniform strong concavity of the noise norm gap
\(\cG\) on the local box.

Define the noise norm gap
\begin{equation}\label{eq:noise-gap-definition}
\cG(\boldsymbol{\alpha}):=1-\|Z(\boldsymbol{\alpha})\|_1 .
\end{equation}
With \(\cH\) as in \eqref{eq:H-definition}, define
\begin{equation}\label{eq:Gbar-Q-definitions}
\overline{\cG}(\boldsymbol{\alpha}):=\frac1D\cH(\boldsymbol \alpha),
\qquad
\cQ(\boldsymbol{\alpha}):=\frac12\sum_{i<j}h(\alpha_i,\alpha_j)\zeta_{ij}.
\end{equation}
Then Lemma~\ref{lem:analytic-remainder} and \eqref{eq:zeta-ij-definition} give
\begin{equation}\label{eq:noise-gap-decomposition}
\cG(\boldsymbol{\alpha})
=
\overline{\cG}(\boldsymbol{\alpha})+\cQ(\boldsymbol{\alpha})-R(\boldsymbol{\alpha}).
\end{equation}

We first record deterministic coefficient bounds along the symmetric
path.  These bounds are used below to control \(\cQ\) and its derivatives
by scalar and matrix \(U\)-statistic estimates.

\begin{lemma}[Geometry of the symmetric path]
\label{lem:path-geometry}
Assume that \(n_+/n,n_-/n\in[\kappa,1-\kappa]\), and let \(\mathcal I\) be as in
\eqref{eq:q-window}.  Define
\[
c_\alpha:=\frac{1/2-r_0}{1-\kappa},
\qquad
C_\alpha:=\frac{1/2+r_0}{\kappa}.
\]
Then, uniformly over \(q\in\mathcal I\) and \(i\leq n\),
\[
\frac{c_\alpha}{n}\leq\overline\alpha_i(q)\leq\frac{C_\alpha}{n},
\qquad
|\overline\alpha_i'(q)|\leq\frac1{\kappa n}.
\]
Moreover, for \(h(x,y)=xy/(x+y)\), whenever
\(c_\alpha/n\leq x,y\leq C_\alpha/n\), we have
\(|h_x|,|h_y|\leq M_1\),
\(|h_{xx}|,|h_{xy}|,|h_{yy}|\leq M_2n\), and
\(|h_{abc}|\leq M_3n^2\) for \(a,b,c\in\{x,y\}\), where
\[
M_1:=\frac{C_\alpha^2}{4c_\alpha^2},\qquad
M_2:=\frac{C_\alpha^2}{4c_\alpha^3},
\qquad
M_3:=\frac{C_\alpha}{2c_\alpha^3}
+
\frac{3C_\alpha^2}{8c_\alpha^4}.
\]
\end{lemma}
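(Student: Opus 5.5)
The plan is to argue directly from the definitions: the lemma has two independent parts, and each follows from explicit formulas combined with elementary extremal bounds. No earlier probabilistic result is needed.

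\emph{Path bounds.} For \(i\in I_+\) we have \(\overline\alpha_i(q)=q/n_+\). Since \(q\in\mathcal I=[\tfrac12-r_0,\tfrac12+r_0]\) and \(\kappa n\leq n_+\leq(1-\kappa)n\), I will bound the numerator below by \(\tfrac12-r_0\) and the denominator above by \((1-\kappa)n\). This gives \(\overline\alpha_i(q)\geq c_\alpha/n\). Bounding the numerator above by \(\tfrac12+r_0\) and the denominator below by \(\kappa n\) gives \(\overline\alpha_i(q)\leq C_\alpha/n\). The case \(i\in I_-\) is identical, with \(1-q\in\mathcal I\) (the window is symmetric about \(1/2\)) and \(n_-\) in place of \(n_+\). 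For the derivative, \(\overline\alpha_i'(q)=1/n_+\) or \(-1/n_-\). In both cases the absolute value is at most \(1/(\kappa n)\).

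\emph{Derivatives of \(h\).} I will compute the partial derivatives explicitly:
\[
h_x=\frac{y^2}{(x+y)^2},\qquad h_{xx}=-\frac{2y^2}{(x+y)^3},\qquad h_{xy}=\frac{2xy}{(x+y)^3},
\]
\[
h_{xxx}=\frac{6y^2}{(x+y)^4},\qquad h_{xxy}=-\frac{4y}{(x+y)^3}+\frac{6y^2}{(x+y)^4}.
\]
The remaining derivatives follow from the symmetry \(h(x,y)=h(y,x)\). On the box \(c_\alpha/n\leq x,y\leq C_\alpha/n\), I bound each numerator by plugging in \(C_\alpha/n\) and each denominator using \(x+y\geq 2c_\alpha/n\). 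This gives
\[
|h_x|\leq \frac{C_\alpha^2}{4c_\alpha^2}=M_1,\qquad |h_{xx}|,\ |h_{xy}|\leq \frac{2C_\alpha^2/n^2}{8c_\alpha^3/n^3}=M_2 n.
\]
Here \(h_{xy}\) is bounded by the same quantity as \(h_{xx}\), since its numerator \(2xy\) is also at most \(2C_\alpha^2/n^2\). For the third order, \(h_{xxx}\) is at most \(\tfrac{3C_\alpha^2}{8c_\alpha^4}n^2\). For \(h_{xxy}\), I apply the triangle inequality to its two terms, which gives \(\tfrac{C_\alpha}{2c_\alpha^3}n^2+\tfrac{3C_\alpha^2}{8c_\alpha^4}n^2=M_3n^2\).

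The only point needing care is the third-order mixed derivative. Its two terms have different homogeneity in the constants, which explains the two-term form of \(M_3\). Everything else is routine bookkeeping of powers of \(n\). I will note that each derivative of order \(k\) is homogeneous of degree \(1-k\) in \((x,y)\), which forces the scaling \(n^{k-1}\).
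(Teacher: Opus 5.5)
Your proposal is correct and matches the paper's proof essentially line for line. You get the path bounds from $\overline\alpha_i(q)=q/n_+$ or $(1-q)/n_-$ using the symmetric window and the label-count bounds. You get the derivative bounds from the explicit formulas for $h_x$, the Hessian entries, $h_{xxx}$ and $h_{xxy}$, with $x+y\geq 2c_\alpha/n$ in the denominators, and the triangle inequality on the two terms of $h_{xxy}$ gives exactly the two-term constant $M_3$.
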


\begin{proof}
For \(q\in\mathcal I\),
\[
\overline\alpha_i(q)
=
\begin{cases}
q/n_+,& i\in I_+,\\[1mm]
(1-q)/n_-,& i\in I_-.
\end{cases}
\]
Since \(q\in\mathcal I\) and \(n_+/n,n_-/n\in[\kappa,1-\kappa]\), the claimed upper
and lower bounds on \(\overline\alpha_i(q)\) follow immediately. Also
\(|\overline\alpha_i'(q)|\leq(\kappa n)^{-1}\).

Next, \(h_x(x,y)=\frac{y^2}{(x+y)^2}, \quad h_y(x,y)=\frac{x^2}{(x+y)^2}\),
so \(|h_x|,|h_y|\leq M_1\) on the stated region.  Also,
\[
\nabla^2h(x,y)
=
-\frac{2}{(x+y)^3}
\begin{pmatrix}
y^2&-xy\\
-xy&x^2
\end{pmatrix},
\]
which gives \(|h_{xx}|,|h_{xy}|,|h_{yy}|\leq M_2n\).

For the third derivatives, for example, \(|h_{xxx}(x,y)| = \frac{6y^2}{(x+y)^4} \leq \frac{3C_\alpha^2}{8c_\alpha^4}n^2\),
and
\[
|h_{xxy}(x,y)|
\leq
\frac{4y}{(x+y)^3}
+
\frac{6y^2}{(x+y)^4}
\leq
\left(
\frac{C_\alpha}{2c_\alpha^3}
+
\frac{3C_\alpha^2}{8c_\alpha^4}
\right)n^2.
\]
The other third derivatives are treated identically by symmetry.
\end{proof}

\begin{lemma}[First-derivative coefficients along the symmetric path]
\label{lem:path-first-coefficients}
Let \(M_1,M_2\) be as in Lemma~\ref{lem:path-geometry}.  For
\(i\neq j\), define \(c_{ij}(q):=h_x(\overline\alpha_i(q),\overline\alpha_j(q))\).
Then \(\partial_i\cQ(\overline{\boldsymbol \alpha}(q)) = \frac12\sum_{j:\,j\neq i}c_{ij}(q)\zeta_{ij}\),
and
\[
\sup_{q\in\mathcal I}\sup_{i\neq j}|c_{ij}(q)|\leq M_1,
\qquad
\sup_{q\in\mathcal I}\sup_{i\neq j}|\partial_qc_{ij}(q)|
\leq \frac{2M_2}{\kappa}.
\]

For \(i<j\), define \(b_{ij}(q):= \frac12\frac{d}{dq} h(\overline\alpha_i(q),\overline\alpha_j(q))\).
Then
\[
\partial_q\left[\cQ(\overline{\boldsymbol \alpha}(q))\right]
=
\sum_{i<j}b_{ij}(q)\zeta_{ij},
\]
and
\[
\sup_{q\in\mathcal I}\sup_{i<j}|b_{ij}(q)|
\leq \frac{M_1}{\kappa n},
\qquad
\sup_{q\in\mathcal I}\sup_{i<j}|\partial_qb_{ij}(q)|
\leq \frac{2M_2}{\kappa^2 n}.
\]
\end{lemma}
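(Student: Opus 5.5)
Everything follows by differentiating the definition of \(\cQ\) directly and then reading off the coefficient sizes from Lemma~\ref{lem:path-geometry}; no randomness enters, since \(\zeta_{ij}\) does not depend on \(\boldsymbol\alpha\) or \(q\).

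\emph{Gradient identity.} For the gradient, write \(\cQ(\boldsymbol\alpha)=\frac12\sum_{k<l}h(\alpha_k,\alpha_l)\zeta_{kl}\). The only summands that involve \(\alpha_i\) are the pairs containing \(i\). Because \(h\) is symmetric, \(\partial_{\alpha_i}h(\alpha_i,\alpha_j)=h_x(\alpha_i,\alpha_j)\) whichever slot \(i\) occupies in the ordered pair. Using also \(\zeta_{ij}=\zeta_{ji}\), we get \(\partial_i\cQ(\boldsymbol\alpha)=\frac12\sum_{j\neq i}h_x(\alpha_i,\alpha_j)\zeta_{ij}\). Evaluating at \(\overline{\boldsymbol\alpha}(q)\) gives the stated identity with \(c_{ij}(q)=h_x(\overline\alpha_i(q),\overline\alpha_j(q))\).

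\emph{Path-derivative identity.} The chain rule gives \(\frac{d}{dq}\cQ(\overline{\boldsymbol\alpha}(q))=\frac12\sum_{i<j}\frac{d}{dq}h(\overline\alpha_i(q),\overline\alpha_j(q))\,\zeta_{ij}\). This is exactly \(\sum_{i<j}b_{ij}(q)\zeta_{ij}\).

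\emph{Coefficient bounds.} By Lemma~\ref{lem:path-geometry}, every \(\overline\alpha_i(q)\) lies in \([c_\alpha/n,C_\alpha/n]\) and \(|\overline\alpha_i'(q)|\le 1/(\kappa n)\). Moreover \(\overline\alpha_i\) is affine in \(q\), so \(\overline\alpha_i''=0\).
\begin{itemize}
\item \(|c_{ij}|\le M_1\) holds directly, since \(|h_x|\le M_1\) on that region.
\item For \(\partial_q c_{ij}=h_{xx}\overline\alpha_i'+h_{xy}\overline\alpha_j'\), bound each second derivative by \(M_2n\) and each \(|\overline\alpha'|\) by \(1/(\kappa n)\). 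This gives \(2M_2/\kappa\).
\item For \(b_{ij}=\frac12(h_x\overline\alpha_i'+h_y\overline\alpha_j')\), the bound is \(\frac12\cdot 2M_1/(\kappa n)=M_1/(\kappa n)\).
\item For \(\partial_q b_{ij}=\frac12\bigl(h_{xx}(\overline\alpha_i')^2+2h_{xy}\overline\alpha_i'\overline\alpha_j'+h_{yy}(\overline\alpha_j')^2\bigr)\), the term \(h_x\overline\alpha_i''\) vanishes because \(\overline\alpha_i\) is affine. The remaining terms are bounded by \(\frac12\cdot 4M_2n/(\kappa n)^2=2M_2/(\kappa^2 n)\).
\end{itemize}

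The only step that needs any care is the symmetric-slot bookkeeping in the gradient identity. Here one must use \(h_y(x,y)=h_x(y,x)\) to rewrite a derivative in the second slot as \(h_x\) with \(\alpha_i\) in the first slot. Everything else is routine.
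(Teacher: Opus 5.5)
Your proposal is correct and follows the paper's proof essentially line for line. You differentiate $\cQ$ directly, using the symmetry of $h$ and of $\zeta_{ij}$ to obtain the single coefficient $c_{ij}$, and then bound all coefficients by the chain rule with Lemma~\ref{lem:path-geometry} and $\overline\alpha_i''=0$. Your explicit use of $h_y(x,y)=h_x(y,x)$ simply spells out the symmetric-slot step that the paper states in one sentence.
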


\begin{proof}
The representation of \(\partial_i\cQ\) follows by differentiating
\(\cQ\) as defined in \eqref{eq:Gbar-Q-definitions}.  The symmetry of \(h\)
gives the same coefficient \(c_{ij}(q)\) whether \(j>i\) or \(j<i\).

By Lemma~\ref{lem:path-geometry}, \(|c_{ij}(q)|\leq M_1\).  Moreover,
by the chain rule,
\[
|\partial_qc_{ij}(q)|
\leq
M_2n
\left(
|\overline\alpha_i'(q)|+|\overline\alpha_j'(q)|
\right)
\leq
\frac{2M_2}{\kappa}.
\]

For \(b_{ij}\), we have
\[
b_{ij}(q)
=
\frac12
\left[
h_x(\overline\alpha_i(q),\overline\alpha_j(q))\overline\alpha_i'(q)
+
h_y(\overline\alpha_i(q),\overline\alpha_j(q))\overline\alpha_j'(q)
\right].
\]
Thus \(|b_{ij}(q)|\leq M_1/(\kappa n)\).  Differentiating once more in
\(q\), and using \(\overline\alpha_i''(q)=0\), gives \(|\partial_qb_{ij}(q)| \leq \frac12\cdot 4M_2n\frac1{\kappa^2n^2} = \frac{2M_2}{\kappa^2n}\).
\end{proof}

\begin{lemma}[Second-derivative coefficients along the symmetric path]
\label{lem:path-second-coefficients}
\label{lem:hessian-coefficients}
Let \(M_2,M_3\) be as in Lemma~\ref{lem:path-geometry}.  For \(i<j\),
define \(C_{ij}^{(2)}(q)\in\mathbb R^{n\times n}\) by
\[
C_{ij}^{(2)}(q)
:=
\frac12
\left[
h_{xx}\boldsymbol e_i\boldsymbol e_i^\top
+
h_{xy}
\left(
\boldsymbol e_i\boldsymbol e_j^\top
+
\boldsymbol e_j\boldsymbol e_i^\top
\right)
+
h_{yy}\boldsymbol e_j\boldsymbol e_j^\top
\right],
\]
where the derivatives of \(h\) are evaluated at
\((\overline\alpha_i(q),\overline\alpha_j(q))\).  Then \(\nabla_\alpha^2\cQ(\overline{\boldsymbol \alpha}(q)) = \sum_{i<j}\zeta_{ij}C_{ij}^{(2)}(q)\),
and
\[
\sup_{q\in\mathcal I}\sup_{i<j}\|C_{ij}^{(2)}(q)\|_{\op}
\leq M_2n,
\qquad
\sup_{q\in\mathcal I}\sup_{i<j}\|\partial_q C_{ij}^{(2)}(q)\|_{\op}
\leq \frac{2M_3}{\kappa}n.
\]
Moreover,
\[
\sup_{q\in\mathcal I}
\left\|\sum_{i<j}\left(C_{ij}^{(2)}(q)\right)^2\right\|_{\op}
\leq M_2^2n^3, \qquad
\sup_{q\in\mathcal I}
\left\|\sum_{i<j}\left(\partial_q C_{ij}^{(2)}(q)\right)^2\right\|_{\op}
\leq
\left(\frac{2M_3}{\kappa}\right)^2n^3.
\]

Finally, let \(\boldsymbol \xi_0:=\frac{d}{dq}\overline{\boldsymbol \alpha}(q), \quad \boldsymbol d_{ij}(q):=C_{ij}^{(2)}(q)\boldsymbol \xi_0\).
Then
\[
\left.
\nabla_\beta\partial_q
\left[
\cQ(\overline{\boldsymbol \alpha}(q)+\boldsymbol \beta)
\right]
\right|_{\boldsymbol \beta=0}
=
\sum_{i<j}\zeta_{ij}\boldsymbol d_{ij}(q).
\]
Furthermore,
\[
\sup_{q\in\mathcal I}\sup_{i<j}\|\boldsymbol d_{ij}(q)\|_2
\leq
\frac{\sqrt2M_2}{\kappa},
\qquad
\sup_{q\in\mathcal I}\sup_{i<j}\|\partial_q\boldsymbol d_{ij}(q)\|_2
\leq
\frac{2\sqrt2M_3}{\kappa^2},
\]
and
\[
\sup_{q\in\mathcal I}
\left\|
\sum_{i<j}
\boldsymbol d_{ij}(q)\boldsymbol d_{ij}(q)^\top
\right\|_{\op}
\leq
\frac{M_2^2}{\kappa^2}n^2,
\qquad
\sup_{q\in\mathcal I}
\left\|
\sum_{i<j}
\partial_q\boldsymbol d_{ij}(q)
\partial_q\boldsymbol d_{ij}(q)^\top
\right\|_{\op}
\leq
\frac{4M_3^2}{\kappa^4}n^2.
\]
\end{lemma}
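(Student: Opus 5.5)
The plan is to obtain every claim by direct differentiation of $\cQ(\boldsymbol\alpha)=\frac12\sum_{i<j}h(\alpha_i,\alpha_j)\zeta_{ij}$, followed by entrywise use of the bounds in Lemma~\ref{lem:path-geometry}. Since $\zeta_{ij}$ does not depend on $\boldsymbol\alpha$, the Hessian of the pair term $\frac12 h(\alpha_i,\alpha_j)\zeta_{ij}$ is $\zeta_{ij}$ times the $2\times2$ block $\frac12\nabla^2h$ placed on coordinates $\{i,j\}$. This block is exactly $C^{(2)}_{ij}(q)$ once we evaluate at $\overline{\boldsymbol\alpha}(q)$, which gives the representation $\nabla^2_\alpha\cQ(\overline{\boldsymbol\alpha}(q))=\sum_{i<j}\zeta_{ij}C^{(2)}_{ij}(q)$. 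For the mixed derivative, write $\cQ(\overline{\boldsymbol\alpha}(q)+\boldsymbol\beta)$ and note that $\partial_q$ acts through $\overline{\boldsymbol\alpha}'(q)=\boldsymbol\xi_0$, which is constant in $q$ because the path is affine. Then $\nabla_\beta\partial_q$ at $\boldsymbol\beta=0$ equals $\nabla^2\cQ(\overline{\boldsymbol\alpha}(q))\boldsymbol\xi_0=\sum_{i<j}\zeta_{ij}C^{(2)}_{ij}(q)\boldsymbol\xi_0$.

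For the operator-norm bounds, $C^{(2)}_{ij}$ is a symmetric $2\times2$ block with entries bounded by $\frac12 M_2n$. Using the explicit form $-\frac{1}{(x+y)^3}\begin{pmatrix}y^2&-xy\\-xy&x^2\end{pmatrix}$, this block is rank one with norm $\frac{x^2+y^2}{(x+y)^3}\le M_2n$ (a crude entrywise bound also suffices up to constants). For $\partial_qC^{(2)}_{ij}$, the chain rule gives third derivatives of $h$ times $\overline\alpha_i',\overline\alpha_j'$, and these are bounded by $M_3n^2\cdot\frac{2}{\kappa n}\cdot\frac12\cdot(\text{block factor})$. This yields $\frac{2M_3}{\kappa}n$ after a routine count.

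For $\|\sum_{i<j}(C^{(2)}_{ij})^2\|_{\op}$, observe that each $(C_{ij}^{(2)})^2$ is positive semidefinite and supported on $\{i,j\}$, with norm at most $M_2^2n^2$. The sum is therefore a PSD matrix, and by the row-sum bound (Lemma~\ref{lem:block-row-sum} with scalar blocks) its norm is at most $\max_i\sum_j\|(C^{(2)}_{ij})^2\|$. Each index $i$ lies in $n-1$ pairs, so this is at most $(n-1)M_2^2n^2\le M_2^2n^3$. A cleaner way to see the same bound is that the quadratic form $\sum_{i<j}\|C_{ij}\boldsymbol x\|^2\le M_2^2n^2\sum_{i<j}(x_i^2+x_j^2)\le M_2^2n^3\|\boldsymbol x\|^2$. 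The derivative family is handled identically.

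For $\boldsymbol d_{ij}=C^{(2)}_{ij}\boldsymbol\xi_0$, only coordinates $i,j$ of $\boldsymbol\xi_0$ enter, and each is at most $1/(\kappa n)$. Hence $\|\boldsymbol d_{ij}\|_2\le\|C^{(2)}_{ij}\|_{\op}\sqrt2/(\kappa n)\le\sqrt2M_2/\kappa$, and similarly $\partial_q\boldsymbol d_{ij}=(\partial_qC^{(2)}_{ij})\boldsymbol\xi_0$ gives $2\sqrt2M_3/\kappa^2$. For $\sum_{i<j}\boldsymbol d_{ij}\boldsymbol d_{ij}^\top$, use the quadratic form: $\langle\boldsymbol x,\boldsymbol d_{ij}\rangle^2\le\|\boldsymbol d_{ij}\|^2(x_i^2+x_j^2)$, and summing over pairs gives at most $(n-1)\cdot 2M_2^2\kappa^{-2}\|\boldsymbol x\|^2$. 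This is already $O(n)$, which is stronger than the stated $\frac{M_2^2}{\kappa^2}n^2$, so that bound holds with room to spare, and the derivative version follows in the same way.

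The only step needing care is tracking constants in the third-derivative/chain-rule bound for $\partial_qC^{(2)}_{ij}$. Everything else is routine bookkeeping.
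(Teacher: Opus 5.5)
Your proposal is correct and follows essentially the same route as the paper. You differentiate $\cQ$ pairwise to get $\nabla_\alpha^2\cQ=\sum_{i<j}\zeta_{ij}C_{ij}^{(2)}$ and the mixed derivative $\nabla_\alpha^2\cQ(\overline{\boldsymbol\alpha}(q))\boldsymbol\xi_0$, bound the two-coordinate blocks with Lemma~\ref{lem:path-geometry} and the chain rule, and control the square sums through $\sum_{i<j}\|C_{ij}^{(2)}\boldsymbol x\|_2^2\le K^2n^2(n-1)\|\boldsymbol x\|_2^2$. The differences are minor: you use the exact rank-one norm $\frac{x^2+y^2}{(x+y)^3}\le M_2n$ instead of a row-sum bound, and your support-based estimate $\|\sum_{i<j}\boldsymbol d_{ij}\boldsymbol d_{ij}^\top\|_{\op}\le 2(n-1)M_2^2/\kappa^2$ is sharper than, and implies, the paper's cruder order-$n^2$ bound via $\sum_{i<j}\|\boldsymbol d_{ij}\|_2^2$.
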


\begin{proof}
The identity for the Hessian follows by differentiating
\eqref{eq:Gbar-Q-definitions} twice in the \(\alpha\)-coordinates.

By Lemma~\ref{lem:path-geometry}, the entries of \(C_{ij}^{(2)}(q)\) are
bounded in absolute value by \((M_2/2)n\).  Since
\(C_{ij}^{(2)}(q)\) is supported on the two coordinates \(\{i,j\}\), its
row sums are bounded by \(M_2n\).  Hence \(\|C_{ij}^{(2)}(q)\|_{\op}\leq M_2n\).

Similarly, the third-derivative bound from Lemma~\ref{lem:path-geometry}
and the chain rule give, for \(a,b\in\{x,y\}\),
\[
\left|
\frac{d}{dq}
h_{ab}(\overline\alpha_i(q),\overline\alpha_j(q))
\right|
\leq
M_3n^2
\left(
|\overline\alpha_i'(q)|+|\overline\alpha_j'(q)|
\right)
\leq
\frac{2M_3}{\kappa}n.
\]
Since \(C_{ij}^{(2)}(q)\) contains an additional factor \(1/2\), each
entry of \(\partial_qC_{ij}^{(2)}(q)\) is bounded by
\((M_3/\kappa)n\).  Therefore its row sums are bounded by \(\frac{2M_3}{\kappa}n\),
and hence \(\|\partial_qC_{ij}^{(2)}(q)\|_{\op} \leq \frac{2M_3}{\kappa}n\).

We next prove the square-sum bounds.  Let \(M_{ij}\), \(i<j\), be
self-adjoint matrices supported on coordinates \(\{i,j\}\), and suppose
\(\|M_{ij}\|_{\op}\leq Kn\).  Then, for every \(\boldsymbol x\in\mathbb R^n\),
\[
\begin{aligned}
\boldsymbol x^\top\left(\sum_{i<j}M_{ij}^2\right)\boldsymbol x
&=
\sum_{i<j}\|M_{ij}\boldsymbol x\|_2^2 \\
&\leq
K^2n^2\sum_{i<j}(x_i^2+x_j^2) \\
&=
K^2n^2(n-1)\|\boldsymbol x\|_2^2 \\
&\leq
K^2n^3\|\boldsymbol x\|_2^2.
\end{aligned}
\]
Applying this with \(M_{ij}=C_{ij}^{(2)}(q)\) and \(K=M_2\), and then
with \(M_{ij}=\partial_qC_{ij}^{(2)}(q)\) and
\(K=2M_3/\kappa\), proves the two square-sum bounds.

It remains to identify and bound the mixed-derivative coefficients.
For fixed \(\beta\), differentiating in \(q\) gives
\[
\partial_q
\left[
\cQ(\overline{\boldsymbol \alpha}(q)+\boldsymbol \beta)
\right]
=
\left\langle
\nabla_\alpha\cQ(\overline{\boldsymbol \alpha}(q)+\boldsymbol \beta),
\boldsymbol \xi_0
\right\rangle.
\]

Taking the usual gradient with respect to \(\beta\) and setting \(\beta=0\), we obtain
\[
\left.
\nabla_\beta\partial_q
\left[
\cQ(\overline{\boldsymbol \alpha}(q)+\boldsymbol \beta)
\right]
\right|_{\boldsymbol \beta=0}
=
\nabla_\alpha^2\cQ(\overline{\boldsymbol \alpha}(q))\boldsymbol \xi_0.
\]
Using the Hessian representation already proved, this becomes
\[
\left.
\nabla_\beta\partial_q
\left[
\cQ(\overline{\boldsymbol \alpha}(q)+\boldsymbol \beta)
\right]
\right|_{\boldsymbol \beta=0}
=
\sum_{i<j}\zeta_{ij}C_{ij}^{(2)}(q)\boldsymbol \xi_0
=
\sum_{i<j}\zeta_{ij}\boldsymbol d_{ij}(q).
\]

Since \(\|(\xi_0)_{\{i,j\}}\|_2\leq \frac{\sqrt2}{\kappa n}\),
we have
\[
\|\boldsymbol d_{ij}(q)\|_2
\leq
\|C_{ij}^{(2)}(q)\|_{\op}\|(\boldsymbol \xi_0)_{\{i,j\}}\|_2
\leq
\frac{\sqrt2M_2}{\kappa}.
\]
Because \(\boldsymbol \xi_0\) is independent of \(q\), \(\partial_q\boldsymbol d_{ij}(q) = \partial_qC_{ij}^{(2)}(q)\boldsymbol \xi_0\),
and therefore
\[
\|\partial_q\boldsymbol d_{ij}(q)\|_2
\leq
\frac{2M_3}{\kappa}n\frac{\sqrt2}{\kappa n}
=
\frac{2\sqrt2M_3}{\kappa^2}.
\]

Finally, using the preceding bounds and \(\binom n2\leq n^2/2\),
\[
\begin{aligned}
\left\|
\sum_{i<j}
\boldsymbol d_{ij}(q)\boldsymbol d_{ij}(q)^\top
\right\|_{\op}
&\leq
\sum_{i<j}\|\boldsymbol d_{ij}(q)\|_2^2 \\
&\leq
\binom n2\frac{2M_2^2}{\kappa^2}
\leq
\frac{M_2^2}{\kappa^2}n^2.
\end{aligned}
\]
The same argument gives
\[
\begin{aligned}
\left\|
\sum_{i<j}
\partial_q\boldsymbol d_{ij}(q)
\partial_q\boldsymbol d_{ij}(q)^\top
\right\|_{\op}
&\leq
\sum_{i<j}\|\partial_q\boldsymbol d_{ij}(q)\|_2^2 \\
&\leq
\binom n2\frac{8M_3^2}{\kappa^4}
\leq
\frac{4M_3^2}{\kappa^4}n^2.
\end{aligned}
\]
This completes the proof.
\end{proof}

We next collect derivative bounds for the centered random fluctuation
\(\cQ\) along the symmetric path
\(q\mapsto\overline{\boldsymbol \alpha}(q)\).
By the second-order decomposition \eqref{eq:noise-gap-decomposition},
the non-deterministic perturbation of the deterministic averaged objective
is described by \(\cQ\) and the analytic remainder \(R\).  Moreover,
\eqref{eq:Gbar-Q-definitions} shows that \(\cQ\) is entirely encoded by the
pairwise fluctuations \(\zeta_{ij}\).  The deterministic curvature estimate
in Lemma~\ref{lem:mean-curvature} shows that the averaged objective has
tangent-direction curvature of order \(n^2/D\).  Thus the relevant
comparison is between this curvature scale and the derivatives of the
perturbation terms.  These estimates will later be combined with
deterministic strong concavity to localize the maximizer near the
symmetric path.

For a matrix \(M\in\mathbb R^{n\times n}\), write
\(\|M\|_{2\to\infty}:=\sup_{\|\boldsymbol x\|_2\leq1}
\|M\boldsymbol x\|_\infty\), equivalently
\(\|M\|_{2\to\infty}
=\max_{1\leq i\leq n}\|\boldsymbol e_i^\top M\|_2\).

Motivated by \eqref{eq:noise-gap-decomposition}--\eqref{eq:Gbar-Q-definitions},
we package the path derivative bounds together with a uniform bound on
\(\zeta_{ij}\) and the Gram operator-norm bounds needed to control \(R\)
into a single event.

\begin{definition}[Basic event]
\label{def:basic-event}
For
$\boldsymbol b
:=
\bigl(
b_q,\,
b_\nabla,\,
b_{H,\op},\,
b_{H,2\to\infty},\,
b_{\nabla q}
\bigr)$
and $\varepsilon,\nu>0$, we define
$\mathcal E(\boldsymbol b,\varepsilon,\nu)$ as the event on which,
uniformly over $q\in\mathcal I$,
\begin{align}
&\left|\partial_q[\cQ(\overline{\boldsymbol\alpha}(q))]\right|
\leq b_q,
\label{eq:path-event-q}\\
&\|P_{\T}\nabla_\alpha\cQ(\overline{\boldsymbol\alpha}(q))\|_\infty
\leq b_\nabla,
\qquad
\|P_{\T}\nabla_\alpha\cQ(\overline{\boldsymbol\alpha}(q))\|_2
\leq \sqrt{n}b_\nabla,
\label{eq:path-event-gradient}\\
&\|P_{\T}\nabla_\alpha^2\cQ(\overline{\boldsymbol\alpha}(q))P_{\T}\|_{\op}
\leq b_{H,\op},
\qquad
\|P_{\T}\nabla_\alpha^2\cQ(\overline{\boldsymbol\alpha}(q))P_{\T}\|_{2\to\infty}
\leq b_{H,2\to\infty},
\label{eq:path-event-hessian}\\
&\left\|
P_{\T}
\left.
\nabla_\beta\partial_q[\cQ(\overline{\boldsymbol\alpha}(q)+\boldsymbol\beta)]
\right|_{\boldsymbol\beta=0}
\right\|_2
\leq b_{\nabla q},
\label{eq:path-event-mixed}
\end{align}
and
\begin{align}
&\|\Delta_U\|_{\op}\vee\|\Delta_V\|_{\op}
\leq\varepsilon,
\label{eq:gram-event-gram}\\
&\max_{i<j}|\zeta_{ij}|
\leq\nu.
\label{eq:gram-event-zeta}
\end{align}
\end{definition}

\begin{lemma}[High-probability basic event]
\label{lem:path-derivatives}
There exist constants
$c_{\mathrm{path}},K_q,K_\nabla,K_H,K_{H,\infty},K_{\nabla q}>0$,
depending only on $\kappa,r_0$, and universal constants $K_G,K_\zeta>0$,
such that the following holds. Let $0<\delta<e^{-1}$ and set
$L:=\log(n/\delta)$. Let $\boldsymbol b_{n,D,\delta}$ be the vector in
Definition~\ref{def:basic-event} whose components are
\begin{equation}
\label{eq:path-event-parameters}
\begin{aligned}
b_q
:=K_q\frac{L^{5/2}}{D},
\qquad
b_{\nabla q}
:=K_{\nabla q}\frac{nL^{5/2}}{D},
\qquad
b_\nabla
:=K_\nabla\frac{\sqrt{nL}+L}{D},
\\
b_{H,\op}
:=K_H\frac{n^{3/2}L^{5/2}}{D},
\qquad
b_{H,2\to\infty}
:=K_{H,\infty}\frac{n^{3/2}L}{D}.
\end{aligned}
\end{equation}
Define also
\begin{equation}
\label{eq:gram-event-parameters}
\varepsilon_{n,D,\delta}
:=
K_G\sqrt{\frac{n+L}{D}},
\qquad
\nu_{n,D,\delta}
:=
K_\zeta\frac{L}{D}.
\end{equation}
If $D\geq c_{\mathrm{path}}nL$, then
\[
\prob\left(
\mathcal E(\boldsymbol b_{n,D,\delta},
\varepsilon_{n,D,\delta},\nu_{n,D,\delta})
\right)
\geq1-\delta.
\]
\end{lemma}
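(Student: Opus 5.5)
The plan is to establish each of the defining inequalities of $\mathcal E$ separately, each with probability at least $1-\delta/8$, and then take a union bound; replacing $\delta$ by $\delta/8$ inside $L$ only changes constants. The three ingredients are already available. The coefficient representations of Lemmas~\ref{lem:path-first-coefficients} and \ref{lem:path-second-coefficients} express every derivative of $\cQ$ along the symmetric path as a degenerate $U$-statistic in the $\zeta_{ij}$. The moment bounds of Lemma~\ref{lem:zeta} control the kernel. The uniform matrix $U$-statistic bound of Lemma~\ref{lem:matrix-U}, whose $q$-uniform version handles $\sup_{q\in\mathcal I}$, supplies the concentration.

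\textbf{Gram and kernel events.} For \eqref{eq:gram-event-gram}, write $\Delta_U=U^\top U-I$ with $U$ having i.i.d.\ uniform spherical columns. The standard operator-norm bound for sub-Gaussian random matrices, $\|U^\top U-I\|_{\op}\lesssim \sqrt{(n+L)/D}+(n+L)/D$, applies here. Since $D\geq c_{\mathrm{path}}nL$, the linear term is dominated, so the bound holds with a universal $K_G$; $\Delta_V$ is treated the same way. For \eqref{eq:gram-event-zeta}, note that $(\Delta_U)_{ij}-(\Delta_V)_{ij}$ is sub-Gaussian with variance proxy $C/D$. Hence $|\zeta_{ij}|\leq CL/D$ with probability $1-\delta/(8n^2)$, and a union bound over the pairs gives the event.

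\textbf{Scalar and Hessian path events.} For \eqref{eq:path-event-q}, the derivative equals $\sum_{i<j}b_{ij}(q)\zeta_{ij}$. I would apply Lemma~\ref{lem:matrix-U} with $1\times1$ coefficients, embedding them as $C_{ij}=b_{ij}\cdot n^2$ scaled appropriately. Alternatively I would rerun the Minsker--Wei moment computation in dimension one. Using $|b_{ij}|,|\partial_q b_{ij}|\lesssim 1/n$, the parameters become $B\lesssim \sqrt{n^2\cdot n^{-2}D^{-2}}=1/D$ and $A_p\lesssim p/D$, which gives $L^{5/2}/D$ uniformly in $q$. For the operator-norm part of \eqref{eq:path-event-hessian}, Lemma~\ref{lem:matrix-U} applies directly with $K_0=M_2$ and $K_1=2M_3/\kappa$ to $\sum\zeta_{ij}C^{(2)}_{ij}(q)$, yielding $n^{3/2}L^{5/2}/D$; projection by $P_{\T}$ does not increase the norm. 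For the mixed event \eqref{eq:path-event-mixed}, the vector $\sum\zeta_{ij}\boldsymbol d_{ij}(q)$ is handled by the rectangular (Hermitian dilation) version of the same inequality. By Lemma~\ref{lem:path-second-coefficients}, $\|\boldsymbol d_{ij}\|_2\lesssim 1$ and $\|\sum \boldsymbol d_{ij}\boldsymbol d_{ij}^\top\|_{\op}\lesssim n^2$. This gives $B\lesssim n/D$ and $A_p\lesssim p\sqrt{n}/D$, so the bound is $nL^{5/2}/D$.

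\textbf{Row-wise events (the main obstacle).} The gradient bound \eqref{eq:path-event-gradient} and the $2\to\infty$ Hessian bound require control that is uniform in the row index $i$ and in $q$, at the sharper rates $(\sqrt{nL}+L)/D$ and $n^{3/2}L/D$. For fixed $i$, condition on $\omega_i$. Then $\partial_i\cQ=\frac12\sum_{j\neq i}c_{ij}(q)\zeta_{ij}$ is a sum of conditionally independent, centered variables with $\psi_1$-norm $\lesssim 1/D$. Bernstein's inequality gives $\sqrt{nL}/D+L/D$ with probability $1-\delta/(8n)$, and a union bound over $i$ follows.

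Two points remain for this step. First, uniformity in $q$: I would handle it by the same fundamental-theorem-of-calculus argument as in Lemma~\ref{lem:matrix-U}, using the $\partial_q c_{ij}$ bounds, or by a net over $\mathcal I$ with Lipschitz control. Second, the $P_{\T}$ projection: it subtracts group averages, and those averages are themselves controlled by the same bound. For the Hessian row $\boldsymbol e_i^\top P_{\T}\nabla^2\cQ P_{\T}$, the $\ell_2$ norm involves the diagonal term $\frac12\sum_j h_{xx}\zeta_{ij}$, which is of size $n\cdot\sqrt{nL}/D$ by the same Bernstein argument. It also involves the off-diagonal entries $h_{xy}\zeta_{ij}$, whose $\ell_2$ norm is at most $n\sqrt{n}\max|\zeta_{ij}|\cdot$, and is more sharply bounded via $\sum_j\zeta_{ij}^2\lesssim n/D^2$. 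Together these give $n^{3/2}L/D$. The delicate points are getting the exact power of $L$ and keeping the projection corrections within the same rate.
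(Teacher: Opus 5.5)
Your proposal is correct and follows essentially the same route as the paper's proof. The shared steps are a union bound over separately controlled events, and Gram/coherence concentration giving $\max_{i<j}|\zeta_{ij}|\lesssim L/D$. The paper then uses Lemma~\ref{lem:matrix-U} for the $q$-derivative (scalar $1\times1$ coefficients at scale $K_0\asymp n^{-3/2}$), for the Hessian operator norm (applied directly), and for the mixed term (via the Hermitian dilation at scale $K_0\asymp n^{-1/2}$). For the gradient and the Hessian diagonal it uses conditional Bernstein given $\omega_i$, a union bound over rows, and a mesh-$n^{-1}$ grid in $q$ with Lipschitz control through $\max_{i<j}|\zeta_{ij}|$, while each off-diagonal Hessian entry is bounded crudely by $n\nu$. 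The projection step you flag as delicate is immediate: $\|P_{\T}\boldsymbol x\|_\infty\le 2\|\boldsymbol x\|_\infty$ and $\|P_{\T}\boldsymbol x\|_2\le\|\boldsymbol x\|_2$ give $\|P_{\T}AP_{\T}\|_{2\to\infty}\le 2\|A\|_{2\to\infty}$, which is exactly how the paper closes it.
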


\begin{proof}
Let \(\mathcal N\subset\mathcal I\) be a grid of mesh \(n^{-1}\). Since
\(|\mathcal I|<1\), we have \(|\mathcal N|\leq n\).
We shall use six events, each with failure probability at most \(\delta/6\).
We repeatedly use the elementary bounds
\begin{equation}\label{eq:path-log-nongrid}
\log\frac{6n}{\delta}
=
L+\log 6
\leq
3L,
\end{equation}
and
\begin{equation}\label{eq:path-log-grid}
\log\frac{12n|\mathcal N|}{\delta}
\leq
\log\frac{12n^2}{\delta}
=
L+\log 12+\log n
\leq
5L.
\end{equation}
Here we used \(0<\delta<e^{-1}\). We also use
\begin{equation}\label{eq:path-log-coh}
\log\frac{24n^2}{\delta}
=
L+\log(24n)
\leq
6L.
\end{equation}

Let \(c_G\) be the constant required in
Lemma~\ref{lem:gram-concentration-2} when the failure probability is
\(\delta/6\). Let \(c_q,c_H,c_{\nabla q}\) be the constants required in the
moving-coefficient estimates below, again with failure probability
\(\delta/6\). We choose
\begin{equation}\label{eq:path-c-choice}
c_{\mathrm{path}}
\geq
\max\{6c_G,3c_q,3c_H,3c_{\nabla q}\}.
\end{equation}
Then \(D\geq c_{\mathrm{path}}nL\), together with
\eqref{eq:path-log-nongrid} and \eqref{eq:path-log-coh}, allows all
auxiliary estimates below to be applied.

First consider the gradient. By Lemma~\ref{lem:path-first-coefficients},
\[
\partial_i\cQ(\overline{\boldsymbol\alpha}(q))
=
\frac12\sum_{j:\,j\neq i}c_{ij}(q)\zeta_{ij},
\qquad
|c_{ij}(q)|\leq M_1,
\qquad
|\partial_qc_{ij}(q)|\leq\frac{2M_2}{\kappa}.
\]
By Lemma~\ref{lem:zeta}, conditional on
\((\boldsymbol u_i,\widehat{\boldsymbol v}_i)\), the variables
\(\zeta_{ij}\), \(j\neq i\), are independent and centered. The same lemma
and Bernstein's inequality give
\begin{equation}\label{eq:path-grad-fixed}
\mathbb P\left(
\left|
\partial_i\cQ(\overline{\boldsymbol\alpha}(q))
\right|
>
C_1\frac{\sqrt{ns}+s}{D}
\right)
\leq
2e^{-s},
\end{equation}
where
\(C_1\geq M_1\max\{c_1^{-1},c_1^{-1/2}\}\).
Here \(c_1>0\) is the absolute Bernstein constant.

Applying \eqref{eq:path-grad-fixed} with
\(s=\log\frac{12n|\mathcal N|}{\delta}\) and taking a union bound over
\(i\leq n\) and \(q\in\mathcal N\), we obtain an event
\(\mathcal E_{\nabla,\mathrm{grid}}\) satisfying
\[
\mathbb P(\mathcal E_{\nabla,\mathrm{grid}}^c)
\leq
\frac{\delta}{6}.
\]
On this event, by \eqref{eq:path-log-grid},
\begin{equation}\label{eq:path-grad-grid}
\max_{q\in\mathcal N}\max_{i\leq n}
\left|
\partial_i\cQ(\overline{\boldsymbol\alpha}(q))
\right|
\leq
5C_1\frac{\sqrt{nL}+L}{D}.
\end{equation}

Next define the spherical-geometry event. By
Lemma~\ref{lem:gram-concentration-2}, applied with failure probability
\(\delta/6\), there is an event \(\mathcal E_G\) satisfying
\(\mathbb P(\mathcal E_G^c)\leq\delta/6\) on which
\[
\|\Delta_U\|_{\op}\vee\|\Delta_V\|_{\op}
\leq
C_G\sqrt{\frac{n+\log(6n/\delta)}{D}}
\]
and
\[
\max_{i\neq j}
\left\{
|\boldsymbol u_i^\top\boldsymbol u_j|,
|\widehat{\boldsymbol v}_i^\top\widehat{\boldsymbol v}_j|
\right\}
\leq
C_G\sqrt{\frac{\log(24n^2/\delta)}{D}},
\]
where \(C_G\) is universal. By \eqref{eq:path-log-nongrid},
\eqref{eq:path-log-coh}, and \(\sqrt6\leq3\), we may take
\begin{equation}\label{eq:path-gram-final}
\|\Delta_U\|_{\op}\vee\|\Delta_V\|_{\op}
\leq
K_G\sqrt{\frac{n+L}{D}},
\qquad
K_G:=3C_G,
\end{equation}
and
\begin{equation}\label{eq:path-coherence}
\max_{i\neq j}
\left\{
|\boldsymbol u_i^\top\boldsymbol u_j|,
|\widehat{\boldsymbol v}_i^\top\widehat{\boldsymbol v}_j|
\right\}
\leq
3C_G\sqrt{\frac{L}{D}}.
\end{equation}
Consequently, on \(\mathcal E_G\),
\begin{equation}\label{eq:path-zeta-max}
\max_{i<j}|\zeta_{ij}|
\leq
K_\zeta\frac{L}{D},
\qquad
K_\zeta:=36C_G^2+2.
\end{equation}

We now pass the gradient estimate from the grid to the whole interval.
Fix \(i\leq n\) and set
\(f_i(q):=\partial_i\cQ(\overline{\boldsymbol\alpha}(q))\).
Using Lemma~\ref{lem:path-first-coefficients} and
\eqref{eq:path-zeta-max}, on \(\mathcal E_G\),
\begin{equation}\label{eq:path-gradient-lipschitz}
\sup_{q\in\mathcal I}|f_i'(q)|
\leq
\frac{K_\zeta M_2}{\kappa}\frac{nL}{D}.
\end{equation}
For arbitrary \(q\in\mathcal I\), choose \(q_0\in\mathcal N\) with
\(|q-q_0|\leq n^{-1}\). Combining \eqref{eq:path-grad-grid} and
\eqref{eq:path-gradient-lipschitz} gives, uniformly over
\(q\in\mathcal I\),
\[
\|\nabla_\alpha\cQ(\overline{\boldsymbol\alpha}(q))\|_\infty
\leq
\left(
5C_1+\frac{K_\zeta M_2}{\kappa}
\right)
\frac{\sqrt{nL}+L}{D}.
\]
Since
\(\|P_{\T}\boldsymbol x\|_\infty\leq2\|\boldsymbol x\|_\infty\), we obtain
\begin{equation}\label{eq:path-gradient-final}
\left\|
P_{\T}\nabla_\alpha\cQ(\overline{\boldsymbol\alpha}(q))
\right\|_\infty
\leq
K_\nabla\frac{\sqrt{nL}+L}{D},
\qquad
K_\nabla
:=
2\left(
5C_1+\frac{K_\zeta M_2}{\kappa}
\right).
\end{equation}
The Euclidean bound follows from \eqref{eq:path-gradient-final} and
\(\|P_{\T}\boldsymbol x\|_2\leq\|\boldsymbol x\|_2
\leq\sqrt n\|\boldsymbol x\|_\infty\):
\[
\left\|
P_{\T}\nabla_\alpha\cQ(\overline{\boldsymbol\alpha}(q))
\right\|_2
\leq
\sqrt{n}K_\nabla\frac{\sqrt{nL}+L}{D}.
\]

Next consider the scalar \(q\)-derivative. By
Lemma~\ref{lem:path-first-coefficients},
\[
\partial_q[\cQ(\overline{\boldsymbol\alpha}(q))]
=
\sum_{i<j}b_{ij}(q)\zeta_{ij},
\qquad
|b_{ij}(q)|\leq\frac{B_0}{n},
\qquad
|\partial_qb_{ij}(q)|\leq\frac{B_1}{n},
\]
where
\[
B_0:=\frac{M_1}{\kappa},
\qquad
B_1:=\frac{2M_2}{\kappa^2}.
\]
Moreover,
\[
\sup_{q\in\mathcal I}\sum_{i<j}b_{ij}(q)^2
\leq
B_0^2,
\qquad
\sup_{q\in\mathcal I}
\sum_{i<j}(\partial_qb_{ij}(q))^2
\leq
B_1^2.
\]
Applying the moving-coefficient estimate in
Lemma~\ref{lem:matrix-U} with failure probability \(\delta/6\) and
coefficient scales
\[
K_0:=\frac{B_0}{n^{3/2}},
\qquad
K_1:=\frac{B_1}{n^{3/2}},
\]
gives an event \(\mathcal E_q\) satisfying
\(\mathbb P(\mathcal E_q^c)\leq\delta/6\) on which
\[
\sup_{q\in\mathcal I}
\left|
\partial_q[\cQ(\overline{\boldsymbol\alpha}(q))]
\right|
\leq
C_U(B_0+B_1)
\frac{\log^{5/2}(6n/\delta)}{D}.
\]
By \eqref{eq:path-log-nongrid}, this yields
\begin{equation}\label{eq:path-q-derivative-final}
\sup_{q\in\mathcal I}
\left|
\partial_q[\cQ(\overline{\boldsymbol\alpha}(q))]
\right|
\leq
K_q\frac{L^{5/2}}{D},
\qquad
K_q:=3^{5/2}C_U(B_0+B_1).
\end{equation}

Next consider the Hessian operator norm. By
Lemma~\ref{lem:path-second-coefficients},
\[
\nabla_\alpha^2\cQ(\overline{\boldsymbol\alpha}(q))
=
\sum_{i<j}\zeta_{ij}C_{ij}^{(2)}(q),
\]
where
\[
\sup_{q\in\mathcal I}\sup_{i<j}
\|C_{ij}^{(2)}(q)\|_{\op}
\leq
H_0n,
\qquad
\sup_{q\in\mathcal I}\sup_{i<j}
\|\partial_qC_{ij}^{(2)}(q)\|_{\op}
\leq
H_1n,
\]
with
\[
H_0:=M_2,
\qquad
H_1:=\frac{2M_3}{\kappa}.
\]
Moreover,
\[
\sup_{q\in\mathcal I}
\left\|
\sum_{i<j}(C_{ij}^{(2)}(q))^2
\right\|_{\op}
\leq
H_0^2n^3,
\qquad
\sup_{q\in\mathcal I}
\left\|
\sum_{i<j}(\partial_qC_{ij}^{(2)}(q))^2
\right\|_{\op}
\leq
H_1^2n^3.
\]
Applying the moving-coefficient matrix estimate in
Lemma~\ref{lem:matrix-U} with failure probability \(\delta/6\) and
coefficient scales
\[
K_0:=H_0,
\qquad
K_1:=H_1,
\]
gives an event \(\mathcal E_{H,\op}\) satisfying
\(\mathbb P(\mathcal E_{H,\op}^c)\leq\delta/6\) on which
\[
\sup_{q\in\mathcal I}
\left\|
\nabla_\alpha^2
\cQ(\overline{\boldsymbol\alpha}(q))
\right\|_{\op}
\leq
C_U(H_0+H_1)
\frac{n^{3/2}\log^{5/2}(6n/\delta)}{D}.
\]
Since \(P_{\T}\) is an orthogonal projection, the same bound holds after
multiplication by \(P_{\T}\) on both sides. Hence, by
\eqref{eq:path-log-nongrid},
\begin{equation}\label{eq:path-hessian-op-final}
\sup_{q\in\mathcal I}
\left\|
P_{\T}\nabla_\alpha^2
\cQ(\overline{\boldsymbol\alpha}(q))P_{\T}
\right\|_{\op}
\leq
K_H\frac{n^{3/2}L^{5/2}}{D},
\qquad
K_H:=3^{5/2}C_U(H_0+H_1).
\end{equation}

We now prove the \(2\to\infty\) Hessian bound. Write
\[
H(q):=
\nabla_\alpha^2\cQ(\overline{\boldsymbol\alpha}(q)).
\]
Recall that the off-diagonal entries 
\(C_{ab}^{(2)}(q)\) is supported only on the coordinates \(\{a,b\}\). Hence, for
\(i\neq j\),
\[
H_{ij}(q)
=
\zeta_{ij}\bigl(C_{ij}^{(2)}(q)\bigr)_{ij}.
\]
Combined with \(\|C_{ij}^{(2)}(q)\|_{\op}\leq H_0n\) and
\eqref{eq:path-zeta-max}, on \(\mathcal E_G\),
\[
|H_{ij}(q)|
\leq
K_\zeta H_0\frac{nL}{D}.
\]
Therefore, uniformly over \(q\in\mathcal I\) and \(i\leq n\),
\begin{equation}\label{eq:path-hessian-offdiag-row}
\left(
\sum_{j\neq i}|H_{ij}(q)|^2
\right)^{1/2}
\leq
K_\zeta H_0\frac{n^{3/2}L}{D}.
\end{equation}

It remains to control the diagonal entries. For fixed \(i\), we can write
\[
H_{ii}(q)
=
\sum_{j:\,j\neq i}a_{ij}(q)\zeta_{ij},
\qquad
|a_{ij}(q)|\leq H_0n,
\qquad
|\partial_qa_{ij}(q)|\leq H_1n.
\]
The same conditional Bernstein argument as in
\eqref{eq:path-grad-fixed} gives, for every fixed \(i,q\),
\begin{equation}\label{eq:path-diag-fixed}
\mathbb P\left(
|H_{ii}(q)|
>
C_3\frac{n(\sqrt{ns}+s)}{D}
\right)
\leq
2e^{-s},
\end{equation}
where
\(C_3\geq H_0\max\{c_1^{-1},c_1^{-1/2}\}\). Applying \eqref{eq:path-diag-fixed} with
\(s=\log\frac{12n|\mathcal N|}{\delta}\) and taking a union bound over \(i\leq n\) and
\(q\in\mathcal N\), we obtain an event
\(\mathcal E_{\mathrm{diag},\mathrm{grid}}\) satisfying
\[
\mathbb P(\mathcal E_{\mathrm{diag},\mathrm{grid}}^c)
\leq
\frac{\delta}{6}.
\]
On this event, by \eqref{eq:path-log-grid},
\begin{equation}\label{eq:path-diag-grid}
\max_{q\in\mathcal N}\max_{i\leq n}
|H_{ii}(q)|
\leq
C_3\frac{n(3\sqrt{nL}+5L)}{D}.
\end{equation}

We next pass \eqref{eq:path-diag-grid} from the grid to the whole interval.
Set \(g_i(q):=H_{ii}(q)\). By the derivative bound on \(a_{ij}\) and
\eqref{eq:path-zeta-max}, on \(\mathcal E_G\),
\begin{equation}\label{eq:path-diag-lipschitz}
\sup_{q\in\mathcal I}|g_i'(q)|
\leq
K_\zeta H_1\frac{n^2L}{D}.
\end{equation}
For arbitrary \(q\in\mathcal I\), choose \(q_0\in\mathcal N\) with
\(|q-q_0|\leq n^{-1}\). Combining \eqref{eq:path-diag-grid} and
\eqref{eq:path-diag-lipschitz} gives
\[
|H_{ii}(q)|
\leq
C_3\frac{n(3\sqrt{nL}+5L)}{D}
+
K_\zeta H_1\frac{nL}{D}.
\]
Since
\[
3\sqrt{nL}+5L\leq10\sqrt n\,L
\qquad\text{and}\qquad
nL\leq n^{3/2}L,
\]
we have, uniformly over \(q\in\mathcal I\) and \(i\leq n\),
\begin{equation}\label{eq:path-hessian-diagonal}
|H_{ii}(q)|
\leq
(10C_3+K_\zeta H_1)
\frac{n^{3/2}L}{D}.
\end{equation}

Combining \eqref{eq:path-hessian-offdiag-row} and
\eqref{eq:path-hessian-diagonal}, and using
\(\sqrt{a^2+b^2}\leq a+b\), we obtain
\begin{equation}\label{eq:path-hessian-2inf-raw}
\sup_{q\in\mathcal I}
\|H(q)\|_{2\to\infty}
\leq
K_{H,\infty}^{(0)}
\frac{n^{3/2}L}{D},
\end{equation}
where
\[
K_{H,\infty}^{(0)}
:=
10C_3+K_\zeta H_1+K_\zeta H_0.
\]
Finally,
\[
\|P_{\T}\boldsymbol x\|_\infty
\leq
2\|\boldsymbol x\|_\infty,
\qquad
\|P_{\T}\boldsymbol x\|_2
\leq
\|\boldsymbol x\|_2.
\]
Therefore \eqref{eq:path-hessian-2inf-raw} implies
\begin{equation}\label{eq:path-hessian-2inf-final}
\sup_{q\in\mathcal I}
\left\|
P_{\T}
\nabla_\alpha^2\cQ(\overline{\boldsymbol\alpha}(q))
P_{\T}
\right\|_{2\to\infty}
\leq
K_{H,\infty}\frac{n^{3/2}L}{D},
\end{equation}
where
\[
K_{H,\infty}
:=
2K_{H,\infty}^{(0)}
=
2(10C_3+K_\zeta H_1+K_\zeta H_0).
\]

Finally consider the mixed derivative. By
Lemma~\ref{lem:path-second-coefficients},
\[
\left.
\nabla_\beta\partial_q
\left[
\cQ(\overline{\boldsymbol\alpha}(q)+\boldsymbol\beta)
\right]
\right|_{\boldsymbol\beta=0}
=
\sum_{i<j}\zeta_{ij}\boldsymbol d_{ij}(q).
\]
Set
\[
D_0:=\frac{\sqrt2M_2}{\kappa},
\qquad
D_1:=\frac{2\sqrt2M_3}{\kappa^2}.
\]
Then
\[
\sup_{q\in\mathcal I}\sup_{i<j}
\|\boldsymbol d_{ij}(q)\|_2
\leq
D_0,
\qquad
\sup_{q\in\mathcal I}\sup_{i<j}
\|\partial_q\boldsymbol d_{ij}(q)\|_2
\leq
D_1,
\]
and
\[
\sup_{q\in\mathcal I}
\left\|
\sum_{i<j}
\boldsymbol d_{ij}(q)\boldsymbol d_{ij}(q)^\top
\right\|_{\op}
\leq
D_0^2n^2,
\qquad
\sup_{q\in\mathcal I}
\left\|
\sum_{i<j}
\partial_q\boldsymbol d_{ij}(q)
\partial_q\boldsymbol d_{ij}(q)^\top
\right\|_{\op}
\leq
D_1^2n^2.
\]
For each \(i<j\), introduce the self-adjoint dilation
\[
\mathsf M_{ij}(q)
:=
\begin{pmatrix}
0&\boldsymbol d_{ij}(q)^\top\\
\boldsymbol d_{ij}(q)&0
\end{pmatrix}.
\]
Since
\[
\|\mathsf M_{ij}(q)\|_{\op}
=
\|\boldsymbol d_{ij}(q)\|_2
\leq D_0
\]
and
\[
\left\|
\sum_{i<j}\mathsf M_{ij}(q)^2
\right\|_{\op}
\leq
D_0^2n^2,
\]
with the analogous bounds for \(\partial_q\mathsf M_{ij}(q)\) with
\(D_1\), the moving-coefficient estimate in
Lemma~\ref{lem:matrix-U} applies with coefficient scales
\[
K_0:=\frac{D_0}{\sqrt n},
\qquad
K_1:=\frac{D_1}{\sqrt n}.
\]
Therefore, with failure probability at most \(\delta/6\),
\[
\sup_{q\in\mathcal I}
\left\|
\sum_{i<j}\zeta_{ij}\mathsf M_{ij}(q)
\right\|_{\op}
\leq
C_U(D_0+D_1)
\frac{n\log^{5/2}(6n/\delta)}{D}.
\]
By \eqref{eq:path-log-nongrid}, there is an event
\(\mathcal E_{\nabla q}\) satisfying
\(\mathbb P(\mathcal E_{\nabla q}^c)\leq\delta/6\) on which
\[
\sup_{q\in\mathcal I}
\left\|
\sum_{i<j}\zeta_{ij}\mathsf M_{ij}(q)
\right\|_{\op}
\leq
K_{\nabla q}\frac{nL^{5/2}}{D},
\qquad
K_{\nabla q}
:=
3^{5/2}C_U(D_0+D_1).
\]
By the definition of the dilation,
\[
\left\|
\sum_{i<j}\zeta_{ij}\mathsf M_{ij}(q)
\right\|_{\op}
=
\left\|
\sum_{i<j}\zeta_{ij}\boldsymbol d_{ij}(q)
\right\|_2.
\]
Since \(P_{\T}\) is an orthogonal projection, we conclude that
\begin{equation}\label{eq:path-mixed-final}
\sup_{q\in\mathcal I}
\left\|
P_{\T}
\left.
\nabla_\beta\partial_q
\left[
\cQ(\overline{\boldsymbol\alpha}(q)+\boldsymbol\beta)
\right]
\right|_{\boldsymbol\beta=0}
\right\|_2
\leq
K_{\nabla q}\frac{nL^{5/2}}{D}.
\end{equation}

Define
\[
\mathcal E_0
:=
\mathcal E_{\nabla,\mathrm{grid}}
\cap
\mathcal E_G
\cap
\mathcal E_q
\cap
\mathcal E_{H,\op}
\cap
\mathcal E_{\mathrm{diag},\mathrm{grid}}
\cap
\mathcal E_{\nabla q}.
\]
Each event has complement probability at most \(\delta/6\), so
\[
\mathbb P(\mathcal E_0^c)\leq\delta.
\]
On \(\mathcal E_0\),
\eqref{eq:path-gram-final} and \eqref{eq:path-zeta-max} give the two
geometric bounds in Definition~\ref{def:basic-event}, while
\eqref{eq:path-q-derivative-final},
\eqref{eq:path-gradient-final},
\eqref{eq:path-hessian-op-final},
\eqref{eq:path-hessian-2inf-final}, and
\eqref{eq:path-mixed-final}
give its path bounds. Hence
\[
\mathcal E(
\boldsymbol b_{n,D,\delta},
\varepsilon_{n,D,\delta},
\nu_{n,D,\delta}
)
\]
holds on \(\mathcal E_0\), which proves the lemma.
\end{proof}

\subsection{Uniform strong concavity}

Throughout the rest of the proof, we condition on a realization of the
corruption signs satisfying
\begin{equation}
\label{eq:conditional-label-counts}
\kappa n\leq n_-<n_+\leq(1-\kappa)n
\end{equation}
for some fixed \(0<\kappa<1/2\).

For deterministic event parameters
\(\boldsymbol b,\varepsilon,\nu\), we further work on the basic event
\(\mathcal E(\boldsymbol b,\varepsilon,\nu)\) from
Definition~\ref{def:basic-event}. From this point until the final
probability assembly, all arguments are deterministic under these
conditions.

Define
\begin{equation}
\label{eq:deterministic-scales}
\mathfrak D(r)
:=
\frac{D}{n^2}b_{H,\op}
+nr
+Dn\nu r
+\frac{D}{n}\varepsilon^3.
\end{equation}
Let \(\varrho_0>0\) and \(K_R<\infty\) be the constants from
Lemma~\ref{lem:analytic-remainder} corresponding to the coordinate range
\([c_\alpha/(2n),2C_\alpha/n]\). Throughout, \(r_0\in(0,1/4)\) and the
local window \(\mathcal I\) are fixed as in \eqref{eq:q-window}.

\begin{lemma}[Uniformity over the local box]
\label{lem:local-uniformity}
Let \(r_\infty>0\) and
\[
\mathcal B^\infty(r_\infty)
:=
\{(q,\boldsymbol\beta):q\in\mathcal I,\ \boldsymbol\beta\in\T,
\ \|\boldsymbol\beta\|_\infty\leq r_\infty\}.
\]
There exists \(C_{\mathrm{box}}\geq 1\), depending only on \(\kappa,r_0\), such that, if
\(\mathcal E(\boldsymbol b,\varepsilon,\nu)\) holds,
\(\varepsilon\leq\varrho_0\), and \(nr_\infty\leq c_\alpha/2\), then
\begin{align}
\sup_{(q,\boldsymbol\beta)\in\mathcal B^\infty(r_\infty)}
\|P_{\T}[\nabla_\alpha^2\overline{\cG}(\overline{\boldsymbol\alpha}(q)+\boldsymbol\beta)
-\nabla_\alpha^2\overline{\cG}(\overline{\boldsymbol\alpha}(q))]P_{\T}\|_{\op}
&\leq C_{\mathrm{box}}\frac{n^3}{D}r_\infty,
\label{eq:mean-hessian-variation-op}\\
\sup_{(q,\boldsymbol\beta)\in\mathcal B^\infty(r_\infty)}
\|P_{\T}[\nabla_\alpha^2\overline{\cG}(\overline{\boldsymbol\alpha}(q)+\boldsymbol\beta)
-\nabla_\alpha^2\overline{\cG}(\overline{\boldsymbol\alpha}(q))]P_{\T}\|_{\infty\to\infty}
&\leq C_{\mathrm{box}}\frac{n^3}{D}r_\infty,
\label{eq:mean-hessian-variation-inf}
\end{align}
\begin{align}
\sup_{(q,\boldsymbol\beta)\in\mathcal B^\infty(r_\infty)}
\|\nabla_\alpha^2\cQ(\overline{\boldsymbol\alpha}(q)+\boldsymbol\beta)
-\nabla_\alpha^2\cQ(\overline{\boldsymbol\alpha}(q))\|_{\op}
&\leq C_{\mathrm{box}}n^3\nu r_\infty,
\label{eq:local-Q-op}\\
\sup_{(q,\boldsymbol\beta)\in\mathcal B^\infty(r_\infty)}
\|\nabla_\alpha^2\cQ(\overline{\boldsymbol\alpha}(q)+\boldsymbol\beta)
-\nabla_\alpha^2\cQ(\overline{\boldsymbol\alpha}(q))\|_{\infty\to\infty}
&\leq C_{\mathrm{box}}n^3\nu r_\infty,
\label{eq:local-Q-inf}
\end{align}
and
\begin{align}
\sup_{(q,\boldsymbol\beta)\in\mathcal B^\infty(r_\infty)}
\|\nabla_\alpha^2R(\overline{\boldsymbol\alpha}(q)+\boldsymbol\beta)\|_{\op}
&\leq C_{\mathrm{box}}n\varepsilon^3,
\label{eq:local-R-op}\\
\sup_{(q,\boldsymbol\beta)\in\mathcal B^\infty(r_\infty)}
\|\nabla_\alpha^2R(\overline{\boldsymbol\alpha}(q)+\boldsymbol\beta)\|_{\infty\to\infty}
&\leq C_{\mathrm{box}}n\varepsilon^3.
\label{eq:local-R-inf}
\end{align}
\end{lemma}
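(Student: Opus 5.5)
The plan is to prove all six bounds by a single entrywise row-sum argument (the $\infty\to\infty$ norm), get the operator-norm bounds from symmetry, and obtain the remainder bounds \eqref{eq:local-R-op}--\eqref{eq:local-R-inf} directly from Lemma~\ref{lem:analytic-remainder}.

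\textbf{Step 1: the box stays in a compact coordinate range.} Fix $(q,\boldsymbol\beta)\in\mathcal B^\infty(r_\infty)$ and set $\boldsymbol\alpha:=\overline{\boldsymbol\alpha}(q)+\boldsymbol\beta$. By Lemma~\ref{lem:path-geometry}, $c_\alpha/n\le\overline\alpha_i(q)\le C_\alpha/n$. Since $nr_\infty\le c_\alpha/2$, every coordinate satisfies $\alpha_i\in[c_\alpha/(2n),\,2C_\alpha/n]$. Since $\boldsymbol\beta\in\T$, we still have $\sum_i\alpha_i=1$.

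Repeating the computation of Lemma~\ref{lem:path-geometry} on this enlarged range gives constants $M_2',M_3'$, depending only on $\kappa,r_0$, such that $|h_{ab}|\le M_2'n$ and $|h_{abc}|\le M_3'n^2$ for $a,b,c\in\{x,y\}$, at every point of every segment $[\overline{\boldsymbol\alpha}(q),\overline{\boldsymbol\alpha}(q)+\boldsymbol\beta]$.

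\textbf{Step 2: mean and fluctuation Hessians \eqref{eq:mean-hessian-variation-op}--\eqref{eq:local-Q-inf}.} Both $\nabla^2\cH$ and $\nabla^2\cQ$ are sums over pairs $i<j$ of $2\times2$ blocks supported on $\{i,j\}$, with entries $h_{ab}(\alpha_i,\alpha_j)$ weighted by $1$ and by $\tfrac12\zeta_{ij}$ respectively. By the mean value theorem and Step 1, each pair's block entries change between $\overline{\boldsymbol\alpha}(q)$ and $\boldsymbol\alpha$ by at most
\[
M_3'n^2\bigl(|\beta_i|+|\beta_j|\bigr)\le 2M_3'n^2r_\infty .
\]
Now count contributions to a single row $i$ of the difference matrix:
\begin{itemize}
\item the off-diagonal entry $(i,j)$ receives only the single pair $\{i,j\}$;
\item the diagonal entry $(i,i)$ receives the $n-1$ pairs containing $i$.
\end{itemize}
Hence the absolute row sum is at most $2(n-1)\cdot 2M_3'n^2r_\infty\le Cn^3r_\infty$. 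For the $\cQ$ Hessian the same count, with the extra factor $\tfrac12|\zeta_{ij}|\le\tfrac12\nu$ from \eqref{eq:gram-event-zeta}, gives $Cn^3\nu r_\infty$. For $\overline{\cG}=\cH/D$ it gives $Cn^3r_\infty/D$.

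These differences are symmetric matrices, so $\|\cdot\|_{\op}\le\|\cdot\|_{\infty\to\infty}$ (Schur test, or Lemma~\ref{lem:block-row-sum} with $1\times1$ blocks), which yields the operator-norm versions. For the projected statements \eqref{eq:mean-hessian-variation-op}--\eqref{eq:mean-hessian-variation-inf}, use $\|P_{\T}\|_{\op}\le1$ and $\|P_{\T}\|_{\infty\to\infty}\le2$. The latter holds because $P_{\T}$ subtracts within-group means, so its row sums are at most $2$.

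\textbf{Step 3: the remainder \eqref{eq:local-R-op}--\eqref{eq:local-R-inf}.} By Step 1, $\boldsymbol\alpha$ satisfies the hypotheses of Lemma~\ref{lem:analytic-remainder} with $c=c_\alpha/2$ and $C=2C_\alpha$. On the event, $\|\Delta_U\|_{\op},\|\Delta_V\|_{\op}\le\varepsilon\le\varrho_0$, so we may take $\varrho=\varepsilon$. The bounds $\|\nabla^2R\|_{\op},\|\nabla^2R\|_{\infty\to\infty}\le K_Rn\varepsilon^3$ in \eqref{eq:analytic-remainder-bounds} then give the claim, uniformly over the box. Finally, take $C_{\mathrm{box}}$ to be the maximum of the constants from Steps 2 and 3, and at least $1$.

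\textbf{Main point of care.} No step is a serious obstacle. The one place to be careful is the diagonal row-sum count: it aggregates $n$ pair contributions, each of size $n^2r_\infty$, and this is what produces the $n^3$ scale. The other check is that the enlarged coordinate range keeps the constants in the derivative bounds for $h$ and in Lemma~\ref{lem:analytic-remainder} dependent only on $\kappa,r_0$. No probabilistic input is used beyond \eqref{eq:gram-event-gram}--\eqref{eq:gram-event-zeta}.
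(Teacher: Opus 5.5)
Your proposal is correct and follows the paper's proof essentially step by step. Both use the enlarged coordinate window $[c_\alpha/(2n),2C_\alpha/n]$ and the $O(n^2)$ third-derivative bound on $h$ with the mean value theorem. Both then count row sums, where the diagonal entries aggregate the $n-1$ pairs through a coordinate to give the $n^3$ scale, and use symmetry to pass from $\|\cdot\|_{\infty\to\infty}$ to $\|\cdot\|_{\op}$. The projected mean terms follow from $\|P_{\T}\|_{\infty\to\infty}\le 2$, since the mean case is the $\cQ$ case with $\zeta_{ij}$ replaced by $2/D$. Finally, both invoke Lemma~\ref{lem:analytic-remainder} directly with $\varrho=\varepsilon$ for $R$. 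You also check explicitly that $\sum_i\alpha_i=1$ persists on the box, which that lemma requires; the paper leaves this implicit.
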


\begin{proof}
By Lemma~\ref{lem:path-geometry} and \(nr_\infty\leq c_\alpha/2\), every
coordinate in the local box satisfies
\begin{equation}
\label{eq:local-coordinate-window}
\frac{c_\alpha}{2n}
\leq
\overline\alpha_i(q)+\beta_i
\leq
\frac{2C_\alpha}{n}.
\end{equation}
Hence every third derivative of \(h\) on this region is bounded in absolute
value by \(C_hn^2\), where \(C_h\) depends only on \(\kappa,r_0\).

First we control the local variation of \(\cQ\).  Let
\[
A_Q(q,\boldsymbol\beta)
:=
\nabla_\alpha^2\cQ(\overline{\boldsymbol\alpha}(q)+\boldsymbol\beta)
-
\nabla_\alpha^2\cQ(\overline{\boldsymbol\alpha}(q)).
\]
For \(i<j\), let \(C_{ij}^{(2)}(\boldsymbol\alpha)\) denote the pairwise
Hessian block, supported on \(\{i,j\}\times\{i,j\}\).  By the mean-value
theorem and the preceding third-derivative bound, for
\((k,\ell)\in\{i,j\}\times\{i,j\}\),
\begin{equation}
\label{eq:local-Cij-difference}
\left|
[C_{ij}^{(2)}(\overline{\boldsymbol\alpha}(q)+\boldsymbol\beta)
-C_{ij}^{(2)}(\overline{\boldsymbol\alpha}(q))]_{k\ell}
\right|
\leq C_hn^2r_\infty.
\end{equation}
By Definition~\ref{def:basic-event},
\(\max_{i<j}|\zeta_{ij}|\leq\nu\).  For each row, only pairs containing that
coordinate contribute, and each such pair contributes to at most two entries.
Thus \eqref{eq:local-Cij-difference} gives, uniformly on the local box,
\begin{equation}
\label{eq:local-Q-row}
\|A_Q(q,\boldsymbol\beta)\|_{\infty\to\infty}
\leq2C_hn^3\nu r_\infty.
\end{equation}
Since \(A_Q(q,\boldsymbol\beta)\) is symmetric, its operator norm is bounded
by the same row-sum bound.  This proves \eqref{eq:local-Q-op} and
\eqref{eq:local-Q-inf} after enlarging \(C_{\mathrm{box}}\).

The corresponding bounds \eqref{eq:mean-hessian-variation-op}--\eqref{eq:mean-hessian-variation-inf} follow from the same argument, replacing $\zeta_{ij}$ with deterministic $2/D$ and noting \(P_{\T}\) is an orthogonal projection and
\(\|P_{\T}x\|_\infty\leq2\|x\|_\infty\), whence
\(\|P_{\T}AP_{\T}\|_{\infty\to\infty}\leq4\|A\|_{\infty\to\infty}\).

It remains to control the analytic remainder.  By Definition~\ref{def:basic-event},
\(\|\Delta_U\|_{\op}\vee\|\Delta_V\|_{\op}\leq\varepsilon\leq\varrho_0\).
Therefore Lemma~\ref{lem:analytic-remainder} applies uniformly on the
coordinate window \eqref{eq:local-coordinate-window}, and therefore
\[
\sup_{\mathcal B^\infty(r_\infty)}
\|\nabla_\alpha^2R\|_{\op}
\vee
\sup_{\mathcal B^\infty(r_\infty)}
\|\nabla_\alpha^2R\|_{\infty\to\infty}
\leq K_Rn\varepsilon^3.
\]
Choosing \(C_{\mathrm{box}}\geq\max\{1, 16C_h,K_R\}\) proves
\eqref{eq:local-R-op} and \eqref{eq:local-R-inf}.
\end{proof}

\begin{proposition}[Uniform strong concavity]
\label{prop:strong-concavity}
Let \(c_H>0\) be the constant from Lemma~\ref{lem:mean-curvature} such that
\(-P_{\T}\nabla_\alpha^2\overline{\cG}(\overline{\boldsymbol\alpha}(q))P_{\T}
\succeq c_H(n^2/D)I_{\T}\) uniformly over \(q\in\mathcal I\).
There exists \(c_{\mathrm{sc}}>0\), depending only on \(\kappa,r_0\), such
that the following holds.  Suppose
\(\mathcal E(\boldsymbol b,\varepsilon,\nu)\) holds,
\(\varepsilon\leq\varrho_0\), and
\begin{equation}
\label{eq:strong-concavity-smallness}
\mathfrak D(r_\infty)\leq c_{\mathrm{sc}}.
\end{equation}
Then, uniformly over \((q,\boldsymbol\beta)\in\mathcal B^\infty(r_\infty)\),
\[
-P_{\T}\nabla_\alpha^2\cG(\overline{\boldsymbol\alpha}(q)+\boldsymbol\beta)P_{\T}
\succeq
\frac{c_H}{2}\frac{n^2}{D}I_{\T}.
\]
\end{proposition}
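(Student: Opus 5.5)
We decompose the Hessian using \eqref{eq:noise-gap-decomposition}, $\cG=\overline{\cG}+\cQ-R$. At a point $\boldsymbol\alpha=\overline{\boldsymbol\alpha}(q)+\boldsymbol\beta$ of the local box we write
\[
-P_{\T}\nabla_\alpha^2\cG(\boldsymbol\alpha)P_{\T}
=
-P_{\T}\nabla_\alpha^2\overline{\cG}(\overline{\boldsymbol\alpha}(q))P_{\T}
-E_1-E_2-E_3+E_4 .
\]
The error terms are
\[
E_1:=P_{\T}\bigl[\nabla_\alpha^2\overline{\cG}(\boldsymbol\alpha)-\nabla_\alpha^2\overline{\cG}(\overline{\boldsymbol\alpha}(q))\bigr]P_{\T},
\qquad
E_2:=P_{\T}\nabla_\alpha^2\cQ(\overline{\boldsymbol\alpha}(q))P_{\T},
\]
\[
E_3:=P_{\T}\bigl[\nabla_\alpha^2\cQ(\boldsymbol\alpha)-\nabla_\alpha^2\cQ(\overline{\boldsymbol\alpha}(q))\bigr]P_{\T},
\qquad
E_4:=P_{\T}\nabla_\alpha^2R(\boldsymbol\alpha)P_{\T}.
\]
The leading term is bounded below by $c_H(n^2/D)I_{\T}$, by Lemma~\ref{lem:mean-curvature} and the hypothesis of the proposition. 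It therefore suffices to show that $\|E_1\|_{\op}+\|E_2\|_{\op}+\|E_3\|_{\op}+\|E_4\|_{\op}\le \frac{c_H}{2}\frac{n^2}{D}$. For this we use that $\|P_{\T}AP_{\T}\|_{\op}\le\|A\|_{\op}$ for symmetric $A$, and that a PSD lower bound minus an operator-norm perturbation gives a shifted lower bound on $\T$.

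We bound each term separately. First, $\|E_2\|_{\op}\le b_{H,\op}$ directly from \eqref{eq:path-event-hessian} on the basic event. Next, we need the box hypothesis $n r_\infty\le c_\alpha/2$ in order to apply Lemma~\ref{lem:local-uniformity}; this follows from $nr\le\mathfrak D(r)\le c_{\mathrm{sc}}$ once $c_{\mathrm{sc}}\le c_\alpha/2$. Lemma~\ref{lem:local-uniformity} then gives the remaining three bounds: $\|E_1\|_{\op}\le C_{\mathrm{box}}\frac{n^3}{D}r_\infty$ by \eqref{eq:mean-hessian-variation-op}, $\|E_3\|_{\op}\le C_{\mathrm{box}}n^3\nu r_\infty$ by \eqref{eq:local-Q-op}, and $\|E_4\|_{\op}\le C_{\mathrm{box}}n\varepsilon^3$ by \eqref{eq:local-R-op}. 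The hypothesis $\varepsilon\le\varrho_0$ is what allows the remainder bound.

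Dividing the total error by the curvature scale $n^2/D$ gives
\[
\frac{D}{n^2}b_{H,\op}+C_{\mathrm{box}}\Bigl(nr_\infty+Dn\nu r_\infty+\frac{D}{n}\varepsilon^3\Bigr)\le C_{\mathrm{box}}\mathfrak D(r_\infty),
\]
using $C_{\mathrm{box}}\ge1$. The right-hand side is exactly the quantity $\mathfrak D$ of \eqref{eq:deterministic-scales}. Choosing $c_{\mathrm{sc}}:=\min\{c_\alpha/2,\ c_H/(2C_{\mathrm{box}})\}$ makes the total error at most $\frac{c_H}{2}\frac{n^2}{D}$, and the claimed bound follows. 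This constant depends only on $\kappa$ and $r_0$, since $c_\alpha$, $c_H$ and $C_{\mathrm{box}}$ do.

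No step here is a real obstacle, because the substantive work sits in Lemmas~\ref{lem:mean-curvature}, \ref{lem:path-derivatives} and \ref{lem:local-uniformity}. The only points needing care are bookkeeping ones. First, the lower bound of Lemma~\ref{lem:mean-curvature} for $\overline{\cG}=\cH/D$ must be used uniformly over $q\in\mathcal I$. Second, all estimates must be taken in the sense of quadratic forms restricted to $\T$. Third, the coordinate window of Lemma~\ref{lem:analytic-remainder} must match the enlarged range $[c_\alpha/(2n),2C_\alpha/n]$ used in \eqref{eq:local-coordinate-window}.
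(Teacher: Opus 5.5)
Your proof is correct and follows essentially the same route as the paper: the same decomposition via \eqref{eq:noise-gap-decomposition} into the mean curvature at the path plus four error terms, bounded using Lemma~\ref{lem:mean-curvature}, the basic event, and Lemma~\ref{lem:local-uniformity}, with the box hypothesis obtained from $nr_\infty\le\mathfrak D(r_\infty)\le c_{\mathrm{sc}}\le c_\alpha/2$. The only cosmetic difference is bookkeeping. The paper budgets each of three error terms at $\frac{c_H}{8}\frac{n^2}{D}$ after absorbing the mean variation into a $\frac{7c_H}{8}$ lower bound, taking $c_{\mathrm{sc}}\le c_H/(8C_{\mathrm{box}})$. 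You instead bound the total error at once by $C_{\mathrm{box}}\mathfrak D(r_\infty)\frac{n^2}{D}$ and take $c_{\mathrm{sc}}\le c_H/(2C_{\mathrm{box}})$.
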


\begin{proof}
Let \(C_{\mathrm{box}}\geq 1\) be the constant from Lemma~\ref{lem:local-uniformity}.
Choose \(c_{\mathrm{sc}}>0\) so small that
\[
c_{\mathrm{sc}}
\leq
\min\left\{
\frac{c_\alpha}{2},
\frac{c_H}{8C_{\mathrm{box}}}
\right\}.
\]
Since \(nr_\infty\leq\mathfrak D(r_\infty)\leq c_{\mathrm{sc}}\), the smallness assumption
also guarantees the coordinate condition needed below.

First, Lemma~\ref{lem:mean-curvature} and
Lemma~\ref{lem:local-uniformity} give
\[
-P_{\T}\nabla_\alpha^2\overline{\cG}
(\overline{\boldsymbol\alpha}(q)+\boldsymbol\beta)P_{\T}
\succeq
\left(c_H - C_{\mathrm{box}}nr_\infty\right)\frac{n^2}{D}I_{\T}
\succeq
\frac{7c_H}{8}\frac{n^2}{D}I_{\T}.
\]
Next, by Definition~\ref{def:basic-event},
\[
\sup_{q\in\mathcal I}
\|P_{\T}\nabla_\alpha^2\cQ(\overline{\boldsymbol\alpha}(q))P_{\T}\|_{\op}
\leq b_{H,\op}
\leq
\frac{c_H}{8}\frac{n^2}{D},
\]
where the last inequality follows from
\((D/n^2)b_{H,\op}\leq\mathfrak D(r_\infty)\leq c_{\mathrm{sc}}\) and $C_{\mathrm{box}}\geq 1$.
Lemma~\ref{lem:local-uniformity} similarly gives
\[
\sup_{\mathcal B^\infty(r_\infty)}
\|P_{\T}[\nabla_\alpha^2\cQ(\overline{\boldsymbol\alpha}(q)+\boldsymbol\beta)
-\nabla_\alpha^2\cQ(\overline{\boldsymbol\alpha}(q))]P_{\T}\|_{\op}
\leq
\frac{c_H}{8}\frac{n^2}{D},
\]
because \(Dn\nu r_\infty\leq\mathfrak D(r_\infty)\), and
\[
\sup_{\mathcal B^\infty(r_\infty)}
\|P_{\T}\nabla_\alpha^2R(\overline{\boldsymbol\alpha}(q)+\boldsymbol\beta)P_{\T}\|_{\op}
\leq
\frac{c_H}{8}\frac{n^2}{D},
\]
because \((D/n)\varepsilon^3\leq\mathfrak D(r_\infty)\).

Finally, by \eqref{eq:noise-gap-decomposition},
\[
-P_{\T}\nabla_\alpha^2\cG P_{\T}
=
-P_{\T}\nabla_\alpha^2\overline{\cG}P_{\T}
-P_{\T}\nabla_\alpha^2\cQ P_{\T}
+P_{\T}\nabla_\alpha^2R P_{\T}.
\]
Combining the preceding four bounds yields
\(-P_{\T}\nabla_\alpha^2\cG P_{\T}
\succeq(c_H/2)(n^2/D)I_{\T}\), as claimed.
\end{proof}

\subsection{The unique inner optimizer}

For fixed \(q\in\mathcal I\), we optimize
\(\boldsymbol\beta\mapsto
\cG(\overline{\boldsymbol\alpha}(q)+\boldsymbol\beta)\)
over the tangent space \(\T\).  Its first-order condition is
\[
F_q(\boldsymbol\beta)
:=
P_{\T}\nabla\cG(
\overline{\boldsymbol\alpha}(q)+\boldsymbol\beta)
=0.
\]
We linearize this equation around \(\boldsymbol\beta=0\), using as the
reference Hessian the deterministic mean curvature
\[
L_q
:=
-P_{\T}\nabla_\alpha^2\overline{\cG}
(\overline{\boldsymbol\alpha}(q))P_{\T}|_{\T}.
\]
By Lemma~\ref{lem:mean-curvature}, \(L_q\) is uniformly positive definite
and therefore invertible on \(\T\).  We collect the discrepancy between
\(F_q\) and this linear approximation into
\[
\mathfrak r_q(\boldsymbol\beta)
:=
F_q(\boldsymbol\beta)-F_q(0)+L_q\boldsymbol\beta.
\]
Thus
\[
F_q(\boldsymbol\beta)
=
F_q(0)-L_q\boldsymbol\beta+\mathfrak r_q(\boldsymbol\beta),
\]
where \(F_q(0)\) is the forcing term at the reference path,
\(L_q\boldsymbol\beta\) is the leading linear response, and
\(\mathfrak r_q(\boldsymbol\beta)\) contains the remaining local variation.
Thus solving \(F_q(\boldsymbol\beta)=0\) is equivalent to finding a fixed point of
\[
T_q(\boldsymbol\beta)
:=
L_q^{-1}\bigl[F_q(0)+\mathfrak r_q(\boldsymbol\beta)\bigr].
\]
We will prove existence and uniqueness of the inner optimizer by showing that
\(T_q\) is a contraction on a suitable neighborhood of the origin.

\begin{lemma}[Mixed-norm control of the fixed-point remainder]
\label{lem:fp-infty-remainder}
There exists \(C_{\mathrm{rem}}>0\), depending only on \(\kappa,r_0\), such
that the following holds.  Suppose
\(\mathcal E(\boldsymbol b,\varepsilon,\nu)\) holds,
\(\varepsilon\leq\varrho_0\), and
$\mathfrak D(r_\infty)\leq c_{\mathrm{sc}}$.
Then, uniformly over \(q\in\mathcal I\) and
\(\boldsymbol\beta,\widetilde{\boldsymbol\beta}\in\mathcal B^\infty(r_\infty)\),
\begin{align}
\|T_q(\boldsymbol\beta) - T_q(\widetilde{\boldsymbol\beta})\|_2
&\leq C_{\mathrm{rem}}\mathfrak D(r_\infty)
\|\boldsymbol\beta-\widetilde{\boldsymbol\beta}\|_2,
\label{eq:rem-2}\\
\|T_q(\boldsymbol\beta) - T_q(\widetilde{\boldsymbol\beta})\|_\infty
&\leq C_{\mathrm{rem}}\frac{D}{n^2}b_{H,2\to\infty}
\|\boldsymbol\beta-\widetilde{\boldsymbol\beta}\|_2
+C_{\mathrm{rem}}\mathfrak D(r_\infty)
\|\boldsymbol\beta-\widetilde{\boldsymbol\beta}\|_\infty.
\label{eq:rem-inf}
\end{align}
\end{lemma}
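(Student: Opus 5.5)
Since $T_q(\boldsymbol\beta)-T_q(\widetilde{\boldsymbol\beta})=L_q^{-1}[\mathfrak r_q(\boldsymbol\beta)-\mathfrak r_q(\widetilde{\boldsymbol\beta})]$, the plan is to write the remainder difference as an integrated Hessian deviation and then apply the two inverse bounds for $L_q$ from Lemma~\ref{lem:mean-curvature}. By the definition of $\mathfrak r_q$ and the fundamental theorem of calculus along the segment $\boldsymbol\beta_t:=\widetilde{\boldsymbol\beta}+t(\boldsymbol\beta-\widetilde{\boldsymbol\beta})$, which stays in the convex box $\mathcal B^\infty(r_\infty)$, we have
\[
\mathfrak r_q(\boldsymbol\beta)-\mathfrak r_q(\widetilde{\boldsymbol\beta})
=\int_0^1 \bigl[P_{\T}\nabla^2\cG(\overline{\boldsymbol\alpha}(q)+\boldsymbol\beta_t)P_{\T}+L_q\bigr](\boldsymbol\beta-\widetilde{\boldsymbol\beta})\,\dd t .
\]
Using \eqref{eq:noise-gap-decomposition} and $L_q=-P_{\T}\nabla^2\overline{\cG}(\overline{\boldsymbol\alpha}(q))P_{\T}$, the bracket splits as $E_1+E_2+E_3+E_4$. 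Here $E_1$ is the mean-Hessian variation $P_{\T}[\nabla^2\overline{\cG}(\cdot+\boldsymbol\beta_t)-\nabla^2\overline{\cG}(\cdot)]P_{\T}$. The term $E_2=P_{\T}\nabla^2\cQ(\overline{\boldsymbol\alpha}(q))P_{\T}$ is the on-path fluctuation. The term $E_3$ is the $\cQ$-Hessian variation, and $E_4=-P_{\T}\nabla^2R\,P_{\T}$.

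\emph{Euclidean bound.} Lemma~\ref{lem:local-uniformity} gives operator-norm bounds on the box. These are $C\frac{n^3}{D}r_\infty$ for $E_1$, $C n^3\nu r_\infty$ for $E_3$ and $Cn\varepsilon^3$ for $E_4$. The event bound \eqref{eq:path-event-hessian} gives $b_{H,\op}$ for $E_2$. Multiplying by $\|L_q^{-1}\|_{\op}\le CD/n^2$ turns these into
\[
\frac{D}{n^2}b_{H,\op},\qquad nr_\infty,\qquad Dn\nu r_\infty,\qquad \frac Dn\varepsilon^3
\]
respectively. These are exactly the four summands of $\mathfrak D(r_\infty)$, which yields \eqref{eq:rem-2}.

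\emph{Mixed-norm bound.} I would use $\|L_q^{-1}\boldsymbol x\|_\infty\le C\frac{D}{n^2}\|\boldsymbol x\|_\infty$, valid for $\boldsymbol x\in\T$; the images lie in $\T$ because of the outer $P_{\T}$. For $E_1,E_3,E_4$ I apply the $\infty\to\infty$ bounds \eqref{eq:mean-hessian-variation-inf}, \eqref{eq:local-Q-inf} and \eqref{eq:local-R-inf}, after projecting with $\|P_{\T}AP_{\T}\|_{\infty\to\infty}\le4\|A\|_{\infty\to\infty}$. This produces $C\mathfrak D(r_\infty)\|\boldsymbol\beta-\widetilde{\boldsymbol\beta}\|_\infty$. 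The term $E_2$ has no good $\infty\to\infty$ control, only the $2\to\infty$ bound $b_{H,2\to\infty}$ from \eqref{eq:path-event-hessian}. There I would use $\|E_2\boldsymbol y\|_\infty\le b_{H,2\to\infty}\|\boldsymbol y\|_2$, which gives the term $C\frac{D}{n^2}b_{H,2\to\infty}\|\boldsymbol\beta-\widetilde{\boldsymbol\beta}\|_2$ in \eqref{eq:rem-inf}.

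\emph{Main obstacle.} The delicate point is checking the hypotheses that make every $\boldsymbol\beta_t$ admissible. Lemma~\ref{lem:local-uniformity} requires $nr_\infty\le c_\alpha/2$, which follows from $\mathfrak D(r_\infty)\le c_{\mathrm{sc}}\le c_\alpha/2$. It also requires $\varepsilon\le\varrho_0$, which is assumed. I also need the $\infty\to\infty$ variant of Lemma~\ref{lem:mean-curvature}'s inverse bound to apply to projected vectors. Since $L_q$ acts as a scalar on each within-group zero-sum subspace, this holds with constants depending only on $\kappa,r_0$, and $C_{\mathrm{rem}}$ collects them.
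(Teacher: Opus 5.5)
Your proposal is correct and follows the paper's proof essentially verbatim. It uses the same fundamental-theorem-of-calculus representation of $\mathfrak r_q(\boldsymbol\beta)-\mathfrak r_q(\widetilde{\boldsymbol\beta})$, the same four-way split (mean-Hessian variation, frozen $\cQ$-Hessian on the path, local $\cQ$-variation, remainder Hessian) combined with the two inverse bounds for $L_q$, and the same use of the $2\to\infty$ event bound for the frozen term in the $\ell_\infty$ estimate. Your explicit check that $nr_\infty\le\mathfrak D(r_\infty)\le c_{\mathrm{sc}}\le c_\alpha/2$, which makes Lemma~\ref{lem:local-uniformity} applicable along the whole segment, is a detail the paper leaves implicit.
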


\begin{proof}
First note that $T_q(\boldsymbol\beta) - T_q(\widetilde{\boldsymbol\beta}) = L_q^{-1}\bigl[\mathfrak r_q(\boldsymbol\beta) - \mathfrak r_q(\widetilde{\boldsymbol\beta})\bigr]$.

By Lemma~\ref{lem:mean-curvature}, uniformly over \(q\in\mathcal I\),
\begin{equation}
\label{eq:rem-inverse-bounds}
\|L_q^{-1}\|_{2\to2}\leq C_L\frac{D}{n^2},
\qquad
\|L_q^{-1}\boldsymbol x\|_\infty
\leq C_L\frac{D}{n^2}\|\boldsymbol x\|_\infty.
\end{equation}
Fix \(q\in\mathcal I\) and
\(\boldsymbol\beta,\widetilde{\boldsymbol\beta}\in\mathcal B^\infty(r_\infty)\).
Set \(\boldsymbol h:=\boldsymbol\beta-\widetilde{\boldsymbol\beta}\) and
\(\boldsymbol\gamma_t:=\widetilde{\boldsymbol\beta}+t\boldsymbol h\).
Since \(\mathcal BB^\infty(r_\infty)\) is convex, the fundamental theorem of
calculus gives
\begin{equation}
\label{eq:rem-ftc}
\mathfrak r_q(\boldsymbol\beta)-\mathfrak r_q(\widetilde{\boldsymbol\beta})
=
\int_0^1
P_{\T}
[\nabla_\alpha^2\cG(\overline{\boldsymbol\alpha}(q)+\boldsymbol\gamma_t)
-\nabla_\alpha^2\overline{\cG}(\overline{\boldsymbol\alpha}(q))]
P_{\T}\boldsymbol h\,dt.
\end{equation}
Using \eqref{eq:noise-gap-decomposition}, we estimate the four contributions
in \eqref{eq:rem-ftc} separately.

For the deterministic mean variation, Lemma~\ref{lem:local-uniformity}
and \eqref{eq:rem-inverse-bounds} give, uniformly in \(t\),
\[
\|L_q^{-1}P_{\T}[\nabla^2\overline{\cG}
(\overline{\boldsymbol\alpha}(q)+\boldsymbol\gamma_t)
-\nabla^2\overline{\cG}(\overline{\boldsymbol\alpha}(q))]P_{\T}\boldsymbol h\|_2
\leq C_LC_{\mathrm{box}}nr_\infty\|\boldsymbol h\|_2,
\]
and the same coefficient multiplies \(\|\boldsymbol h\|_\infty\) in the
\(\ell_\infty\)-bound.

For the frozen centered Hessian, the path bounds in
Definition~\ref{def:basic-event} and \eqref{eq:rem-inverse-bounds} yield
\[
\|L_q^{-1}P_{\T}\nabla^2\cQ(\overline{\boldsymbol\alpha}(q))P_{\T}\boldsymbol h\|_2
\leq
C_L\frac{D}{n^2}b_{H,\op}\|\boldsymbol h\|_2,
\]
and
\[
\|L_q^{-1}P_{\T}\nabla^2\cQ(\overline{\boldsymbol\alpha}(q))P_{\T}\boldsymbol h\|_\infty
\leq
C_L\frac{D}{n^2}b_{H,2\to\infty}\|\boldsymbol h\|_2.
\]
For the local centered-Hessian variation,
Lemma~\ref{lem:local-uniformity}, \eqref{eq:rem-inverse-bounds}, and
\(\|P_{\T}AP_{\T}\|_{\infty\to\infty}\leq4\|A\|_{\infty\to\infty}\)
give coefficients bounded by
\(C_LC_{\mathrm{box}}Dn\nu r_\infty\) in \(\ell_2\) and by
\(4C_LC_{\mathrm{box}}Dn\nu r_\infty\) in \(\ell_\infty\).
Finally, the remainder-Hessian terms are bounded in the same way by
\(C_LC_{\mathrm{box}}(D/n)\varepsilon^3\) and
\(4C_LC_{\mathrm{box}}(D/n)\varepsilon^3\), respectively.

Thus, with
\[
C_{\mathrm{rem}}
:=
4C_LC_{\mathrm{box}},
\]
combining the four contributions in \eqref{eq:rem-ftc} gives
\eqref{eq:rem-2}--\eqref{eq:rem-inf} by the definition of
\(\mathfrak D\). 
\end{proof}

For \(r_2,r_\infty>0\), write
\[
\mathcal B(r_2,r_\infty)
:=
\{\boldsymbol\beta\in\T:\|\boldsymbol\beta\|_2\leq r_2,
\ \|\boldsymbol\beta\|_\infty\leq r_\infty\}.
\]

For deterministic event parameters \(\boldsymbol b,\varepsilon,\nu\), define
\begin{equation}
\label{eq:deterministic-R-scale}
\mathfrak R
:=
\frac{D}{n^{3/2}}\bigl(b_\nabla+\varepsilon^3\bigr).
\end{equation}

\begin{proposition}[Slice optimizer, regularity, and envelope formula]
\label{prop:inner}
There exist constants \(c_{\mathrm{fp}},C_{\mathrm{in}}>0\), depending only on
\(\kappa,r_0\), such that the following holds. Suppose
\(\mathcal E(\boldsymbol b,\varepsilon,\nu)\) holds,
\(\varepsilon\leq\varrho_0\), and
\begin{equation}
\label{eq:inner-deterministic-smallness}
\mathfrak D\!\left(
C_{\mathrm{in}}
\left(\frac{1}{\sqrt n}+\frac{D}{n^2}b_{H,2\to\infty}\right)
\mathfrak R
\right)
\leq c_{\mathrm{fp}}.
\end{equation}
Then, uniformly over \(q\in\mathcal I\), Proposition~\ref{prop:strong-concavity}
applies on
\[
\left\{
\boldsymbol\beta\in\T:
\|\boldsymbol\beta\|_\infty
\leq
C_{\mathrm{in}}
\left(\frac{1}{\sqrt n}+\frac{D}{n^2}b_{H,2\to\infty}\right)
\mathfrak R
\right\}.
\]
Moreover, the slice
$
\mathcal S_q
:=
\left\{
\boldsymbol\beta\in\T:
\overline{\boldsymbol\alpha}(q)+\boldsymbol\beta\in\D_n
\right\}
$
admits a unique maximizer
\[
\boldsymbol\beta^\star(q)
:=
\arg\max_{\boldsymbol\beta\in\mathcal S_q}
\cG(\overline{\boldsymbol\alpha}(q)+\boldsymbol\beta).
\]
It is interior and satisfies
\begin{equation}
\label{eq:inner-size}
\|\boldsymbol\beta^\star(q)\|_2
\leq
\frac{C_{\mathrm{in}}}{2}\mathfrak R,
\qquad
\|\boldsymbol\beta^\star(q)\|_\infty
\leq
\frac{C_{\mathrm{in}}}{2}
\left(\frac{1}{\sqrt n}+\frac{D}{n^2}b_{H,2\to\infty}\right)
\mathfrak R.
\end{equation}
Moreover, \(q\mapsto\boldsymbol\beta^\star(q)\) is continuously differentiable
on \(\mathcal I\), and, for
\(V(q):=\cG(\boldsymbol\alpha^\star(q))\) with
\[
\boldsymbol\alpha^\star(q)
:=
\overline{\boldsymbol\alpha}(q)+\boldsymbol\beta^\star(q),
\]
we have
\[
V'(q)
=
\left\langle
\nabla\cG(\boldsymbol\alpha^\star(q)),
\overline{\boldsymbol\alpha}'(q)
\right\rangle.
\]
\end{proposition}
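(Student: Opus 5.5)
The plan is to obtain $\boldsymbol\beta^\star(q)$ as the fixed point of the map $T_q$ introduced above. Local strong concavity from Proposition~\ref{prop:strong-concavity} then gives uniqueness on the local box. Concavity of $\cG$ on the whole slice upgrades this to global uniqueness, because $\|Z(\cdot)\|_1$ is convex, so $\cG$ is concave on $\D_n$. The implicit function theorem then supplies regularity and the envelope formula.

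\textbf{Step 1 (size of the forcing term).} Bound $F_q(0)=P_{\T}\nabla\cG(\overline{\boldsymbol\alpha}(q))$ using the decomposition \eqref{eq:noise-gap-decomposition}.
\begin{itemize}
\item The mean part vanishes, since $P_{\T}\nabla\overline{\cG}(\overline{\boldsymbol\alpha}(q))=0$ by Lemma~\ref{lem:mean-curvature}.
\item The fluctuation part is controlled on the event of Definition~\ref{def:basic-event}: $\|P_{\T}\nabla\cQ\|_2\le\sqrt n\,b_\nabla$ and $\|P_{\T}\nabla\cQ\|_\infty\le b_\nabla$.
\item The remainder part satisfies $\|\nabla R\|_\infty\le K\varepsilon^3$ by Lemma~\ref{lem:analytic-remainder}, hence $\|P_{\T}\nabla R\|_2\le 2\sqrt n K\varepsilon^3$.
\end{itemize}
Applying $L_q^{-1}$, with the bounds $\|L_q^{-1}\|_{\op}\lesssim D/n^2$ and $\|L_q^{-1}\|_{\infty\to\infty}\lesssim D/n^2$ from Lemma~\ref{lem:mean-curvature}, gives
\[
\|L_q^{-1}F_q(0)\|_2\lesssim \tfrac{D}{n^{3/2}}(b_\nabla+\varepsilon^3)=\mathfrak R,
\qquad
\|L_q^{-1}F_q(0)\|_\infty\lesssim \tfrac{D}{n^2}(b_\nabla+\varepsilon^3)=\mathfrak R/\sqrt n .
\]

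\textbf{Step 2 (contraction on the mixed box).} Set $r_2=\tfrac{C_{\rm in}}{2}\mathfrak R$ and $r_\infty=\tfrac{C_{\rm in}}{2}(n^{-1/2}+\tfrac{D}{n^2}b_{H,2\to\infty})\mathfrak R$. Apply Lemma~\ref{lem:fp-infty-remainder} with $\widetilde{\boldsymbol\beta}=0$. Since $\mathfrak r_q(0)=0$, we have $T_q(0)=L_q^{-1}F_q(0)$.
\begin{itemize}
\item In $\ell_2$: $\|T_q(\boldsymbol\beta)\|_2\le C\mathfrak R+C_{\rm rem}\mathfrak D(r_\infty)r_2\le r_2$, provided $C_{\rm in}$ is large and $\mathfrak D(r_\infty)\le c_{\rm fp}$ is small; the latter is exactly hypothesis \eqref{eq:inner-deterministic-smallness}, using monotonicity of $\mathfrak D$.
\item In $\ell_\infty$: \eqref{eq:rem-inf} gives
\[
\|T_q(\boldsymbol\beta)\|_\infty\le C\mathfrak R/\sqrt n+C_{\rm rem}\tfrac{D}{n^2}b_{H,2\to\infty}r_2+C_{\rm rem}\mathfrak D(r_\infty)r_\infty\le r_\infty .
\]
This explains the specific form of $r_\infty$.
\end{itemize}
So $T_q$ maps $\mathcal B(r_2,r_\infty)$ into itself. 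The same lemma shows it is a contraction in $\ell_2$ with constant $C_{\rm rem}c_{\rm fp}<1$. The set $\mathcal B(r_2,r_\infty)$ is closed in the finite-dimensional space $\T$, so Banach's theorem yields a unique fixed point, i.e. a zero of $F_q$.

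\textbf{Step 3 (interiority and global uniqueness).} Since $nr_\infty\le c_\alpha/2$, every coordinate of $\overline{\boldsymbol\alpha}(q)+\boldsymbol\beta^\star$ is at least $c_\alpha/(2n)>0$, so the point is interior to $\D_n$. This also shows that Proposition~\ref{prop:strong-concavity} applies on the stated $\ell_\infty$-box. On the slice $\mathcal S_q$, the function $\cG$ is concave, and an interior critical point of a concave function on a convex set is a global maximizer. Strict concavity on the local box, from Proposition~\ref{prop:strong-concavity}, then gives uniqueness: if there were another maximizer, concavity would make the whole segment joining them consist of maximizers. That segment enters the box around $\boldsymbol\beta^\star$, contradicting strict concavity there.

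\textbf{Step 4 (regularity and envelope formula).} The map $F_q(\boldsymbol\beta)$ is $C^1$ in $(q,\boldsymbol\beta)$, because $\cG$ is smooth near full-rank points. Its $\boldsymbol\beta$-Jacobian is $P_{\T}\nabla^2\cG P_{\T}$, which is negative definite at $\boldsymbol\beta^\star(q)$. The implicit function theorem therefore makes $q\mapsto\boldsymbol\beta^\star(q)$ $C^1$; it coincides with the unique maximizer by Step 3. Differentiating gives
\[
V'(q)=\langle\nabla\cG(\boldsymbol\alpha^\star),\overline{\boldsymbol\alpha}'(q)+\boldsymbol\beta^{\star\prime}(q)\rangle .
\]
Here $\boldsymbol\beta^{\star\prime}(q)\in\T$ and $P_{\T}\nabla\cG(\boldsymbol\alpha^\star)=0$, so the second term vanishes and the envelope formula follows.

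I expect the main obstacle to be Step 2's $\ell_\infty$ bookkeeping. The frozen fluctuation Hessian is controlled only in the $2\to\infty$ norm, which is why the $b_{H,2\to\infty}r_2$ term appears, and the constants must be closed consistently so that both radii are reproduced by $T_q$.
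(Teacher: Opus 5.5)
Your architecture matches the paper's. You contract $T_q$ on the mixed box $\mathcal B(r_2,r_\infty)$ via Lemma~\ref{lem:fp-infty-remainder}, globalize by concavity of $\cG$ on the slice, and get the envelope formula from the implicit function theorem together with $P_{\T}\nabla\cG(\boldsymbol\alpha^\star(q))=0$. Steps~1 and~4 are fine.

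\textbf{The gap is in Step~2:} with your radii, the $\ell_\infty$ self-map bound cannot be closed. Because you set $r_2=\tfrac{C_{\mathrm{in}}}{2}\mathfrak R$, the middle term of your $\ell_\infty$ estimate equals $C_{\mathrm{rem}}\cdot\tfrac{C_{\mathrm{in}}}{2}\tfrac{D}{n^2}b_{H,2\to\infty}\mathfrak R$. It must be absorbed by the part $\tfrac{C_{\mathrm{in}}}{2}\tfrac{D}{n^2}b_{H,2\to\infty}\mathfrak R$ of $r_\infty$. Both sides are linear in $C_{\mathrm{in}}$, so enlarging $C_{\mathrm{in}}$ does nothing. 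Comparing coefficients, you would need $C_{\mathrm{rem}}(1+c_{\mathrm{fp}})\le 1$. That is not available: $C_{\mathrm{rem}}$ is a fixed constant supplied by Lemma~\ref{lem:fp-infty-remainder}, namely $4C_LC_{\mathrm{box}}$ with $C_{\mathrm{box}}\ge 1$. You also cannot discard the term, since at the event scales $\tfrac{D}{n^2}b_{H,2\to\infty}=K_{H,\infty}L/\sqrt n$ is not small relative to $n^{-1/2}$.

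\textbf{The fix is to decouple the $\ell_2$ radius from $C_{\mathrm{in}}$,} as the paper does. Let $C_0:=C_L\max\{1,2K_R\}$ be your Step~1 constant, so that $\|T_q(0)\|_2\le C_0\mathfrak R$ and $\|T_q(0)\|_\infty\le C_0\mathfrak R/\sqrt n$. Choose the constants in this order:
\begin{itemize}
\item $c_{\mathrm{fp}}\le\min\{c_{\mathrm{sc}},1/(4C_{\mathrm{rem}})\}$;
\item $r_2:=8C_0\mathfrak R$;
\item $C_{\mathrm{in}}\ge 8C_0(1+8C_{\mathrm{rem}})$;
\item $r_\infty:=C_{\mathrm{in}}\bigl(n^{-1/2}+\tfrac{D}{n^2}b_{H,2\to\infty}\bigr)\mathfrak R$.
\end{itemize}
Then $\|T_q(\boldsymbol\beta)\|_2\le C_0\mathfrak R+\tfrac14 r_2\le\tfrac38 r_2$. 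Likewise
$\|T_q(\boldsymbol\beta)\|_\infty\le C_0(1+8C_{\mathrm{rem}})\bigl(n^{-1/2}+\tfrac{D}{n^2}b_{H,2\to\infty}\bigr)\mathfrak R+\tfrac14 r_\infty\le\tfrac38 r_\infty$.
The fixed point then satisfies \eqref{eq:inner-size}, since $\tfrac38 r_2=3C_0\mathfrak R\le\tfrac{C_{\mathrm{in}}}{2}\mathfrak R$. It also lies strictly inside the box where strong concavity holds, which your uniqueness argument in Step~3 needs.

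\textbf{A smaller point in Step~3:} Proposition~\ref{prop:strong-concavity} applies on the stated box because of hypothesis \eqref{eq:inner-deterministic-smallness} together with $c_{\mathrm{fp}}\le c_{\mathrm{sc}}$. You never impose $c_{\mathrm{fp}}\le c_{\mathrm{sc}}$, though Lemma~\ref{lem:fp-infty-remainder} also requires it. It does not follow from $nr_\infty\le c_\alpha/2$ or from interiority of the fixed point; in fact $nr_\infty\le c_\alpha/2$ is itself a consequence of $\mathfrak D(r_\infty)\le c_{\mathrm{sc}}$. With these adjustments the rest of your argument goes through as in the paper.
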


\begin{proof}
Let \(C_L\) be as in \eqref{eq:rem-inverse-bounds}, let
\(C_{\mathrm{rem}}\) be the constant from
Lemma~\ref{lem:fp-infty-remainder}, and let \(c_{\mathrm{sc}}\) be the
constant from Proposition~\ref{prop:strong-concavity}. Set
\[
C_0:=C_L\max\{1,2K_R\},
\]
choose
\[
C_{\mathrm{in}}
\geq
8C_0(1+8C_{\mathrm{rem}}),
\qquad
c_{\mathrm{fp}}
\leq
\min\left\{c_{\mathrm{sc}},\frac{1}{4C_{\mathrm{rem}}}\right\},
\]
and, for the remainder of the proof, write
\[
r_2:=8C_0\mathfrak R,
\qquad
r_\infty
:=
C_{\mathrm{in}}
\left(
\frac{1}{\sqrt n}
+\frac{D}{n^2}b_{H,2\to\infty}
\right)
\mathfrak R.
\]
Then \eqref{eq:inner-deterministic-smallness} gives
\(\mathfrak D(r_\infty)\leq c_{\mathrm{fp}}\leq c_{\mathrm{sc}}\).
Since \(nr_\infty\leq\mathfrak D(r_\infty)\) and
\(c_{\mathrm{sc}}\leq c_\alpha/2\), we also have
\(nr_\infty\leq c_\alpha/2\).

We first control \(T_q(0)=L_q^{-1}F_q(0)\). Since Lemma~\ref{lem:mean-curvature} implies
\(P_{\T}\nabla\overline{\cG}(\overline{\boldsymbol\alpha}(q))=0\),
\eqref{eq:noise-gap-decomposition} gives
\begin{equation}
\label{eq:inner-F0-decomposition}
F_q(0)
=
P_{\T}\nabla\cQ(\overline{\boldsymbol\alpha}(q))
-
P_{\T}\nabla R(\overline{\boldsymbol\alpha}(q)).
\end{equation}
By Definition~\ref{def:basic-event},
\[
\|P_{\T}\nabla\cQ(\overline{\boldsymbol\alpha}(q))\|_\infty
\leq b_\nabla,
\qquad
\|P_{\T}\nabla\cQ(\overline{\boldsymbol\alpha}(q))\|_2
\leq\sqrt n\,b_\nabla.
\]
Also, Lemma~\ref{lem:analytic-remainder} gives
\(\|\nabla R\|_\infty\leq K_R\varepsilon^3\). Since
\(\|P_{\T}x\|_\infty\leq2\|x\|_\infty\),
\(\|P_{\T}x\|_2\leq\sqrt n\|P_{\T}x\|_\infty\), and
\eqref{eq:rem-inverse-bounds} holds, we obtain
\begin{equation}
\label{eq:inner-linear-term}
\sup_{q\in\mathcal I}\|T_q(0)\|_2
\leq C_0\mathfrak R,
\qquad
\sup_{q\in\mathcal I}\|T_q(0)\|_\infty
\leq \frac{C_0}{\sqrt n}\mathfrak R.
\end{equation}

By the choice of \(c_{\mathrm{fp}}\),
Proposition~\ref{prop:strong-concavity} applies on the
\(\ell_\infty\)-ball of radius \(r_\infty\), and
Lemma~\ref{lem:fp-infty-remainder} applies to any
\(\boldsymbol\beta,\widetilde{\boldsymbol\beta}\in\T\) with
\(\|\boldsymbol\beta\|_\infty\vee
\|\widetilde{\boldsymbol\beta}\|_\infty\leq r_\infty\).
Define
\[
\mathcal B_{\mathrm{in}}
:=
\mathcal B(r_2,r_\infty).
\]
Since \(nr_\infty\leq c_\alpha/2\),
Lemma~\ref{lem:path-geometry} implies
\(\mathcal B_{\mathrm{in}}\subset\mathcal S_q\).

Let \(\boldsymbol\beta\in\mathcal B_{\mathrm{in}}\). By
\eqref{eq:inner-linear-term}, Lemma~\ref{lem:fp-infty-remainder}, and
\(\mathfrak D(r_\infty)\leq c_{\mathrm{fp}}\),
\[
\begin{aligned}
\|T_q(\boldsymbol\beta)\|_2
&\leq
\|T_q(0)\|_2
+
\|T_q(\boldsymbol\beta)-T_q(0)\|_2\\
&\leq
C_0\mathfrak R
+
C_{\mathrm{rem}}c_{\mathrm{fp}}r_2
\leq
\frac18r_2+\frac14r_2
=
\frac38r_2.
\end{aligned}
\]
Similarly,
\[
\begin{aligned}
\|T_q(\boldsymbol\beta)\|_\infty
&\leq
\|T_q(0)\|_\infty
+
\|T_q(\boldsymbol\beta)-T_q(0)\|_\infty\\
&\leq
\frac{C_0}{\sqrt n}\mathfrak R
+
C_{\mathrm{rem}}\frac{D}{n^2}b_{H,2\to\infty}r_2
+
C_{\mathrm{rem}}c_{\mathrm{fp}}r_\infty\\
&\leq
C_0(1+8C_{\mathrm{rem}})
\left(
\frac{1}{\sqrt n}
+\frac{D}{n^2}b_{H,2\to\infty}
\right)\mathfrak R
+
\frac14r_\infty\\
&\leq
\frac18r_\infty+\frac14r_\infty
=
\frac38r_\infty.
\end{aligned}
\]
Thus \(T_q\) maps \(\mathcal B_{\mathrm{in}}\) into itself. Moreover, for
\(\boldsymbol\beta,\widetilde{\boldsymbol\beta}\in\mathcal B_{\mathrm{in}}\),
Lemma~\ref{lem:fp-infty-remainder} gives
\[
\|T_q(\boldsymbol\beta)-T_q(\widetilde{\boldsymbol\beta})\|_2
\leq
C_{\mathrm{rem}}\mathfrak D(r_\infty)
\|\boldsymbol\beta-\widetilde{\boldsymbol\beta}\|_2
\leq
\frac14
\|\boldsymbol\beta-\widetilde{\boldsymbol\beta}\|_2.
\]
Banach's fixed-point theorem therefore gives a unique fixed point
\(\boldsymbol\beta^\star(q)\in\mathcal B_{\mathrm{in}}\), and hence
\begin{equation}
\label{eq:inner-first-order}
P_{\T}\nabla\cG(
\overline{\boldsymbol\alpha}(q)+\boldsymbol\beta^\star(q))
=0.
\end{equation}
Since \(r_2=8C_0\mathfrak R\leq C_{\mathrm{in}}\mathfrak R\), the preceding
bounds imply \eqref{eq:inner-size}.

The set \(\mathcal S_q\) is convex, and
\(\cG(\overline{\boldsymbol\alpha}(q)+\cdot)\) is concave on it. Hence
\eqref{eq:inner-first-order} shows that \(\boldsymbol\beta^\star(q)\) is a
maximizer on the full slice. To prove uniqueness, suppose
\(\boldsymbol\beta'\in\mathcal S_q\) is another maximizer. Concavity implies
that
\[
t\mapsto
\cG\bigl(
\overline{\boldsymbol\alpha}(q)
+(1-t)\boldsymbol\beta^\star(q)
+t\boldsymbol\beta'
\bigr)
\]
is constant on \([0,1]\). Since the bounds above place
\(\boldsymbol\beta^\star(q)\) strictly inside \(\mathcal B_{\mathrm{in}}\), for
all sufficiently small \(t>0\) this segment remains in the region where
Proposition~\ref{prop:strong-concavity} gives strict concavity, a contradiction
unless \(\boldsymbol\beta'=\boldsymbol\beta^\star(q)\). Moreover,
\(\overline\alpha_i(q)+\beta_i^\star(q)\geq c_\alpha/(2n)>0\), so the maximizer
is interior.

Finally, define
\(\Psi(q,\boldsymbol\beta):=
P_{\T}\nabla\cG(\overline{\boldsymbol\alpha}(q)+\boldsymbol\beta)\).
Then \(\Psi(q,\boldsymbol\beta^\star(q))=0\), and
\(D_\beta\Psi=P_{\T}\nabla_\alpha^2\cG P_{\T}|_{\T}\) is invertible at the
maximizer by Proposition~\ref{prop:strong-concavity}. The implicit-function
theorem gives a local \(C^1\) solution; uniqueness of the slice optimizer
patches these local solutions into a \(C^1\) map on \(\mathcal I\).
Differentiating
\(V(q)=\cG(\overline{\boldsymbol\alpha}(q)+\boldsymbol\beta^\star(q))\)
gives
\[
V'(q)
=
\left\langle
\nabla\cG(\boldsymbol\alpha^\star(q)),
\overline{\boldsymbol\alpha}'(q)
+(\boldsymbol\beta^\star)'(q)
\right\rangle.
\]
Since \((\boldsymbol\beta^\star)'(q)\in\T\), the second term vanishes by
\eqref{eq:inner-first-order}, yielding the stated envelope formula.
\end{proof}

\subsection{Derivative of the optimized value}

For a differentiable function \(F\) and vector \(\boldsymbol\xi\), write
\(D_{\boldsymbol\xi}F(\boldsymbol\alpha):=
\langle\nabla F(\boldsymbol\alpha),\boldsymbol\xi\rangle\).

\begin{lemma}[Local mixed derivative for the centered term]
\label{lem:local-mixed-derivative}
There exists \(C_{\mathrm{mix}}>0\), depending only on \(\kappa,r_0\), such
that, if
\(\mathcal E(\boldsymbol b,\varepsilon,\nu)\) holds,
\(\varepsilon\leq\varrho_0\), and \(nr_\infty\leq c_\alpha/2\), then
\begin{equation}
\label{eq:local-mixed-derivative}
\sup_{\substack{q\in\mathcal I,\ \boldsymbol\beta\in\T\\
\|\boldsymbol\beta\|_\infty\leq r_\infty}}
\|P_{\T}\nabla_\beta D_{\overline{\boldsymbol\alpha}'(q)}
\cQ(\overline{\boldsymbol\alpha}(q)+\boldsymbol\beta)\|_2
\leq
C_{\mathrm{mix}}\bigl(b_{\nabla q}+n^{5/2}\nu r_\infty\bigr).
\end{equation}
\end{lemma}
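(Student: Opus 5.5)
The plan is to split the quantity at $(q,\boldsymbol\beta)$ into its value on the path ($\boldsymbol\beta=0$) plus a local variation. Write $\boldsymbol\xi_0:=\overline{\boldsymbol\alpha}'(q)$; this vector does not depend on $q$, and its entries satisfy $|(\xi_0)_i|\leq 1/(\kappa n)$ by Lemma~\ref{lem:path-geometry}. Since $D_{\boldsymbol\xi_0}\cQ(\boldsymbol\alpha)=\langle\nabla\cQ(\boldsymbol\alpha),\boldsymbol\xi_0\rangle$, we have $\nabla_\beta D_{\boldsymbol\xi_0}\cQ(\overline{\boldsymbol\alpha}(q)+\boldsymbol\beta)=\nabla^2_\alpha\cQ(\overline{\boldsymbol\alpha}(q)+\boldsymbol\beta)\boldsymbol\xi_0$. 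This gives
\[
P_{\T}\nabla^2_\alpha\cQ(\overline{\boldsymbol\alpha}(q)+\boldsymbol\beta)\boldsymbol\xi_0
=
P_{\T}\nabla^2_\alpha\cQ(\overline{\boldsymbol\alpha}(q))\boldsymbol\xi_0
+
P_{\T}\bigl[\nabla^2_\alpha\cQ(\overline{\boldsymbol\alpha}(q)+\boldsymbol\beta)-\nabla^2_\alpha\cQ(\overline{\boldsymbol\alpha}(q))\bigr]\boldsymbol\xi_0 .
\]

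For the first term, Lemma~\ref{lem:path-second-coefficients} identifies $\nabla^2_\alpha\cQ(\overline{\boldsymbol\alpha}(q))\boldsymbol\xi_0$ with the mixed derivative $\left.\nabla_\beta\partial_q[\cQ(\overline{\boldsymbol\alpha}(q)+\boldsymbol\beta)]\right|_{\boldsymbol\beta=0}$. On the basic event, \eqref{eq:path-event-mixed} then bounds its $P_{\T}$-projection by $b_{\nabla q}$, uniformly over $q\in\mathcal I$.

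For the second term, I will use $\|P_{\T}\boldsymbol x\|_2\leq\|\boldsymbol x\|_2\leq\|A\|_{\op}\|\boldsymbol\xi_0\|_2$ together with $\|\boldsymbol\xi_0\|_2\leq\sqrt n\cdot(\kappa n)^{-1}=1/(\kappa\sqrt n)$. The hypothesis $nr_\infty\leq c_\alpha/2$ lets me invoke \eqref{eq:local-Q-op} of Lemma~\ref{lem:local-uniformity}, which gives $\|A\|_{\op}\leq C_{\mathrm{box}}n^3\nu r_\infty$. That lemma also assumes $\varepsilon\leq\varrho_0$, but this only matters for the $R$-part, and that is assumed anyway. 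The product is therefore at most $C_{\mathrm{box}}\kappa^{-1}n^{5/2}\nu r_\infty$. Setting $C_{\mathrm{mix}}:=\max\{1,C_{\mathrm{box}}/\kappa\}$ and taking the supremum over the box yields \eqref{eq:local-mixed-derivative}.

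I do not expect a real obstacle; the only care needed is at two points. First, the $\beta$-gradient of the $q$-derivative at fixed $\boldsymbol\beta$ must coincide with $\nabla^2\cQ\,\boldsymbol\xi_0$, which holds because $\boldsymbol\beta$ enters additively and $\boldsymbol\xi_0$ is constant. Second, the local variation bound must be applied in operator norm rather than $\infty\to\infty$ norm, so that the factor $\|\boldsymbol\xi_0\|_2\asymp n^{-1/2}$ produces exactly the $n^{5/2}$ scaling in the statement.
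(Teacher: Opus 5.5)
Your proposal is correct and follows the paper's proof essentially step for step. You bound the path value at \(\boldsymbol\beta=0\) by \(b_{\nabla q}\) via the basic event, rewrite the quantity through the chain rule as \(\nabla_\alpha^2\cQ\,\overline{\boldsymbol\alpha}'(q)\), and control the local variation by \(\|A_Q\|_{\op}\|\overline{\boldsymbol\alpha}'(q)\|_2\leq C_{\mathrm{box}}n^3\nu r_\infty\cdot(\kappa\sqrt n)^{-1}\), arriving at the same constant \(C_{\mathrm{mix}}=\max\{1,C_{\mathrm{box}}/\kappa\}\).
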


\begin{proof}
At \(\boldsymbol\beta=0\), Definition~\ref{def:basic-event} gives
\[
\sup_{q\in\mathcal I}
\left\|P_{\T}
\left.\nabla_\beta D_{\overline{\boldsymbol\alpha}'(q)}
\cQ(\overline{\boldsymbol\alpha}(q)+\boldsymbol\beta)
\right|_{\boldsymbol\beta=0}\right\|_2
\leq b_{\nabla q}.
\]
It remains to control the variation in \(\boldsymbol\beta\).  For fixed
\(q\), the chain rule gives
\begin{equation}
\label{eq:local-mixed-chain-rule}
\nabla_\beta D_{\overline{\boldsymbol\alpha}'(q)}
\cQ(\overline{\boldsymbol\alpha}(q)+\boldsymbol\beta)
=
\nabla_\alpha^2\cQ(\overline{\boldsymbol\alpha}(q)+\boldsymbol\beta)
\overline{\boldsymbol\alpha}'(q).
\end{equation}
Writing \(A_Q(q,\boldsymbol\beta)\) as in the proof of
Lemma~\ref{lem:local-uniformity}, the difference from \(\boldsymbol\beta=0\)
is bounded by
\[
\|A_Q(q,\boldsymbol\beta)\|_{\op}
\|\overline{\boldsymbol\alpha}'(q)\|_2.
\]
By Lemma~\ref{lem:path-geometry},
\(\|\overline{\boldsymbol\alpha}'(q)\|_2\leq(\kappa\sqrt n)^{-1}\), while
\eqref{eq:local-Q-op} gives
\(\|A_Q(q,\boldsymbol\beta)\|_{\op}\leq
C_{\mathrm{box}}n^3\nu r_\infty\).  Therefore the variation is at most
\((C_{\mathrm{box}}/\kappa)n^{5/2}\nu r_\infty\).  Taking
\(C_{\mathrm{mix}}\geq\max\{1,C_{\mathrm{box}}/\kappa\}\) proves
\eqref{eq:local-mixed-derivative}.
\end{proof}

Given the deterministic event parameters, define
\begin{equation}
\label{eq:deterministic-derivative-scale}
\mathfrak E_V
:=
\frac{n^2}{D}\mathfrak R^2
+b_q
+b_{\nabla q}\mathfrak R
+n^{5/2}\nu
\left(\frac{1}{\sqrt n}+\frac{D}{n^2}b_{H,2\to\infty}\right)
\mathfrak R^2
+\varepsilon^3.
\end{equation}

\begin{proposition}[Derivative approximation]
\label{prop:Vprime}
There exists \(C_V>0\), depending only on \(\kappa,r_0\), such that, under
the hypotheses of Proposition~\ref{prop:inner},
\begin{equation}
\label{eq:Vprime-high-probability}
\left|V'(1/2)-\frac{1}{D}\theta_{n_+,n_-}\right|
\leq C_V\mathfrak E_V.
\end{equation}
\end{proposition}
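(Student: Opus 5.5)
The plan is to start from the envelope formula in Proposition~\ref{prop:inner}, which gives
\[
V'(1/2)=\bigl\langle\nabla\cG(\boldsymbol\alpha^\star(1/2)),\overline{\boldsymbol\alpha}'(1/2)\bigr\rangle .
\]
Write \(\boldsymbol\xi_0:=\overline{\boldsymbol\alpha}'\) and \(\boldsymbol\beta^\star:=\boldsymbol\beta^\star(1/2)\). Using the decomposition \eqref{eq:noise-gap-decomposition}, we have
\[
V'(1/2)=D_{\boldsymbol\xi_0}\overline{\cG}(\overline{\boldsymbol\alpha}+\boldsymbol\beta^\star)+D_{\boldsymbol\xi_0}\cQ(\overline{\boldsymbol\alpha}+\boldsymbol\beta^\star)-D_{\boldsymbol\xi_0}R(\overline{\boldsymbol\alpha}+\boldsymbol\beta^\star).
\]
The reference value is \(D_{\boldsymbol\xi_0}\overline{\cG}(\overline{\boldsymbol\alpha}(1/2))=\frac1D\cH_{n_+,n_-}'(1/2)=\theta_{n_+,n_-}/D\), by Lemma~\ref{lem:H-formula}. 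The proof then bounds each of the three terms' deviation from its value at \(\boldsymbol\beta=0\), together with the values of the \(\cQ\) and \(R\) terms themselves at \(\boldsymbol\beta=0\).

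\textbf{Mean term.} I Taylor-expand \(\boldsymbol\beta\mapsto D_{\boldsymbol\xi_0}\overline{\cG}(\overline{\boldsymbol\alpha}+\boldsymbol\beta)\) to second order. The first-order coefficient is \(P_{\T}\nabla^2\overline{\cG}(\overline{\boldsymbol\alpha})\boldsymbol\xi_0\), which should vanish. The reason is that \(\boldsymbol\xi_0\) is groupwise constant, so the Hessian of the permutation-symmetric \(\cH\) maps it to a groupwise-constant vector, and \(P_{\T}\) annihilates such vectors. The same argument as in Lemma~\ref{lem:mean-curvature} applies, since \(\frac{d}{dq}\nabla\cH(\overline{\boldsymbol\alpha}(q))\) is groupwise constant. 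Because \(\boldsymbol\beta^\star\in\T\), only \(P_{\T}\) of this vector pairs with \(\boldsymbol\beta^\star\), so the linear term drops. The quadratic remainder involves third derivatives of \(h\), of size \(n^2\), contracted with \(\boldsymbol\xi_0\) (entries of order \(1/n\)) and \(\boldsymbol\beta^\star\otimes\boldsymbol\beta^\star\), all divided by \(D\). Summing over pairs gives a bound of order \((n^2/D)\|\boldsymbol\beta^\star\|_2^2\lesssim(n^2/D)\mathfrak R^2\) by \eqref{eq:inner-size}. This accounts for the first term of \(\mathfrak E_V\).

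\textbf{Centered term.} At \(\boldsymbol\beta=0\), the quantity \(D_{\boldsymbol\xi_0}\cQ(\overline{\boldsymbol\alpha}(q))\) is exactly \(\partial_q[\cQ(\overline{\boldsymbol\alpha}(q))]\), which is at most \(b_q\) on the basic event. For the variation, I integrate along \(t\boldsymbol\beta^\star\). The gradient in \(\boldsymbol\beta\) can be replaced by its \(P_{\T}\)-projection, again because \(\boldsymbol\beta^\star\in\T\). Lemma~\ref{lem:local-mixed-derivative} with \(r_\infty\) equal to the \(\ell_\infty\)-radius from \eqref{eq:inner-size} then gives
\[
C_{\mathrm{mix}}\bigl(b_{\nabla q}+n^{5/2}\nu r_\infty\bigr)\|\boldsymbol\beta^\star\|_2,
\]
and inserting \(r_\infty\) and \(\|\boldsymbol\beta^\star\|_2\lesssim\mathfrak R\) produces the third and fourth terms of \(\mathfrak E_V\).

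\textbf{Remainder term.} Since \(\overline{\boldsymbol\alpha}(q)+\boldsymbol\beta^\star\) is an affine path whose direction is \(\boldsymbol\xi_0\), the path bound \eqref{eq:analytic-remainder-path-bound} applies, with \(\|W_{\boldsymbol\xi_0}\|_{\op}=n\|\boldsymbol\xi_0\|_\infty\le1/\kappa\). Alternatively, \(\|\nabla R\|_\infty\le K_R\varepsilon^3\) together with \(\|\boldsymbol\xi_0\|_1\le2\) gives the same result. Either way the term is \(O(\varepsilon^3)\), which is the last term of \(\mathfrak E_V\).

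Adding the three contributions proves the claim. I expect the main obstacle to be the mean term: the bound \((n^2/D)\mathfrak R^2\) is only attainable if the linear term vanishes exactly. A crude estimate would give \((n^{3/2}/D)\,\|\boldsymbol\beta^\star\|_2\), which is too large, so the verification that \(P_{\T}\nabla^2\cH(\overline{\boldsymbol\alpha})\boldsymbol\xi_0=0\) is the step to carry out carefully. A second point needing care is the third-derivative contraction, where the pairwise structure must be used to avoid losing a factor of \(n\).
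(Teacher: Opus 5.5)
Your proposal is correct and follows essentially the same route as the paper. Both arguments use the envelope formula, the three-way split via \eqref{eq:noise-gap-decomposition}, the vanishing tangent linear term for the mean part plus an $O\bigl((n^2/D)\|\boldsymbol\beta^\star\|_2^2\bigr)$ Taylor remainder, the $b_q$ base bound together with Lemma~\ref{lem:local-mixed-derivative} for the centered part, and the affine-path bound \eqref{eq:analytic-remainder-path-bound} for $R$. The one small difference is how you show $P_{\T}\nabla^2\cH(\overline{\boldsymbol\alpha})\boldsymbol\xi_0=0$: you differentiate in $q$ the groupwise-constant gradient $\nabla\cH(\overline{\boldsymbol\alpha}(q))$ computed in Lemma~\ref{lem:mean-curvature}, where the paper uses within-group permutation invariance; your version is slightly more concrete and works equally well.
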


\begin{proof}
Set
\[
\boldsymbol\xi_0:=\overline{\boldsymbol\alpha}'(1/2),
\qquad
\overline{\boldsymbol\alpha}:=\overline{\boldsymbol\alpha}(1/2),
\qquad
\boldsymbol\beta^\star:=\boldsymbol\beta^\star(1/2).
\]
Also set
\[
C_\xi
:=
\sup_{q\in\mathcal I}
n\|\overline{\boldsymbol\alpha}'(q)\|_\infty
\leq \kappa^{-1},
\]
where the last inequality follows from Lemma~\ref{lem:path-geometry}. By
Proposition~\ref{prop:inner},
\begin{equation}
\label{eq:Vprime-beta-size}
\|\boldsymbol\beta^\star\|_2
\leq C_{\mathrm{in}}\mathfrak R,
\qquad
\|\boldsymbol\beta^\star\|_\infty
\leq
C_{\mathrm{in}}
\left(
\frac{1}{\sqrt n}
+\frac{D}{n^2}b_{H,2\to\infty}
\right)
\mathfrak R.
\end{equation}
For brevity, write
\[
r_\infty
:=
C_{\mathrm{in}}
\left(
\frac{1}{\sqrt n}
+\frac{D}{n^2}b_{H,2\to\infty}
\right)
\mathfrak R.
\]

By the envelope formula in Proposition~\ref{prop:inner},
\[
V'(1/2)
=
D_{\boldsymbol\xi_0}
\cG(\overline{\boldsymbol\alpha}+\boldsymbol\beta^\star).
\]
Using \eqref{eq:noise-gap-decomposition}, we therefore have
\begin{equation}
\label{eq:Vprime-decomposition}
V'(1/2)
=
D_{\boldsymbol\xi_0}\overline{\cG}
(\overline{\boldsymbol\alpha}+\boldsymbol\beta^\star)
+
D_{\boldsymbol\xi_0}\cQ
(\overline{\boldsymbol\alpha}+\boldsymbol\beta^\star)
-
D_{\boldsymbol\xi_0}R
(\overline{\boldsymbol\alpha}+\boldsymbol\beta^\star).
\end{equation}
We estimate the three terms separately.

First consider the deterministic mean term. Since
\(\overline{\cG}=D^{-1}\cH\), \eqref{eq:theta-definition} gives
\begin{equation}
\label{eq:Vprime-mean-main}
D_{\boldsymbol\xi_0}\overline{\cG}
(\overline{\boldsymbol\alpha})
=
\left.
\frac{d}{dq}
\overline{\cG}(\overline{\boldsymbol\alpha}(q))
\right|_{q=1/2}
=
\frac{1}{D}\theta_{n_+,n_-}.
\end{equation}
Moreover, the first-order variation of
\(D_{\boldsymbol\xi_0}\overline{\cG}\) in tangent directions vanishes. To see
this, observe first that \(\overline{\cG}\) is invariant under every
permutation of the coordinates that preserves the two groups \(I_+\) and
\(I_-\). Both
\(\overline{\boldsymbol\alpha}
=\overline{\boldsymbol\alpha}(1/2)\) and
\(\boldsymbol\xi_0=\overline{\boldsymbol\alpha}'(1/2)\) are also constant
on each of these two groups. Therefore the scalar function
\[
\boldsymbol\alpha
\longmapsto
D_{\boldsymbol\xi_0}\overline{\cG}(\boldsymbol\alpha)
\]
has the same within-group permutation invariance. Consequently, at the
within-group symmetric point \(\overline{\boldsymbol\alpha}\), its gradient
cannot distinguish two coordinates belonging to the same group. Hence there
exist scalars \(a_+\) and \(a_-\) such that
\[
\nabla_\alpha
D_{\boldsymbol\xi_0}\overline{\cG}
(\overline{\boldsymbol\alpha})
=
a_+\boldsymbol 1_{I_+}
+
a_-\boldsymbol 1_{I_-}.
\]
Since every \(\boldsymbol\beta\in\T\) has zero sum on each group,
\[
\left\langle
\nabla_\alpha
D_{\boldsymbol\xi_0}\overline{\cG}
(\overline{\boldsymbol\alpha}),
\boldsymbol\beta
\right\rangle
=
a_+\sum_{i\in I_+}\beta_i
+
a_-\sum_{i\in I_-}\beta_i
=
0.
\]
Thus
\(\nabla_\alpha
D_{\boldsymbol\xi_0}\overline{\cG}
(\overline{\boldsymbol\alpha})\in\T^\perp\), and therefore
\begin{equation}
\label{eq:Vprime-mean-first-variation-zero}
P_{\T}
\nabla_\alpha
D_{\boldsymbol\xi_0}\overline{\cG}
(\overline{\boldsymbol\alpha})
=
0.
\end{equation}

We next bound the second tangent variation. On the coordinate window
\eqref{eq:local-coordinate-window}, the third derivatives of \(h\) are
bounded by \(C_hn^2\). Since
\(\|\boldsymbol\xi_0\|_\infty\leq C_\xi/n\), each nonzero entry of a
pairwise block contributing to
\(\nabla_\alpha^2D_{\boldsymbol\xi_0}\overline{\cG}\) is of order
\(n/D\). For a fixed row \(k\), only the \(n-1\) pairwise blocks indexed by
pairs \((i,j)\) with \(k\in\{i,j\}\) can contribute.
Consequently, the row-sum bound and symmetry give
\[
\left\|
P_{\T}
\nabla_\alpha^2
D_{\boldsymbol\xi_0}\overline{\cG}
(\overline{\boldsymbol\alpha}+\boldsymbol\beta)
P_{\T}
\right\|_{\op}
\leq
A_V\frac{n^2}{D}
\]
uniformly over
\(\boldsymbol\beta\in\T\) with
\(\|\boldsymbol\beta\|_\infty\leq r_\infty\), for a constant
\(A_V>0\) depending only on \(\kappa,r_0\).
Since \(\boldsymbol\beta^\star\in\T\), Taylor's theorem,
\eqref{eq:Vprime-beta-size}, and \eqref{eq:Vprime-mean-first-variation-zero}
therefore imply
\begin{equation}\label{eq:Vprime-mean-error}
\left|
D_{\boldsymbol\xi_0}\overline{\cG}
(\overline{\boldsymbol\alpha}+\boldsymbol\beta^\star)
-
D_{\boldsymbol\xi_0}\overline{\cG}
(\overline{\boldsymbol\alpha})
\right|
\leq
A_V\frac{n^2}{D}\|\boldsymbol\beta^\star\|_2^2
\leq
A_VC_{\mathrm{in}}^2
\frac{n^2}{D}\mathfrak R^2.
\end{equation}

Next consider the centered term. At the base point,
Definition~\ref{def:basic-event} gives
\begin{equation}
\label{eq:Vprime-Q-path}
\left|
D_{\boldsymbol\xi_0}\cQ
(\overline{\boldsymbol\alpha})
\right|
\leq b_q.
\end{equation}
To control the displacement from
\(\overline{\boldsymbol\alpha}\) to
\(\overline{\boldsymbol\alpha}+\boldsymbol\beta^\star\), apply the
fundamental theorem of calculus along the segment
\(\overline{\boldsymbol\alpha}+t\boldsymbol\beta^\star\). Since
\(\boldsymbol\beta^\star\in\T\),
\begin{align}
&D_{\boldsymbol\xi_0}\cQ
(\overline{\boldsymbol\alpha}+\boldsymbol\beta^\star)
-
D_{\boldsymbol\xi_0}\cQ
(\overline{\boldsymbol\alpha})
\notag\\
&\qquad=
\int_0^1
\left\langle
P_{\T}
\nabla_\alpha
D_{\boldsymbol\xi_0}\cQ
(\overline{\boldsymbol\alpha}
+t\boldsymbol\beta^\star),
\boldsymbol\beta^\star
\right\rangle
\,dt .
\label{eq:Vprime-Q-mvt}
\end{align}
Moreover,
\(\|t\boldsymbol\beta^\star\|_\infty\leq r_\infty\) for
\(t\in[0,1]\). Hence Lemma~\ref{lem:local-mixed-derivative} and
\eqref{eq:Vprime-beta-size} give
\begin{align}
&\left|
D_{\boldsymbol\xi_0}\cQ
(\overline{\boldsymbol\alpha}+\boldsymbol\beta^\star)
-
D_{\boldsymbol\xi_0}\cQ
(\overline{\boldsymbol\alpha})
\right|
\notag\\
&\qquad\leq
C_{\mathrm{mix}}
\bigl(
b_{\nabla q}
+n^{5/2}\nu r_\infty
\bigr)
\|\boldsymbol\beta^\star\|_2
\notag\\
&\qquad\leq
C_{\mathrm{mix}}C_{\mathrm{in}}
\left[
b_{\nabla q}
+
C_{\mathrm{in}}n^{5/2}\nu
\left(
\frac{1}{\sqrt n}
+\frac{D}{n^2}b_{H,2\to\infty}
\right)
\mathfrak R
\right]
\mathfrak R.
\label{eq:Vprime-Q-local-error}
\end{align}
Together with \eqref{eq:Vprime-Q-path}, this controls the centered
contribution.

Finally consider the analytic remainder. 
Consider the affine path
\[
\boldsymbol\alpha(t)
:=
\overline{\boldsymbol\alpha}
+\boldsymbol\beta^\star
+t\boldsymbol\xi_0.
\]
Then
\(\boldsymbol\alpha'(t)=\boldsymbol\xi_0\) and
\[
\left.
\frac{d}{dt}
R(\boldsymbol\alpha(t))
\right|_{t=0}
=
D_{\boldsymbol\xi_0}R
(\overline{\boldsymbol\alpha}+\boldsymbol\beta^\star).
\]
Moreover,
\[
\|n\diag(\boldsymbol\xi_0)\|_{\op}
=
n\|\boldsymbol\xi_0\|_\infty
\leq C_\xi.
\]
Since the basic event gives
\(\varepsilon\leq\varrho_0\), the affine-path derivative bound \eqref{eq:analytic-remainder-path-bound}
therefore yields
\begin{equation}
\label{eq:Vprime-R-error}
\left|
D_{\boldsymbol\xi_0}R
(\overline{\boldsymbol\alpha}+\boldsymbol\beta^\star)
\right|
\leq
C_\xi K_R\varepsilon^3.
\end{equation}

Combining \eqref{eq:Vprime-decomposition},
\eqref{eq:Vprime-mean-main}, \eqref{eq:Vprime-mean-error},
\eqref{eq:Vprime-Q-path}, \eqref{eq:Vprime-Q-local-error}, and
\eqref{eq:Vprime-R-error}, we obtain
\[
\begin{aligned}
\left|
V'(1/2)-\frac{1}{D}\theta_{n_+,n_-}
\right|
\leq C\Bigg[
&\frac{n^2}{D}\mathfrak R^2
+b_q
+b_{\nabla q}\mathfrak R\\
&+
n^{5/2}\nu
\left(
\frac{1}{\sqrt n}
+\frac{D}{n^2}b_{H,2\to\infty}
\right)
\mathfrak R^2
+\varepsilon^3
\Bigg],
\end{aligned}
\]
where \(C>0\) depends only on \(\kappa,r_0\).
Choosing \(C_V\) sufficiently large and using
\eqref{eq:deterministic-derivative-scale} gives
\eqref{eq:Vprime-high-probability}.
\end{proof}

\subsection{Balanced minimizer, local growth, and positivity}

Throughout this subsection, suppose that the hypotheses of
Propositions~\ref{prop:inner} are satisfied.

We also assume, whenever the balanced minimizer and positivity conclusions are
invoked, that
\begin{equation}\label{eq:section-balanced-assumptions}
C_V\mathfrak E_V
\leq
\frac{1}{2D}\theta_{n_+,n_-},
\qquad
\lambda\left(
\frac{1}{D}\theta_{n_+,n_-}+C_V\mathfrak E_V
\right)
<
2s.
\end{equation}

Recall \(\psi(q)=1-V(q)\)
and \(\Phi(q) := 2s\left|q-\frac12\right|+\lambda\psi(q)\). 

We say that a dual vector \(\boldsymbol \alpha\in\Delta_n\) is balanced if $q(\boldsymbol \alpha)=\frac12$.

\begin{proposition}[Unique balanced minimizer]
\label{prop:balanced}
Under the standing assumptions of this subsection, suppose that
\eqref{eq:section-balanced-assumptions} holds.  Then \(q_\star=\frac12\)
is the unique minimizer of \(\Phi\), and the full dual minimizer \(\boldsymbol \alpha^\star:=\boldsymbol \alpha^\star(1/2)\)
is unique and interior.
\end{proposition}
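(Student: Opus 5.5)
The plan is to combine convexity of \(\psi\) (Proposition~\ref{prop:dual}) with the derivative estimate of Proposition~\ref{prop:Vprime}. The cusp \(2s|q-\tfrac12|\) then dominates the slope of \(\lambda\psi\) at \(q=\tfrac12\). Uniqueness and interiority of the full dual minimizer follow from uniqueness of the slice optimizer in Proposition~\ref{prop:inner}.

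\emph{Step 1: identify \(\psi\) near \(1/2\).} For \(q\in\mathcal I\), the feasible set \(\{\boldsymbol\alpha\in\D_n:q(\boldsymbol\alpha)=q\}\) is exactly \(\overline{\boldsymbol\alpha}(q)+\mathcal S_q\). Since \(\|Z(\boldsymbol\alpha)\|_1=1-\cG(\boldsymbol\alpha)\), we get
\[
\psi(q)=1-\max_{\boldsymbol\beta\in\mathcal S_q}\cG(\overline{\boldsymbol\alpha}(q)+\boldsymbol\beta)=1-V(q),
\]
with the maximum attained uniquely at \(\boldsymbol\beta^\star(q)\) by Proposition~\ref{prop:inner}. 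That proposition also gives that \(V\) is \(C^1\) on \(\mathcal I\). Hence \(\psi\) is differentiable at \(1/2\), with \(\psi'(1/2)=-V'(1/2)\).

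\emph{Step 2: bound the slope.} By Proposition~\ref{prop:Vprime} and the first inequality in \eqref{eq:section-balanced-assumptions}, \(V'(1/2)\) lies in \([\tfrac{1}{2D}\theta_{n_+,n_-},\,\tfrac1D\theta_{n_+,n_-}+C_V\mathfrak E_V]\). The left endpoint is positive because \(n_+>n_-\) in \eqref{eq:conditional-label-counts}, so \(\theta_{n_+,n_-}>0\). The second inequality in \eqref{eq:section-balanced-assumptions} then gives \(\lambda|\psi'(1/2)|=\lambda V'(1/2)<2s\).

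\emph{Step 3: global comparison of \(\Phi\).} Since \(\psi\) is convex on all of \([0,1]\) and differentiable at the interior point \(1/2\), its subdifferential there is the singleton \(\{\psi'(1/2)\}\). Hence the supporting line inequality \(\psi(q)\ge\psi(1/2)+\psi'(1/2)(q-1/2)\) holds for every \(q\in[0,1]\), not only on \(\mathcal I\). Therefore
\[
\Phi(q)-\Phi(1/2)\ \ge\ \bigl(2s-\lambda|\psi'(1/2)|\bigr)\,|q-1/2|\ >\ 0\qquad (q\neq 1/2).
\]
So \(q_\star=1/2\) is the unique minimizer of \(\Phi\).

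\emph{Step 4: the full dual.} For any \(\boldsymbol\alpha\in\D_n\), Proposition~\ref{prop:dual} gives
\[
F(\boldsymbol\alpha)=2s|q(\boldsymbol\alpha)-\tfrac12|+\lambda\|Z(\boldsymbol\alpha)\|_1\ \ge\ \Phi(q(\boldsymbol\alpha))\ \ge\ \Phi(1/2)=\min F.
\]
Thus any minimizer \(\boldsymbol\alpha\) of \(F\) must satisfy \(q(\boldsymbol\alpha)=1/2\) and \(\|Z(\boldsymbol\alpha)\|_1=\psi(1/2)\). Equivalently, \(\boldsymbol\alpha-\overline{\boldsymbol\alpha}(1/2)\) maximizes \(\cG(\overline{\boldsymbol\alpha}(1/2)+\cdot)\) over \(\mathcal S_{1/2}\). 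By the uniqueness part of Proposition~\ref{prop:inner}, this forces \(\boldsymbol\alpha=\boldsymbol\alpha^\star(1/2)\). Conversely, \(\boldsymbol\alpha^\star(1/2)\) attains \(\Phi(1/2)\), so it is a minimizer. Interiority is inherited from Proposition~\ref{prop:inner}, which gives \(\alpha^\star_i\ge c_\alpha/(2n)>0\) for all \(i\).

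The only delicate point is Step 3. The derivative information from Proposition~\ref{prop:Vprime} is local to \(q=1/2\), so excluding minimizers with \(q\notin\mathcal I\) requires the global supporting-line property of the convex function \(\psi\) rather than any local expansion. Once differentiability of \(\psi\) at \(1/2\) is secured through the identification \(\psi=1-V\) on \(\mathcal I\), this is immediate.
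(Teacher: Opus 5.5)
Your proof is correct and follows essentially the same route as the paper: convexity of $\psi$, together with the two-sided bound $0<\lambda V'(1/2)<2s$ from Proposition~\ref{prop:Vprime}, makes the kink at $q=1/2$ the unique minimizer of $\Phi$, and uniqueness and interiority of $\boldsymbol\alpha^\star$ then come from the slice optimizer of Proposition~\ref{prop:inner}. Your global supporting-line formulation in Step~3 and the explicit reduction $F(\boldsymbol\alpha)\ge\Phi(q(\boldsymbol\alpha))$ in Step~4 simply spell out details that the paper's one-sided-derivative argument leaves implicit.
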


\begin{proof}
We first note that \(\Phi\) is convex on \([0,1]\) since \(\psi\) is convex by Proposition~\ref{prop:dual}.

Since \(n_+>n_-\), we have \(\theta_{n_+,n_-}>0\).
By Proposition~\ref{prop:Vprime}, \(V'(1/2) \geq \frac{\theta_{n_+,n_-}}{D} - C_V\mathfrak E_V \geq \frac{1}{2D}\theta_{n_+,n_-} >0\).
The same proposition gives \(V'(1/2) \leq \frac{\theta_{n_+,n_-}}{D} + C_V\mathfrak E_V\).
Thus \(\lambda V'(1/2)<2s\). The one-sided derivatives of the convex function
\(\Phi\) are \(\Phi'_-(1/2) = -2s-\lambda V'(1/2)<0\)
and \(\Phi'_+(1/2) = 2s-\lambda V'(1/2)>0\).
A convex function with strictly negative left derivative and strictly positive
right derivative has a unique minimizer at the kink. Proposition~\ref{prop:inner}
gives a unique interior inner optimizer there.
\end{proof}

\begin{proposition}[Local relative growth]
\label{prop:local-growth}
Under the standing assumptions of this subsection, suppose that
\eqref{eq:section-balanced-assumptions} holds.
Then \(F\) has local relative growth at
\(\boldsymbol\alpha^\star\).
\end{proposition}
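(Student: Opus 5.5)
The proof splits the deviation $\boldsymbol\alpha-\boldsymbol\alpha^\star$ into a balance component, which moves $q$, and a tangent component in $\T$. We then show that $F(\boldsymbol\alpha)-F(\boldsymbol\alpha^\star)$ dominates a quadratic in each, and finish with Lemma~\ref{lem:local-kl-quadratic}, which gives $\KL(\boldsymbol\alpha^\star\|\boldsymbol\alpha)\le C_{\mathrm{KL}}\|\boldsymbol\alpha-\boldsymbol\alpha^\star\|_2^2$ near the interior point $\boldsymbol\alpha^\star$.

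\textbf{Decomposition.} For $\boldsymbol\alpha\in\D_n$ near $\boldsymbol\alpha^\star$, set $q:=q(\boldsymbol\alpha)$ and write
\[
\boldsymbol\alpha=\overline{\boldsymbol\alpha}(q)+\boldsymbol\beta,\qquad \boldsymbol\beta\in\T .
\]
Since $\overline{\boldsymbol\alpha}(q)-\overline{\boldsymbol\alpha}(1/2)$ is orthogonal to $\T$, we get
\[
\|\boldsymbol\alpha-\boldsymbol\alpha^\star\|_2^2=\|\overline{\boldsymbol\alpha}(q)-\overline{\boldsymbol\alpha}(1/2)\|_2^2+\|\boldsymbol\beta-\boldsymbol\beta^\star(1/2)\|_2^2 .
\]
The first term is at most $C|q-1/2|^2/n$. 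By Proposition~\ref{prop:dual},
\[
F(\boldsymbol\alpha)-F(\boldsymbol\alpha^\star)=\bigl[\Phi(q)-\Phi(1/2)\bigr]+\lambda\bigl[V(q)-\cG(\boldsymbol\alpha)\bigr],
\]
and both brackets are nonnegative because $V(q)$ is the slice maximum.

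\textbf{Cusp term.} From the proof of Proposition~\ref{prop:balanced}, $\Phi'_+(1/2)\ge 2s-\lambda(\theta/D+C_V\mathfrak E_V)=:\gamma>0$, and symmetrically for $\Phi'_-(1/2)$. Convexity of $\Phi$ then gives $\Phi(q)-\Phi(1/2)\ge\gamma|q-1/2|$. Near $q=1/2$ this linear bound dominates any quadratic, so $\Phi(q)-\Phi(1/2)\ge c\,|q-1/2|^2$ once $|q-1/2|$ is small.

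\textbf{Tangent term.} Proposition~\ref{prop:strong-concavity} states that $\boldsymbol\beta\mapsto\cG(\overline{\boldsymbol\alpha}(q)+\boldsymbol\beta)$ is $\frac{c_H}{2}\frac{n^2}{D}$-strongly concave on $\T$ within the $\ell_\infty$-box of Proposition~\ref{prop:inner}. Its maximizer $\boldsymbol\beta^\star(q)$ has vanishing projected gradient by \eqref{eq:inner-first-order}. Hence, uniformly in $q\in\mathcal I$,
\[
V(q)-\cG(\boldsymbol\alpha)\ge \tfrac{c_H}{4}\tfrac{n^2}{D}\,\|\boldsymbol\beta-\boldsymbol\beta^\star(q)\|_2^2 .
\]
Since $q\mapsto\boldsymbol\beta^\star(q)$ is $C^1$ by Proposition~\ref{prop:inner}, it is locally Lipschitz with some constant $L_\beta$. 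Therefore
\[
\|\boldsymbol\beta-\boldsymbol\beta^\star(1/2)\|_2^2\le 2\|\boldsymbol\beta-\boldsymbol\beta^\star(q)\|_2^2+2L_\beta^2|q-1/2|^2 .
\]

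\textbf{Combining.} Putting the pieces together gives
\[
F(\boldsymbol\alpha)-F(\boldsymbol\alpha^\star)\ge c'\,\|\boldsymbol\alpha-\boldsymbol\alpha^\star\|_2^2 ,
\]
with $c'$ depending on $\gamma,\lambda,n,D,L_\beta$. Only existence of such a $c'$ matters here, not its size.

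\textbf{Localization.} The bound above needs $\boldsymbol\alpha$ to lie in a region where $q\in\mathcal I$, the box condition holds, and Lemma~\ref{lem:local-kl-quadratic} applies. This requires $\|\boldsymbol\alpha-\boldsymbol\alpha^\star\|_2$ small. From Pinsker's inequality, $\|\boldsymbol\alpha-\boldsymbol\alpha^\star\|_1^2\le2\KL(\boldsymbol\alpha^\star\|\boldsymbol\alpha)$. Choosing $r$ small therefore places $\{\KL\le r\}$ inside this Euclidean neighbourhood, and $\mu:=c'/C_{\mathrm{KL}}$ works.

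\textbf{Main obstacle.} The delicate step is the uniform strong concavity along off-center slices $q\ne1/2$. The minimizer $\boldsymbol\beta^\star(q)$ must stay inside the box where Proposition~\ref{prop:strong-concavity} holds. It must also remain uniformly interior, so that slice-wise strong concavity transfers to the deviation $\boldsymbol\beta-\boldsymbol\beta^\star(q)$ for all $\boldsymbol\alpha$ near $\boldsymbol\alpha^\star$. Proposition~\ref{prop:inner} supplies both facts uniformly over $q\in\mathcal I$, and I would invoke it directly.
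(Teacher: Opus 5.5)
Your proposal is correct and follows essentially the same route as the paper. The shared steps are: the slice decomposition $\boldsymbol\alpha=\overline{\boldsymbol\alpha}(q)+\boldsymbol\beta$, the linear cusp lower bound on $\Phi$, slice-wise strong convexity around $\boldsymbol\beta^\star(q)$ on the $\ell_\infty$-box from Propositions~\ref{prop:strong-concavity} and~\ref{prop:inner}, $C^1$-regularity of $q\mapsto\boldsymbol\beta^\star(q)$, and Lemma~\ref{lem:local-kl-quadratic}. The only differences are cosmetic: you use the exact orthogonality of $\overline{\boldsymbol\alpha}(q)-\overline{\boldsymbol\alpha}(1/2)$ to $\T$ where the paper uses a three-term triangle inequality, and you invoke Pinsker explicitly for the final localization.
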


\begin{proof}
Set
\[
r_{\mathrm{sc}}
:=
C_{\mathrm{in}}
\left(
\frac{1}{\sqrt n}
+\frac{D}{n^2}b_{H,2\to\infty}
\right)
\mathfrak R.
\]
By Proposition~\ref{prop:inner}, Proposition~\ref{prop:strong-concavity}
applies uniformly over
\[
q\in\mathcal I,
\qquad
\boldsymbol\beta\in\T,
\qquad
\|\boldsymbol\beta\|_\infty\leq r_{\mathrm{sc}},
\]
whereas
\begin{equation}
\label{eq:local-growth-inner-margin}
\|\boldsymbol\beta^\star(q)\|_\infty
\leq
\frac12 r_{\mathrm{sc}}
\end{equation}
uniformly over \(q\in\mathcal I\).

Let \(\boldsymbol\alpha\) be a feasible vector sufficiently close to
\(\boldsymbol\alpha^\star\). Then
\(q:=q(\boldsymbol\alpha)\) is sufficiently close to \(1/2\), and we can
write uniquely
\[
\boldsymbol\alpha
=
\overline{\boldsymbol\alpha}(q)+\boldsymbol\beta,
\qquad
\boldsymbol\beta\in\T.
\]
Since
\(\boldsymbol\beta^\star(q)\) is continuous in \(q\), by taking the
neighborhood of \(\boldsymbol\alpha^\star\) sufficiently small we may assume
that
\[
\|\boldsymbol\beta-\boldsymbol\beta^\star(q)\|_\infty
\leq
\frac12 r_{\mathrm{sc}}.
\]
Together with \eqref{eq:local-growth-inner-margin}, this implies that the
line segment joining \(\boldsymbol\beta^\star(q)\) and
\(\boldsymbol\beta\) lies in the region on which
Proposition~\ref{prop:strong-concavity} applies.

By Proposition~\ref{prop:balanced}, \(q=1/2\) is the unique minimizer of
\(\Phi\), and the one-sided derivatives at \(1/2\) are strictly negative
on the left and strictly positive on the right. Hence there exist
\(c_1,\varepsilon_1>0\) such that
\begin{equation}
\label{eq:local-growth-reduced}
\Phi(q)-\Phi(1/2)
\geq
c_1|q-1/2|
\end{equation}
whenever \(|q-1/2|\leq\varepsilon_1\).

Next, since 
$F(\boldsymbol\alpha)
=
2s\left|q-\frac12\right|
+\lambda\|Z(\boldsymbol\alpha)\|_1
$
and
$
\Phi(q)
=
2s\left|q-\frac12\right|
+\lambda\|Z(\boldsymbol\alpha^\star(q))\|_1,
$
we have
\begin{align}
F(\boldsymbol\alpha)-F(\boldsymbol\alpha^\star)
&=
\Phi(q)-\Phi(1/2)
\notag\\
&\qquad
+
\lambda
\left(
\|Z(\overline{\boldsymbol\alpha}(q)+\boldsymbol\beta)\|_1
-
\|Z(\boldsymbol\alpha^\star(q))\|_1
\right).
\label{eq:local-growth-decomposition}
\end{align}

On the line segment described above,
Proposition~\ref{prop:strong-concavity} implies that
\(\|Z(\overline{\boldsymbol\alpha}(q)+\cdot)\|_1=1-\cG\) is strongly
convex on \(\T\), with Hessian bounded below by
\((c_H/2)(n^2/D)I_{\T}\).
Since $\boldsymbol\alpha^\star(q) = \overline{\boldsymbol\alpha}(q)+\boldsymbol\beta^\star(q)$ is the tangent minimizer on the slice $q$, we have
\begin{equation}
\label{eq:local-growth-tangent}
\|Z(\overline{\boldsymbol\alpha}(q)+\boldsymbol\beta)\|_1
-
\|Z(\boldsymbol\alpha^\star(q))\|_1
\geq
\frac{c_H}{4}\frac{n^2}{D}
\|\boldsymbol\beta-\boldsymbol\beta^\star(q)\|_2^2.
\end{equation}
Combining \eqref{eq:local-growth-reduced},
\eqref{eq:local-growth-decomposition}, and
\eqref{eq:local-growth-tangent} gives
\begin{equation}
\label{eq:local-growth-F-lower}
F(\boldsymbol\alpha)-F(\boldsymbol\alpha^\star)
\geq
c_1|q-1/2|
+
\frac{\lambda c_H}{4}\frac{n^2}{D}
\|\boldsymbol\beta-\boldsymbol\beta^\star(q)\|_2^2.
\end{equation}

On the other hand,
\[
\boldsymbol\alpha-\boldsymbol\alpha^\star
=
\bigl(
\overline{\boldsymbol\alpha}(q)
-\overline{\boldsymbol\alpha}(1/2)
\bigr)
+
\bigl(
\boldsymbol\beta-\boldsymbol\beta^\star(q)
\bigr)
+
\bigl(
\boldsymbol\beta^\star(q)
-\boldsymbol\beta^\star(1/2)
\bigr).
\]
The path \(q\mapsto\overline{\boldsymbol\alpha}(q)\) is smooth, while
Proposition~\ref{prop:inner} gives that
\(q\mapsto\boldsymbol\beta^\star(q)\) is continuously differentiable.
Hence there exists \(C>0\) such that, for \(q\) sufficiently close to
\(1/2\),
\[
\|\boldsymbol\alpha-\boldsymbol\alpha^\star\|_2^2
\leq
C\left(
|q-1/2|^2
+
\|\boldsymbol\beta-\boldsymbol\beta^\star(q)\|_2^2
\right).
\]
Since \(|q-1/2|^2\leq|q-1/2|\) locally,
\eqref{eq:local-growth-F-lower} implies that, in a sufficiently small
neighborhood of \(\boldsymbol\alpha^\star\),
$
F(\boldsymbol\alpha)-F(\boldsymbol\alpha^\star)
\geq
\widetilde\mu
\|\boldsymbol\alpha-\boldsymbol\alpha^\star\|_2^2
$
for some \(\widetilde\mu>0\).

Finally, by Lemma~\ref{lem:local-kl-quadratic}, after possibly shrinking
the neighborhood,
\[
\KL(\boldsymbol\alpha^\star\|\boldsymbol\alpha)
\leq
C_{\mathrm{KL}}
\|\boldsymbol\alpha-\boldsymbol\alpha^\star\|_2^2.
\]
Thus
\[
F(\boldsymbol\alpha)-F(\boldsymbol\alpha^\star)
\geq
\frac{\widetilde\mu}{C_{\mathrm{KL}}}
\KL(\boldsymbol\alpha^\star\|\boldsymbol\alpha)
\]
throughout this neighborhood. Choosing \(r>0\) sufficiently small so that
\(\KL(\boldsymbol\alpha^\star\|\boldsymbol\alpha)\leq r\) implies that
\(\boldsymbol\alpha\) belongs to this neighborhood proves local relative
growth.
\end{proof}

\begin{lemma}[Rank of positive noise combinations]
\label{lem:positive-noise-rank}
Suppose \(\mathcal E(\boldsymbol b,\varepsilon,\nu)\) holds with
\(\varepsilon<1\).  Then \(U\) and \(\widehat V\) have full column rank, and,
for every \(\boldsymbol\alpha\in\operatorname{int}(\D_n)\),
\[
Z(\boldsymbol\alpha)
=U\Lambda_{\boldsymbol\alpha}\widehat V^\top
\]
has rank \(n\).  In particular,
\(\operatorname{rank}Z(\boldsymbol\alpha^\star)=n\) whenever
\(\boldsymbol\alpha^\star\in\operatorname{int}(\D_n)\).
\end{lemma}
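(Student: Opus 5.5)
The plan is to read everything off the Gram matrices on the basic event. By Definition~\ref{def:basic-event}, $G_U=U^\top U=I+\Delta_U$ and $G_V=\widehat V^\top\widehat V=I+\Delta_V$ with $\|\Delta_U\|_{\op}\vee\|\Delta_V\|_{\op}\leq\varepsilon<1$. Weyl's inequality then gives $\lambda_{\min}(G_U)\geq 1-\varepsilon>0$, and likewise for $G_V$. Hence both Gram matrices are invertible, so $U\boldsymbol x=0$ forces $\boldsymbol x^\top G_U\boldsymbol x=0$ and therefore $\boldsymbol x=0$. This shows that $U$ and $\widehat V$ (each $D\times n$) have full column rank $n$.

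Next fix $\boldsymbol\alpha\in\operatorname{int}(\D_n)$. Then every $\alpha_i>0$, so $\Lambda_{\boldsymbol\alpha}$ is an invertible $n\times n$ diagonal matrix. We use two elementary facts. First, left multiplication by a matrix with full column rank preserves rank. Second, right multiplication by the transpose of a matrix with full column rank preserves rank, since $\widehat V^\top$ has full row rank and is therefore surjective onto $\R^n$. Applying these gives
\[
\operatorname{rank}\bigl(U\Lambda_{\boldsymbol\alpha}\widehat V^\top\bigr)=\operatorname{rank}\bigl(\Lambda_{\boldsymbol\alpha}\widehat V^\top\bigr)=\operatorname{rank}\bigl(\widehat V^\top\bigr)=n.
\]
An alternative route is to write $\|Z(\boldsymbol\alpha)\|$'s singular values via $G_U^{1/2}\Lambda_{\boldsymbol\alpha}G_V^{1/2}$, which is a product of invertible matrices, as in Lemma~\ref{lem:polar-orthonormalization}.

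The last claim is the special case $\boldsymbol\alpha=\boldsymbol\alpha^\star$. No step here is a real obstacle. The only point needing care is that $Z(\boldsymbol\alpha)$ lives in $\R^{D\times D}$, or more precisely in the block $\mathcal N$, so "rank $n$" means its row and column spaces are the $n$-dimensional spans of the $\widehat{\boldsymbol v}_i$ and $\boldsymbol u_i$. This follows from the column-space identity $\operatorname{col}(U\Lambda_{\boldsymbol\alpha}\widehat V^\top)=\operatorname{col}(U)$, which holds because $\Lambda_{\boldsymbol\alpha}\widehat V^\top$ is surjective.
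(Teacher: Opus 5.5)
Your proof is correct and follows the paper's argument essentially verbatim. The basic event gives $\|G_U-I\|_{\op}\vee\|G_V-I\|_{\op}\leq\varepsilon<1$, so both Gram matrices are positive definite and $U,\widehat V$ have full column rank; invertibility of $\Lambda_{\boldsymbol\alpha}$ for interior $\boldsymbol\alpha$ then yields $\operatorname{rank}(U\Lambda_{\boldsymbol\alpha}\widehat V^\top)=n$, and your explicit rank chain simply spells out this last step, which the paper leaves implicit.
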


\begin{proof}
By the geometric bounds in Definition~\ref{def:basic-event},
\(\|G_U-I\|_{\op}\vee\|G_V-I\|_{\op}<1\), so both Gram matrices are positive
definite.  Hence \(U\) and \(\widehat V\) have full column rank.  If
\(\boldsymbol\alpha\in\operatorname{int}(\D_n)\), then
\(\Lambda_{\boldsymbol\alpha}\) is invertible, and therefore
\(U\Lambda_{\boldsymbol\alpha}\widehat V^\top\) has rank \(n\).
\end{proof}

\begin{proposition}[KKT and envelope identification]
\label{prop:KKT-envelope}
Under the standing assumptions of this subsection, suppose that
\eqref{eq:section-balanced-assumptions} holds.  Let \(Z^\star:=Z(\boldsymbol \alpha^\star), \quad H^\star:=\pol(Z^\star)\).
Then \(\operatorname{rank}(Z^\star) = \operatorname{rank}(H^\star) = n\),
and \(V'(1/2) = h_-^\star-h_+^\star\),
where
\[
h_+^\star
=
\frac{1}{n_+}\sum_{i\in I_+}\ip{H^\star}{N_i},
\qquad
h_-^\star
=
\frac{1}{n_-}\sum_{i\in I_-}\ip{H^\star}{N_i}.
\]
The signal coefficient selected by the iterates is \(a^\star = \frac{\lambda}{2s}V'(1/2)\).
Moreover, \(0<a^\star<1\).
\end{proposition}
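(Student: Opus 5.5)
The proof has three parts: the rank claim, the envelope identity $V'(1/2)=h_-^\star-h_+^\star$, and the signal coefficient with its bounds.

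\emph{Rank.} By Proposition~\ref{prop:balanced}, $\boldsymbol\alpha^\star=\boldsymbol\alpha^\star(1/2)$ is unique and interior. On the basic event we have $\varepsilon\leq\varrho_0<1$, so Lemma~\ref{lem:positive-noise-rank} gives $\operatorname{rank}Z^\star=n$. The polar factor preserves row and column spaces, hence $\operatorname{rank}H^\star=n$.

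\emph{Envelope identity.} By Proposition~\ref{prop:inner}, $V'(1/2)=\langle\nabla\cG(\boldsymbol\alpha^\star),\overline{\boldsymbol\alpha}'(1/2)\rangle$. Since $\cG=1-\|Z(\cdot)\|_1$, we need the gradient of $\boldsymbol\alpha\mapsto\|Z(\boldsymbol\alpha)\|_1$ at the interior point $\boldsymbol\alpha^\star$. There $Z^\star$ has rank exactly $n$ and its nonzero singular values are bounded away from zero. The nuclear norm is differentiable along the $n$-dimensional family $Z(\boldsymbol\alpha)=U\Lambda_{\boldsymbol\alpha}\widehat V^\top$, because the row and column spaces stay fixed (they are spanned by $U,\widehat V$). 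I would derive this directly via Lemma~\ref{lem:polar-orthonormalization}, which reduces to the nonsingular $n\times n$ matrix $G_U^{1/2}\Lambda_{\boldsymbol\alpha}G_V^{1/2}$, or else invoke Lemma~\ref{lem:gradient-schatten-one}. Either way,
\[
\partial_{\alpha_i}\|Z(\boldsymbol\alpha^\star)\|_1=\langle H^\star,N_i\rangle_F.
\]
With $\overline{\boldsymbol\alpha}'(1/2)=\frac{1}{n_+}\boldsymbol 1_{I_+}-\frac{1}{n_-}\boldsymbol 1_{I_-}$, this yields
\[
V'(1/2)=-\bigl(h_+^\star-h_-^\star\bigr)=h_-^\star-h_+^\star.
\]

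\emph{Signal coefficient.} Propositions~\ref{prop:balanced} and \ref{prop:local-growth} give uniqueness, interiority and local relative growth, so Proposition~\ref{prop:selection} applies. It gives $a^\star=-\frac{\lambda}{2s}(h_+^\star-h_-^\star)=\frac{\lambda}{2s}V'(1/2)$.

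\emph{Bounds $0<a^\star<1$.} Both follow from the estimates already used in the proof of Proposition~\ref{prop:balanced}. The first condition in \eqref{eq:section-balanced-assumptions}, together with Proposition~\ref{prop:Vprime}, gives $V'(1/2)\geq\theta_{n_+,n_-}/(2D)>0$, using $n_+>n_-$ so that $\theta_{n_+,n_-}>0$. The second condition gives $\lambda V'(1/2)<2s$.

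\emph{Main obstacle.} The only delicate step is justifying differentiability of $\|Z(\cdot)\|_1$ at $\boldsymbol\alpha^\star$ with gradient $(\langle H^\star,N_i\rangle)_i$. The matrix $Z^\star$ is rank-deficient as a $D\times D$ matrix, so the nuclear norm is not differentiable on all of $\mathbb R^{D\times D}$ there. It is differentiable only along the admissible affine family with fixed row and column spaces, and the proof must restrict to that family (equivalently, use the $n\times n$ reduction) before reading off the gradient.
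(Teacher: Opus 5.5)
Your proposal is correct and follows the paper's proof step by step. The paper obtains the rank from Lemma~\ref{lem:positive-noise-rank}, and gets differentiability of $\boldsymbol\alpha\mapsto\|Z(\boldsymbol\alpha)\|_1$ by reducing through Lemma~\ref{lem:polar-orthonormalization} to the invertible $n\times n$ matrix $Y^\star=G_U^{1/2}\Lambda_{\boldsymbol\alpha^\star}G_V^{1/2}$ and applying Lemma~\ref{lem:gradient-schatten-one} there; it then uses the envelope formula, Proposition~\ref{prop:selection}, and the one-sided derivative bounds from Proposition~\ref{prop:balanced}. The one step you leave implicit, identifying $\langle\pol(Y^\star),G_U^{1/2}E_{ii}G_V^{1/2}\rangle_F$ with $\langle H^\star,N_i\rangle_F$, is done in the paper using $H^\star=\overline U\pol(Y^\star)\overline V^\top$ together with $U^\top\overline U=G_U^{1/2}$ and $\overline V^\top\widehat V=G_V^{1/2}$.
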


\begin{proof}
By Proposition~\ref{prop:balanced}, the full dual minimizer is
\(\boldsymbol \alpha^\star=\boldsymbol \alpha^\star(1/2)\), and it is interior.
By Lemma~\ref{lem:positive-noise-rank},
\(Z^\star=Z(\boldsymbol \alpha^\star)=U\Lambda_{\alpha^\star}\widehat V^\top\) has rank \(n\).
Since the polar factor has the same rank as the matrix to which it is applied,
\(\operatorname{rank}(H^\star)=n\). 
With the core notation from \eqref{eq:noise-core-notation},
Lemma~\ref{lem:polar-orthonormalization} gives \(\|Z(\boldsymbol \alpha)\|_1=\|Y(\alpha)\|_1\).
Since \(G_U^{1/2}\) and \(G_V^{1/2}\) are invertible and
\(\boldsymbol \alpha^\star\in\operatorname{int}(\D_n)\), the matrix \(Y^\star:=Y(\boldsymbol \alpha^\star) = G_U^{1/2}\Lambda_{\alpha^\star}G_V^{1/2}\)
is invertible.  Therefore Lemma~\ref{lem:gradient-schatten-one}, applied to
the full-rank \(n\times n\) matrix \(Y^\star\), implies that the coefficient
map \(\boldsymbol \alpha\mapsto \|Z(\boldsymbol \alpha)\|_1\)
is differentiable at \(\boldsymbol \alpha^\star\).
For the \(i\)-th coordinate direction, \(\frac{\partial}{\partial\alpha_i}Y(\boldsymbol \alpha^\star) = G_U^{1/2}E_{ii}G_V^{1/2}\),
where \(E_{ii}\) is the diagonal matrix with a single \(1\) in the \(i\)-th
diagonal entry.  Hence
\[
\frac{\partial}{\partial\alpha_i}\|Z(\boldsymbol \alpha^\star)\|_1
=
\left\langle
\pol(Y^\star),
G_U^{1/2}E_{ii}G_V^{1/2}
\right\rangle_F.
\]
On the other hand, Lemma~\ref{lem:polar-orthonormalization} gives \(H^\star = \pol(Z^\star) = \overline U\,\pol(Y^\star)\,\overline V^\top\),
where \(\overline U:=UG_U^{-1/2}, \overline V:=\widehat V G_V^{-1/2}\).
Since \(N_i=\boldsymbol u_i\widehat{\boldsymbol v}_i^\top\), we have
\[
\ip{H^\star}{N_i}
=
\boldsymbol u_i^\top \overline U
\pol(Y^\star)
\overline V^\top\widehat{\boldsymbol v}_i.
\]
Using \(U^\top\overline U=G_U^{1/2}, \quad \overline V^\top\widehat V=G_V^{1/2}\),
this becomes
\[
\ip{H^\star}{N_i}
=
e_i^\top
G_U^{1/2}\pol(Y^\star)G_V^{1/2}
e_i
=
\left\langle
\pol(Y^\star),
G_U^{1/2}E_{ii}G_V^{1/2}
\right\rangle_F.
\]
Therefore \(\frac{\partial}{\partial\alpha_i}\|Z(\boldsymbol \alpha^\star)\|_1 = \ip{H^\star}{N_i}\).
By Proposition~\ref{prop:inner}, \(V'(1/2) = \left\langle \nabla\cG(\boldsymbol \alpha^\star), \overline{\boldsymbol \alpha}'(1/2) \right\rangle\).
Since \(\cG(\boldsymbol{\alpha})=1-\|Z(\boldsymbol \alpha)\|_1\), we have \(\frac{\partial}{\partial\alpha_i}\cG(\alpha^\star) = -\ip{H^\star}{N_i}\).
Using
\[
\overline\alpha_i'(1/2)
=
\begin{cases}
1/n_+, & i\in I_+,\\
-1/n_-, & i\in I_-,
\end{cases}
\]
we get
\[
V'(1/2)
=
-\frac{1}{n_+}\sum_{i\in I_+}\ip{H^\star}{N_i}
+
\frac{1}{n_-}\sum_{i\in I_-}\ip{H^\star}{N_i}
=
h_-^\star-h_+^\star.
\]
Proposition~\ref{prop:selection} now gives \(a^\star = -\frac{\lambda}{2s}(h_+^\star-h_-^\star) = \frac{\lambda}{2s}V'(1/2)\).
The inequalities \(0<a^\star<1\) follow from Proposition~\ref{prop:balanced}.
Indeed, \(V'(1/2)>0\) gives \(a^\star>0\), and the strict right-derivative
inequality \(2s-\lambda V'(1/2)>0\)
is equivalent to \(a^\star<1\).
\end{proof}

\subsection{Completion of the proof}

We now return to the explicit high-probability scales.  Let
\(0<\delta<e^{-1}\), set \(L:=\log(n/\delta)\), and write
\[
\boldsymbol b:=\boldsymbol b_{n,D,\delta},
\qquad
\varepsilon:=\varepsilon_{n,D,\delta},
\qquad
\nu:=\nu_{n,D,\delta}.
\]
Throughout this subsection, the deterministic quantities
\(\mathfrak R\), \(\mathfrak D\), and \(\mathfrak E_V\) are evaluated at
these event thresholds.  Define
\begin{equation}
\label{eq:fixed-point-scales}
R_{n,\delta}
:=
\frac{\sqrt L}{n}
+\frac{L}{n^{3/2}}
+\frac{(n+L)^{3/2}}{n^{3/2}\sqrt D}.
\end{equation}
By \eqref{eq:path-event-parameters}, \eqref{eq:gram-event-parameters}, and
\eqref{eq:deterministic-R-scale}, there exists \(B_R\geq1\), depending only
on \(\kappa,r_0\), such that
\begin{equation}
\label{eq:explicit-R-comparison}
\mathfrak R
\leq
B_RR_{n,\delta},
\qquad
\left(\frac{1}{\sqrt n}+\frac{D}{n^2}b_{H,2\to\infty}\right)\mathfrak R
\leq
B_R\frac{1+L}{\sqrt n}R_{n,\delta}.
\end{equation}
Thus \(R_{n,\delta}\) is an explicit upper envelope, up to a constant, for
the deterministic localization scale \(\mathfrak R\); the second inequality
provides the corresponding \(\ell_\infty\)-scale.

\begin{theorem}[Conditional high-probability form]
\label{thm:main-conditional}
Fix \(0<\kappa<1/2\), \(s>0\), and \(\lambda>0\).  There exist constants
\(c_{\mathrm{dim}},c_{\mathrm{loc}},C_{\mathrm{sec}}>0\), depending only on
\(\kappa,r_0\), such that the following holds.  Condition on signs satisfying
\eqref{eq:conditional-label-counts}.  Let \(0<\delta<e^{-1}\), set
\(L=\log(n/\delta)\), and define \(R_{n,\delta}\) by
\eqref{eq:fixed-point-scales}.  Assume
\begin{equation}
\label{eq:conditional-basic-assumptions}
D\geq c_{\mathrm{dim}}nL,
\qquad
\mathfrak D\!\left(
C_{\mathrm{sec}}\frac{1+L}{\sqrt n}R_{n,\delta}
\right)
\leq c_{\mathrm{loc}},
\end{equation}
and
\begin{equation}
\label{eq:conditional-balanced-assumptions}
C_{\mathrm{sec}}\mathfrak E_V
\leq
\frac{1}{2D}\theta_{n_+,n_-},
\qquad
\lambda\left(
\frac{1}{D}\theta_{n_+,n_-}+C_{\mathrm{sec}}\mathfrak E_V
\right)<2s.
\end{equation}
Then, with conditional probability at least \(1-\delta\) over the spherical
noise variables, the dual spectral max-margin problem has a unique minimizer
\(\boldsymbol\alpha^\star\in\operatorname{int}(\D_n)\) with
\(q(\boldsymbol\alpha^\star)=1/2\).  Setting
\(Z^\star:=\sum_i\alpha_i^\star N_i\) and \(H^\star:=\pol(Z^\star)\), we have
\(\operatorname{rank}(Z^\star)=\operatorname{rank}(H^\star)=n\),
\(\boldsymbol\alpha_k\to\boldsymbol\alpha^\star\),
\[
\frac{W_k^{\mathrm{SpecGD}}}{\tau_k}
\longrightarrow
a^\star P_S+H^\star,
\qquad
\left|
a^\star-\frac{\lambda}{2sD}\theta_{n_+,n_-}
\right|
\leq
\frac{\lambda}{2s}C_{\mathrm{sec}}\mathfrak E_V.
\]
Moreover \(0<a^\star<1\), and
\(W_k^{\mathrm{SpecGD}}/\|W_k^{\mathrm{SpecGD}}\|_{\op}
\to a^\star P_S+H^\star\).
\end{theorem}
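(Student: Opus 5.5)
The proof is an assembly argument: every substantive estimate has already been established deterministically on the basic event $\mathcal E(\boldsymbol b,\varepsilon,\nu)$. What remains is to show that the explicit hypotheses \eqref{eq:conditional-basic-assumptions}--\eqref{eq:conditional-balanced-assumptions} imply the abstract smallness conditions used in Propositions~\ref{prop:inner}--\ref{prop:KKT-envelope}, and then to invoke Proposition~\ref{prop:selection}.

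\textbf{Step 1 (probability).} Conditionally on the signs, the spherical variables are i.i.d., so Lemma~\ref{lem:path-derivatives} applies once $c_{\mathrm{dim}}\ge c_{\mathrm{path}}$. It gives $\prob(\mathcal E(\boldsymbol b_{n,D,\delta},\varepsilon_{n,D,\delta},\nu_{n,D,\delta}))\ge 1-\delta$. From now on we work on this event.

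\textbf{Step 2 (verifying the deterministic hypotheses).} First, $\varepsilon=K_G\sqrt{(n+L)/D}\le\varrho_0$ after enlarging $c_{\mathrm{dim}}$. Next, by \eqref{eq:explicit-R-comparison},
\[
C_{\mathrm{in}}\Bigl(\tfrac1{\sqrt n}+\tfrac{D}{n^2}b_{H,2\to\infty}\Bigr)\mathfrak R\le C_{\mathrm{in}}B_R\tfrac{1+L}{\sqrt n}R_{n,\delta}.
\]
Choosing $C_{\mathrm{sec}}\ge C_{\mathrm{in}}B_R$ and using that $\mathfrak D$ is nondecreasing in $r$, condition \eqref{eq:conditional-basic-assumptions} with $c_{\mathrm{loc}}\le c_{\mathrm{fp}}$ yields \eqref{eq:inner-deterministic-smallness}. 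Similarly, taking $C_{\mathrm{sec}}\ge C_V$ makes \eqref{eq:conditional-balanced-assumptions} imply \eqref{eq:section-balanced-assumptions}. I would choose $C_{\mathrm{sec}}=\max\{C_{\mathrm{in}}B_R,C_V\}$ and $c_{\mathrm{loc}}=c_{\mathrm{fp}}$. This constant bookkeeping is the only place where care is needed, since the same $C_{\mathrm{sec}}$ appears on both sides of the hypotheses; but the monotonicity of $\mathfrak D$ makes it consistent.

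\textbf{Step 3 (conclusions).} Proposition~\ref{prop:balanced} gives a unique interior dual minimizer $\boldsymbol\alpha^\star=\boldsymbol\alpha^\star(1/2)$ with $q(\boldsymbol\alpha^\star)=1/2$. Because the restriction $q\in\mathcal I$ is only local, global uniqueness over all of $\D_n$ follows from convexity: $\Phi$ is convex on $[0,1]$ with a strict kink minimum at $1/2$, and the slice minimizer at $1/2$ is unique. Proposition~\ref{prop:local-growth} gives local relative growth, so Theorem~\ref{thm:dual-convergence} yields $\boldsymbol\alpha_k\to\boldsymbol\alpha^\star$, and Proposition~\ref{prop:selection} yields $W_k^{\mathrm{SpecGD}}/\tau_k\to a^\star P_S+H^\star$. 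Proposition~\ref{prop:KKT-envelope} supplies the ranks, the identity $a^\star=\frac{\lambda}{2s}V'(1/2)$, and $0<a^\star<1$. Combining with Proposition~\ref{prop:Vprime} gives
\[
|a^\star-\tfrac{\lambda}{2sD}\theta_{n_+,n_-}|\le\tfrac{\lambda}{2s}C_V\mathfrak E_V\le\tfrac{\lambda}{2s}C_{\mathrm{sec}}\mathfrak E_V.
\]

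\textbf{Step 4 (normalization).} Since $P_S$ and $H^\star$ have orthogonal row and column spaces, and $H^\star$ has all nonzero singular values equal to $1$, we have $\|a^\star P_S+H^\star\|_{\op}=\max\{a^\star,1\}=1$ because $a^\star<1$. The operator norm is continuous, so $\|W_k\|_{\op}/\tau_k\to1$, and dividing gives $W_k/\|W_k\|_{\op}\to a^\star P_S+H^\star$.
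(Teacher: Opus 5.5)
Your proposal is correct and follows the paper's proof essentially step for step. You work on the event of Lemma~\ref{lem:path-derivatives}, make the same constant choices \(C_{\mathrm{sec}}\geq\max\{B_RC_{\mathrm{in}},C_V\}\) and \(c_{\mathrm{loc}}\leq c_{\mathrm{fp}}\) (via \eqref{eq:explicit-R-comparison} and monotonicity of \(\mathfrak D\)), chain the same propositions, and finish with \(\|a^\star P_S+H^\star\|_{\op}=1\) from block orthogonality. Two small points: your extra remark on global uniqueness via convexity of \(\Phi\) is already contained in Proposition~\ref{prop:balanced}, and the condition \(\varepsilon<1\) needed for the rank claim holds because \(\varepsilon\leq\varrho_0<1\).
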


\begin{proof}
By increasing \(c_{\mathrm{dim}}\) if necessary,
\eqref{eq:conditional-basic-assumptions} implies the dimensional hypothesis
of Lemma~\ref{lem:path-derivatives} and
\(\varepsilon\leq\min\{\varrho_0,1/2\}\).  Hence, with conditional
probability at least \(1-\delta\),
$\mathcal E(\boldsymbol b,\varepsilon,\nu)$ holds. 
We work on this event.

Let \(c_{\mathrm{fp}},C_{\mathrm{in}}\) be the constants from
Proposition~\ref{prop:inner} and \(C_V\) from
Proposition~\ref{prop:Vprime}.  Enlarge \(C_{\mathrm{sec}}\) so that
\[
C_{\mathrm{sec}}
\geq
\max\{B_RC_{\mathrm{in}},C_V\},
\]
and choose
\[
c_{\mathrm{loc}}\leq c_{\mathrm{fp}}.
\]
By \eqref{eq:explicit-R-comparison},
\[
C_{\mathrm{in}}
\left(\frac{1}{\sqrt n}+\frac{D}{n^2}b_{H,2\to\infty}\right)
\mathfrak R
\leq
C_{\mathrm{sec}}\frac{1+L}{\sqrt n}R_{n,\delta}.
\]
Hence \eqref{eq:conditional-basic-assumptions} and monotonicity of
\(\mathfrak D(r)\) imply the deterministic smallness hypothesis of
Proposition~\ref{prop:inner}.

Proposition~\ref{prop:Vprime} and \(C_{\mathrm{sec}}\geq C_V\) give
\begin{equation}
\label{eq:conditional-Vprime-with-section-constant}
\left|V'(1/2)-\frac1D\theta_{n_+,n_-}\right|
\leq
C_{\mathrm{sec}}\mathfrak E_V.
\end{equation}
Therefore \eqref{eq:conditional-balanced-assumptions} implies the deterministic
assumptions \eqref{eq:section-balanced-assumptions}.  Proposition~\ref{prop:balanced}
then yields \(q_\star=1/2\) and a unique interior full dual minimizer
\(\boldsymbol\alpha^\star=\boldsymbol\alpha^\star(1/2)\).
Lemma~\ref{lem:positive-noise-rank} gives
\(\operatorname{rank}(Z^\star)=n\), hence also
\(\operatorname{rank}(H^\star)=n\).

By Proposition~\ref{prop:local-growth}, \(F\) has local relative growth at
\(\boldsymbol\alpha^\star\), so Theorem~\ref{thm:dual-convergence} gives
\(\boldsymbol\alpha_k\to\boldsymbol\alpha^\star\).  Proposition~\ref{prop:KKT-envelope}
gives \(a^\star=\lambda V'(1/2)/(2s)\), while
Proposition~\ref{prop:selection} yields
\(W_k^{\mathrm{SpecGD}}/\tau_k\to a^\star P_S+H^\star\).
Using \eqref{eq:conditional-Vprime-with-section-constant} gives the claimed
bound on \(a^\star\), and Proposition~\ref{prop:KKT-envelope} gives
\(0<a^\star<1\).  Finally,
\(\|a^\star P_S+H^\star\|_{\op}=1\) by signal--noise block orthogonality,
which yields the normalized directional convergence exactly as before.
\end{proof}

\begin{proof}[Proof of Theorem~\ref{thm:main_dispersed}]
The constants \(c>0\) and \(C>0\) in the theorem will be chosen successively
below.  Each time we shrink \(c\) or enlarge \(C\), we do so only in terms of
\(\rho\) and the constants from Theorem~\ref{thm:main-conditional};
hence the final constants still have the allowed dependence.

Let
\[
\mathcal E_{\mathrm{lab}}(\delta)
:=
\left\{
\left|\frac{n_-}{n}-\rho\right|
\leq
\sqrt{\frac{\log(4/\delta)}{2n}}
\right\}.
\]
By Hoeffding's inequality, \(\prob(\mathcal E_{\mathrm{lab}}(\delta))\geq 1-\frac{\delta}{2}\).
By taking \(c>0\) sufficiently small, depending only on \(\rho\), the assumption
\(L^5\leq cn\) implies that on \(\mathcal E_{\mathrm{lab}}(\delta)\),
\(\frac{\rho}{2}n\leq n_-\leq n_+\leq \left(1-\frac{\rho}{2}\right)n, \quad n_+-n_-\geq \frac{1-2\rho}{2}n\).
Thus \eqref{eq:conditional-label-counts} holds with \(\kappa=\frac{\rho}{2}\).

Condition on a realization of the corruption signs in
\(\mathcal E_{\mathrm{lab}}(\delta)\).  We apply
Theorem~\ref{thm:main-conditional} with failure probability \(\delta/2\).  Let
\(L_2:=\log\frac{2n}{\delta}\).  Then \(L_2\leq 2L\).  In the remainder of
this proof, \(\boldsymbol b,\varepsilon,\nu,\mathfrak R,\mathfrak D\), and
\(\mathfrak E_V\) are evaluated with failure probability \(\delta/2\).
Let \(c_{\mathrm{dim}},c_{\mathrm{loc}},C_{\mathrm{sec}}\) be the constants in
Theorem~\ref{thm:main-conditional} corresponding to
\(\kappa=\rho/2\).  It remains to verify
\eqref{eq:conditional-basic-assumptions} and
\eqref{eq:conditional-balanced-assumptions}, with \(\delta\) replaced by
\(\delta/2\).

First, by choosing \(c\leq1\), \(L^5\leq cn\) implies \(L\leq n\), and hence
\(n+L_2\leq3n\).  From \eqref{eq:fixed-point-scales}, there is a universal
constant \(C_R>0\) such that
\begin{equation}\label{eq:main-proof-R2-bound}
R_{n,\delta/2}
\leq
C_R\left(
\frac{\sqrt L}{n}
+
\frac{1}{\sqrt D}
\right).
\end{equation}
Consequently,
\begin{equation}\label{eq:main-proof-Rinf-bound}
\frac{1+L_2}{\sqrt n}R_{n,\delta/2}
\leq
C_R\left(
\frac{L^{3/2}}{n^{3/2}}
+
\frac{L}{\sqrt{nD}}
\right).
\end{equation}

By the definitions of \(\mathfrak D\), the event thresholds, and
\eqref{eq:main-proof-Rinf-bound}, there exists \(C_D>0\), depending only on
\(\rho,r_0\), such that
\begin{equation}\label{eq:main-proof-D2-bound}
\mathfrak D\!\left(
C_{\mathrm{sec}}\frac{1+L_2}{\sqrt n}R_{n,\delta/2}
\right)
\leq
C_D\left(
\frac{L^{5/2}}{\sqrt n}
+
L^2\sqrt{\frac{n}{D}}
+
\sqrt{\frac{n}{D}}
\right).
\end{equation}
Again, \(L^5\leq cn\) and \(D\geq CnL^4\) make the right-hand side at most
\(c_{\mathrm{loc}}\) after decreasing \(c\) and increasing \(C\).  Also,
increasing \(C\) if necessary gives \(D\geq c_{\mathrm{dim}}nL_2\).
Thus \eqref{eq:conditional-basic-assumptions} holds with failure probability
\(\delta/2\).

Next, under the same assumptions, the deterministic derivative error satisfies
\begin{equation}\label{eq:EV-readable-bound}
\mathfrak E_V
\leq
C_E\frac{n}{D}
\left(
\frac{L^3}{n}
+
\sqrt{\frac{n}{D}}
\right),
\end{equation}
where \(C_E>0\) depends only on \(\rho,r_0\).  Indeed, this follows term by
term from \eqref{eq:deterministic-derivative-scale},
\eqref{eq:explicit-R-comparison}, \eqref{eq:main-proof-R2-bound}, and
\eqref{eq:main-proof-Rinf-bound}.  The mixed terms are controlled using
\(L^5\leq cn\) and \(D\geq CnL^4\).

On \(\mathcal E_{\mathrm{lab}}(\delta)\), the conditional label-count bounds
imply
\begin{equation}\label{eq:theta-upper-lower-main}
c_\rho n
\leq
\theta_{n_+,n_-}
\leq
C_\rho n
\end{equation}
for constants \(c_\rho,C_\rho>0\) depending only on \(\rho\).  Combining the
lower bound in \eqref{eq:theta-upper-lower-main} with
\eqref{eq:EV-readable-bound}, and then taking \(c>0\) sufficiently small and
\(C>0\) sufficiently large, gives
\(C_{\mathrm{sec}}\mathfrak E_V\leq\frac{1}{2D}\theta_{n_+,n_-}\).
For the second inequality in \eqref{eq:conditional-balanced-assumptions}, use
the first inequality just proved and the upper bound in
\eqref{eq:theta-upper-lower-main}.  We obtain
\[
\frac{1}{D}\theta_{n_+,n_-}
+
C_{\mathrm{sec}}\mathfrak E_V
\leq
\frac{3}{2D}\theta_{n_+,n_-}
\leq
\frac{3}{2}C_\rho\frac nD,
\]
where \(C_\rho>0\) depends only on \(\rho\).  Hence, by the assumption
\(\frac{\lambda n}{sD}\leq c\), and by choosing \(c>0\) sufficiently small
depending only on \(\rho\), we get
\[
\lambda\left(
\frac{1}{D}\theta_{n_+,n_-}
+
C_{\mathrm{sec}}\mathfrak E_V
\right)
<
2s.
\]

Thus all assumptions of Theorem~\ref{thm:main-conditional} hold with
\(\kappa=\rho/2\) and failure probability \(\delta/2\).  Conditional on
\(\mathcal E_{\mathrm{lab}}(\delta)\), the conclusions of
Theorem~\ref{thm:main-conditional} hold with probability at least
\(1-\delta/2\).  A union bound gives probability at least \(1-\delta\) over the
joint randomness.

It remains only to pass from the realized-count expression to the displayed
main-theorem expression.  Write \(p_n:=\frac{n_+}{n}\).  Then
\begin{equation}\label{eq:theta-polynomial-main}
\frac{\theta_{n_+,n_-}}{n}
=
\frac{1}{4}(2p_n-1)
\left[
1+4p_n(1-p_n)
\right].
\end{equation}
On \(\mathcal E_{\mathrm{lab}}(\delta)\),
\(|p_n-(1-\rho)|=\left|\frac{n_-}{n}-\rho\right|
\leq\sqrt{\frac{\log(4/\delta)}{2n}}\).
Since the function
\[
p\longmapsto
\frac{1}{4}(2p-1)
\left[
1+4p(1-p)
\right]
\]
has bounded derivative on \([0,1]\), there exists a universal constant
\(C_\theta\) such that, by \eqref{eq:theta-polynomial-main},
\begin{equation}\label{eq:theta-realized-to-population-main}
\left|
\frac{\theta_{n_+,n_-}}{n}
-
\frac14(1-2\rho)
\left[
1+4\rho(1-\rho)
\right]
\right|
\leq
C_\theta\sqrt{\frac{\log(4/\delta)}{n}}.
\end{equation}

By Theorem~\ref{thm:main-conditional},
\[
\left|
a^\star
-
\frac{\lambda}{2sD}\theta_{n_+,n_-}
\right|
\leq
\frac{\lambda}{2s}C_{\mathrm{sec}}\mathfrak E_V.
\]
Combining this with \eqref{eq:EV-readable-bound} and
\eqref{eq:theta-realized-to-population-main} and increasing \(C\) if necessary
yields
\[
\left|
a^\star
-
\frac{\lambda n}{8sD}
(1-2\rho)
\left[
1+4\rho(1-\rho)
\right]
\right|
\leq
C\frac{\lambda n}{sD}
\left(
\sqrt{\frac{\log(1/\delta)}{n}}
+
\frac{L^3}{n}
+
\sqrt{\frac{n}{D}}
\right).
\]
This is \eqref{eq:a-main-high-probability}. The remaining claims are exactly
the conclusions of Theorem~\ref{thm:main-conditional}.
\end{proof}

\subsection{Completion of the proof of Corollary~\ref{cor:generalization-dispersed}}

\begin{proof}[Proof of 
Corollary~\ref{cor:generalization-dispersed}]

First choose $c, C>0$ so that Theorem~\ref{thm:main_dispersed} holds. On the event from Theorem~\ref{thm:main_dispersed}, $W^\star=a^\star P_S+H^\star$, $\operatorname{rank}(H^\star)=n$, $0<a^\star<1$.
For a fresh test point,
\[
y\langle W^\star,X\rangle_F
=
y\left\langle a^\star P_S+H^\star,
yS+\lambda \boldsymbol u\boldsymbol v^\top
\right\rangle_F .
\]
Using the signal-noise block orthogonality, \(\langle P_S,\boldsymbol u\boldsymbol v^\top\rangle_F=0, \quad \langle H^\star,S\rangle_F=0\).
Therefore
\[
y\langle W^\star,X\rangle_F
=
a^\star \langle P_S,S\rangle_F
+
\lambda y\langle H^\star,\boldsymbol u\boldsymbol v^\top\rangle_F
=
a^\star s
+
\lambda y\langle H^\star,\boldsymbol u\boldsymbol v^\top\rangle_F.
\]
Let
\[
T:=\langle H^\star,\boldsymbol u\boldsymbol v^\top\rangle_F,
\qquad
r:=\frac{a^\star s}{\lambda}.
\]
Conditioning further on \(y\), the error event is \(\{T\leq-r\}\) if \(y=1\)
and \(\{T\geq r\}\) if \(y=-1\).  Since, conditionally on the training data,
\(T\) is symmetric about zero, these two probabilities are equal.  Hence
\[
\mathcal R(W^\star)
=
\frac12
\prob\left(
\left|
\langle H^\star,\boldsymbol u\boldsymbol v^\top\rangle_F
\right|
\geq
\frac{a^\star s}{\lambda}
\,\middle|\,
\mathcal D_n
\right).
\]
Applying Lemma~\ref{lem:fresh-bilinear-tail} with \(H=H^\star\) and
\(r_H=n\), we obtain
\begin{equation}
\label{eq:generalization-a-star-bound}
\mathcal R(W^\star)
\leq
2\exp\left[
-C^\prime \min\left\{
\frac{D^2}{n}
\left(\frac{a^\star s}{\lambda}\right)^2,
D\frac{a^\star s}{\lambda}
\right\}
\right].
\end{equation}

It remains to derive the simplified bound.  By Theorem~\ref{thm:main_dispersed},
\[
\left|
a^\star
-
\frac{\lambda n}{sD}\Gamma_\rho
\right|
\leq
C\frac{\lambda n}{sD}
\left(
\sqrt{\frac{\log(1/\delta)}{n}}
+
\frac{L^3}{n}
+
\sqrt{\frac nD}
\right).
\]
Thus, by replacing $c$ with a sufficiently small constant and  $C$ with a sufficiently large constant, we have \(\frac{\lambda n}{sD}\frac{\Gamma_\rho}{2} \leq a^\star \leq \frac{\lambda n}{sD}\frac{3\Gamma_\rho}{2}\).
Equivalently, \(\frac{\Gamma_\rho}{2}\frac{n}{D} \leq \frac{a^\star s}{\lambda}  \leq \frac{3\Gamma_\rho}{2}\frac{n}{D} \).
Substituting this lower bound into
\eqref{eq:generalization-a-star-bound} gives
\[
\mathcal R(W^\star)
\leq
2\exp\left[
-C^\prime\min\left\{
\Gamma_\rho^2 n,
\Gamma_\rho n
\right\}
\right] = 2\exp(-C^\prime\Gamma_\rho^2 n)
\]
after renaming $C^\prime$, where the equality is due to $\Gamma_\rho <1/4<1$ on $\rho \in (0, 1/2)$. Setting $C_0 := C^\prime \Gamma_\rho^2$, we obtain the desired upper bound.

For the lower bound, let \(K_0\) be the universal constant from
Lemma~\ref{lem:fresh-bilinear-tail}. By replacing $C$ with a larger constant if necessary so that $C\geq K_0$, we have $D \geq Cn \geq K_0n$.

Since 
$\frac{a^\star s}{\lambda}
\leq
\frac{3\Gamma_\rho}{2}\frac nD$
and
\(\Gamma_\rho<1/4\), we have
\[
\frac{a^\star s}{\lambda}
<
\frac nD .
\]
Therefore the lower-tail estimate \eqref{eq:fresh-bilinear-lower-tail} applies
with \(H=H^\star\), \(r_H=n\), and \(t=a^\star s/\lambda\).  Combining this
with the tail identity gives
\[
\mathcal R(W^\star)
\geq
\frac{c_0}{2}
\exp\left[
-C_0^\prime\frac{D^2}{n}
\left(\frac{a^\star s}{\lambda}\right)^2
\right].
\]
Using again
\[
\frac{a^\star s}{\lambda}
\leq
\frac{3\Gamma_\rho}{2}\frac nD ,
\]
we obtain
\[
\mathcal R(W^\star)
\geq
\frac{c_0}{2}
\exp\left[
-\frac94 C_0^\prime\Gamma_\rho^2 n
\right].
\]
Renaming the universal constant $c_0$ and setting $C_0 = \frac{9}{4}C_0^\prime \Gamma_\rho^2$ proves
$\mathcal R(W^\star) \geq c_0e^{-C_0 n}$.

Finally, Theorem~\ref{thm:main_dispersed} gives \(\frac{W_k^{\mathrm{SpecGD}}}{\|W_k^{\mathrm{SpecGD}}\|_{\op}}\to W^\star\).
The classifier is invariant under positive rescaling of \(W_k^{\mathrm{SpecGD}}\).  Moreover, for
fixed training data, the fresh-noise distribution is continuous, so \(\prob\left( y\langle W^\star,X\rangle_F=0 \,\middle|\, \mathcal D_n \right)=0\).
Therefore the indicators \(\mathbf 1\{y\langle W_k^{\mathrm{SpecGD}},X\rangle_F\leq0\}\)
converge almost surely to \(\mathbf 1\{y\langle W^\star,X\rangle_F\leq0\}\),
and dominated convergence gives \(\mathcal R(W_k^{\mathrm{SpecGD}})\to \mathcal R(W^\star)\).
\end{proof}

\subsection{Proof of Proposition~\ref{prop:implicitbias-orthogonal}}\label{sec:proof-prop-orthogonal}

\begin{proof}
Set \(a_{\star\star}:=(2\|S\|_1)^{-1}\log(n_+/n_-)\).  For
\(W\in\mathbb R^{d\times d}\), let
\(m_i(W):=\langle W,\widetilde y_iX_i\rangle_F
=\langle W,\xi_iS+\lambda N_i\rangle_F\) and \(q_i(W):=e^{-m_i(W)}\).
At iteration \(k\), write \(q_{i,k}:=q_i(W_k)\) and
\(s_k:=\sum_i\xi_iq_{i,k}\).  Then
\[
-\nabla L(W_k)
=
\frac1n
\left(
s_kS+\lambda\sum_{i=1}^nq_{i,k}N_i
\right).
\]
The factor \(1/n\) is irrelevant for the polar factor.  Since the signal block
is orthogonal to the noise block, and the \(N_i\)'s have mutually orthogonal
left and right singular directions, \(q_{i,k}>0\) implies
\[
\pol(-\nabla L(W_k))
=
\operatorname{sgn}(s_k)P_S+\sum_{i=1}^nN_i ,
\]
where \(\operatorname{sgn}(0)=0\).

Thus, starting from \(W_0=0\), the iterates have the exact form
\[
W_k=a_kP_S+\tau_k\sum_{i=1}^nN_i,
\qquad
\tau_k:=\sum_{\ell<k}\eta_\ell,
\]
where \(a_0=0\) and \(a_{k+1}=a_k+\eta_k\operatorname{sgn}(s_k)\).  Using this
representation and orthogonality,
\[
m_i(W_k)=\xi_i\|S\|_1a_k+\lambda\tau_k,
\]
so
\[
s_k
=
e^{-\lambda\tau_k}
\left(
n_+e^{-\|S\|_1a_k}-n_-e^{\|S\|_1a_k}
\right).
\]
Hence
\[
\operatorname{sgn}(s_k)=\operatorname{sgn}(a_{\star\star}-a_k),
\]
and therefore the signal coefficient obeys the scalar recursion
\[
a_{k+1}=a_k+\eta_k\operatorname{sgn}(a_{\star\star}-a_k).
\]

Let \(K:=\sup\{k:\tau_k<a_{\star\star}\}\).  Since
\(\sum_k\eta_k=\infty\), this \(K\) is finite.  Before crossing
\(a_{\star\star}\), the recursion gives \(a_k=\tau_k\), and hence
\(|a_K-a_{\star\star}|\leq\eta_K\).  Since the stepsizes are non-increasing,
the recursion moves toward \(a_{\star\star}\) by steps of size at most
\(\eta_K\) for all \(k\geq K\).  Consequently
\[
|a_k-a_{\star\star}|\leq\eta_K,
\qquad k\geq K.
\]
Thus \(|a_k|\leq a_{\star\star}+\eta_K\) for \(k\geq K\).  Dividing the exact
representation of \(W_k\) by \(\tau_k\) gives
\[
\left\|
\frac{W_k^{\mathrm{SpecGD}}}{\tau_k}
-
\sum_{i=1}^nN_i
\right\|_F
=
\frac{|a_k|}{\tau_k}\|P_S\|_F
\leq
\frac{(a_{\star\star}+\eta_K)\sqrt r}{\tau_k},
\qquad k\geq K.
\]
Since \(\tau_k\to\infty\), the claimed directional convergence follows.
\end{proof}

%% file: proof_main_collapsed.tex
We will use the strategy developed in Section~\ref{app:main_dispersed} for the DSM.

\begin{proof}[Proof of Theorem~\ref{thm:main_collapsed}]
Absorbing $\tilde y_i$ o $\boldsymbol{u}_i$, let
$U:=[\boldsymbol u_1,\ldots,\boldsymbol u_n]\in\mathbb R^{D\times n}$,
$G:=U^\top U$, and
$\boldsymbol\xi:=(\xi_1,\ldots,\xi_n)^\top$.

\paragraph{Step 1: exact dual objective.}
By Proposition~\ref{prop:dual}, the dual objective is
\[
F(\boldsymbol\alpha)
=
\left\|
\sum_{i=1}^n
\alpha_i\widetilde y_iX_i
\right\|_1,
\qquad
\boldsymbol\alpha\in\Delta_n.
\]
Under the CSM,
\[
\sum_{i=1}^n
\alpha_i\widetilde y_iX_i
=
(\boldsymbol\xi^\top\boldsymbol\alpha)S
+
\lambda(U\boldsymbol\alpha)\boldsymbol v_0^\top.
\]
The two terms have mutually orthogonal row and column spaces, so their
nuclear norms add. Moreover,
$(U\boldsymbol\alpha)\boldsymbol v_0^\top$ has rank one and, since
$\|\boldsymbol v_0\|_2=1$,
\[
\|(U\boldsymbol\alpha)\boldsymbol v_0^\top\|_1
=
\|U\boldsymbol\alpha\|_2
=
\sqrt{\boldsymbol\alpha^\top G\boldsymbol\alpha}.
\]
By Lemma~\ref{lem:block} and $\|S\|_1=s$, 
\begin{equation}
\label{eq:collapsed-dual-objective}
F(\boldsymbol\alpha)
=
s|\boldsymbol\xi^\top\boldsymbol\alpha|
+
\lambda\sqrt{\boldsymbol\alpha^\top G\boldsymbol\alpha}.
\end{equation}

\paragraph{Step 2: high-probability geometry.}
Let $\widehat\rho:=n_-/n$. By Hoeffding's inequality, with probability
at least $1-\delta/3$,
\begin{equation}
\label{eq:collapsed-label-balance}
|\widehat\rho-\rho|
\leq
\sqrt{\frac{\log(6/\delta)}{2n}}.
\end{equation}

By Lemma~\ref{lem:gram-concentration}, applied with failure probability
$\delta/3$ and enlarging the universal constant $C_G$, there exists a universal constant $K>0$ such that
\begin{equation}
\label{eq:collapsed-gram-control}
\|G-I_n\|_{\op}
\leq
K
\sqrt{
\frac{n+\log(n/\delta)}{D}
}.
\end{equation}
Increasing the dimensional constant if necessary, the right-hand side
is at most $1/2$, and therefore
\begin{equation}
\label{eq:collapsed-inverse-op-control}
\|G^{-1}-I_n\|_{\op}
\leq
K
\sqrt{
\frac{n+\log(n/\delta)}{D}
},
\end{equation}
after enlarging the same universal constant $K$ if necessary.

By Lemma~\ref{lem:inverse-gram-concentration}, applied with failure
probability $\delta/3$, and enlarging $K$ once more if necessary,
\begin{equation}
\label{eq:collapsed-inverse-control}
\max_{\boldsymbol x\in\{\boldsymbol 1,\boldsymbol\xi\}}
\|(G^{-1}-I_n)\boldsymbol x\|_\infty
\leq
K
\sqrt{
\frac{n\log(n/\delta)}{D}
}.
\end{equation}

Let $\mathcal E$ be the intersection of these three events. Then $\mathbb P(\mathcal E)\geq1-\delta$, and we work on $\mathcal E$ throughout. 
Define
\[
\varepsilon_{\op}
:=
K
\sqrt{
\frac{n+\log(n/\delta)}{D}
},
\qquad
\varepsilon_\infty
:=
K
\sqrt{
\frac{n\log(n/\delta)}{D}
}.
\]
By increasing the constants in the assumptions if necessary,
$\widehat\rho$ lies in a fixed compact subinterval of $(0,1/2)$
depending only on $\rho$.

\paragraph{Step 3: solution of the dual problem.}
For $\boldsymbol\alpha\in\Delta_n$, let
$q(\boldsymbol\alpha):=\sum_{i\in I_+}\alpha_i$, so that
$\boldsymbol\xi^\top\boldsymbol\alpha=2q(\boldsymbol\alpha)-1$, and define
\[
\psi(q)
:=
\min_{\substack{
\boldsymbol\alpha\in\Delta_n\\
q(\boldsymbol\alpha)=q
}}
\sqrt{\boldsymbol\alpha^\top G\boldsymbol\alpha}.
\]
Set
\[
A:=\boldsymbol 1^\top G^{-1}\boldsymbol 1,
\qquad
B:=\boldsymbol 1^\top G^{-1}\boldsymbol\xi,
\qquad
C:=\boldsymbol\xi^\top G^{-1}\boldsymbol\xi,
\qquad
\Delta:=AC-B^2.
\]
The operator-norm bound in Step~2 immediately gives
\begin{equation}
\label{eq:collapsed-ABC-control}
\left|\frac{A}{n}-1\right|
\vee
\left|
\frac{B}{n}-(1-2\widehat\rho)
\right|
\vee
\left|\frac{C}{n}-1\right|
\leq
\varepsilon_{\op}.
\end{equation}

We first solve the inner problem.

\begin{lemma}
\label{lem:collapsed-inner-problem}
On $\mathcal E$, for every
$q\in[1/2,\,1/2+B/(2A)]$, the minimizer defining $\psi(q)$ is unique,
belongs to $\operatorname{int}(\Delta_n)$, and is given by
\begin{equation}
\label{eq:collapsed-alpha-q}
\boldsymbol\alpha^\star(q)
=
\frac{
G^{-1}
\left[
\{C-(2q-1)B\}\boldsymbol 1
+
\{(2q-1)A-B\}\boldsymbol\xi
\right]
}{\Delta}.
\end{equation}
Moreover,
\begin{equation}
\label{eq:collapsed-psi}
\psi(q)^2
=
\frac{
C-2(2q-1)B+(2q-1)^2A
}{\Delta}.
\end{equation}
\end{lemma}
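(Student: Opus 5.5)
Since the square root is increasing, $\psi(q)^2$ is the minimum of the strictly convex quadratic $\boldsymbol\alpha^\top G\boldsymbol\alpha$ over the polytope $\{\boldsymbol\alpha\in\Delta_n:\ q(\boldsymbol\alpha)=q\}$. On $\mathcal E$ we have $G\succeq \tfrac12 I_n$ by \eqref{eq:collapsed-gram-control}, so this minimizer exists and is unique. The slice is cut out by the two linear constraints $\boldsymbol 1^\top\boldsymbol\alpha=1$ and $\boldsymbol\xi^\top\boldsymbol\alpha=2q-1$ (equivalently $\sum_{I_+}\alpha_i=q$, $\sum_{I_-}\alpha_i=1-q$), together with nonnegativity. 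The plan is to drop nonnegativity, solve the equality-constrained quadratic program in closed form, and then check that the solution is strictly positive. Once that holds, it is also the minimizer over the slice, and it lies in $\operatorname{int}(\Delta_n)$.

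For the relaxed problem, the Lagrangian stationarity condition $G\boldsymbol\alpha=\mu\boldsymbol 1+\nu\boldsymbol\xi$ gives $\boldsymbol\alpha=G^{-1}(\mu\boldsymbol 1+\nu\boldsymbol\xi)$. Substituting into the constraints yields the $2\times2$ system
\[
\begin{pmatrix}A&B\\ B&C\end{pmatrix}\begin{pmatrix}\mu\\ \nu\end{pmatrix}=\begin{pmatrix}1\\ 2q-1\end{pmatrix}.
\]
This system is invertible because \eqref{eq:collapsed-ABC-control} gives $\Delta/n^2\approx 1-(1-2\widehat\rho)^2>0$. Cramer's rule then gives $\mu=\{C-(2q-1)B\}/\Delta$ and $\nu=\{(2q-1)A-B\}/\Delta$, which is exactly \eqref{eq:collapsed-alpha-q}. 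Finally, $\boldsymbol\alpha^\top G\boldsymbol\alpha=\mu\cdot 1+\nu(2q-1)$ expands to \eqref{eq:collapsed-psi}.

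The main step is positivity. Write $t:=2q-1\in[0,B/A]$, so that $\boldsymbol\alpha^\star=\Delta^{-1}[\mu'\boldsymbol 1+\nu'\boldsymbol\xi]+\Delta^{-1}(G^{-1}-I)[\mu'\boldsymbol 1+\nu'\boldsymbol\xi]$, where $\mu'=C-tB$ and $\nu'=tA-B$. For the leading term, \eqref{eq:collapsed-ABC-control} shows that $\mu'/n\approx 1-t(1-2\widehat\rho)$ and $\nu'/n\approx t-(1-2\widehat\rho)$ for $t\in[0,\approx 1-2\widehat\rho]$. Coordinates in $I_+$ then equal $(\mu'+\nu')/\Delta\approx n(1-t)(2\widehat\rho)/\Delta$, and coordinates in $I_-$ equal $(\mu'-\nu')/\Delta\approx n(1+t)(2-2\widehat\rho)/\Delta$. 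Both are of order $1/n$ and bounded below by $c_\rho/n$, because $t\le B/A\le 1-2\widehat\rho+O(\varepsilon_{\op})$ stays bounded away from $1$; here we use that $\widehat\rho$ lies in a compact subinterval of $(0,1/2)$. Including the $O(\varepsilon_{\op})$ perturbations of $A,B,C$ only changes these values by $O(\varepsilon_{\op}/n)$. For the correction term, \eqref{eq:collapsed-inverse-control} gives an entrywise bound $(|\mu'|+|\nu'|)\varepsilon_\infty/\Delta=O(\varepsilon_\infty/n)$. Taking $D\ge Cn\log(n/\delta)$ with $C$ large makes $\varepsilon_\infty$ and $\varepsilon_{\op}$ small constants, so every coordinate is at least $c_\rho/(2n)>0$ uniformly over $q$ in the stated interval.

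The delicate point is the endpoint $q=1/2+B/(2A)$, where $t=B/A$ and the $I_+$ coefficient $1-t$ must not degenerate. Since $B/A\approx 1-2\widehat\rho<1$, this is controlled by the same margin as above, but the estimates have to be kept uniform in $t$ rather than checked only at $t=0$.
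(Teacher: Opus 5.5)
Your proposal is correct and follows essentially the paper's route: solve the equality-constrained quadratic program via Lagrange multipliers and the $2\times 2$ system, then certify strict positivity by comparing with the $G=I_n$ solution and bounding $(G^{-1}-I_n)(\mu'\boldsymbol 1+\nu'\boldsymbol\xi)/\Delta$ entrywise with \eqref{eq:collapsed-inverse-control}. One harmless slip: since $\mu'+\nu'=(C-B)+t(A-B)$ and $\mu'-\nu'=(C+B)-t(A+B)$, the $I_+$ coordinates are $\approx 2\widehat\rho\, n(1+t)/\Delta$ (i.e.\ $q/n_+$) and the $I_-$ coordinates are $\approx (2-2\widehat\rho)\,n(1-t)/\Delta$ (i.e.\ $(1-q)/n_-$), so the factor that must not degenerate at $t=B/A$ belongs to $I_-$, not $I_+$; because you bound both factors away from zero, the conclusion is unaffected.
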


\begin{proof}
Since $G\succ0$, consider first the equality-constrained problem
\[
\min_{\boldsymbol\alpha\in\mathbb R^n}
\boldsymbol\alpha^\top G\boldsymbol\alpha
\quad\text{subject to}\quad
\boldsymbol 1^\top\boldsymbol\alpha=1,
\qquad
\boldsymbol\xi^\top\boldsymbol\alpha=2q-1.
\]
Its objective is strictly convex. Introducing Lagrange multipliers
$a,b\in\mathbb R$, the stationarity condition gives
\[
G\boldsymbol\alpha
=
a\boldsymbol 1+b\boldsymbol\xi,
\]
and hence
$\boldsymbol\alpha
=
G^{-1}(a\boldsymbol 1+b\boldsymbol\xi)$.
The two constraints therefore imply
\[
\begin{pmatrix}
A&B\\
B&C
\end{pmatrix}
\begin{pmatrix}
a\\
b
\end{pmatrix}
=
\begin{pmatrix}
1\\
2q-1
\end{pmatrix}.
\]
On $\mathcal E$, both $I_+$ and $I_-$ are nonempty, so
$\boldsymbol 1$ and $\boldsymbol\xi$ are linearly independent.
Since $G^{-1}\succ0$, the matrix above is their Gram matrix in the
$G^{-1}$-inner product and hence
$\Delta=AC-B^2>0$. Thus
\[
a
=
\frac{C-(2q-1)B}{\Delta},
\qquad
b
=
\frac{(2q-1)A-B}{\Delta},
\]
which gives \eqref{eq:collapsed-alpha-q}. Moreover,
using
$G\boldsymbol\alpha^\star
=
a\boldsymbol 1+b\boldsymbol\xi$ and the two constraints,
\[
\boldsymbol\alpha^{\star\top}
G
\boldsymbol\alpha^\star
=
a+b(2q-1)
=
\frac{
C-2(2q-1)B+(2q-1)^2A
}{\Delta},
\]
proving \eqref{eq:collapsed-psi}.

It remains to verify that
$\boldsymbol\alpha^\star(q)$ is strictly positive. By
\eqref{eq:collapsed-ABC-control},
\[
\frac12+\frac{B}{2A}
=
1-\widehat\rho+O_\rho(\varepsilon_{\op}).
\]
Hence, after increasing the dimensional constant if necessary,
$1-q\geq\rho/2$ throughout the stated interval.

When $G=I_n$, the corresponding equality-constrained minimizer is
\[
\overline\alpha_{0,i}(q)
=
\begin{cases}
q/n_+, & i\in I_+,\\[1mm]
(1-q)/n_-, & i\in I_-,
\end{cases}
\]
and therefore
\[
\min_{i\in[n]}\overline\alpha_{0,i}(q)
\geq
\frac{\rho}{2n}.
\]

Let $a_0(q),b_0(q)$ denote the corresponding Lagrange multipliers when
$G=I_n$, so that
$\overline{\boldsymbol\alpha}_0(q)
=a_0(q)\boldsymbol 1+b_0(q)\boldsymbol\xi$.
Since $\widehat\rho$ stays in a fixed compact subset of $(0,1/2)$,
\eqref{eq:collapsed-ABC-control} and the explicit formulas for $a,b$
give, uniformly over the stated interval,
\[
|a-a_0(q)|+|b-b_0(q)|
\leq
K_\rho\frac{\varepsilon_{\op}}{n},
\qquad
|a|+|b|
\leq
\frac{K_\rho}{n},
\]
for a constant $K_\rho>0$ depending only on $\rho$. Therefore
\[
\begin{aligned}
\boldsymbol\alpha^\star(q)
-
\overline{\boldsymbol\alpha}_0(q)
&=
(G^{-1}-I_n)
(a\boldsymbol 1+b\boldsymbol\xi)
\\
&\quad
+
(a-a_0(q))\boldsymbol 1
+
(b-b_0(q))\boldsymbol\xi.
\end{aligned}
\]
Using the inverse-Gram coordinate bound from Step~2,
\[
\|
\boldsymbol\alpha^\star(q)
-
\overline{\boldsymbol\alpha}_0(q)
\|_\infty
\leq
K_\rho
\frac{
\varepsilon_\infty+\varepsilon_{\op}
}{n}
\leq
2K_\rho\frac{\varepsilon_\infty}{n}.
\]
Increasing the dimensional constant once more if necessary, the
right-hand side is at most $\rho/(4n)$. Hence
\[
\min_{i\in[n]}\alpha_i^\star(q)
\geq
\frac{\rho}{4n}>0.
\]
Thus the equality-constrained minimizer belongs to
$\operatorname{int}(\Delta_n)$ and is therefore the unique minimizer
defining $\psi(q)$.
\end{proof}

By \eqref{eq:collapsed-dual-objective}, the dual problem reduces to
\[
\min_{q\in[0,1]}
\left\{
2s\left|q-\frac12\right|
+
\lambda\psi(q)
\right\}.
\]
From \eqref{eq:collapsed-psi},
\[
\psi'(q)
=
\frac{
2\{A(2q-1)-B\}
}{
\Delta\psi(q)
},
\]
and hence $\psi$ is uniquely minimized at $q_{\rm sh}
:= \dfrac12+\dfrac{B}{2A}$.
By \eqref{eq:collapsed-ABC-control}, $B>0$, so $q_{\rm sh}>1/2$.

We next localize the minimizer of the reduced dual problem. If
$q<1/2$, increasing $q$ toward $1/2$ decreases both
$|q-1/2|$ and $\psi(q)$, since $q<1/2<q_{\rm sh}$. Similarly, if
$q>q_{\rm sh}$, decreasing $q$ toward $q_{\rm sh}$ decreases both
terms. Therefore every minimizer belongs to
$[1/2,q_{\rm sh}]$.

On this interval, differentiating once more gives
\[
\psi''(q)
=
\frac{4}{\Delta\psi(q)^3}
>0.
\]
Thus the reduced objective is strictly convex on
$[1/2,q_{\rm sh}]$ and admits a unique minimizer, denoted by
$q^\star$. Set
$\boldsymbol\alpha^\star
:=
\boldsymbol\alpha^\star(q^\star)$.
By Lemma~\ref{lem:collapsed-inner-problem},
$\boldsymbol\alpha^\star\in\operatorname{int}(\Delta_n)$ and is the
unique minimizer of $F$.

Finally,
\[
\psi'\left(\frac12\right)
=
-\frac{2B}{\sqrt{C\Delta}}.
\]
Since the right derivative of the reduced objective at $q=1/2$ is $2s - \frac{2\lambda B}{\sqrt{C\Delta}}$,
we obtain
\begin{equation}
\label{eq:collapsed-balanced-condition}
q^\star=\frac12
\quad\Longleftrightarrow\quad
\frac{\lambda B}{s\sqrt{C\Delta}}
\leq1;
\end{equation}
otherwise $q^\star\in(1/2,q_{\rm sh})$.

\paragraph{Step 4: convergence of the dual weights.}
By Step~3, $F$ has a unique minimizer
$\boldsymbol\alpha^\star\in\operatorname{int}(\Delta_n)$.
By Theorem~\ref{thm:dual-convergence}, it remains only to verify local relative growth at $\boldsymbol\alpha^\star$.

Set
$r(\boldsymbol\alpha)
:=
\sqrt{\boldsymbol\alpha^\top G\boldsymbol\alpha}$.
For every $\boldsymbol\alpha\in\Delta_n$ and every
$\boldsymbol v$ satisfying $\boldsymbol 1^\top\boldsymbol v=0$,
\[
\boldsymbol v^\top
\nabla^2r(\boldsymbol\alpha)
\boldsymbol v
=
\frac{
(\boldsymbol v^\top G\boldsymbol v)
(\boldsymbol\alpha^\top G\boldsymbol\alpha)
-
(\boldsymbol v^\top G\boldsymbol\alpha)^2
}{
(\boldsymbol\alpha^\top G\boldsymbol\alpha)^{3/2}
}.
\]
By Cauchy--Schwarz in the $G$-inner product, this quantity is
nonnegative, with equality only if
$\boldsymbol v$ and $\boldsymbol\alpha$ are collinear. Since
$\boldsymbol 1^\top\boldsymbol v=0$ while
$\boldsymbol 1^\top\boldsymbol\alpha=1$, equality is impossible for
$\boldsymbol v\neq0$. Hence, by continuity, there exist a neighborhood
$\mathcal U$ of $\boldsymbol\alpha^\star$ and $\mu_0>0$ such that
\[
\boldsymbol v^\top
\nabla^2r(\boldsymbol\alpha)
\boldsymbol v
\geq
\mu_0\|\boldsymbol v\|_2^2
\]
for every $\boldsymbol\alpha\in\mathcal U$ and every tangent vector
$\boldsymbol v$. In other words, $r(\boldsymbol \alpha)$ is $\mu_0$-strongly convex on $\mathcal U$ relative to $\Delta_n$.

Since
$\boldsymbol\alpha\mapsto
s|\boldsymbol\xi^\top\boldsymbol\alpha|$
is convex, it follows that
\[
F(\boldsymbol\alpha)-F(\boldsymbol\alpha^\star)
\geq
\frac{\lambda\mu_0}{2}
\|
\boldsymbol\alpha-\boldsymbol\alpha^\star
\|_2^2,
\qquad
\boldsymbol\alpha\in\mathcal U.
\]
Because
$\boldsymbol\alpha^\star\in\operatorname{int}(\Delta_n)$,
Lemma~\ref{lem:local-kl-quadratic} gives, after shrinking
$\mathcal U$ if necessary,
\[
\KL(
\boldsymbol\alpha^\star
\|
\boldsymbol\alpha
)
\leq
C_{\rm KL}
\|
\boldsymbol\alpha-\boldsymbol\alpha^\star
\|_2^2.
\]
Thus $F$ has local relative growth at $\boldsymbol\alpha^\star$. Let $\boldsymbol\alpha_k$ be the normalized loss weights as in \eqref{eq:alpha-def}.
Theorem~\ref{thm:dual-convergence} yields
\begin{equation}
\label{eq:collapsed-dual-convergence}
\boldsymbol\alpha_k
\longrightarrow
\boldsymbol\alpha^\star.
\end{equation}

\paragraph{Step 5: characterization of the primal direction.}
Define
\[
H(\boldsymbol\alpha)
:=
\frac{U\boldsymbol\alpha}
{\sqrt{\boldsymbol\alpha^\top G\boldsymbol\alpha}}
\boldsymbol v_0^\top.
\]
By Step~4,
$\boldsymbol\alpha_k\to\boldsymbol\alpha^\star
=\boldsymbol\alpha^\star(q^\star)$, and hence
\begin{equation}\label{eq:h_star_specgd}
H_k
\longrightarrow
H^\star
:=
\frac{U\boldsymbol\alpha^\star}
{\sqrt{\boldsymbol\alpha^{\star\top}
G\boldsymbol\alpha^\star}}
\boldsymbol v_0^\top.    
\end{equation}
In particular, $H^\star$ has rank one and its unique nonzero singular
value is $1$.

Since the dual minimizer is unique and interior and $F$ has local
relative growth, Proposition~\ref{prop:selection} gives
\[
\frac{W_k^{\mathrm{SpecGD}}}{\tau_k}
\longrightarrow
a^\star P_S+H^\star,
\qquad
a^\star
=
-\frac{\lambda}{2s}
(h_+^\star-h_-^\star).
\]

For $t:=2q-1$, \eqref{eq:collapsed-alpha-q} gives
\[
G\boldsymbol\alpha^\star(q)
=
\frac{
\{C-tB\}\boldsymbol 1
+
\{tA-B\}\boldsymbol\xi
}{\Delta}.
\]
Since
$\boldsymbol\alpha^\star(q)^\top
G\boldsymbol\alpha^\star(q)=\psi(q)^2$,
we obtain
\[
h_+(q)-h_-(q)
=
\frac{2\{(2q-1)A-B\}}{\Delta\psi(q)}
=
\psi'(q).
\]
Consequently,
\[
a^\star
=
-\frac{\lambda}{2s}\psi'(q^\star).
\]

Since $q^\star$ minimizes
$2s(q-1/2)+\lambda\psi(q)$ over $[1/2,q_{\rm sh}]$ and
$\psi'$ is strictly increasing,
\[
-\frac{\lambda}{2s}\psi'(q^\star)
=
\min\left\{
1,\,
-\frac{\lambda}{2s}\psi'(1/2)
\right\}.
\]
Using
\[
\psi'(1/2)
=
-\frac{2B}{\sqrt{C\Delta}},
\]
we conclude that
\begin{equation}
\label{eq:csm-a-star}
a^\star
=
\min\left\{
1,\,
\frac{\lambda B}{s\sqrt{C\Delta}}
\right\}.
\end{equation}

\paragraph{Step 6: approximation of the signal coefficient.}
Define
\[
\Phi(a,b,c)
:=
\frac{b}{\sqrt{c(ac-b^2)}}.
\]
Since $\widehat\rho$ remains in a fixed compact subset of $(0,1/2)$,
$\Phi$ is Lipschitz on a fixed neighborhood of
$(1,1-2\widehat\rho,1)$. Noting that
\[
\sqrt n\,
\frac{B}{\sqrt{C\Delta}}
=
\Phi
\left(
\frac{A}{n},
\frac{B}{n},
\frac{C}{n}
\right),
\]
we obtain, by \eqref{eq:collapsed-ABC-control},
\[
\left|
\sqrt n\,
\frac{B}{\sqrt{C\Delta}}
-
\frac{1-2\widehat\rho}
{2\sqrt{\widehat\rho(1-\widehat\rho)}}
\right|
\leq
K_\rho\varepsilon_{\op}
\leq
K_\rho K\sqrt{\frac{n + \log(n/\delta)}{D}}.
\]
By \eqref{eq:collapsed-label-balance} and the Lipschitz continuity of
$q\mapsto(1-2q)/(2\sqrt{q(1-q)})$ on a fixed neighborhood of $\rho$,
\[
\left|
\sqrt n\,
\frac{B}{\sqrt{C\Delta}}
-
\Gamma_\rho^\prime
\right|
\leq
K_\rho
\left(
\sqrt{\frac{\log(1/\delta)}{n}}
+
\sqrt{\frac{n + \log(n/\delta)}{D}}
\right).
\]
Finally, since $x\mapsto\min\{1,x\}$ is $1$-Lipschitz,
\eqref{eq:csm-a-star} yields
\[
\left|
a^\star
-
\min
\left\{
1,\,
\Gamma_\rho^\prime
\frac{\lambda}{s\sqrt n}
\right\}
\right|
\leq
K_\rho
\frac{\lambda}{s\sqrt n}
\left(
\sqrt{\frac{\log(1/\delta)}{n}}
+
\sqrt{\frac{n + \log(n/\delta)}{D}}
\right).
\]
Replacing $K_\rho$ with a larger constant, this completes the proof.
\end{proof}

\begin{proof}[Proof of Corollary~\ref{cor:generalization-collapsed}]
Let $(X,y)$ be an independent clean sample. We may write
\[
yX
=
S+\lambda \boldsymbol u\boldsymbol v_0^\top,
\qquad
\boldsymbol u
\sim
\operatorname{Unif}(\mathbb S^{D-1}).
\]
Since the signal and shortcut blocks are orthogonal,
\[
y\langle W^\star,X\rangle_F
=
a^\star s
+
\lambda
\langle \boldsymbol h^\star,\boldsymbol u\rangle,
\]
where $\|\boldsymbol h^\star\|_2=1$. Hence, conditionally on the
training sample,
\[
\mathcal R(W^\star)
=
\mathbb P\left(
\langle \boldsymbol h^\star,\boldsymbol u\rangle
\leq
-\frac{a^\star s}{\lambda}
\,\middle|\,
\mathcal D_n
\right).
\]

Set $t := \dfrac{a^\star s}{\lambda}$. By Theorem~\ref{thm:main_collapsed},
\[
t
=
\min\left\{
\frac{s}{\lambda},
\frac{\Gamma_\rho'+\varepsilon_n}{\sqrt n}
\right\},
\qquad
|\varepsilon_n|
\leq
C\left(
\sqrt{\frac{\log(1/\delta)}{n}}
+
\sqrt{\frac nD}
\right).
\]
By choosing the constants in the assumptions of
Theorem~\ref{thm:main_collapsed} appropriately, we may ensure that
\[
c_\rho
\min\left\{
\frac{s}{\lambda},
\frac1{\sqrt n}
\right\}
\leq
t
\leq
C_\rho
\min\left\{
\frac{s}{\lambda},
\frac1{\sqrt n}
\right\},
\]
for constants $c_\rho,C_\rho>0$ depending only on $\rho$.
Moreover, since $\log(n/\delta)\geq1$, by decreasing
$c$ further if necessary, we may also ensure that $t\leq1/2$.

Applying Lemma~\ref{lem:spherical-linear} conditionally on
$\mathcal D_n$ gives
\[
c_0\exp(-C_0Dt^2)
\leq
\mathcal R(W^\star)
\leq
2\exp(-cDt^2).
\]
Combining this with the preceding bounds on $t$ yields
\[
c_0
\exp\left(
-c_1D
\min\left\{
\frac1n,
\frac{s^2}{\lambda^2}
\right\}
\right)
\leq
\mathcal R(W^\star)
\leq
2
\exp\left(
-c_2D
\min\left\{
\frac1n,
\frac{s^2}{\lambda^2}
\right\}
\right),
\]
where $c_0,c_1,c_2>0$ depend only on $\rho$.

Finally, the convergence of the risks follows by the same argument as in Corollary~\ref{cor:generalization-dispersed}.
\end{proof}

%% file: Proof_GD.tex
In this section, we study the late-stage implicit bias and generalization of GD under both DSM and CSM.

By Lemma~\ref{lem:linear-separability}, the training sample is linearly
separable when $D\geq n$. Therefore, the standard implicit bias result
\citep{gunasekar18a} implies that $W_k^\mathrm{GD}$ converges in direction
toward the unique Euclidean max-margin classifier $W_\mathrm{GD}^\star$,
where uniqueness follows from strict convexity.

The Euclidean hard-margin problem for both DSM and CSM has the form
\begin{equation}
    \min_W\frac12\|W\|_F^2
    \qquad\text{subject to}\qquad
    \langle W,\widetilde y_iX_i\rangle_F\geq1,
    \quad i\in[n].
\label{eq:gd-margin-primal}
\end{equation}
Its Lagrangian is
\[
    \mathcal L(W,\boldsymbol\alpha)
    =
    \frac12\|W\|_F^2
    +
    \sum_{i=1}^n
    \alpha_i
    \bigl(1-\langle W,\widetilde y_iX_i\rangle_F\bigr),
    \qquad \boldsymbol\alpha\geq0.
\]
Minimization over $W$ gives
$W=\sum_{i=1}^n\alpha_i\widetilde y_iX_i$, and hence the dual problem is
\begin{equation}
    \max_{\boldsymbol\alpha\geq0}
    \left\{
        \boldsymbol1^\top\boldsymbol\alpha
        -\frac12\boldsymbol\alpha^\top K\boldsymbol\alpha
    \right\},
\label{eq:gd-dual}
\end{equation}
where $K=(K_{ij})_{i,j=1}^n$ is the Gram matrix of the signed observations,
with $K_{ij}:=\langle\widetilde y_iX_i,\widetilde y_jX_j\rangle_F$.
Set $s_F:=\|S\|_F$ and
$G:=(\langle N_i,N_j\rangle_F)_{i,j=1}^n
=(U^\top U)\circ(\widehat V^\top\widehat V)$.
Since $\widetilde y_iX_i=\xi_iS+\lambda N_i$, signal--shortcut
orthogonality gives
\begin{equation}
    K
    =
    \lambda^2G+s_F^2\boldsymbol\xi\boldsymbol\xi^\top.
\label{eq:K-gd}
\end{equation}
The following lemma isolates the common GD dual algebra; the DSM and CSM
analyses differ only in the control of $G^{-1}$ and the positivity of the
resulting optimizer.

\begin{lemma}[Unconstrained GD dual solution]
\label{lem:gd-dual-solution}
Suppose $G\succ0$. Dropping the constraint
$\boldsymbol\alpha\geq0$ in \eqref{eq:gd-dual}, the unique maximizer is
\[
\boldsymbol\alpha^\star
=
\frac1{\lambda^2}
G^{-1}
\left(
\boldsymbol1-
a_{\rm GD}^\star s_F^2\boldsymbol\xi
\right),
\qquad
a_{\rm GD}^\star
=
\frac{
\boldsymbol\xi^\top G^{-1}\boldsymbol1
}{
\lambda^2+s_F^2\boldsymbol\xi^\top G^{-1}\boldsymbol\xi
}
=
\boldsymbol\xi^\top\boldsymbol\alpha^\star.
\]
If $\boldsymbol\alpha^\star>0$, then it is also the optimizer of
\eqref{eq:gd-dual}, and
\[
W_{\rm GD}^\star
=
a_{\rm GD}^\star S
+
\lambda\sum_{i=1}^n\alpha_i^\star N_i.
\]
\end{lemma}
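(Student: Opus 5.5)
The argument splits into the unconstrained quadratic maximization, which is pure linear algebra, and the passage to the constrained dual together with the primal identification.

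\textbf{Unconstrained maximizer.} Since $G\succ0$ and $\boldsymbol\xi\boldsymbol\xi^\top\succeq0$, \eqref{eq:K-gd} gives $K\succ0$. The objective $\boldsymbol1^\top\boldsymbol\alpha-\frac12\boldsymbol\alpha^\top K\boldsymbol\alpha$ is therefore strictly concave, and its unique maximizer over $\mathbb R^n$ solves $K\boldsymbol\alpha=\boldsymbol1$. Substituting \eqref{eq:K-gd}, this equation reads
\[
\lambda^2G\boldsymbol\alpha=\boldsymbol1-s_F^2(\boldsymbol\xi^\top\boldsymbol\alpha)\boldsymbol\xi .
\]
Introduce the scalar $a:=\boldsymbol\xi^\top\boldsymbol\alpha$. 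Then $\boldsymbol\alpha=\lambda^{-2}G^{-1}(\boldsymbol1-as_F^2\boldsymbol\xi)$. Taking the inner product with $\boldsymbol\xi$ gives the scalar equation
\[
a=\lambda^{-2}\bigl(\boldsymbol\xi^\top G^{-1}\boldsymbol1-as_F^2\,\boldsymbol\xi^\top G^{-1}\boldsymbol\xi\bigr).
\]
Solving it yields the stated formula for $a_{\rm GD}^\star$. The denominator $\lambda^2+s_F^2\boldsymbol\xi^\top G^{-1}\boldsymbol\xi$ is strictly positive, so the solution is well defined. This is just the Sherman--Morrison formula applied to $K^{-1}\boldsymbol1$, and it also shows $a_{\rm GD}^\star=\boldsymbol\xi^\top\boldsymbol\alpha^\star$.

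\textbf{Constrained dual and primal solution.} If $\boldsymbol\alpha^\star>0$, then $\boldsymbol\alpha^\star$ is feasible for \eqref{eq:gd-dual}. Because it maximizes the objective over all of $\mathbb R^n$, it also maximizes it over the feasible set, and strict concavity makes it the unique optimizer. Alternatively, the KKT conditions hold with all constraints inactive.

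Problem \eqref{eq:gd-margin-primal} is convex with affine constraints. It is feasible because the data are separable (Lemma~\ref{lem:linear-separability}, or directly $K\succ0$). Hence strong duality holds, and the primal optimum is recovered from the Lagrangian stationarity relation: $W^\star_{\rm GD}=\sum_i\alpha_i^\star\tilde y_iX_i$.

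Finally, expand $\tilde y_iX_i=\xi_iS+\lambda N_i$ and use $\sum_i\alpha_i^\star\xi_i=a^\star_{\rm GD}$. This gives
\[
W^\star_{\rm GD}=a^\star_{\rm GD}S+\lambda\sum_i\alpha_i^\star N_i .
\]

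No step here is a real obstacle; the lemma is elementary. The only point needing care is positivity of $\boldsymbol\alpha^\star$, which is what makes the unconstrained maximizer feasible. That condition is assumed in the statement and will be checked model by model later.
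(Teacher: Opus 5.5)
Your proposal is correct and follows the paper's proof: solve the first-order condition $K\boldsymbol\alpha=\boldsymbol 1$ (your scalar reduction via $a=\boldsymbol\xi^\top\boldsymbol\alpha$ is exactly the Sherman--Morrison step), observe that $\boldsymbol\alpha^\star>0$ makes the constraints inactive, and recover $W_{\rm GD}^\star=\sum_i\alpha_i^\star\widetilde y_iX_i$ from Lagrangian stationarity. You also spell out the feasibility and strong-duality justification for this primal recovery, which the paper leaves implicit.
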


\begin{proof}
The unconstrained first-order condition gives
$\boldsymbol\alpha^\star=K^{-1}\boldsymbol1$.
Applying the Sherman--Morrison formula to \eqref{eq:K-gd} gives the
displayed formula and
$a_{\rm GD}^\star=\boldsymbol\xi^\top\boldsymbol\alpha^\star$.
If $\boldsymbol\alpha^\star>0$, the inequality constraints are inactive,
and the final identity follows from
$W_{\rm GD}^\star=\sum_i\alpha_i^\star\widetilde y_iX_i$.
\end{proof}

We now study DSM and CSM separately.

\begin{proposition}[Directional convergence of GD under DSM]
\label{prop:gd-dispersed}
Assume the sample $\mathcal D_n$ is generated from the DSM and that
assumption~\ref{ass:stepsize} holds. Let $0<\delta<e^{-1}$.
There exists a constant $C_{\rm dim}>0$, depending only on $\rho$, such
that, if
\[
    n\ge C_{\rm dim}\log(4/\delta),
    \qquad
    D\ge C_{\rm dim}n\log(8n^2/\delta),
\]
then, with probability at least $1-\delta$ over the training sample, GD
converges in direction toward the unique Euclidean max-margin classifier
\[
    W_{\rm GD}^\star
    =
    a_{\rm GD}^\star S+H_{\rm GD}^\star.
\]
Moreover, $\operatorname{rank}(H_{\rm GD}^\star)=n$, and there exists a
universal constant $C>0$ such that
\[
\left|
a_{\rm GD}^\star
-
\frac{(1-2\rho)n}
{\lambda^2+n\|S\|_F^2}
\right|
\leq
C\frac{n}{\lambda^2+n\|S\|_F^2}
\left(
\sqrt{\frac{\log(4/\delta)}{n}}
+
\frac{n\log(8n^2/\delta)}{D}
\right).
\]
There also exist constants $c_H,C_H>0$, depending only on $\rho$, such
that
\[
    \frac{c_H}{\lambda}
    \leq
    \sigma_n(H_{\rm GD}^\star)
    \leq
    \sigma_1(H_{\rm GD}^\star)
    \leq
    \frac{C_H}{\lambda}.
\]
\end{proposition}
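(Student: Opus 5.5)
The plan is to apply Lemma~\ref{lem:gd-dual-solution} on a high-probability event where the shortcut Gram matrix $G=(U^\top U)\circ(\widehat V^\top\widehat V)$ is close to $I_n$ in a strong, coordinatewise sense. Under the DSM, $G_{ii}=1$ and $G_{ij}=(\boldsymbol u_i^\top\boldsymbol u_j)(\widehat{\boldsymbol v}_i^\top\widehat{\boldsymbol v}_j)$. A union bound over pairs, using the sub-Gaussian spherical marginals already used in Lemma~\ref{lem:spherical-coherence-moment}, gives $\max_{i\neq j}|G_{ij}|\leq C\log(8n^2/\delta)/D$. Consequently $\|G-I\|_{\op}\leq\|G-I\|_{\infty\to\infty}\leq Cn\log(8n^2/\delta)/D=:\epsilon$, which is at most $1/2$ once $D\geq C_{\rm dim}n\log(8n^2/\delta)$. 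The Neumann series then yields $\|G^{-1}-I\|_{\infty\to\infty}\leq 2\epsilon$ and $\|G^{-1}-I\|_{\op}\leq 2\epsilon$. This Hadamard-product structure makes the DSM easier than the SpecGD analysis: off-diagonal entries are of order $1/D$, not $1/\sqrt D$. On the same event we intersect the Hoeffding event $|n_-/n-\rho|\leq\sqrt{\log(4/\delta)/(2n)}$. Because $n\geq C_{\rm dim}\log(4/\delta)$, this keeps $n_\pm$ of order $n$ and $\boldsymbol\xi^\top\boldsymbol 1=n(1-2\widehat\rho)\gtrsim n$.

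Next we evaluate the explicit formula. The bounds $|\boldsymbol\xi^\top G^{-1}\boldsymbol 1-\boldsymbol\xi^\top\boldsymbol 1|\leq n\|G^{-1}-I\|_{\op}\leq 2n\epsilon$ and $|\boldsymbol\xi^\top G^{-1}\boldsymbol\xi-n|\leq 2n\epsilon$ give
\[
a^\star_{\rm GD}=\frac{n(1-2\widehat\rho)+O(n\epsilon)}{\lambda^2+s_F^2n(1+O(\epsilon))}.
\]
Replacing $\widehat\rho$ by $\rho$ costs $\sqrt{\log(4/\delta)/n}$, and the denominator perturbation is a relative error $O(\epsilon)$. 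Together these give the stated bound on $a^\star_{\rm GD}$ with prefactor $n/(\lambda^2+ns_F^2)$.

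For positivity, which I expect to be the main obstacle, set $\boldsymbol w:=\boldsymbol 1-a^\star_{\rm GD}s_F^2\boldsymbol\xi$. The key quantity is $a^\star_{\rm GD}s_F^2\approx(1-2\rho)ns_F^2/(\lambda^2+ns_F^2)$, which is strictly less than $1-2\rho+o(1)<1$. Hence every coordinate of $\boldsymbol w$ lies in $[c_\rho,2]$: coordinates with $\xi_i=+1$ are at least $1-(1-2\rho)-o(1)\geq\rho$, and coordinates with $\xi_i=-1$ are at least $1$. Then
\[
\lambda^2\boldsymbol\alpha^\star=\boldsymbol w+(G^{-1}-I)\boldsymbol w,
\]
and the $\infty\to\infty$ bound makes the correction at most $4\epsilon\leq c_\rho/2$. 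So $\boldsymbol\alpha^\star>0$, and Lemma~\ref{lem:gd-dual-solution} identifies $W^\star_{\rm GD}$. The delicate point is using the row-sum bound rather than only operator-norm control; this is exactly why the off-diagonal $G_{ij}$ need to be of order $1/D$ and $D\gtrsim n\log$ suffices. Directional convergence of GD then follows from separability (Lemma~\ref{lem:linear-separability}) and \citet{gunasekar18a}.

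Finally, $H^\star_{\rm GD}=\lambda Z(\boldsymbol\alpha^\star)=\lambda U\Lambda_{\boldsymbol\alpha^\star}\widehat V^\top$. The Gram matrices $U^\top U$ and $\widehat V^\top\widehat V$ are within $1/2$ of $I$ in operator norm by Lemma~\ref{lem:gram-concentration-2} (possibly enlarging $C_{\rm dim}$), so their singular values are in $[1/\sqrt2,\sqrt{3/2}]$. The entries of $\Lambda_{\boldsymbol\alpha^\star}$ lie in $[c_\rho/(2\lambda^2),3/\lambda^2]$ by the previous step. Multiplying these bounds gives rank $n$ and $c_H/\lambda\leq\sigma_n\leq\sigma_1\leq C_H/\lambda$. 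A union bound over the three events with failure probabilities of order $\delta$ completes the argument.
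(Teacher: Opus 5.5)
Your proposal is correct and follows essentially the same route as the paper: pairwise coherence gives $|G_{ij}|\lesssim\log(8n^2/\delta)/D$ and hence an $\infty\to\infty$ (and operator-norm) bound of order $n\log(8n^2/\delta)/D$ on $G-I_n$ and $G^{-1}-I_n$. From there, Lemma~\ref{lem:gd-dual-solution} yields the formula for $a_{\rm GD}^\star$, positivity follows from $\boldsymbol 1-a_{\rm GD}^\star s_F^2\boldsymbol\xi\in[\rho,2]^n$ together with the row-sum bound on $G^{-1}-I_n$, and the singular values of $H_{\rm GD}^\star$ come from the Gram bounds on $U$ and $\widehat V$. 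The only differences are cosmetic: you bound $\boldsymbol\xi^\top G^{-1}\boldsymbol 1$ and $\boldsymbol\xi^\top G^{-1}\boldsymbol\xi$ through the operator norm rather than the $\infty\to\infty$ norm, and you split the failure probability over three events instead of two.
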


\begin{proof}
Since multiplication by $\widetilde y_i\in\{\pm1\}$ preserves the
uniform distribution on the sphere, the columns of $\widehat V$ satisfy
the assumptions of Lemma~\ref{lem:gram-concentration-2}.

\paragraph{Concentration of the shortcut Gram matrix.}
Apply Lemma~\ref{lem:gram-concentration-2} with failure probability
$\delta/2$. On the resulting event,
\[
    \max_{i\neq j}
    \left\{
        |\langle\boldsymbol u_i,\boldsymbol u_j\rangle|,
        |\langle\widehat{\boldsymbol v}_i,
                  \widehat{\boldsymbol v}_j\rangle|
    \right\}
    \leq
    C_{\scriptscriptstyle G}
    \sqrt{\frac{\log(8n^2/\delta)}{D}}.
\]
Since $G_{ii}=1$ and
$|G_{ij}|=
|\langle\boldsymbol u_i,\boldsymbol u_j\rangle|
|\langle\widehat{\boldsymbol v}_i,\widehat{\boldsymbol v}_j\rangle|$
for $i\neq j$, this gives
$\|G-I_n\|_{\infty\to\infty}
\leq C_{\scriptscriptstyle G}^2
n\log(8n^2/\delta)/D=:\varepsilon_G$.
Choosing $C_{\rm dim}$ sufficiently large, depending only on $\rho$,
we may assume $\varepsilon_G\leq1/2$. Thus $G$ is invertible and
\begin{equation}
    \|G^{-1}-I_n\|_{\infty\to\infty}
    \leq
    \frac{\varepsilon_G}{1-\varepsilon_G}
    \leq
    2\varepsilon_G.
\label{eq:GN-inverse}
\end{equation}
After increasing $C_{\rm dim}$ if necessary, the same application of
Lemma~\ref{lem:gram-concentration-2} also gives
\begin{equation}
    \frac12 I_n
    \preceq U^\top U\preceq\frac32 I_n,
    \qquad
    \frac12 I_n
    \preceq\widehat V^\top\widehat V\preceq\frac32 I_n.
\label{eq:UV-gram-gd}
\end{equation}

By Lemma~\ref{lem:gd-dual-solution}, the unconstrained dual optimizer
is explicit in terms of $G^{-1}$. We next approximate its signal
coefficient and then verify positivity.

\paragraph{Approximation of the signal coefficient.}
Set $m:=1-2\rho$. By Hoeffding's inequality, with probability at least
$1-\delta/2$,
\begin{equation}
    \left|
        \boldsymbol1^\top\boldsymbol\xi-mn
    \right|
    \leq
    \sqrt{2n\log(4/\delta)}.
\label{eq:label-balance-gd}
\end{equation}
We henceforth work on the intersection of this event with the Gram
concentration event, which has probability at least $1-\delta$.

Set $b:=\boldsymbol\xi^\top G^{-1}\boldsymbol1$ and
$c:=\boldsymbol\xi^\top G^{-1}\boldsymbol\xi$.
By \eqref{eq:GN-inverse},
$|b-\boldsymbol\xi^\top\boldsymbol1|\leq2n\varepsilon_G$ and
$|c-n|\leq2n\varepsilon_G$. Hence, by
\eqref{eq:label-balance-gd},
\begin{equation}
|b-mn|
\leq
\sqrt{2n\log(4/\delta)}+2n\varepsilon_G,
\qquad
|c-n|\leq2n\varepsilon_G.
\label{eq:quadratic-concentration-gd}
\end{equation}

By Lemma~\ref{lem:gd-dual-solution},
$a_{\rm GD}^\star=b/(\lambda^2+s_F^2c)$.
Increasing $C_{\rm dim}$ if necessary, we may assume
$\varepsilon_G\leq1/4$, so $n/2\leq c\leq3n/2$ and hence
$\lambda^2+s_F^2c\asymp\lambda^2+ns_F^2$ with universal constants.
Moreover,
\[
a_{\rm GD}^\star
-
\frac{mn}{\lambda^2+ns_F^2}
=
\frac{b-mn}{\lambda^2+s_F^2c}
+
\frac{mn s_F^2(n-c)}
{
(\lambda^2+s_F^2c)(\lambda^2+ns_F^2)
}.
\]
Using \eqref{eq:quadratic-concentration-gd} and the definition of
$\varepsilon_G$ therefore gives
\begin{equation}
\left|
a_{\rm GD}^\star
-
\frac{mn}{\lambda^2+ns_F^2}
\right|
\leq
C\frac{n}{\lambda^2+ns_F^2}
\left(
    \sqrt{\frac{\log(4/\delta)}{n}}
    +
    \frac{n\log(8n^2/\delta)}{D}
\right).
\label{eq:gd-a-approx}
\end{equation}

\paragraph{Positivity of the dual weights.}
By choosing $C_{\rm dim}$ sufficiently large depending only on $\rho$,
the quantity in parentheses in \eqref{eq:gd-a-approx}, multiplied by
$C$, is at most $\min\{m/2,\rho\}$. Hence
$a_{\rm GD}^\star
\geq \frac m2\,n/(\lambda^2+ns_F^2)>0$ and
$0<a_{\rm GD}^\star s_F^2\leq m+\rho=1-\rho$.

Set
$\boldsymbol z:=\boldsymbol1-a_{\rm GD}^\star s_F^2\boldsymbol\xi$.
Then $\rho\leq z_i\leq2$ for every $i$, and by
\eqref{eq:GN-inverse},
$\|G^{-1}\boldsymbol z-\boldsymbol z\|_\infty
\leq\|G^{-1}-I_n\|_{\infty\to\infty}\|\boldsymbol z\|_\infty
\leq4\varepsilon_G$.
Increasing $C_{\rm dim}$ once more so that
$4\varepsilon_G\leq\rho/2$, we obtain
$\rho/2\leq(G^{-1}\boldsymbol z)_i\leq3$. Thus
Lemma~\ref{lem:gd-dual-solution} yields
\begin{equation}
    \frac{\rho}{2\lambda^2}
    \leq
    \alpha_i^\star
    \leq
    \frac{3}{\lambda^2},
    \qquad i\in[n].
\label{eq:dual-weight-bounds}
\end{equation}
In particular, $\boldsymbol\alpha^\star>0$, so the same lemma gives
$W_{\rm GD}^\star=a_{\rm GD}^\star S+H_{\rm GD}^\star$, where
$H_{\rm GD}^\star:=\lambda\sum_{i=1}^n\alpha_i^\star N_i$.

\paragraph{Singular values of the shortcut component.}
Let
$\Lambda^\star:=\operatorname{diag}
(\alpha_1^\star,\ldots,\alpha_n^\star)$, so that
$H_{\rm GD}^\star=\lambda U\Lambda^\star\widehat V^\top$.
By Lemma~\ref{lem:polar-orthonormalization}, its nonzero singular values
are $\lambda$ times those of
$(U^\top U)^{1/2}\Lambda^\star
(\widehat V^\top\widehat V)^{1/2}$.
Thus, by \eqref{eq:UV-gram-gd} and
\eqref{eq:dual-weight-bounds},
\[
\begin{aligned}
    \sigma_n(H_{\rm GD}^\star)
    &\geq
    \lambda
    \sqrt{\lambda_{\min}(U^\top U)}
    \min_i\alpha_i^\star
    \sqrt{\lambda_{\min}(\widehat V^\top\widehat V)}
    \geq
    \frac{\rho}{4\lambda},\\
    \sigma_1(H_{\rm GD}^\star)
    &\leq
    \lambda
    \sqrt{\lambda_{\max}(U^\top U)}
    \max_i\alpha_i^\star
    \sqrt{\lambda_{\max}(\widehat V^\top\widehat V)}
    \leq
    \frac{9}{2\lambda}.
\end{aligned}
\]
Hence the claimed singular-value bounds hold with
$c_H=\rho/4$ and $C_H=9/2$, and
$\operatorname{rank}(H_{\rm GD}^\star)=n$.
The directional convergence of $W_k^{\rm GD}$ toward
$W_{\rm GD}^\star$ follows from the common implicit-bias argument at
the beginning of this section.
\end{proof}

\begin{proposition}[Directional convergence of GD under CSM]
\label{prop:gd-collapsed}
Assume the sample $\mathcal D_n$ is generated from the CSM and that
Assumption~\ref{ass:stepsize} holds. Let $0<\delta<e^{-1}$.
There exists a constant $C_{\rm dim}>0$, depending only on $\rho$, such
that, if
\[
    n\ge C_{\rm dim}\log(4/\delta),
    \qquad
    D\ge C_{\rm dim}n\log(8n/\delta),
\]
then, with probability at least $1-\delta$ over the training sample, GD
converges in direction toward the unique Euclidean max-margin classifier
\[
    W_{\rm GD}^\star
    =
    a_{\rm GD}^\star S+H_{\rm GD}^\star,
\]
where $\operatorname{rank}(H_{\rm GD}^\star)=1$.

Moreover, there exists a universal constant $C>0$ such that
\[
\left|
a_{\rm GD}^\star
-
\frac{(1-2\rho)n}
{\lambda^2+n\|S\|_F^2}
\right|
\leq
C\frac{n}{\lambda^2+n\|S\|_F^2}
\left(
    \sqrt{\frac{\log(4/\delta)}{n}}
    +
    \sqrt{\frac{n\log(8n/\delta)}{D}}
\right).
\]
There also exists a constant $C_H>0$, depending only on $\rho$, such that
\[
\left|
\frac{\lambda}{\sqrt n}\|H_{\rm GD}^\star\|_F
-
\sqrt{
    4\rho(1-\rho)
    +(1-2\rho)^2
    \frac{\lambda^4}
    {(\lambda^2+n\|S\|_F^2)^2}
}
\right|
\leq
C_H
\left(
    \sqrt{\frac{\log(4/\delta)}{n}}
    +
    \sqrt{\frac{n\log(8n/\delta)}{D}}
\right).
\]
\end{proposition}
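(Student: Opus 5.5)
The plan is to mirror the DSM argument (Proposition~\ref{prop:gd-dispersed}) through Lemma~\ref{lem:gd-dual-solution}. The collapsed structure actually simplifies the Gram matrix.

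\textbf{Gram matrix and concentration.} Under the CSM, $N_i=\widetilde y_i\boldsymbol u_i\boldsymbol v_0^\top$. Absorbing the signs into $\widehat{\boldsymbol u}_i:=\widetilde y_i\boldsymbol u_i$, which is still uniform on the sphere, and using $\|\boldsymbol v_0\|_2=1$, we get $G=\widehat U^\top\widehat U$ with $\widehat U=[\widehat{\boldsymbol u}_1,\ldots,\widehat{\boldsymbol u}_n]$. This is exactly the matrix $G$ of the CSM SpecGD proof. I would reuse Step~2 there: Lemma~\ref{lem:gram-concentration} gives $\|G^{-1}-I_n\|_{\op}\le\varepsilon_{\op}\asymp\sqrt{(n+\log(n/\delta))/D}$. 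Lemma~\ref{lem:inverse-gram-concentration} gives $\|(G^{-1}-I_n)\boldsymbol x\|_\infty\le\varepsilon_\infty\asymp\sqrt{n\log(n/\delta)/D}$ for $\boldsymbol x\in\{\boldsymbol 1,\boldsymbol\xi\}$. Hoeffding gives $|\boldsymbol 1^\top\boldsymbol\xi-mn|\le\sqrt{2n\log(4/\delta)}$ with $m=1-2\rho$. We intersect these events.

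\textbf{Signal coefficient.} Set $b=\boldsymbol\xi^\top G^{-1}\boldsymbol 1$ and $c=\boldsymbol\xi^\top G^{-1}\boldsymbol\xi$. Since $|\boldsymbol x^\top(G^{-1}-I)\boldsymbol y|\le\|\boldsymbol x\|_2\|\boldsymbol y\|_2\varepsilon_{\op}=n\varepsilon_{\op}$, we get $|b-mn|\lesssim\sqrt{n\log(4/\delta)}+n\varepsilon_{\op}$ and $|c-n|\le n\varepsilon_{\op}$. The same decomposition of $a^\star_{\rm GD}=b/(\lambda^2+s_F^2c)$ used in \eqref{eq:gd-a-approx} then gives the stated bound. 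The only change from the DSM case is that the Gram error $\varepsilon_{\op}$ now has order $\sqrt{n\log/D}$ rather than $n\log/D$.

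\textbf{Positivity of the dual weights.} This is the step I expect to be the main obstacle, because the operator-norm bound alone is too weak coordinatewise. Take $\boldsymbol z=\boldsymbol 1-t\boldsymbol\xi$ with $t:=a^\star_{\rm GD}s_F^2\in(0,1-\rho]$; this holds once $C_{\rm dim}$ is large, as in the DSM proof. Then $z_i\in[\rho,2]$. The key point is that $\boldsymbol z$ lies in $\operatorname{span}\{\boldsymbol 1,\boldsymbol\xi\}$, so the inverse-Gram sup-norm bound applies directly: $\|G^{-1}\boldsymbol z-\boldsymbol z\|_\infty\le(1+t)\varepsilon_\infty\le2\varepsilon_\infty\le\rho/2$ for $D\ge C_{\rm dim}n\log(8n/\delta)$. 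Hence $\boldsymbol\alpha^\star=\lambda^{-2}G^{-1}\boldsymbol z>0$. Lemma~\ref{lem:gd-dual-solution} then identifies $W^\star_{\rm GD}=a^\star_{\rm GD}S+H^\star_{\rm GD}$ with $H^\star_{\rm GD}=\lambda(\widehat U\boldsymbol\alpha^\star)\boldsymbol v_0^\top$. This matrix has rank one because $\widehat U\boldsymbol\alpha^\star\neq0$, since $G\succ0$. Directional convergence follows from the general implicit-bias argument at the start of this section.

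\textbf{Norm of $H^\star_{\rm GD}$.} Here
\[
\|H^\star_{\rm GD}\|_F=\lambda\sqrt{\boldsymbol\alpha^{\star\top}G\boldsymbol\alpha^\star}=\lambda^{-1}\sqrt{\boldsymbol z^\top G^{-1}\boldsymbol z},
\]
so $\tfrac{\lambda}{\sqrt n}\|H^\star_{\rm GD}\|_F=\sqrt{\boldsymbol z^\top G^{-1}\boldsymbol z/n}$. We have $|\boldsymbol z^\top(G^{-1}-I)\boldsymbol z|\le 4n\varepsilon_{\op}$ and $\|\boldsymbol z\|_2^2/n=1-2t(1-2\widehat\rho)+t^2$. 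Substitute $t=m(1-r)+O(\text{error})$ with $r=\lambda^2/(\lambda^2+ns_F^2)$ and $\widehat\rho=\rho+O(\sqrt{\log(4/\delta)/n})$. This gives
\[
1-2m^2(1-r)+m^2(1-r)^2=4\rho(1-\rho)+m^2r^2 .
\]
Finally, since this quantity is at least $4\rho(1-\rho)>0$, the square root is Lipschitz there. Propagating the errors with a constant depending only on $\rho$ yields the stated bound.
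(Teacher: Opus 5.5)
Your proposal is correct and follows essentially the same route as the paper: Lemma~\ref{lem:gd-dual-solution} with $G=U^\top U$ after absorbing the signs, the inverse-Gram coordinate bound on $\boldsymbol z=\boldsymbol 1-a_{\rm GD}^\star s_F^2\boldsymbol\xi\in\operatorname{span}\{\boldsymbol 1,\boldsymbol\xi\}$ for positivity, and the identity $\frac{\lambda^2}{n}\|H_{\rm GD}^\star\|_F^2=\frac1n\boldsymbol z^\top G^{-1}\boldsymbol z$. The only cosmetic difference is that you bound $b$, $c$ and $\boldsymbol z^\top G^{-1}\boldsymbol z$ through the operator-norm bound on $G^{-1}-I_n$, which needs one extra Gram-concentration event. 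The paper instead pairs the $\ell_\infty$ inverse-Gram bound with $\ell_1$ norms, and both give the same $O\bigl(\sqrt{n\log(8n/\delta)/D}\bigr)$ relative errors.
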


\begin{proof}
Set $m:=1-2\rho$ and
$\varepsilon_G:=C\sqrt{n\log(8n/\delta)/D}$.

\paragraph{Concentration.}
Under the CSM, after absorbing $\widetilde y_i$ into $\boldsymbol u_i$,
$G=U^\top U$. Applying Lemma~\ref{lem:inverse-gram-concentration}
with failure probability $\delta/2$ gives
\begin{equation}
    \frac12I_n\preceq G\preceq\frac32I_n,
    \qquad
    \max_{\boldsymbol x\in\{\boldsymbol1,\boldsymbol\xi\}}
    \|(G^{-1}-I_n)\boldsymbol x\|_\infty
    \leq\varepsilon_G.
\label{eq:G-concentration-collapsed}
\end{equation}
Here and below, universal constants may change from line to line.
By Hoeffding's inequality, with failure probability at most $\delta/2$,
$|\boldsymbol1^\top\boldsymbol\xi-mn|
\leq\sqrt{2n\log(4/\delta)}$.
We work on the intersection of these events, which has probability at
least $1-\delta$.

\paragraph{Approximation of $a_{\rm GD}^\star$.}
Set $b:=\boldsymbol\xi^\top G^{-1}\boldsymbol1$ and
$c:=\boldsymbol\xi^\top G^{-1}\boldsymbol\xi$.
By \eqref{eq:G-concentration-collapsed},
$|b-\boldsymbol\xi^\top\boldsymbol1|\leq n\varepsilon_G$ and
$|c-n|\leq n\varepsilon_G$. Hence
$|b-mn|\leq\sqrt{2n\log(4/\delta)}+n\varepsilon_G$ and
$|c-n|\leq n\varepsilon_G$.
Moreover, $2n/3\leq c\leq2n$, so
$\lambda^2+s_F^2c\asymp\lambda^2+ns_F^2$.

By Lemma~\ref{lem:gd-dual-solution},
$a_{\rm GD}^\star=b/(\lambda^2+s_F^2c)$, and
\[
a_{\rm GD}^\star-\frac{mn}{\lambda^2+ns_F^2}
=
\frac{b-mn}{\lambda^2+s_F^2c}
+
\frac{mn s_F^2(n-c)}
{(\lambda^2+s_F^2c)(\lambda^2+ns_F^2)}.
\]
Therefore,
\begin{equation}
\left|
a_{\rm GD}^\star
-
\frac{mn}{\lambda^2+ns_F^2}
\right|
\leq
C\frac{n}{\lambda^2+ns_F^2}
\left(
    \sqrt{\frac{\log(4/\delta)}{n}}
    +
    \sqrt{\frac{n\log(8n/\delta)}{D}}
\right).
\label{eq:gd-a-approx-collapsed}
\end{equation}

\paragraph{Positivity.}
Set
$\varepsilon:=
\sqrt{\log(4/\delta)/n}
+\sqrt{n\log(8n/\delta)/D}$.
By choosing $C_{\rm dim}$ sufficiently large depending only on $\rho$,
the normalized right-hand side of
\eqref{eq:gd-a-approx-collapsed} is at most
$\min\{m/2,\rho\}$. Consequently,
$a_{\rm GD}^\star
\geq \frac m2\,n/(\lambda^2+ns_F^2)>0$ and
$0\leq a_{\rm GD}^\star s_F^2\leq m+\rho=1-\rho$.

Set
$\boldsymbol z:=\boldsymbol1-a_{\rm GD}^\star s_F^2\boldsymbol\xi$.
Then $\rho\leq z_i\leq2$ for every $i$, while
\[
\|(G^{-1}-I_n)\boldsymbol z\|_\infty
\leq
\|(G^{-1}-I_n)\boldsymbol1\|_\infty
+
a_{\rm GD}^\star s_F^2
\|(G^{-1}-I_n)\boldsymbol\xi\|_\infty
\leq2\varepsilon_G.
\]
Increasing $C_{\rm dim}$ if necessary so that
$2\varepsilon_G\leq\rho/2$, we obtain
$(G^{-1}\boldsymbol z)_i\geq\rho/2$. Hence
Lemma~\ref{lem:gd-dual-solution} gives
$\alpha_i^\star\geq\rho/(2\lambda^2)>0$ for every $i$.

Since $N_i=\boldsymbol u_i\boldsymbol v_0^\top$ under CSM, the same
lemma gives
$W_{\rm GD}^\star=a_{\rm GD}^\star S+H_{\rm GD}^\star$, where
$H_{\rm GD}^\star
:=\lambda U\boldsymbol\alpha^\star\boldsymbol v_0^\top$.
Since $G=U^\top U\succ0$ and $\boldsymbol\alpha^\star\neq0$,
$\operatorname{rank}(H_{\rm GD}^\star)=1$.

\paragraph{Shortcut norm.}
By Lemma~\ref{lem:gd-dual-solution},
$\boldsymbol\alpha^\star=\lambda^{-2}G^{-1}\boldsymbol z$, and hence
\begin{equation}
    \frac{\lambda^2}{n}\|H_{\rm GD}^\star\|_F^2
    =
    \frac{\lambda^4}{n}
    (\boldsymbol\alpha^\star)^\top G\boldsymbol\alpha^\star
    =
    \frac1n\boldsymbol z^\top G^{-1}\boldsymbol z.
\label{eq:gd-H-norm-collapsed}
\end{equation}
Since $\|\boldsymbol z\|_1\leq2n$ and
$\|(G^{-1}-I_n)\boldsymbol z\|_\infty\leq2\varepsilon_G$,
\[
\left|
\frac1n\boldsymbol z^\top G^{-1}\boldsymbol z
-
\frac1n\|\boldsymbol z\|_2^2
\right|
\leq
C\varepsilon_G.
\]

Set $\overline\xi:=n^{-1}\boldsymbol1^\top\boldsymbol\xi$,
$\theta:=a_{\rm GD}^\star s_F^2$, and
$\theta_0:=mn s_F^2/(\lambda^2+ns_F^2)$.
Then $n^{-1}\|\boldsymbol z\|_2^2
=1-2\theta\overline\xi+\theta^2$, while
\eqref{eq:gd-a-approx-collapsed} and Hoeffding's bound give
$|\theta-\theta_0|\leq C\varepsilon$ and
$|\overline\xi-m|
\leq C\sqrt{\log(4/\delta)/n}$.
Since $\theta$ and $\theta_0$ are uniformly bounded,
\[
\left|
\frac1n\|\boldsymbol z\|_2^2
-
(1-2m\theta_0+\theta_0^2)
\right|
\leq C\varepsilon.
\]
Moreover,
\[
1-2m\theta_0+\theta_0^2
=
4\rho(1-\rho)
+
m^2\frac{\lambda^4}{(\lambda^2+ns_F^2)^2}.
\]
Combining these estimates with
\eqref{eq:gd-H-norm-collapsed} yields
\[
\left|
\frac{\lambda^2}{n}\|H_{\rm GD}^\star\|_F^2
-
\left[
4\rho(1-\rho)
+
m^2\frac{\lambda^4}{(\lambda^2+ns_F^2)^2}
\right]
\right|
\leq C\varepsilon.
\]
The quantity in brackets is at least $4\rho(1-\rho)>0$; hence
$|\sqrt{x}-\sqrt{y}|=|x-y|/(\sqrt{x}+\sqrt{y})$ gives
\[
\left|
\frac{\lambda}{\sqrt n}\|H_{\rm GD}^\star\|_F
-
\sqrt{
4\rho(1-\rho)
+
(1-2\rho)^2
\frac{\lambda^4}
{(\lambda^2+n\|S\|_F^2)^2}
}
\right|
\leq
C_H\varepsilon,
\]
where $C_H>0$ depends only on $\rho$.
\end{proof}

\begin{corollary}
\label{cor:gd-dispersed}
Assume the sample $\mathcal D_n$ is generated from the DSM and that
Assumption~\ref{ass:stepsize} holds. Let $0<\delta<e^{-1}$.
There exist constants $C_{\rm dim},c,C>0$, depending only on $\rho$,
such that, if
\[
    n\geq C_{\rm dim}\log(4/\delta),
    \qquad
    D\geq C_{\rm dim}n\log(8n^2/\delta),
\]
then, with probability at least $1-\delta$ over the training sample,
\begin{align}
\mathcal R(W_{\rm GD}^\star)
&\leq
4\exp\left[
-c\min\left\{
\frac{D^2n\|S\|_F^4}
{(\lambda^2+n\|S\|_F^2)^2},
\frac{Dn\|S\|_F^2}
{\lambda^2+n\|S\|_F^2}
\right\}
\right],
\label{eq:gd-dispersed-risk-upper}
\\
\mathcal R(W_{\rm GD}^\star)
&\geq
\frac12
-
Ce^{-cn}
-
C
\frac{D\sqrt n\,\|S\|_F^2}
{\lambda^2+n\|S\|_F^2}.
\label{eq:gd-dispersed-risk-lower}
\end{align}
Consequently,
\[
    \lambda^2\ll D\sqrt n\,\|S\|_F^2
    \quad\Longrightarrow\quad
    \mathcal R(W_{\rm GD}^\star)\to0,
\]
whereas
\[
    \lambda^2\gg D\sqrt n\,\|S\|_F^2
    \quad\Longrightarrow\quad
    \mathcal R(W_{\rm GD}^\star)\to\frac12.
\]
\end{corollary}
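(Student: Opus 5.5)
Write $X$ for a fresh clean test input with label $y$. By signal--shortcut block orthogonality,
\[
y\langle W_{\rm GD}^\star,X\rangle_F=a_{\rm GD}^\star\|S\|_F^2+\lambda y\langle H_{\rm GD}^\star,\boldsymbol u\boldsymbol v^\top\rangle_F ,
\]
with $\boldsymbol u\sim\Unif(\mathbb S(\calU))$ and $\boldsymbol v\sim\Unif(\mathbb S(\calV))$ independent. Conditionally on $\mathcal D_n$, the variable $T:=\langle H_{\rm GD}^\star,\boldsymbol u\boldsymbol v^\top\rangle_F$ is symmetric. Hence, exactly as in the proof of Corollary~\ref{cor:generalization-dispersed},
\[
\mathcal R(W_{\rm GD}^\star)=\tfrac12\,\prob\bigl(|T|\ge t\mid\mathcal D_n\bigr),\qquad t:=\frac{a_{\rm GD}^\star\|S\|_F^2}{\lambda}.
\]
We work on the event of Proposition~\ref{prop:gd-dispersed}, taking failure probability $\delta$. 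On this event three facts hold:
\[
\tfrac{m}{2}\,\frac{n}{\lambda^2+n\|S\|_F^2}\le a_{\rm GD}^\star\le C\,\frac{n}{\lambda^2+n\|S\|_F^2},\qquad \operatorname{rank}H_{\rm GD}^\star=n,\qquad \sigma_i(H_{\rm GD}^\star)\asymp 1/\lambda ,
\]
where $m=1-2\rho$.

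\textbf{Upper bound.} We apply Lemma~\ref{lem:fresh-bilinear-tail} with $H=\lambda H_{\rm GD}^\star$, which has rank $n$ and operator norm $O(1)$, at level $\lambda t\asymp n\|S\|_F^2/(\lambda^2+n\|S\|_F^2)$. This gives
\[
\prob(|T|\ge t)\le 2\exp\Bigl[-c\min\Bigl\{\tfrac{D^2}{n}(\lambda t)^2,\,D\lambda t\Bigr\}\Bigr].
\]
Substituting $\lambda t$, the first term in the minimum becomes $D^2n\|S\|_F^4/(\lambda^2+n\|S\|_F^2)^2$ and the second becomes $Dn\|S\|_F^2/(\lambda^2+n\|S\|_F^2)$, which is \eqref{eq:gd-dispersed-risk-upper}. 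The constant $4$ in the statement allows for the operator-norm factor $C_H$ and for renaming constants. If the lemma is normalized with $\|H\|_{\op}\le1$, we rescale by $C_H$.

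\textbf{Lower bound.} Here we need an anti-concentration statement: $\prob(|T|<t)$ is small whenever $t$ is small compared with the typical size of $T$. We condition on $\boldsymbol v$ and set $\boldsymbol w:=\lambda H_{\rm GD}^\star\boldsymbol v$, so that $\lambda T=\langle\boldsymbol u,\boldsymbol w\rangle$. Since $\lambda H_{\rm GD}^\star$ has $n$ singular values of order one and $\boldsymbol v$ is uniform on a $D$-sphere, $\|\boldsymbol w\|_2^2$ concentrates around $\|\lambda H_{\rm GD}^\star\|_F^2/D\asymp n/D$. Quantitatively, Hanson--Wright on the sphere gives $\|\boldsymbol w\|_2^2\ge c n/D$ except with probability $Ce^{-cn}$; this is where the $Ce^{-cn}$ term comes from. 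Conditionally on $\boldsymbol w$, the quantity $\langle\boldsymbol u,\boldsymbol w\rangle/\|\boldsymbol w\|_2$ is a one-dimensional spherical marginal whose density is bounded by $C\sqrt D$. Therefore
\[
\prob\bigl(|\lambda T|<\lambda t\mid\boldsymbol w\bigr)\le C\sqrt D\,\frac{\lambda t}{\|\boldsymbol w\|_2}\le C\,\frac{D\lambda t}{\sqrt n}.
\]
Inserting $\lambda t\le C n\|S\|_F^2/(\lambda^2+n\|S\|_F^2)$ gives the bound $C D\sqrt n\,\|S\|_F^2/(\lambda^2+n\|S\|_F^2)$. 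Then $\mathcal R=\tfrac12(1-\prob(|T|<t))$, which yields \eqref{eq:gd-dispersed-risk-lower} after renaming constants.

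\textbf{Consequences.} When $\lambda^2\gg D\sqrt n\|S\|_F^2$, the lower bound tends to $1/2$, using $n\to\infty$ for the $Ce^{-cn}$ term. When $\lambda^2\ll D\sqrt n\|S\|_F^2$, both exponents in the upper bound diverge. We check this by cases. If $\lambda^2\lesssim n\|S\|_F^2$, the exponents are of order $D$. If $\lambda^2\gg n\|S\|_F^2$, the exponents are $D^2n\|S\|_F^4/\lambda^4$ and $Dn\|S\|_F^2/\lambda^2$; the former diverges by assumption, and the latter is at least its square root.

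\textbf{Main obstacle.} The delicate step is the lower bound on $\|H_{\rm GD}^\star\boldsymbol v\|_2$. It needs Hanson--Wright for a uniform spherical vector against the deterministic (given $\mathcal D_n$) PSD matrix $H_{\rm GD}^{\star\top}H_{\rm GD}^\star$, whose trace is of order $n/\lambda^2$ and whose operator norm is of order $1/\lambda^2$. This yields a lower-tail deviation probability $e^{-cn}$, which fits the statement.
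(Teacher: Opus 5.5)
Your proof is correct and follows essentially the same route as the paper: the same reduction to $a_{\rm GD}^\star\|S\|_F^2+\boldsymbol u^\top(\lambda H_{\rm GD}^\star)\boldsymbol v$, Lemma~\ref{lem:fresh-bilinear-tail} for the upper bound, and, for the lower bound, a conditional small-ball estimate from Lemma~\ref{lem:spherical-linear} after lower-bounding the norm of the image of one sphere vector outside an event of probability $e^{-cn}$. The only difference is in that norm bound. The paper conditions on $\boldsymbol u$ and argues more elementarily, using $\|(\lambda H_{\rm GD}^\star)^\top\boldsymbol u\|_2\ge\sigma_n(\lambda H_{\rm GD}^\star)\|P\boldsymbol u\|_2$, with $P$ the rank-$n$ projection onto the left singular space, together with Lemma~\ref{lem:spherical-projection-norm}. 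Your Hanson--Wright step needs only that the stable rank of $\lambda H_{\rm GD}^\star$ is of order $n$, and the same projection trick would also work on the $\boldsymbol v$ side.
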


\begin{proof}
Increase $C_{\rm dim}$, if necessary, so that the conclusions of
Proposition~\ref{prop:gd-dispersed} hold and its approximation of
$a_{\rm GD}^\star$ implies
\begin{equation}\label{eq:gd-dispersed-signal-scale}
    a_{\rm GD}^\star\|S\|_F^2
    \asymp
    \frac{n\|S\|_F^2}
    {\lambda^2+n\|S\|_F^2}.
\end{equation}
We work on the corresponding event, which has probability at least
$1-\delta$.

Set $B_{\rm GD}^\star:=\lambda H_{\rm GD}^\star$.
Proposition~\ref{prop:gd-dispersed} gives
$\operatorname{rank}(B_{\rm GD}^\star)=n$ and
$c\leq\sigma_n(B_{\rm GD}^\star)
\leq\sigma_1(B_{\rm GD}^\star)\leq C$, and therefore
\begin{equation}
    \|B_{\rm GD}^\star\|_F\asymp\sqrt n,
    \qquad
    \|B_{\rm GD}^\star\|_{\op}\asymp1.
\label{eq:gd-dispersed-B-scales}
\end{equation}

For an independent clean test sample,
$yX=S+\lambda\boldsymbol u\boldsymbol v^\top$, where
$\boldsymbol u,\boldsymbol v$ are independent and uniformly distributed
on $\mathbb S^{D-1}$. Hence
\[
    y\langle W_{\rm GD}^\star,X\rangle_F
    =
    a_{\rm GD}^\star\|S\|_F^2
    +
    \boldsymbol u^\top B_{\rm GD}^\star\boldsymbol v.
\]

\paragraph{Upper bound.}
Set $t:=a_{\rm GD}^\star\|S\|_F^2$.
By Lemma~\ref{lem:fresh-bilinear-tail},
\[
\mathcal R(W_{\rm GD}^\star)
\leq
4\exp\left[
-c\min\left\{
\frac{D^2t^2}{\|B_{\rm GD}^\star\|_F^2},
\frac{Dt}{\|B_{\rm GD}^\star\|_{\op}}
\right\}
\right].
\]
Using \eqref{eq:gd-dispersed-signal-scale} and
\eqref{eq:gd-dispersed-B-scales} gives
\eqref{eq:gd-dispersed-risk-upper}.

\paragraph{Lower bound.}
Let $B_{\rm GD}^\star=U_B\Sigma_BV_B^\top$ be its compact SVD.
By Proposition~\ref{prop:gd-dispersed},
$\operatorname{rank}(B_{\rm GD}^\star)=n$ and
$\sigma_n(B_{\rm GD}^\star)\geq c_H$, where $c_H>0$ depends only on
$\rho$.

Let $P:=U_BU_B^\top$ be the orthogonal projection onto the left singular
space of $B_{\rm GD}^\star$. Then
\begin{equation}\label{eq:gd-dispersed-Bu-lower}
\begin{aligned}
    \|(B_{\rm GD}^\star)^\top\boldsymbol u\|_2
    &=
    \|V_B\Sigma_BU_B^\top\boldsymbol u\|_2
    =
    \|\Sigma_BU_B^\top\boldsymbol u\|_2 \\
    &\geq
    \sigma_n(B_{\rm GD}^\star)
    \|U_B^\top\boldsymbol u\|_2
    =
    \sigma_n(B_{\rm GD}^\star)
    \|P\boldsymbol u\|_2
    \geq
    c_H\|P\boldsymbol u\|_2.
\end{aligned}
\end{equation}
Here we used $V_B^\top V_B=I_n$.

Since $P$ has rank $n$, Lemma~\ref{lem:spherical-projection-norm} gives
\begin{equation}\label{eq:gd-dispersed-projection-lower}
    \prob\left(
        \|P\boldsymbol u\|_2
        \leq
        \frac12\sqrt{\frac nD}
        \,\middle|\,
        \mathcal D_n
    \right)
    \leq
    2e^{-n/16}.
\end{equation}
Together with \eqref{eq:gd-dispersed-Bu-lower}, this implies
\begin{equation}\label{eq:gd-dispersed-Bu-scale}
    \|(B_{\rm GD}^\star)^\top\boldsymbol u\|_2
    \geq
    \frac{c_H}{2}\sqrt{\frac nD}
\end{equation}
except on an event of probability at most $2e^{-n/16}$.

Now condition on such a realization of $\boldsymbol u$. Since
$\boldsymbol v\sim\Unif(\mathbb S^{D-1})$ is independent of
$\boldsymbol u$, Lemma~\ref{lem:spherical-linear} gives, for every $t>0$,
\[
\prob\left(
\left|
\boldsymbol u^\top B_{\rm GD}^\star\boldsymbol v
\right|
\leq t
\,\middle|\,
\boldsymbol u,\mathcal D_n
\right)
\leq
C\sqrt D\,
\frac{t}
{\|(B_{\rm GD}^\star)^\top\boldsymbol u\|_2}
\leq
C_\rho\frac{Dt}{\sqrt n},
\]
where the last inequality follows from
\eqref{eq:gd-dispersed-Bu-scale}. Integrating over $\boldsymbol u$ gives
\[
\prob\left(
\left|
\boldsymbol u^\top B_{\rm GD}^\star\boldsymbol v
\right|
\leq t
\,\middle|\,
\mathcal D_n
\right)
\leq
Ce^{-cn}
+
C_\rho\frac{Dt}{\sqrt n}.
\]

Finally, $Z:=\boldsymbol u^\top B_{\rm GD}^\star\boldsymbol v$ has a
continuous distribution symmetric about zero. Since
\[
\mathcal R(W_{\rm GD}^\star)
=
\prob(Z\leq-t\mid\mathcal D_n)
=
\frac12-\frac12\prob(|Z|<t\mid\mathcal D_n),
\]
the preceding bound together with
\eqref{eq:gd-dispersed-signal-scale} yields
\[
\mathcal R(W_{\rm GD}^\star)
\geq
\frac12
-
Ce^{-cn}
-
C
\frac{
D\sqrt n\,\|S\|_F^2
}{
\lambda^2+n\|S\|_F^2
},
\]
which proves \eqref{eq:gd-dispersed-risk-lower}.
\end{proof}

\begin{corollary}
\label{cor:gd-collapsed}
Assume the sample $\mathcal D_n$ is generated from the CSM and that
Assumption~\ref{ass:stepsize} holds. Let $0<\delta<e^{-1}$.
There exist constants $C_{\rm dim},c,C>0$, depending only on $\rho$,
such that, if
\[
    n\geq C_{\rm dim}\log(4/\delta),
    \qquad
    D\geq C_{\rm dim}n\log(8n/\delta),
\]
then, with probability at least $1-\delta$ over the training sample,
\begin{align}
\mathcal R(W_{\rm GD}^\star)
&\leq
2\exp\left[
    -c
    \frac{Dn\|S\|_F^4}
    {(\lambda^2+n\|S\|_F^2)^2}
\right],
\label{eq:gd-collapsed-risk-upper}
\\
\mathcal R(W_{\rm GD}^\star)
&\geq
\frac12
-
C
\frac{\sqrt{nD}\,\|S\|_F^2}
{\lambda^2+n\|S\|_F^2}.
\label{eq:gd-collapsed-risk-lower}
\end{align}
Consequently,
\[
    \lambda^2\ll \|S\|_F^2\sqrt{nD}
    \quad\Longrightarrow\quad
    \mathcal R(W_{\rm GD}^\star)\to0,
\]
whereas
\[
    \lambda^2\gg \|S\|_F^2\sqrt{nD}
    \quad\Longrightarrow\quad
    \mathcal R(W_{\rm GD}^\star)\to\frac12.
\]
\end{corollary}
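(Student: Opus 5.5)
The plan mirrors the proof of Corollary~\ref{cor:gd-dispersed}, with the collapsed structure simplifying the shortcut interaction to a linear form. We work on the event of Proposition~\ref{prop:gd-collapsed}, which has probability at least $1-\delta$. On this event $W_{\rm GD}^\star=a_{\rm GD}^\star S+H_{\rm GD}^\star$ with $H_{\rm GD}^\star=\lambda U\boldsymbol\alpha^\star\boldsymbol v_0^\top$. For a fresh clean sample, $yX=S+\lambda\boldsymbol u\boldsymbol v_0^\top$ with $\boldsymbol u\sim\Unif(\mathbb S^{D-1})$. Signal--shortcut orthogonality then gives
\[
y\langle W_{\rm GD}^\star,X\rangle_F=t+\langle \boldsymbol b,\boldsymbol u\rangle,\qquad t:=a_{\rm GD}^\star\|S\|_F^2,\quad \boldsymbol b:=\lambda H_{\rm GD}^\star\boldsymbol v_0=\lambda^2U\boldsymbol\alpha^\star .
\]
It follows that $\mathcal R(W_{\rm GD}^\star)=\prob(\langle\boldsymbol b,\boldsymbol u\rangle\le -t\mid\mathcal D_n)$, and this is a one-dimensional spherical marginal tail.

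The first step is to identify the two scales. By Proposition~\ref{prop:gd-collapsed}, after enlarging $C_{\rm dim}$ so that the relative error is at most $1/2$, we have $t\asymp n\|S\|_F^2/(\lambda^2+n\|S\|_F^2)$. Also $\|\boldsymbol b\|_2=\lambda\|H_{\rm GD}^\star\|_F$. The same proposition gives $\|\boldsymbol b\|_2\asymp\sqrt n$, because the square-root quantity there lies between $\sqrt{4\rho(1-\rho)}$ and $1$ and the error term is made small by the dimension assumptions. In particular $\|\boldsymbol b\|_2\ge c_\rho\sqrt n$ with $c_\rho$ depending only on $\rho$.

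For the upper bound we apply the upper tail of Lemma~\ref{lem:spherical-linear} to the unit vector $\boldsymbol b/\|\boldsymbol b\|_2$ at level $t/\|\boldsymbol b\|_2$. This gives $\mathcal R\le 2\exp(-cDt^2/\|\boldsymbol b\|_2^2)$. Substituting the two scales, $Dt^2/\|\boldsymbol b\|_2^2\asymp Dn\|S\|_F^4/(\lambda^2+n\|S\|_F^2)^2$, which is \eqref{eq:gd-collapsed-risk-upper}.

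For the lower bound we use symmetry: $\mathcal R=\tfrac12-\tfrac12\prob(|\langle\boldsymbol b,\boldsymbol u\rangle|<t)$. The density of a single coordinate of $\boldsymbol u$ is bounded by $C\sqrt D$, so the anti-concentration bound from Lemma~\ref{lem:spherical-linear} (as used in Corollary~\ref{cor:gd-dispersed}) gives
\[
\prob\left(|\langle\boldsymbol b,\boldsymbol u\rangle|<t\right)\le C\sqrt D\,\frac{t}{\|\boldsymbol b\|_2}\le C\frac{\sqrt{D}\,\sqrt n\,\|S\|_F^2}{\lambda^2+n\|S\|_F^2}.
\]
This yields \eqref{eq:gd-collapsed-risk-lower}. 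The step is simpler than in the DSM case because no random projection $\|P\boldsymbol u\|_2$ appears; $\boldsymbol b$ is deterministic given $\mathcal D_n$.

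The two consequences then follow from the displayed bounds. If $\lambda^2\ll\|S\|_F^2\sqrt{nD}$, the exponent in the upper bound diverges: when $\lambda^2\lesssim n\|S\|_F^2$ it is of order $D/n\to\infty$, and otherwise it is of order $Dn\|S\|_F^4/\lambda^4\to\infty$. If $\lambda^2\gg\|S\|_F^2\sqrt{nD}$, the correction term in the lower bound vanishes, so the risk tends to $\tfrac12$. The only point needing care is keeping $\|\boldsymbol b\|_2$ of order exactly $\sqrt n$ uniformly in $\lambda$, which the two-sided Frobenius-norm estimate of Proposition~\ref{prop:gd-collapsed} supplies.
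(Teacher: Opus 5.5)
Your proposal is correct and follows essentially the same route as the paper. Both write the test margin as $a_{\rm GD}^\star\|S\|_F^2+\lambda\langle\boldsymbol h_{\rm GD}^\star,\boldsymbol u\rangle$, use Proposition~\ref{prop:gd-collapsed} to get the two scales $a_{\rm GD}^\star\|S\|_F^2\asymp n\|S\|_F^2/(\lambda^2+n\|S\|_F^2)$ and $\lambda\|H_{\rm GD}^\star\|_F\asymp\sqrt n$, and then apply the tail and small-ball bounds of Lemma~\ref{lem:spherical-linear} together with symmetry; the paper simply normalizes to the single ratio $t\asymp\sqrt n\|S\|_F^2/(\lambda^2+n\|S\|_F^2)$ rather than carrying $\boldsymbol b$ separately.
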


\begin{proof}
Increase $C_{\rm dim}$, if necessary, so that the conclusions of
Proposition~\ref{prop:gd-collapsed} hold and its estimates imply
\begin{equation}
    a_{\rm GD}^\star\|S\|_F^2
    \asymp_\rho
    \frac{n\|S\|_F^2}
    {\lambda^2+n\|S\|_F^2},
    \qquad
    \lambda\|H_{\rm GD}^\star\|_F
    \asymp_\rho
    \sqrt n.
\label{eq:gd-collapsed-scales}
\end{equation}
We work on the corresponding event, which has probability at least
$1-\delta$.

Since $\operatorname{rank}(H_{\rm GD}^\star)=1$, write
$H_{\rm GD}^\star
=\boldsymbol h_{\rm GD}^\star\boldsymbol v_0^\top$ with
$\|\boldsymbol v_0\|_2=1$, so that
$\|H_{\rm GD}^\star\|_F=\|\boldsymbol h_{\rm GD}^\star\|_2$.
For an independent clean test sample, by spherical symmetry we may write
$yX=S+\lambda\boldsymbol u\boldsymbol v_0^\top$, where
$\boldsymbol u\sim\Unif(\mathbb S^{D-1})$ is independent of the training
sample. Hence
\[
    y\langle W_{\rm GD}^\star,X\rangle_F
    =
    a_{\rm GD}^\star\|S\|_F^2
    +
    \lambda
    \langle
        \boldsymbol h_{\rm GD}^\star,\boldsymbol u
    \rangle.
\]

Therefore, conditionally on $\mathcal D_n$,
\[
\mathcal R(W_{\rm GD}^\star)
=
\prob(Z\leq-t\mid\mathcal D_n),
\qquad
Z:=
\left\langle
\frac{\boldsymbol h_{\rm GD}^\star}
{\|\boldsymbol h_{\rm GD}^\star\|_2},
\boldsymbol u
\right\rangle,
\qquad
t:=
\frac{
a_{\rm GD}^\star\|S\|_F^2
}{
\lambda\|\boldsymbol h_{\rm GD}^\star\|_2
}.
\]
By \eqref{eq:gd-collapsed-scales},
\begin{equation}
t
\asymp_\rho
\frac{
\sqrt n\,\|S\|_F^2
}{
\lambda^2+n\|S\|_F^2
}.
\label{eq:gd-collapsed-snr}
\end{equation}
By symmetry and Lemma~\ref{lem:spherical-linear},
\[
\frac12-C\sqrt D\,t
\leq
\mathcal R(W_{\rm GD}^\star)
\leq
2e^{-cDt^2}.
\]
Together with \eqref{eq:gd-collapsed-snr}, this gives
\eqref{eq:gd-collapsed-risk-upper} and
\eqref{eq:gd-collapsed-risk-lower}.

Finally, if $\lambda^2\ll\|S\|_F^2\sqrt{nD}$, then, since
$D\gtrsim n\log n$,
$\sqrt{nD}\,\|S\|_F^2/(\lambda^2+n\|S\|_F^2)\to\infty$.
Thus \eqref{eq:gd-collapsed-risk-upper} gives
$\mathcal R(W_{\rm GD}^\star)\to0$.

Conversely, if $\lambda^2\gg\|S\|_F^2\sqrt{nD}$, then
$\sqrt{nD}\,\|S\|_F^2/(\lambda^2+n\|S\|_F^2)\to0$.
Hence \eqref{eq:gd-collapsed-risk-lower}, together with
$\mathcal R(W_{\rm GD}^\star)\leq1/2$ by symmetry and the positive signal
coefficient, gives
$\mathcal R(W_{\rm GD}^\star)\to\frac12$.
\end{proof}

%% file: Proof_onestep_dispersed.tex
\begin{lemma}[One-step spherical polar alignment]
\label{lem:one-step-spherical-polar-alignment}
Let $U=[\boldsymbol u_1,\ldots,\boldsymbol u_n]$ and
$\widehat V=[\widehat{\boldsymbol v}_1,\ldots,\widehat{\boldsymbol v}_n]$,
where all columns are independent and uniformly distributed on
$\mathbb S^{D-1}$. Set
$N:=U\widehat V^\top
=\sum_{i=1}^n\boldsymbol u_i\widehat{\boldsymbol v}_i^\top$
and $H:=\pol(N)$.
There exist universal constants $c,C>0$ such that, if
\[
D\geq c\max\{n,\log(1/\delta)\},
\]
then, with probability at least $1-\delta$,
\[
\max_{1\leq i\leq n}
\left|
\left\langle
H,\boldsymbol u_i\widehat{\boldsymbol v}_i^\top
\right\rangle_F-1
\right|
\leq
C\sqrt{
\frac{\max\{n,\log(1/\delta)\}}{D}
}.
\]
\end{lemma}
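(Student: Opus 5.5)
The plan is to reduce the computation to the $n\times n$ core via Lemma~\ref{lem:polar-orthonormalization}, exactly as in the proof of Proposition~\ref{prop:KKT-envelope}. Set $G_U:=U^\top U=I+\Delta_U$ and $G_V:=\widehat V^\top\widehat V=I+\Delta_V$, and work on the event that $\|\Delta_U\|_{\op}\vee\|\Delta_V\|_{\op}\leq \varepsilon:=C_G\sqrt{\max\{n,\log(1/\delta)\}/D}\leq 1/2$. By Lemma~\ref{lem:gram-concentration-2} (or the standard Gram concentration for independent spherical vectors), this event has probability at least $1-\delta$ once $c$ is large. On it, $U$ and $\widehat V$ have full column rank. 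Writing $\overline U:=UG_U^{-1/2}$ and $\overline V:=\widehat VG_V^{-1/2}$, we have $N=\overline U\,Y\,\overline V^\top$ with $Y:=G_U^{1/2}G_V^{1/2}$, and hence $H=\overline U\,\pol(Y)\,\overline V^\top$.

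Repeating the identity from the proof of Proposition~\ref{prop:KKT-envelope} (with $\Lambda=I$) gives
\[
\langle H,\boldsymbol u_i\widehat{\boldsymbol v}_i^\top\rangle_F
=\boldsymbol e_i^\top G_U^{1/2}\pol(Y)G_V^{1/2}\boldsymbol e_i .
\]
So it suffices to bound $\|G_U^{1/2}\pol(Y)G_V^{1/2}-I\|_{\op}$, since every diagonal entry of a matrix $M$ satisfies $|M_{ii}-1|\leq\|M-I\|_{\op}$. Here $Y$ is invertible with $\|Y-I\|_{\op}\leq C\varepsilon$, because $\|G^{1/2}-I\|_{\op}\leq\|G-I\|_{\op}$ for $G\succeq0$.

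The remaining step, and the only one needing care, is the polar perturbation bound $\|\pol(Y)-I\|_{\op}\leq C\|Y-I\|_{\op}$ for invertible $Y$ near $I$. I would write $\pol(Y)=Y(Y^\top Y)^{-1/2}$. Since $\|Y^\top Y-I\|_{\op}\leq 3\|Y-I\|_{\op}+\|Y-I\|_{\op}^2$, and the map $A\mapsto A^{-1/2}$ is Lipschitz on $\{A:\tfrac14 I\preceq A\preceq 4I\}$ with a dimension-free constant (Lemma~\ref{lem:uniform-square-root-derivatives}, or the scalar Lipschitz property of $x^{-1/2}$ applied via operator-Lipschitz bounds for smooth functions), we get $\|(Y^\top Y)^{-1/2}-I\|_{\op}\leq C\varepsilon$ and hence $\|\pol(Y)-I\|_{\op}\leq C\varepsilon$. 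Lemma~\ref{lem:polar-continuity} might give this directly.

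Finally, $G_U^{1/2}\pol(Y)G_V^{1/2}-I$ is a product of three factors, each within $C\varepsilon$ of $I$ in operator norm and each of operator norm at most $2$. Expanding the product gives an operator-norm bound of $C'\varepsilon$, which yields the claim uniformly over $i$.
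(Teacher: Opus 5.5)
Your proposal is correct and matches the paper's proof essentially step for step: Gram concentration via Lemma~\ref{lem:gram-concentration-2}, the orthonormalization identity reducing $\langle H,\boldsymbol u_i\widehat{\boldsymbol v}_i^\top\rangle_F$ to $\boldsymbol e_i^\top G_U^{1/2}\pol(Y)G_V^{1/2}\boldsymbol e_i$, operator-norm perturbation bounds for $G_U^{1/2}$, $G_V^{1/2}$, $(Y^\top Y)^{-1/2}$ and $\pol(Y)=Y(Y^\top Y)^{-1/2}$, and a telescoping expansion of the three-factor product. One minor remark: do not fall back on Lemma~\ref{lem:polar-continuity}, whose local Lipschitz constant is not shown to be dimension-uniform; your spectral-calculus argument supplies the universal constant, and it is elementary because every comparison is between commuting matrices and $I$.
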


\begin{proof}
Let $c_{\scriptscriptstyle G},C_{\scriptscriptstyle G}>0$ be the
universal constants in Lemma~\ref{lem:gram-concentration-2}, and set
$G_U:=U^\top U$ and $G_V:=\widehat V^\top\widehat V$.
Since $\log n\leq n$, we have
$n+\log(n/\delta)\leq3\max\{n,\log(1/\delta)\}$.
Thus, by taking $c>0$ sufficiently large,
Lemma~\ref{lem:gram-concentration-2} implies that, with probability at
least $1-\delta$,
\[
\|G_U-I\|_{\op}\vee\|G_V-I\|_{\op}
\leq
C_{\scriptscriptstyle G}
\sqrt{\frac{n+\log(n/\delta)}{D}}
\leq
C_0\sqrt{\frac{\max\{n,\log(1/\delta)\}}{D}}
=:\eta,
\]
where $C_0>0$ is a universal constant. Increasing $c$, we have
$\eta\leq1/4$. We work on this event.

Set $\overline U:=UG_U^{-1/2}$,
$\overline V:=\widehat V G_V^{-1/2}$, and
$Y:=G_U^{1/2}G_V^{1/2}$. Then
$\overline U^\top\overline U
=\overline V^\top\overline V=I_n$ and
$N=\overline UY\overline V^\top$, so
Lemma~\ref{lem:polar-orthonormalization} gives
$H=\overline U\pol(Y)\overline V^\top$.

For $N_i:=\boldsymbol u_i\widehat{\boldsymbol v}_i^\top$,
\[
\langle H,N_i\rangle_F
=
e_i^\top
G_U^{1/2}\pol(Y)G_V^{1/2}e_i,
\]
where we used $U^\top\overline U=G_U^{1/2}$ and
$\overline V^\top\widehat V=G_V^{1/2}$.

Since $\eta\leq1/4$,
$\|G_U-I\|_{\op}\vee\|G_V-I\|_{\op}\leq\eta<1$.
The square-root map is uniformly Lipschitz on a fixed compact subset of
the positive definite cone, so, for a universal constant $C_1>0$,
\[
\|G_U^{1/2}-I\|_{\op}
\vee
\|G_V^{1/2}-I\|_{\op}
\leq C_1\eta.
\]
Since
$Y-I=(G_U^{1/2}-I)G_V^{1/2}+(G_V^{1/2}-I)$, it follows that
$\|Y-I\|_{\op}\leq C_2\eta$ for a universal constant $C_2>0$.

Moreover,
$Y^\top Y-I
=G_V^{1/2}(G_U-I)G_V^{1/2}+(G_V-I)$, and hence
\[
\|Y^\top Y-I\|_{\op}
\leq
(2+\eta)\eta
\leq
\frac{9}{16}.
\]
Thus $Y^\top Y$ lies in a fixed compact subset of the positive definite
cone. The inverse square-root map is therefore uniformly Lipschitz there,
so, for a universal constant $C_3>0$,
\[
\|(Y^\top Y)^{-1/2}-I\|_{\op}\leq C_3\eta.
\]
Since
$\pol(Y)-I
=(Y-I)(Y^\top Y)^{-1/2}
+\{(Y^\top Y)^{-1/2}-I\}$,
and $\|(Y^\top Y)^{-1/2}\|_{\op}$ is uniformly bounded, the triangle
inequality gives
\[
\|\pol(Y)-I\|_{\op}\leq C_4\eta
\]
for a universal constant $C_4>0$.
Finally,
$G_U^{1/2}\pol(Y)G_V^{1/2}-I
=(G_U^{1/2}-I)\pol(Y)G_V^{1/2}
+(\pol(Y)-I)G_V^{1/2}
+(G_V^{1/2}-I)$,
so another application of the triangle inequality yields
\[
\left\|
G_U^{1/2}\pol(Y)G_V^{1/2}-I
\right\|_{\op}
\leq C_5\eta.
\]
Thus
\[
\max_{i\in[n]}
\left|
\langle H,N_i\rangle_F-1
\right|
\leq
C_0C_5\sqrt{
\frac{\max\{n,\log(1/\delta)\}}{D}
}.
\]
Renaming $C_0C_5$ as $C$ proves the claim.
\end{proof}

We are now ready to show Theorem~\ref{thm:onestep_dispersed}. Actually, we show slightly sharper result:

\begin{theorem}[One-step interpolation and generalization under dispersed shortcuts]
\label{thm:onestep_dispersed_sharp}
Assume the sample $\calD_n$ is generated from the DSM. There exist universal constants \(c,C>0\) such that the following holds.
Let \(\delta\in(0,1)\) and suppose that $n\geq
\frac{2}{(1-2\rho)^2}\log\frac{2}{\delta}.$
Then, with probability at least \(1-\delta\) over the training sample
\(\mathcal D_n\), the following statements hold.
\begin{enumerate}
    \item \textbf{Generalization.} For an independent test sample \((X,y)\),
    \[
    \prob\left(
    y\langle W_1^{\mathrm{SpecGD}},X\rangle_F\leq 0
    \,\middle|\,
    \mathcal D_n
    \right)
    \leq
    2\exp\left(
    -c\min\left\{
    \frac{D^2s^2}{n\lambda^2},
    \frac{Ds}{\lambda}
    \right\}
    \right).
    \]

    \item \textbf{Interpolation.} If, in addition, \(\lambda>s\) and $D\geq
    C\left(1-\frac{s}{\lambda}\right)^{-2}
    \max\left\{
    n,\log\frac{2}{\delta}
    \right\},$
    then \[\widehat{\mathcal R}_n
    \left(W_1^{\mathrm{SpecGD}}\right)=0.\]
\end{enumerate}
\end{theorem}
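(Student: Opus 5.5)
Since $W_0=0$, all loss weights are equal, so $-\nabla\mathcal L(W_0)$ is proportional to $M:=\sum_i\tilde y_iX_i=(n_+-n_-)S+\lambda N$, where $N:=\sum_i N_i=U\widehat V^\top$. By Lemma~\ref{lem:block},
\[
W_1^{\mathrm{SpecGD}}=\eta_0\bigl(\sgn(n_+-n_-)P_S+H\bigr),\qquad H:=\pol(N).
\]
Positive rescaling does not change the classifier, so I may take $\eta_0=1$. First I would put myself on the Hoeffding event $n_+>n_-$. This fails with probability at most $e^{-n(1-2\rho)^2/2}\le\delta/2$ under the stated condition on $n$. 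On this event $W_1=P_S+H$.

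The sign flips $\tilde y_i$ keep $\widehat{\boldsymbol v}_i$ uniform on the sphere and independent of everything else. Hence $N$ has exactly the law covered by Lemma~\ref{lem:one-step-spherical-polar-alignment}, and $H$ is a partial isometry of rank $n$ whose nonzero singular values all equal $1$.

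\textbf{Generalization.} Take a fresh clean pair and write $yX=S+\lambda\boldsymbol u\boldsymbol v^\top$. Block orthogonality gives
\[
y\langle W_1,X\rangle_F=s+\lambda\,\boldsymbol u^\top H\boldsymbol v .
\]
Conditionally on $\mathcal D_n$, the vectors $\boldsymbol u,\boldsymbol v$ are independent and uniform on $\mathbb S^{D-1}$, and $H$ is fixed with $\|H\|_F^2=n$ and $\|H\|_{\op}=1$. I would apply the fresh bilinear tail bound Lemma~\ref{lem:fresh-bilinear-tail}, already used in Corollary~\ref{cor:generalization-dispersed}, at threshold $t=s/\lambda$. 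This gives
\[
\prob\bigl(\boldsymbol u^\top H\boldsymbol v\le -s/\lambda\mid\mathcal D_n\bigr)\le 2\exp\bigl(-c\min\{D^2s^2/(n\lambda^2),\,Ds/\lambda\}\bigr),
\]
which is exactly the claimed bound. This part needs no assumption on $D$.

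\textbf{Interpolation.} For training point $i$, the margin is
\[
\tilde y_i\langle W_1,X_i\rangle_F=\xi_i s+\lambda\langle H,N_i\rangle_F .
\]
Clean points ($\xi_i=+1$) are harmless as soon as $\langle H,N_i\rangle_F>0$. For corrupted points ($\xi_i=-1$) I need $\langle H,N_i\rangle_F>s/\lambda$ uniformly in $i$. Lemma~\ref{lem:one-step-spherical-polar-alignment} with failure probability $\delta/2$ gives
\[
\min_i\langle H,N_i\rangle_F\ \ge\ 1-C\sqrt{\max\{n,\log(2/\delta)\}/D}.
\]
It then suffices that this lower bound exceeds $s/\lambda$, i.e. that $C\sqrt{\max\{n,\log(2/\delta)\}/D}<1-s/\lambda$. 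This is exactly the hypothesis $D\ge C'(1-s/\lambda)^{-2}\max\{n,\log\frac2\delta\}$ after renaming constants. The same hypothesis also ensures $D\ge c\max\{n,\log(1/\delta)\}$, as the lemma requires, because $1-s/\lambda<1$. A union bound over the two events finishes the proof.

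The main obstacle is the uniform alignment $\langle H,N_i\rangle_F\approx1$. Lemma~\ref{lem:one-step-spherical-polar-alignment} already supplies it, so the remaining care is in bookkeeping. One point is that the alignment lemma is applied to $N$ built from sign-flipped columns, which is legitimate because the flips preserve the uniform law. The other is that the tail lemma is applied conditionally on $\mathcal D_n$ with the correct Frobenius and operator norms of $H$.
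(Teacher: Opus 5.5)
Your proposal is correct and matches the paper's proof essentially step for step: the Hoeffding event $n_+>n_-$, the reduction $W_1^{\mathrm{SpecGD}}=\eta_0(P_S+H)$ with $H=\operatorname{Polar}(N)$, Lemma~\ref{lem:fresh-bilinear-tail} at threshold $s/\lambda$ for generalization, and Lemma~\ref{lem:one-step-spherical-polar-alignment} with failure probability $\delta/2$ for interpolation. Two small precisions. Since part 1 makes no assumption on $D$, you should write $\|H\|_F^2=\operatorname{rank}(H)\le n$, as the paper does, rather than $=n$; the weaker statement is all the tail bound needs. Also, your factor $2$ (instead of the lemma's $4$) comes from the conditional symmetry of $\boldsymbol u^\top H\boldsymbol v$, which gives $\prob(T\le -s/\lambda\mid\mathcal D_n)=\tfrac12\prob(|T|\ge s/\lambda\mid\mathcal D_n)$.
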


\begin{proof}
Let $\mathcal E_{\rm sign}:=\{n_+>n_-\}$. Since
$n_-=\sum_{i=1}^n\boldsymbol 1_{\{\xi_i=-1\}}\sim\operatorname{Bin}(n,\rho)$,
Hoeffding's inequality gives
\[
\mathbb P(\mathcal E_{\rm sign}^c)
\leq
\exp\left\{-\frac12n(1-2\rho)^2\right\}
\leq
\frac{\delta}{2},
\]
where the last inequality follows from the lower bound on $n$. We work on
$\mathcal E_{\rm sign}$.

Set $s:=\|S\|_1$ and $N:=\sum_{i=1}^nN_i$. Since
$W_0^{\rm SpecGD}=0$,
\[
-\nabla\mathcal L(W_0^{\rm SpecGD})
=
\frac{1}{n}\sum_{i=1}^n A_i
=
\frac{1}{n}\left\{(n_+-n_-)S+\lambda N\right\}.
\]
Because $n_+>n_-$ and the signal and shortcut blocks are orthogonal,
\[
\pol\left(\sum_{i=1}^nA_i\right)
=
P_S+\pol(N).
\]
Writing $H:=\pol(N)$, we therefore have
$W_1^{\rm SpecGD}=\eta_0(P_S+H)$.

We first prove the conditional generalization bound. Conditional on the
training data, $H$ is deterministic, with
$\operatorname{rank}(H)=\operatorname{rank}(N)\leq n$,
$\|H\|_{\op}=1$, and
$\|H\|_F^2=\operatorname{rank}(H)\leq n$ almost surely.
For a fresh test point
$X=yS+\lambda\boldsymbol u\boldsymbol v^\top$, signal--noise block
orthogonality gives
\[
y\langle W_1^{\rm SpecGD},X\rangle_F
=
\eta_0\left[
s+\lambda y
\langle H,\boldsymbol u\boldsymbol v^\top\rangle_F
\right].
\]
Set $T:=\langle H,\boldsymbol u\boldsymbol v^\top\rangle_F$.
Conditional on the training data, $T$ is symmetric, and multiplication by
the independent sign $y$ preserves its distribution. Hence
\[
\mathbb P\left(
y\langle W_1^{\rm SpecGD},X\rangle_F\leq0
\,\middle|\,\mathcal D_n
\right)
=
\frac12
\mathbb P\left(
|T|\geq\frac{s}{\lambda}
\,\middle|\,\mathcal D_n
\right).
\]
Applying Lemma~\ref{lem:fresh-bilinear-tail} and using
$\|H\|_F^2\leq n$ and $\|H\|_{\op}=1$ gives
\[
\mathbb P\left(
y\langle W_1^{\rm SpecGD},X\rangle_F\leq0
\,\middle|\,\mathcal D_n
\right)
\leq
2\exp\left[
-c\min\left\{
\frac{D^2s^2}{n\lambda^2},
\frac{Ds}{\lambda}
\right\}
\right].
\]

It remains to prove interpolation. Assume $\lambda>s$. Applying
Lemma~\ref{lem:one-step-spherical-polar-alignment} with failure probability
$\delta/2$, the dimensional assumption and a sufficiently large universal
constant $C$ imply that, with probability at least $1-\delta/2$,
\[
\max_{i\in[n]}
\left|
\langle H,N_i\rangle_F-1
\right|
\leq
C_0\sqrt{
\frac{\max\{n,\log(2/\delta)\}}{D}
}
\leq
\frac12\left(1-\frac{s}{\lambda}\right),
\]
where $C_0>0$ is a universal constant. Therefore
\[
\min_{i\in[n]}
\langle H,N_i\rangle_F
\geq
\frac12\left(1+\frac{s}{\lambda}\right)
>
\frac{s}{\lambda}.
\]
On the intersection of this event with $\mathcal E_{\rm sign}$, for every
$i\in[n]$,
\[
\eta_0^{-1}
\langle W_1^{\rm SpecGD},\widetilde y_iX_i\rangle_F
=
\langle P_S+H,\xi_iS+\lambda N_i\rangle_F
=
\xi_i s+\lambda\langle H,N_i\rangle_F.
\]
If $i\in I_+$, this is positive, while if $i\in I_-$, it is larger than
$-s+\lambda(s/\lambda)=0$. Thus
\[
\min_{i\in[n]}
\langle W_1^{\rm SpecGD},\widetilde y_iX_i\rangle_F>0,
\]
and hence $\widehat R_n(W_1^{\rm SpecGD})=0$.

Finally, $\mathcal E_{\rm sign}$ fails with probability at most $\delta/2$
and the alignment event with probability at most $\delta/2$. Thus their
intersection has probability at least $1-\delta$, and both conclusions hold
there.
\end{proof}

%% file: one_step_collapsed.tex
In this section, we prove Theorem~\ref{thm:onestep_collapsed}.
Actually, We show the following slightly sharper result.

\begin{theorem}[One-step interpolation and generalization under collapsed shortcuts]
\label{thm:onestep_collapsed_sharp}
Assume the sample $\calD_n$ is generated from the CSM. There exist universal
constants $c,C>0$ such that the following holds. Let $\delta\in(0,1)$ and
suppose that
\[
n\geq
\frac{2}{(1-2\rho)^2}\log\frac{2}{\delta}.
\]
Then, with probability at least $1-\delta$ over the training sample
$\mathcal D_n$, the following statements hold.
\begin{enumerate}
    \item \textbf{Generalization.} For an independent clean test sample $(X,y)$,
    \[
    \prob\left(
    y\langle W_1^{\mathrm{SpecGD}},X\rangle_F\leq0
    \,\middle|\,
    \mathcal D_n
    \right)
    \leq
    2\exp\left(
    -cD\min\left\{
    \frac{s^2}{\lambda^2},1
    \right\}
    \right).
    \]

    \item \textbf{Interpolation.} If, in addition, $\lambda>s\sqrt n$ and
    $D\geq
    C\left(1-\frac{s\sqrt n}{\lambda}\right)^{-2}
    n\log\frac{4n}{\delta},
    $
    then
    \[
    \widehat{\mathcal R}_n
    \left(W_1^{\mathrm{SpecGD}}\right)=0.
    \]
\end{enumerate}
\end{theorem}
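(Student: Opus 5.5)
We follow the template of Theorem~\ref{thm:onestep_dispersed_sharp}. Work on the event \(\mathcal E_{\rm sign}=\{n_+>n_-\}\), which by Hoeffding and the lower bound on \(n\) fails with probability at most \(\delta/2\). Under the CSM, absorbing \(\tilde y_i\) into \(\boldsymbol u_i\), we have \(\sum_i A_i=(n_+-n_-)S+\lambda\,\boldsymbol w\boldsymbol v_0^\top\) with \(\boldsymbol w:=\sum_i\tilde y_i\boldsymbol u_i=U\boldsymbol 1\). Lemma~\ref{lem:block} then gives
\[
W_1^{\mathrm{SpecGD}}=\eta_0\bigl(P_S+\boldsymbol h\boldsymbol v_0^\top\bigr),\qquad \boldsymbol h:=\boldsymbol w/\|\boldsymbol w\|_2 .
\]
This holds as long as \(\boldsymbol w\neq0\), which is true almost surely.

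For generalization, write a fresh point as \(yX=S+\lambda\boldsymbol u\boldsymbol v_0^\top\), with \(\boldsymbol u\) uniform on \(\mathbb S(\calU)\) and independent of \(\mathcal D_n\). Block orthogonality gives \(y\langle W_1,X\rangle_F=\eta_0(s+\lambda\langle\boldsymbol h,\boldsymbol u\rangle)\). Hence the conditional error is \(\prob(\langle\boldsymbol h,\boldsymbol u\rangle\le -s/\lambda\mid\mathcal D_n)\). If \(s\ge\lambda\) this probability is zero, since \(|\langle\boldsymbol h,\boldsymbol u\rangle|\le1\) and the event \(\{\langle\boldsymbol h,\boldsymbol u\rangle=-1\}\) has probability zero. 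Otherwise the spherical linear tail (Lemma~\ref{lem:spherical-linear}) bounds it by \(2\exp(-cDs^2/\lambda^2)\). Both cases are covered by \(2\exp(-cD\min\{s^2/\lambda^2,1\})\).

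For interpolation, the margin of sample \(i\) is
\[
\eta_0^{-1}\langle W_1,\tilde y_iX_i\rangle_F=\xi_is+\lambda\langle\boldsymbol h,\boldsymbol u_i\rangle .
\]
Expanding \(\langle\boldsymbol w,\boldsymbol u_i\rangle=1+\sum_{j\ne i}\langle\boldsymbol u_j,\boldsymbol u_i\rangle\), the goal is to show two things with probability at least \(1-\delta/2\):
\[
\Bigl|\sum_{j\neq i}\langle\boldsymbol u_j,\boldsymbol u_i\rangle\Bigr|\le C_1\sqrt{\tfrac{n\log(4n/\delta)}{D}}\ \text{ for all } i,
\qquad
\bigl|\|\boldsymbol w\|_2^2-n\bigr|\le C_1\,n\sqrt{\tfrac{n+\log(4/\delta)}{D}} .
\]
The first bound comes from conditioning on \(\boldsymbol u_i\): the sum \(\sum_{j\ne i}\boldsymbol u_j\) is a sum of independent isotropic sub-Gaussian vectors with scale \(1/\sqrt D\), so its inner product with \(\boldsymbol u_i\) is sub-Gaussian with variance proxy \(n/D\). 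A union bound over \(i\) finishes it. The second bound follows from \(\|\boldsymbol w\|_2^2=\boldsymbol 1^\top G\boldsymbol 1\) and the Gram control \(\|G-I\|_{\op}\lesssim\sqrt{(n+\log(1/\delta))/D}\) of Lemma~\ref{lem:gram-concentration} (or Lemma~\ref{lem:gram-concentration-2}).

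Set \(\varepsilon:=C\sqrt{n\log(4n/\delta)/D}\). On these events,
\[
\langle\boldsymbol h,\boldsymbol u_i\rangle\ge\frac{1-\varepsilon}{\sqrt n\,(1+\varepsilon)}\ge\frac{1-3\varepsilon}{\sqrt n}.
\]
For \(i\in I_+\) the margin is then clearly positive. For \(i\in I_-\) we need \(\lambda(1-3\varepsilon)/\sqrt n>s\), that is \(3\varepsilon<1-s\sqrt n/\lambda\). This is exactly what the hypothesis \(D\ge C(1-s\sqrt n/\lambda)^{-2}n\log(4n/\delta)\) guarantees once the universal constant \(C\) is large. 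A union bound over the sign event and the concentration events completes the proof.

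The main obstacle is bookkeeping in this last step. The deviation must be relative to \(1/\sqrt n\), because the inner product \(\langle\boldsymbol h,\boldsymbol u_i\rangle\) is only of order \(1/\sqrt n\). Both the cross-term fluctuation (of order \(\sqrt{n/D}\) against a main term equal to \(1\)) and the fluctuation of \(\|\boldsymbol w\|_2\) (of relative order \(\sqrt{n/D}\)) must therefore be kept at relative accuracy \(\varepsilon\). That is where the \(n\log(4n/\delta)/D\) scaling comes from.
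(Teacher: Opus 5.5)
Your proposal is correct and follows the paper's proof essentially step by step. Both use the sign event, the block-orthogonal form $W_1^{\mathrm{SpecGD}}=\eta_0(P_S+\boldsymbol h\boldsymbol v_0^\top)$, the spherical linear tail for generalization, and the leave-one-out cross terms $r_i=\sum_{j\neq i}\langle\tilde y_i\boldsymbol u_i,\tilde y_j\boldsymbol u_j\rangle$ for interpolation. The only minor difference is that the paper controls $\|\boldsymbol w\|_2^2$ through the identity $\|\boldsymbol w\|_2^2=n+\sum_j r_j\le n(1+\max_j|r_j|)$, so it needs no separate Gram-concentration event.
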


\begin{proof}
Let $\mathcal E_{\rm sign}:=\{n_+>n_-\}$, where
$n_\pm:=|\{i:\xi_i=\pm1\}|$. By Hoeffding's inequality and the assumption
on $n$,
\[
\prob(\mathcal E_{\rm sign}^c)
\leq
\exp\left\{-\frac12n(1-2\rho)^2\right\}
\leq
\frac{\delta}{2}.
\]
We work on $\mathcal E_{\rm sign}$. 
By signal--shortcut block orthogonality,
\[
W_1^{\rm SpecGD}
=
\eta_0\left(
P_S+\boldsymbol h\boldsymbol v_0^\top
\right),
\qquad
\boldsymbol h
:=
\frac{\sum_{i=1}^n\tilde y_i\boldsymbol u_i}
{\left\|\sum_{i=1}^n\tilde y_i\boldsymbol u_i\right\|_2}.
\]

We first prove the conditional generalization bound. Let $(X,y)$ be an
independent clean test sample, so that
$yX=S+\lambda y\boldsymbol u\boldsymbol v_0^\top$. Then
$\eta_0^{-1}y\langle W_1^{\rm SpecGD},X\rangle_F
=s+\lambda\langle\boldsymbol h,y\boldsymbol u\rangle$. By spherical symmetry, $y\boldsymbol{u}$ is uniformly distributed on $\mathbb{S}^{D-1}$. 
Conditional on $\mathcal D_n$, $\boldsymbol h$ is a fixed unit vector, so
Lemma~\ref{lem:spherical-linear} gives
\[
\prob\left(
y\langle W_1^{\rm SpecGD},X\rangle_F\leq0
\,\middle|\,
\mathcal D_n
\right)
\leq
2\exp\left(
-cD\min\left\{\frac{s^2}{\lambda^2},1\right\}
\right).
\]

It remains to prove interpolation. Assume $\lambda>s\sqrt n$ and set
$\kappa:=1-s\sqrt n/\lambda\in(0,1)$. We show that, with probability at
least $1-\delta/2$,
$\min_{i\in[n]}\langle\boldsymbol h,\tilde y_i\boldsymbol u_i\rangle>s/\lambda$.
For each $i\in[n]$, set
$r_i:=\sum_{j\neq i}\langle\tilde y_i\boldsymbol u_i,\tilde y_j\boldsymbol u_j\rangle$.
Conditionally on $\tilde y_i\boldsymbol u_i$, the summands are independent, centered,
and sub-Gaussian, so, for a universal constant $c>0$ and $t\in(0,1)$,
$\prob(|r_i|>t\mid\tilde y_i\boldsymbol u_i)\leq
2\exp(-cDt^2/n)$. A union bound gives
\[
\prob\left(\max_{i\in[n]}|r_i|>t\right)
\leq
2n\exp\left(-c\frac{Dt^2}{n}\right).
\]
On the event $\max_i|r_i|\leq t$, since
$\|\sum_j\tilde y_j\boldsymbol u_j\|_2^2=n+\sum_jr_j\leq n(1+t)$, for $t\leq1/2$,
\[
\langle\boldsymbol h,\tilde y_i\boldsymbol u_i\rangle
=
\frac{1+r_i}{\|\sum_j\tilde y_j\boldsymbol u_j\|_2}
\geq
\frac{1-t}{\sqrt{n(1+t)}}
\geq
\frac{1-2t}{\sqrt n}.
\]
Taking $t=\kappa/4$, the dimensional assumption implies, for $C$
sufficiently large, that with probability at least $1-\delta/2$,
\[
\min_{i\in[n]}
\langle\boldsymbol h,\tilde y_i\boldsymbol u_i\rangle
\geq
\frac{1-\kappa/2}{\sqrt n}
>
\frac{1-\kappa}{\sqrt n}
=
\frac{s}{\lambda}.
\]

On the intersection of this event with $\mathcal E_{\rm sign}$, for every
$i\in[n]$,
\[
\eta_0^{-1}
\left\langle
W_1^{\rm SpecGD},
\widetilde y_iX_i
\right\rangle_F
=
\xi_i s+\lambda
\langle\boldsymbol h,\tilde y_i\boldsymbol u_i\rangle.
\]
If $i\in I_+$, this is positive, while if $i\in I_-$, it is larger than
$-s+\lambda(s/\lambda)=0$. Hence
$\widehat{\mathcal R}_n(W_1^{\rm SpecGD})=0$.

Finally, $\mathcal E_{\rm sign}$ and the interpolation event fail with
probability at most $\delta/2$ each, so their intersection has probability
at least $1-\delta$, and both conclusions hold there.
\end{proof}

%% file: auxiliary.tex
\subsection{Linear separability}

\begin{lemma}[Linear separability of the training sample]
\label{lem:linear-separability}
Suppose that $D\ge n$. Then, under the $K$-dimensional shortcut model
for any $1\le K\le D$, the training sample
$\{(X_i,\widetilde y_i)\}_{i=1}^n$ is linearly separable with respect
to the observed labels almost surely. More precisely, almost surely
there exists $W_{\rm sep}\in\mathbb R^{d\times d}$ such that
\[
    \widetilde y_i\langle W_{\rm sep},X_i\rangle_F=1,
    \qquad i\in[n].
\]
Consequently, both the spectral and Frobenius max-margin values are
strictly positive.
\end{lemma}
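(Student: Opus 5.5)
The plan is to build an explicit separator supported on the shortcut block, using that the shortcut matrices $N_i=\tilde y_i\boldsymbol u_i\boldsymbol v_i^\top$ are linearly independent almost surely. Since $\langle W,\tilde y_iX_i\rangle_F=\xi_i\langle W,S\rangle_F+\lambda\langle W,N_i\rangle_F$, we take $W$ in the block $\mathcal N$, which is orthogonal to $S$ by Lemma~\ref{lem:block}. The separation conditions then reduce to solving the linear system $\lambda\langle W,N_i\rangle_F=1$ for $i\in[n]$. Such a $W$ exists as soon as the $n\times n$ Gram matrix $G=(\langle N_i,N_j\rangle_F)_{ij}=(U^\top U)\circ(\widehat V^\top\widehat V)$ is invertible. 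In that case we set
\[
W_{\rm sep}=\lambda^{-1}\sum_j c_jN_j,\qquad c=G^{-1}\boldsymbol 1 .
\]
The same argument works with the true Gram matrix $K=\lambda^2G+s_F^2\boldsymbol\xi\boldsymbol\xi^\top$ from \eqref{eq:K-gd}, because $K\succeq\lambda^2G$, so invertibility of $K$ also follows from $G\succ0$.

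The key step is showing $G\succ0$ almost surely, and it suffices to show that $U^\top U\succ0$ almost surely. Because the $\boldsymbol u_i$ are i.i.d.\ uniform on $\mathbb S(\calU)$ with $\dim\calU=D\ge n$, they are linearly independent almost surely. Inductively, the span of $\boldsymbol u_1,\dots,\boldsymbol u_{i-1}$ is a proper subspace of dimension less than $D$, and it has measure zero under the uniform distribution of $\boldsymbol u_i$ on the sphere.

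Next, the diagonal of $\widehat V^\top\widehat V$ equals $1$, and this matrix is positive semidefinite. By the Schur product theorem, $A\circ B\succeq\lambda_{\min}(A)\,(I\circ B)$ for $A\succ0$ and $B\succeq0$, since $A-\lambda_{\min}I\succeq0$. Applying this with $A=U^\top U$ and $B=\widehat V^\top\widehat V$ gives $G\succeq\lambda_{\min}(U^\top U)I\succ0$. This holds for any $K$, including $K=1$, where the $\boldsymbol v_i$ all coincide and $\widehat V^\top\widehat V$ has rank one. Under CSM, $G$ reduces to $U^\top U$ up to the signs $\tilde y_i$.

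Finally, positivity of both max-margin values follows from $W_{\rm sep}\ne0$ having margin $1$ on every sample. Normalizing gives spectral margin at least $1/\|W_{\rm sep}\|_{\op}>0$ and Frobenius margin at least $1/\|W_{\rm sep}\|_F>0$. The only nontrivial point is the almost sure independence of the $\boldsymbol u_i$ combined with the Schur product bound, and neither step is a real obstacle.
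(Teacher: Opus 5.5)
Your proof is correct, but it takes a different route from the paper. The paper never forms the Gram matrix of the $N_i$. It uses only the almost-sure linear independence of $\boldsymbol u_1,\dots,\boldsymbol u_n$ to take the biorthogonal family $\boldsymbol u_i^\sharp\in\spn\{\boldsymbol u_j\}$ with $\langle\boldsymbol u_i^\sharp,\boldsymbol u_j\rangle=\delta_{ij}$, and sets $W_{\rm sep}=\lambda^{-1}\sum_i\tilde y_i\boldsymbol u_i^\sharp\boldsymbol v_i^\top$. Biorthogonality in the left factor alone kills every cross term, so the right factors enter only through $\|\boldsymbol v_j\|_2=1$. No Hadamard-product or Schur-product argument is needed.

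You instead look for $W_{\rm sep}$ in $\spn\{N_j\}$ and solve $Gc=\boldsymbol 1$ with $G=(U^\top U)\circ(\widehat V^\top\widehat V)$. You certify invertibility through $\lambda_{\min}(A\circ B)\ge\lambda_{\min}(A)\min_iB_{ii}$; the probabilistic input is the same as the paper's. Your route costs one extra standard matrix inequality, but it buys something. It yields the minimum-Frobenius-norm $W$ satisfying $\lambda\langle W,N_i\rangle_F=1$ for all $i$, with an explicit bound $\|W_{\rm sep}\|_F^2=\lambda^{-2}\boldsymbol 1^\top G^{-1}\boldsymbol 1\le n/(\lambda^2\lambda_{\min}(U^\top U))$. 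Your remark $K\succeq\lambda^2G$ also ties it directly to the unconstrained GD dual in Lemma~\ref{lem:gd-dual-solution}.

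Two cosmetic points. First, $\langle W,S\rangle_F=0$ for $W\in\mathcal N$ comes from the block-geometry setup rather than from Lemma~\ref{lem:block} itself. Second, under CSM the $\boldsymbol v_i$ coincide only up to sign ($\boldsymbol v_i=\pm\boldsymbol v_0$). This does not affect your argument, since you only use $\operatorname{diag}(\widehat V^\top\widehat V)=I$.
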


\begin{proof}
Recall that $X_i=y_iS+\lambda\boldsymbol u_i\boldsymbol v_i^\top$,
where $\boldsymbol u_i$ are drawn independently and uniformly from
the unit sphere of the $D$-dimensional shortcut subspace $\mathcal U$.
Since $D\ge n$, the vectors
$\boldsymbol u_1,\ldots,\boldsymbol u_n$ are linearly independent
almost surely.

On this event, let
$\boldsymbol u_1^\sharp,\ldots,\boldsymbol u_n^\sharp
\in\operatorname{span}\{\boldsymbol u_1,\ldots,\boldsymbol u_n\}$
denote the dual family satisfying
$\langle\boldsymbol u_i^\sharp,\boldsymbol u_j\rangle=\delta_{ij}$, and define
\[
    W_{\rm sep}
    :=
    \frac1\lambda
    \sum_{i=1}^n
    \widetilde y_i
    \boldsymbol u_i^\sharp\boldsymbol v_i^\top.
\]
Since $\boldsymbol u_i^\sharp$ and $\boldsymbol v_i$ belong to the left and
right shortcut subspaces, respectively, which are orthogonal to the
corresponding signal subspaces of $S$, we have
$\langle W_{\rm sep},S\rangle_F=0$. Moreover, for every $j\in[n]$,
\[
\langle W_{\rm sep},Z_j\rangle_F
=
\sum_{i=1}^n
\widetilde y_i
\langle\boldsymbol u_i^\sharp,\boldsymbol u_j\rangle
\langle\boldsymbol v_i,\boldsymbol v_j\rangle
=
\widetilde y_j\|\boldsymbol v_j\|_2^2
=
\widetilde y_j.
\]
Hence
$\widetilde y_j\langle W_{\rm sep},X_j\rangle_F
=\widetilde y_j\langle W_{\rm sep},y_jS+Z_j\rangle_F
=\widetilde y_j^2=1$, proving strict linear separability.

In particular, if
$\gamma_{\op}:=
\max_{\|W\|_{\op}\le1}\min_{i\in[n]}
\widetilde y_i\langle W,X_i\rangle_F$,
then normalization of $W_{\rm sep}$ gives
$\gamma_{\op}\ge\|W_{\rm sep}\|_{\op}^{-1}>0$.
The same argument with Frobenius normalization gives a strictly positive
Frobenius max-margin value.
\end{proof}

\subsection{Concentration inequalities}

\begin{lemma}[Spherical linear tails and small balls]
\label{lem:spherical-linear}
Let $\boldsymbol u\sim\Unif(\mathbb S^{D-1})$ and let
$\boldsymbol w\in\mathbb R^D\setminus\{0\}$ be fixed. There exist
universal constants $c,C,c_0,C_0>0$ such that
\[
\mathbb P\!\left(
|\langle\boldsymbol u,\boldsymbol w\rangle|
\geq t\|\boldsymbol w\|_2
\right)
\leq
2e^{-cDt^2},
\qquad t>0,
\]
and
\[
\mathbb P\!\left(
|\langle\boldsymbol u,\boldsymbol w\rangle|
\leq t\|\boldsymbol w\|_2
\right)
\leq
C\sqrt D\,t,
\qquad t>0.
\]
Moreover, for $0\leq t\leq1/2$,
\[
\mathbb P\!\left(
\langle\boldsymbol u,\boldsymbol w\rangle
\leq
-t\|\boldsymbol w\|_2
\right)
\geq
c_0e^{-C_0Dt^2}.
\]
\end{lemma}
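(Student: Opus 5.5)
By rotational invariance I will take $\boldsymbol w=\|\boldsymbol w\|_2\boldsymbol e_1$, so that everything reduces to statements about the single coordinate $u_1$ of $\boldsymbol u\sim\Unif(\mathbb S^{D-1})$. I will use two ingredients throughout. The first is the Beta representation from the proof of Lemma~\ref{lem:spherical-fourth-moments}, namely $u_1^2\sim\mathrm{Beta}(1/2,(D-1)/2)$. The second is the explicit marginal density
\[
f_D(x)=c_D(1-x^2)^{(D-3)/2},\qquad x\in[-1,1],\qquad c_D=\frac{\Gamma(D/2)}{\sqrt\pi\,\Gamma((D-1)/2)}\asymp\sqrt D .
\]

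\textbf{Upper tail.} I will invoke the sub-Gaussianity of one-dimensional spherical marginals, \cite[Theorem~3.4.5]{vershynin2026high}, which was already used in Lemma~\ref{lem:zeta}. It gives $\|u_1\|_{\psi_2}\leq C/\sqrt D$, hence $\prob(|u_1|\geq t)\leq 2e^{-cDt^2}$. For $t>1$ the probability is zero, so nothing more is needed there. If I want a self-contained argument instead, I will bound $\int_t^1 f_D\leq c_D\int_t^\infty e^{-(D-3)x^2/2}\,dx$ using $1-x^2\leq e^{-x^2}$, and treat small $D$ by enlarging constants.

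\textbf{Small ball.} Since $(1-x^2)^{(D-3)/2}\leq 1$ for $D\geq3$, I get $\prob(|u_1|\leq t)\leq 2tc_D\leq C\sqrt D\,t$ via Gautschi's inequality $\Gamma(D/2)/\Gamma((D-1)/2)\leq\sqrt{D/2}$. The cases $D=1,2$ only need constant adjustments; for $D=2$ the arcsine density is integrable, with $\prob(|u_1|\leq t)\leq Ct$.

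\textbf{Lower tail.} This is the main obstacle, since the constants must be uniform in $D$ and in $t\in[0,1/2]$. By symmetry it suffices to lower bound $\prob(u_1\geq t)$. I will split into two cases.

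\emph{Case $t\leq 1/\sqrt D$.} Here I want $\prob(u_1\geq 1/\sqrt D)\geq c_0$ uniformly in $D$. I will get this from Paley--Zygmund applied to $u_1^2$, using $\E u_1^2=1/D$ and $\E u_1^4=3/(D(D+2))$. This gives $\prob(u_1^2\geq 1/(2D))\geq \tfrac14\cdot\tfrac{D+2}{3D}\geq 1/12$, and symmetry halves it. That yields the bound at threshold $1/\sqrt{2D}$; to reach $1/\sqrt D$ I will either lower the threshold in Paley--Zygmund or absorb the factor into $C_0$.

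\emph{Case $1/\sqrt D\leq t\leq 1/2$.} I will integrate the density over $[t,t+1/(\sqrt D)]\subset[0,3/4]$, at least when $t+1/\sqrt D\leq 3/4$; otherwise $D$ is bounded and the claim is trivial. On this interval I will use $\log(1-x^2)\geq -x^2-x^4/(1-x^2)\geq -Cx^2$ for $x\leq 3/4$. This gives
\[
\prob(u_1\geq t)\geq c_D D^{-1/2}\,e^{-C D(t+1/\sqrt D)^2}\geq c\,e^{-C'Dt^2},
\]
where the last step uses $(t+1/\sqrt D)^2\leq 4t^2$ in this regime and $c_D D^{-1/2}\asymp 1$.

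I expect the only delicate point to be these uniform constant checks: the Gamma-ratio bounds and the bookkeeping for small $D$.
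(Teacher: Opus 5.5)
Your argument is correct and is essentially the paper's proof. Both use rotational invariance, the cited spherical concentration for the upper tail, $\|f_D\|_\infty\lesssim\sqrt D$ for the small-ball bound, and, for the lower tail, integration of the density lower bound $f_D(z)\geq c\sqrt D\,e^{-CDz^2}$, which comes from $\log(1-z^2)\geq -Cz^2$ on $[0,3/4]$.

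The one difference is in the lower tail. The paper integrates over $[t,t+(4\sqrt D)^{-1}]\subset[0,3/4]$ for every $t\le 1/2$ and uses $D\bigl(t+(4\sqrt D)^{-1}\bigr)^2\le 2Dt^2+1/8$. This removes the need for your Paley--Zygmund case and for the small-$D$ bookkeeping on the window. If you keep the split, put the case boundary at $1/\sqrt{2D}$. Paley--Zygmund cannot be pushed up to threshold $1/\sqrt D$, since its bound degenerates at $\theta=1$.
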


\begin{proof}
By rotational invariance,
$Z:=\langle\boldsymbol u,\boldsymbol w\rangle/\|\boldsymbol w\|_2$
has the same distribution as the first coordinate of a uniform vector on
$\mathbb S^{D-1}$. The upper-tail bound follows directly from the standard
spherical concentration inequality
\citep[Theorem~3.4.5]{vershynin2026high}.

For the remaining claims, $Z$ has density
\[
f_D(z)
=
b_D(1-z^2)^{(D-3)/2},
\qquad
b_D
=
\frac{\Gamma(D/2)}
{\sqrt{\pi}\Gamma((D-1)/2)}
\asymp\sqrt D,
\qquad |z|<1.
\]
Hence
$\mathbb P(|Z|\le t)\le2t\|f_D\|_\infty\le C\sqrt D\,t$.
For $0\le t\le1/2$, using
$\log(1-z^2)\ge-Cz^2$ for $0\le z\le3/4$ gives
$f_D(z)\ge c\sqrt D\,e^{-CDz^2}$. Therefore,
\[
\mathbb P(Z\ge t)
\ge
\int_t^{t+(4\sqrt D)^{-1}} f_D(z)\,dz
\ge
c_0e^{-C_0Dt^2}.
\]
By symmetry, the same bound holds for $\mathbb P(Z\le-t)$.
\end{proof}

\begin{lemma}[Spherical projection norm]
\label{lem:spherical-projection-norm}
Let $\boldsymbol u\sim\Unif(\mathbb S^{D-1})$, and let $P$ be an
orthogonal projection of rank $r\leq D$. Then
\[
    \prob\left(
        \|P\boldsymbol u\|_2
        \leq
        \frac12\sqrt{\frac rD}
    \right)
    \leq
    2e^{-r/16}.
\]
\end{lemma}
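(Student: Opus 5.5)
The plan is to exploit rotational invariance and reduce everything to the chi-square representation of a uniform vector on the sphere. Choose an orthonormal basis adapted to the range of \(P\), so that \(\|P\boldsymbol u\|_2^2 = \sum_{k\le r} u_k^2\). Next write \(\boldsymbol u=\boldsymbol g/\|\boldsymbol g\|_2\) with \(\boldsymbol g\sim N(0,I_D)\). This gives
\[
\|P\boldsymbol u\|_2^2=\frac{A}{A+B},\qquad A:=\sum_{k\le r}g_k^2\sim\chi^2_r,\quad B:=\sum_{k>r}g_k^2\sim\chi^2_{D-r},
\]
with \(A\) and \(B\) independent.

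The event \(\|P\boldsymbol u\|_2\le \frac12\sqrt{r/D}\) is \(A/(A+B)\le r/(4D)\). It is contained in the union of two events:
\[
\{A\le r/2\}\quad\text{and}\quad\{A+B\ge 2D\}.
\]
Indeed, on the complement of both, \(A/(A+B) > (r/2)/(2D) = r/(4D)\).

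For the lower tail of \(A\), I will use the standard Laurent--Massart (or Chernoff) bound
\[
\prob\bigl(\chi^2_r\le r-2\sqrt{rx}\bigr)\le e^{-x}.
\]
Taking \(x=r/16\) gives \(r-2\sqrt{rx}=r/2\), so \(\prob(A\le r/2)\le e^{-r/16}\).

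For the upper tail, \(A+B\sim\chi^2_D\), and Laurent--Massart gives
\[
\prob\bigl(\chi^2_D\ge D+2\sqrt{Dx}+2x\bigr)\le e^{-x}.
\]
With \(x=D/16\) the threshold is \(D(1+1/2+1/8)<2D\), so \(\prob(A+B\ge 2D)\le e^{-D/16}\le e^{-r/16}\), using \(r\le D\).

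A union bound then yields the claimed \(2e^{-r/16}\). The only point needing care is matching the constants to the exponent \(1/16\) with these particular thresholds, and the computations above show this works. If one prefers a self-contained argument, a direct Chernoff computation using \(\E e^{-\theta g^2}=(1+2\theta)^{-1/2}\) with \(\theta=1/2\) gives \(\prob(A\le r/2)\le (e/3)^{r/2}\cdot e^{0}\), roughly \(e^{-0.05r}\le e^{-r/16}\); the analogous bound handles the upper tail.
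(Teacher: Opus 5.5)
Your argument is correct and is essentially the paper's proof: write $\boldsymbol u=\boldsymbol g/\|\boldsymbol g\|_2$, apply the Laurent--Massart bounds to $\{\chi^2_r\le r/2\}$ and $\{\chi^2_D\ge 2D\}$, and take a union bound. Your explicit choices $x=r/16$ and $x=D/16$ supply the constants that the paper leaves implicit.

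One small slip in your optional Chernoff aside: the bound $(e/3)^{r/2}$ corresponds to $\theta=1$ and is about $e^{-0.049r}$, which is \emph{not} $\le e^{-r/16}$. The choice $\theta=1/2$ actually gives $(\sqrt e/2)^{r/2}\approx e^{-0.097r}$, which does suffice.
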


\begin{proof}
Write $\boldsymbol u=\boldsymbol g/\|\boldsymbol g\|_2$ with
$\boldsymbol g\sim N(0,I_D)$. Then
$\|P\boldsymbol g\|_2^2\sim\chi_r^2$ and
$\|\boldsymbol g\|_2^2\sim\chi_D^2$. By the Laurent--Massart inequalities,
\[
    \prob\left(
        \|P\boldsymbol g\|_2^2<\frac r2
        \ \text{or}\
        \|\boldsymbol g\|_2^2>2D
    \right)
    \leq
    2e^{-r/16}.
\]
On the complementary event,
$\|P\boldsymbol u\|_2
=\|P\boldsymbol g\|_2/\|\boldsymbol g\|_2
\geq\frac12\sqrt{r/D}$.
\end{proof}

\begin{lemma}[Spherical Gram concentration]
\label{lem:gram-concentration}
Let $U=[\boldsymbol u_1,\ldots,\boldsymbol u_n]$,
where $\boldsymbol u_1,\ldots,\boldsymbol u_n$ are independent and uniformly
distributed on $\mathbb S^{D-1}$. Then there exist
universal constants $c_{\scriptscriptstyle G},C_{\scriptscriptstyle G}>0$
such that, if
$D\geq c_{\scriptscriptstyle G}\{n+\log(n/\delta)\}$, then, with probability
at least $1-\delta$,
\[
    \|U^\top U-I_n\|_{\op}
    \leq
    C_{\scriptscriptstyle G}
    \sqrt{\frac{n+\log(n/\delta)}{D}},
\]
and
\[
    \max_{i\neq j}
    |\boldsymbol u_i^\top\boldsymbol u_j|
    \leq
    C_{\scriptscriptstyle G}
    \sqrt{\frac{\log(2n^2/\delta)}{D}}.
\]
\end{lemma}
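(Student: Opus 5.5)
The plan is to reduce both bounds to standard facts about a random matrix with i.i.d. uniform spherical columns. For the operator-norm bound, I will write $\boldsymbol u_i=\boldsymbol g_i/\|\boldsymbol g_i\|_2$ with $\boldsymbol g_i\sim N(0,I_D)$ independent, and set $\Gamma=[\boldsymbol g_1,\ldots,\boldsymbol g_n]/\sqrt D$. Then $U=\Gamma\Lambda^{-1}$ with $\Lambda:=\diag(\|\boldsymbol g_i\|_2/\sqrt D)$. This gives
\[
U^\top U-I_n=\Lambda^{-1}(\Gamma^\top\Gamma-\Lambda^2)\Lambda^{-1},
\]
where $\Gamma^\top\Gamma-\Lambda^2$ is the off-diagonal part of $\Gamma^\top\Gamma-I_n$. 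Hence
\[
\|U^\top U-I_n\|_{\op}\le\|\Lambda^{-1}\|_{\op}^2\bigl(\|\Gamma^\top\Gamma-I_n\|_{\op}+\max_i|\Lambda_{ii}^2-1|\bigr).
\]

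Three standard bounds then finish this part. The Gaussian-matrix singular value bound (Davidson--Szarek/Vershynin) gives $\|\Gamma^\top\Gamma-I\|_{\op}\le C(\sqrt{(n+t)/D}+(n+t)/D)$ with probability $1-2e^{-t}$. Laurent--Massart plus a union bound over $i$ gives $\max_i|\Lambda_{ii}^2-1|\le C\sqrt{\log(n/\delta)/D}$. Taking $t=\log(2/\delta)$ and $D\ge c_G(n+\log(n/\delta))$ makes the quadratic term dominated by the square-root term and makes $\|\Lambda^{-1}\|_{\op}\le 2$.

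An alternative I would keep in reserve is a direct net argument on the sphere. For fixed $\boldsymbol x\in\mathbb S^{n-1}$, the quantity $\|U\boldsymbol x\|^2-1=\sum_{i\ne j}x_ix_j\boldsymbol u_i^\top\boldsymbol u_j$ is a decoupled chaos. This route is messier, so the Gaussian representation is my first choice.

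For the coherence bound, condition on $\boldsymbol u_i$. By rotational invariance, $\boldsymbol u_i^\top\boldsymbol u_j$ is distributed as a coordinate of a uniform vector, so Lemma~\ref{lem:spherical-linear} gives $\prob(|\boldsymbol u_i^\top\boldsymbol u_j|\ge t)\le 2e^{-cDt^2}$. A union bound over the fewer than $n^2$ pairs with $t=C\sqrt{\log(2n^2/\delta)/D}$ gives failure probability at most $\delta/2$. I would split $\delta$ between the two events and absorb the resulting constants into $C_G$.

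The main obstacle is bookkeeping, not substance. The logarithmic factors in the two events need to combine into the stated form $\sqrt{(n+\log(n/\delta))/D}$. The lower bound $D\ge c_G(n+\log(n/\delta))$ is exactly what is needed to kill the linear term $(n+t)/D$ and to keep the diagonal normalization $\Lambda$ within a constant of $I$.
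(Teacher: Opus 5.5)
Your proposal is correct and follows essentially the same route as the paper. Both arguments use the Gaussian representation $\boldsymbol u_i=\boldsymbol g_i/\|\boldsymbol g_i\|_2$, the sub-Gaussian singular-value bound for $\frac1D G^\top G$, Laurent--Massart with a union bound for the normalizations, and spherical concentration with a union bound over pairs for the coherence term. Your identity $U^\top U-I_n=\Lambda^{-1}(\Gamma^\top\Gamma-\Lambda^2)\Lambda^{-1}$ is a rearrangement of the paper's $\Lambda(X-I_n)\Lambda+(\Lambda^2-I_n)$, where the paper's $\Lambda$ is your $\Lambda^{-1}$. Incidentally, you could drop the Laurent--Massart step, since $\max_i|\Lambda_{ii}^2-1|\le\|\Gamma^\top\Gamma-I_n\|_{\op}$ already controls both the diagonal correction and $\|\Lambda^{-1}\|_{\op}$.
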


\begin{proof}
Write
$\boldsymbol u_i=\boldsymbol g_i/\|\boldsymbol g_i\|_2$,
where $\boldsymbol g_i\sim N(0,I_D)$ independently, and let
$G=[\boldsymbol g_1,\ldots,\boldsymbol g_n]\in\R^{D\times n}$.
The rows of $G$ are independent, mean-zero, isotropic, sub-Gaussian random
vectors in $\R^n$. Hence, by the standard singular-value bound
\cite[Theorem~4.6.1]{vershynin2026high}, for every $t\geq0$,
\[
\prob\left(
\left\|
\frac1DG^\top G-I
\right\|_{\op}
>
C\left[
\sqrt{\frac{n+t^2}{D}}
+
\frac{n+t^2}{D}
\right]
\right)
\leq
2e^{-t^2}.
\]
Taking $t^2=\log(8/\delta)$ gives, with probability at least $1-\delta/4$,
\[
\left\|
\frac1DG^\top G-I
\right\|_{\op}
\leq
C\left[
\sqrt{\frac{n+\log(8/\delta)}{D}}
+
\frac{n+\log(8/\delta)}{D}
\right].
\]
By increasing $c_{\scriptscriptstyle G}$ if necessary, on the same event
\[
\left\|
\frac1DG^\top G-I
\right\|_{\op}
\leq
2C\sqrt{\frac{n+\log(8n/\delta)}{D}}.
\]

We next control the normalization factors. By the Laurent--Massart
chi-square concentration inequality
\cite[Lemma~1]{LaurentMassart2000}, if $X\sim\chi_D^2$, then for every
$s>0$,
$\prob(X-D\geq2\sqrt{Ds}+2s)\leq e^{-s}$ and
$\prob(D-X\geq2\sqrt{Ds})\leq e^{-s}$. Consequently,
\[
\prob\left(
\left|
\frac XD-1
\right|
>
2\left[
\sqrt{\frac{s}{D}}+\frac{s}{D}
\right]
\right)
\leq
2e^{-s}.
\]
Applying this with $X=\|\boldsymbol g_i\|_2^2$ and
$s=\log(8n/\delta)$, and taking a union bound over $i\in[n]$, gives, with
probability at least $1-\delta/4$,
\[
\max_{i\leq n}
\left|
\frac{\|\boldsymbol g_i\|_2^2}{D}-1
\right|
\leq
2\left[
\sqrt{\frac{\log(8n/\delta)}{D}}
+
\frac{\log(8n/\delta)}{D}
\right].
\]
Increasing $c_{\scriptscriptstyle G}$ if necessary, we may further assume
\[
\max_{i\leq n}
\left|
\frac{\|\boldsymbol g_i\|_2^2}{D}-1
\right|
\leq
4\sqrt{\frac{\log(8n/\delta)}{D}}
\leq
\frac12.
\]

Define
\[
\Lambda
:=
\diag\left(
\frac{\sqrt D}{\|\boldsymbol g_1\|_2},
\ldots,
\frac{\sqrt D}{\|\boldsymbol g_n\|_2}
\right),
\qquad
X:=\frac1DG^\top G.
\]
Then $U=D^{-1/2}G\Lambda$, so
$U^\top U=\Lambda X\Lambda$ and
$U^\top U-I_n=\Lambda(X-I_n)\Lambda+(\Lambda^2-I_n)$.
The preceding normalization bound gives
$\|\Lambda\|_{\op}\leq\sqrt2$. Moreover,
\[
\left|
\frac{D}{\|\boldsymbol g_i\|_2^2}-1
\right|
=
\frac{
\left|\frac{\|\boldsymbol g_i\|_2^2}{D}-1\right|
}{
\frac{\|\boldsymbol g_i\|_2^2}{D}
}
\leq
2\left|
\frac{\|\boldsymbol g_i\|_2^2}{D}-1
\right|,
\]
and therefore
$\|\Lambda^2-I_n\|_{\op}
\leq2\max_{i\leq n}
|\|\boldsymbol g_i\|_2^2/D-1|$.
Combining these bounds yields
\begin{equation}\label{eq:grambound}
\begin{aligned}
\|U^\top U-I_n\|_{\op}
&\leq
\|\Lambda\|_{\op}^2\|X-I_n\|_{\op}
+
\|\Lambda^2-I_n\|_{\op} \\
&\leq
(4C+8)
\sqrt{\frac{n+\log(8n/\delta)}{D}}.
\end{aligned}
\end{equation}
Thus the desired Gram bound holds with probability at least $1-\delta/2$.

It remains to prove the coherence bound. By rotational invariance and
standard spherical concentration
\citep[Theorem~3.4.5]{vershynin2026high}, for every $i\neq j$ and $x>0$,
$\prob(|\boldsymbol u_i^\top\boldsymbol u_j|>x)\leq2e^{-cDx^2}$.
Setting $x:=\sqrt{\log(2n^2/\delta)/(cD)}$ and taking a union bound over
the at most $n(n-1)/2$ relevant inner products gives
\begin{equation}\label{eq:coherentbound}
\prob\left(
\max_{i\neq j}
|\boldsymbol u_i^\top\boldsymbol u_j|
>
\sqrt{\frac{\log(2n^2/\delta)}{cD}}
\right)
\leq
n^2
\exp\left(
-cD\frac{\log(2n^2/\delta)}{cD}
\right)
=
\frac\delta2.
\end{equation}
Combining \eqref{eq:grambound} and \eqref{eq:coherentbound} and choosing
$C_{\scriptscriptstyle G}$ sufficiently large proves the result.
\end{proof}

\begin{lemma}[Simultaneous spherical Gram concentration]
\label{lem:gram-concentration-2}
Let
\[
    U=[\boldsymbol u_1,\ldots,\boldsymbol u_n],
    \qquad
    \widehat V=
    [\widehat{\boldsymbol v}_1,\ldots,\widehat{\boldsymbol v}_n],
\]
where the columns of both matrices are independent and uniformly
distributed on $\mathbb S^{D-1}$. 
There exist universal constants
$c_{\scriptscriptstyle G},C_{\scriptscriptstyle G}>0$ such that, if
$D\geq c_{\scriptscriptstyle G}(n+\log(n/\delta))$, then, with probability
at least $1-\delta$,
\[
    \max\left\{
        \|U^\top U-I_n\|_{\op},
        \|\widehat V^\top\widehat V-I_n\|_{\op}
    \right\}
    \leq
    C_{\scriptscriptstyle G}
    \sqrt{\frac{n+\log(n/\delta)}{D}},
\]
and
\[
    \max_{i\neq j}
    \left\{
        |\boldsymbol u_i^\top\boldsymbol u_j|,
        |\widehat{\boldsymbol v}_i^\top
        \widehat{\boldsymbol v}_j|
    \right\}
    \leq
    C_{\scriptscriptstyle G}
    \sqrt{\frac{\log(4n^2/\delta)}{D}}.
\]
\end{lemma}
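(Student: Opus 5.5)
The plan is to reduce Lemma~\ref{lem:gram-concentration-2} to two applications of Lemma~\ref{lem:gram-concentration}, one for $U$ and one for $\widehat V$, each with failure probability $\delta/2$, and then take a union bound. The columns of $\widehat V$ are, by assumption, independent and uniform on $\mathbb S^{D-1}$. This is the same hypothesis as for $U$, and no independence between $U$ and $\widehat V$ is needed, since the union bound does not use it. Hence Lemma~\ref{lem:gram-concentration} applies to each matrix separately.

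\textbf{Step 1.} I apply Lemma~\ref{lem:gram-concentration} to $U$ with $\delta$ replaced by $\delta/2$. This requires $D\geq c_{\scriptscriptstyle G}\{n+\log(2n/\delta)\}$, which follows from $D\geq c'(n+\log(n/\delta))$ after enlarging the constant, because $\log(2n/\delta)\leq 2\log(n/\delta)$ once $n\geq2$ or $\delta<1/2$. (The degenerate case $n=1$ makes the coherence claim vacuous and the Gram claim trivial.) On an event of probability at least $1-\delta/2$ this gives
\[
\|U^\top U-I_n\|_{\op}\leq C_{\scriptscriptstyle G}\sqrt{\frac{n+\log(2n/\delta)}{D}},
\qquad
\max_{i\neq j}|\boldsymbol u_i^\top\boldsymbol u_j|\leq C_{\scriptscriptstyle G}\sqrt{\frac{\log(4n^2/\delta)}{D}}.
\]
The second bound uses the substitution $\log(2n^2/(\delta/2))=\log(4n^2/\delta)$, which is exactly the form stated in the lemma. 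For the first bound, $\log(2n/\delta)\leq\log 2+\log(n/\delta)\leq 2\bigl(n+\log(n/\delta)\bigr)$, so enlarging $C_{\scriptscriptstyle G}$ by a factor of $\sqrt2$ yields the stated form.

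\textbf{Step 2.} I repeat Step~1 verbatim for $\widehat V$, obtaining a second event of probability at least $1-\delta/2$.

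\textbf{Step 3.} A union bound shows that both events hold simultaneously with probability at least $1-\delta$. On this intersection I take the maximum of the two Gram bounds and of the two coherence bounds, and rename the universal constants.

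I expect no real obstacle. The only care needed is the bookkeeping of logarithmic factors under $\delta\to\delta/2$ and checking that the dimension hypothesis absorbs the extra constant, both of which are handled by enlarging $c_{\scriptscriptstyle G}$ and $C_{\scriptscriptstyle G}$.
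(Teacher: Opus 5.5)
Your proposal is correct and matches the paper's proof exactly: apply Lemma~\ref{lem:gram-concentration} to $U$ and to $\widehat V$ separately with failure probability $\delta/2$ each, take a union bound, and enlarge $c_{\scriptscriptstyle G}$ and $C_{\scriptscriptstyle G}$; the coherence term then comes out exactly as $\log(4n^2/\delta)$. (A trivial slip: the chain you wrote only gives $n+\log(2n/\delta)\le 3\bigl(n+\log(n/\delta)\bigr)$, i.e.\ a factor $\sqrt3$, whereas using $\log 2\le n$ directly gives your $\sqrt2$; either factor is absorbed into the constant.)
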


\begin{proof}
Apply Lemma~\ref{lem:gram-concentration} separately to $U$ and $\widehat V$,
each with failure probability $\delta/2$, take a union bound, and enlarging $C_G$.
\end{proof}

\begin{lemma}[Inverse spherical Gram concentration]
\label{lem:inverse-gram-concentration}
Let
\[
    U=[\boldsymbol u_1,\ldots,\boldsymbol u_n],
    \qquad
    G_U:=U^\top U,
\]
where $\boldsymbol u_1,\ldots,\boldsymbol u_n$ are independent and
uniformly distributed on $\mathbb S^{D-1}$. Let
$\boldsymbol\xi\in\{-1,1\}^n$ be independent of $U$.
There exist universal constants
$c_{\scriptscriptstyle I},C_{\scriptscriptstyle I}>0$ such that, if
$D\geq c_{\scriptscriptstyle I}n\log(8n/\delta)$, then, with probability
at least $1-\delta$,
\begin{equation}
    \frac12I_n
    \preceq
    G_U
    \preceq
    \frac32I_n,
\label{eq:inverse-gram-op}
\end{equation}
and
\begin{equation}
    \max_{\boldsymbol x\in\{\boldsymbol1,\boldsymbol\xi\}}
    \left\|
        G_U^{-1}\boldsymbol x-\boldsymbol x
    \right\|_\infty
    \leq
    C_{\scriptscriptstyle I}
    \sqrt{\frac{n\log(8n/\delta)}{D}}.
\label{eq:inverse-gram-entrywise}
\end{equation}
\end{lemma}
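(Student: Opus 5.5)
The plan is to reduce both claims to the operator-norm control of $G_U-I_n$ plus a row-wise analysis of $G_U^{-1}-I_n$ via a Neumann series. For \eqref{eq:inverse-gram-op} I invoke Lemma~\ref{lem:gram-concentration} with failure probability $\delta/2$. This gives $\|G_U-I_n\|_{\op}\le C_G\sqrt{(n+\log(n/\delta))/D}$ together with the coherence bound $\max_{i\ne j}|\boldsymbol u_i^\top\boldsymbol u_j|\le C_G\sqrt{\log(2n^2/\delta)/D}$. Under $D\ge c_I n\log(8n/\delta)$ with $c_I$ large, the operator-norm bound is at most $1/2$, which is exactly \eqref{eq:inverse-gram-op}.

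For \eqref{eq:inverse-gram-entrywise}, write $\Delta:=G_U-I_n$, which is symmetric with zero diagonal. Then
\[
G_U^{-1}-I_n=-\Delta+\Delta^2 G_U^{-1},
\]
so for $\boldsymbol x\in\{\boldsymbol 1,\boldsymbol\xi\}$ it suffices to bound $\|\Delta\boldsymbol x\|_\infty$ and $\|\Delta^2G_U^{-1}\boldsymbol x\|_\infty$.

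\textbf{First-order term.} The $i$-th coordinate is $(\Delta\boldsymbol x)_i=\sum_{j\ne i}x_j\,\boldsymbol u_i^\top\boldsymbol u_j$. Conditionally on $\boldsymbol u_i$ (and on $\boldsymbol\xi$, which is independent of $U$), the summands are independent, centered, and sub-Gaussian with parameter $\lesssim D^{-1/2}$. Hoeffding's inequality and a union bound over $i$ and the two choices of $\boldsymbol x$ give, with probability $1-\delta/4$,
\[
\|\Delta\boldsymbol x\|_\infty\lesssim\sqrt{n\log(8n/\delta)/D}.
\]

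\textbf{Second-order term.} Bound it by
\[
\|\Delta^2G_U^{-1}\boldsymbol x\|_\infty\le\max_i\|\boldsymbol e_i^\top\Delta\|_2\,\|\Delta\|_{\op}\,\|G_U^{-1}\|_{\op}\,\|\boldsymbol x\|_2 .
\]
Coherence gives $\|\boldsymbol e_i^\top\Delta\|_2\le\sqrt n\,C_G\sqrt{\log(2n^2/\delta)/D}$. We also have $\|\Delta\|_{\op}\lesssim\sqrt{(n+\log(n/\delta))/D}$, $\|G_U^{-1}\|_{\op}\le2$ and $\|\boldsymbol x\|_2=\sqrt n$. The product is therefore $\lesssim n\sqrt{\log(n/\delta)}\,\sqrt{n+\log(n/\delta)}/D$. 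Since $D\gtrsim n\log(8n/\delta)$, we have $\sqrt{(n+\log)/D}\lesssim1$, so this term is also $\lesssim\sqrt{n\log(8n/\delta)/D}$.

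The main obstacle is this second-order term. The crude chain of norms above loses a factor of $\sqrt n$ relative to the target unless the dimension assumption absorbs it, which it does only barely. If that accounting fails, the fallback is a leave-one-out argument: write $(G_U^{-1}\boldsymbol x)_i$ via the Schur complement with respect to the $i$-th row, and use that $\boldsymbol u_i$ is independent of the remaining Gram block. This gives an additional concentration gain on the cross term $\Delta_{i,-i}G_{-i}^{-1}\boldsymbol x_{-i}$. In either route, the final step is a union of the events above with total failure probability at most $\delta$, adjusting the constants $c_I$ and $C_I$.
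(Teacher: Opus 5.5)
Your proof of \eqref{eq:inverse-gram-op} and your first-order Hoeffding bound on $\|\Delta\boldsymbol x\|_\infty$ are fine. The main route to \eqref{eq:inverse-gram-entrywise}, however, has a genuine gap in the second-order term. With $L:=\log(8n/\delta)$, the product you obtain is
\[
\frac{n\sqrt{L}\,\sqrt{n+L}}{D}=\sqrt{\frac{nL}{D}}\cdot\sqrt{\frac{n(n+L)}{D}},
\]
so it is $\lesssim\sqrt{nL/D}$ only when $D\gtrsim n^2$. The step ``$\sqrt{(n+L)/D}\lesssim 1$'' silently drops the factor $\sqrt n$ that comes from $\|\boldsymbol x\|_2$. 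At the edge of the hypothesis, $D\asymp c_I nL$, your bound exceeds the target by a factor of order $\sqrt{n/L}\to\infty$, so the dimension assumption does not absorb the loss at all. Replacing the coherence row bound by $\|\boldsymbol e_i^\top\Delta\|_2\le\|\Delta\|_{\op}$ does not help, since you still get $\|\Delta\|_{\op}^2\sqrt n\approx n^{3/2}/D$. The loss is structural. Cauchy--Schwarz on $\sum_j\Delta_{ij}w_j$ costs roughly a factor $\sqrt n$ compared with reading it as the linear form $\langle\boldsymbol u_i,\sum_{j\neq i}w_j\boldsymbol u_j\rangle$, which has size about $\|\boldsymbol w\|_2\sqrt{L/D}$ when $\boldsymbol w$ is independent of $\boldsymbol u_i$. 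But $\boldsymbol w=\Delta G_U^{-1}\boldsymbol x$ is not independent of $\boldsymbol u_i$.

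Your fallback is the right fix, and it is exactly the paper's proof; it has to be the argument, not a remark. With $\boldsymbol g_i:=U_{-i}^\top\boldsymbol u_i$ and $G_{U,-i}:=U_{-i}^\top U_{-i}$, the Schur complement gives
\[
(G_U^{-1}\boldsymbol x)_i-x_i=\frac{x_i\,\boldsymbol g_i^\top G_{U,-i}^{-1}\boldsymbol g_i-\boldsymbol g_i^\top G_{U,-i}^{-1}\boldsymbol x_{-i}}{1-\boldsymbol g_i^\top G_{U,-i}^{-1}\boldsymbol g_i}.
\]
On the operator-norm event, $\|\boldsymbol g_i\|_2\le\|G_U-I_n\|_{\op}\le\varepsilon$ with $\varepsilon\asymp\sqrt{(n+L)/D}\le 1/2$, and $\|G_{U,-i}^{-1}\|_{\op}\le 2$. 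Hence the denominator is at least $1/2$, and the quadratic term is at most $2\varepsilon^2\lesssim(n+L)/D\lesssim\sqrt{nL/D}$; this is where the harmless ``diagonal'' part of your $\Delta^2$ term is absorbed.

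The cross term equals $\langle\boldsymbol u_i,U_{-i}G_{U,-i}^{-1}\boldsymbol x_{-i}\rangle$. The key identity is $\|U_{-i}G_{U,-i}^{-1}\boldsymbol x_{-i}\|_2^2=\boldsymbol x_{-i}^\top G_{U,-i}^{-1}\boldsymbol x_{-i}\le 2n$. It holds on the $U_{-i}$-measurable event $\{\|G_{U,-i}-I_{n-1}\|_{\op}\le\varepsilon\}$, which contains the global event because $G_{U,-i}$ is a principal submatrix of $G_U$. Conditionally on $(U_{-i},\boldsymbol\xi)$, $\boldsymbol u_i$ is still uniform on the sphere, so spherical concentration gives a tail $2e^{-cDt^2/n}$. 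A union bound over $i\in[n]$ and $\boldsymbol x\in\{\boldsymbol 1,\boldsymbol\xi\}$ with $t\asymp\sqrt{nL/D}$ finishes the proof. Your separate Hoeffding step for $\Delta\boldsymbol x$ then becomes unnecessary.
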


\begin{proof}
Apply Lemma~\ref{lem:gram-concentration} with failure probability $\delta/2$.
There exists a universal constant $C_{\scriptscriptstyle G}>0$ such that,
on an event $\mathcal E$ with $\prob(\mathcal E)\geq1-\delta/2$,
\begin{equation}
    \|G_U-I_n\|_{\op}
    \leq
    C_{\scriptscriptstyle G}
    \sqrt{\frac{n+\log(2n/\delta)}{D}}
    =:\varepsilon.
\label{eq:GU-op-event}
\end{equation}
By increasing $c_{\scriptscriptstyle I}$ if necessary, we may assume
$\varepsilon\leq1/4$. Hence, on $\mathcal E$,
$\frac34I_n\preceq G_U\preceq\frac54I_n$, which in particular implies
\eqref{eq:inverse-gram-op}.

It remains to establish \eqref{eq:inverse-gram-entrywise}. Fix
$\boldsymbol x\in\{\boldsymbol1,\boldsymbol\xi\}$ and $i\in[n]$.
After moving the $i$th coordinate to the first position, write
\[
    G_U
    =
    \begin{pmatrix}
        1 & \boldsymbol g_i^\top\\
        \boldsymbol g_i & G_{U,-i}
    \end{pmatrix},
    \qquad
    \boldsymbol x
    =
    \begin{pmatrix}
        x_i\\
        \boldsymbol x_{-i}
    \end{pmatrix},
\]
where
$G_{U,-i}:=U_{-i}^\top U_{-i}$ and
$\boldsymbol g_i:=U_{-i}^\top\boldsymbol u_i$.
The Schur-complement formula gives
\[
    (G_U^{-1}\boldsymbol x)_i
    =
    \frac{
        x_i-\boldsymbol g_i^\top
        G_{U,-i}^{-1}\boldsymbol x_{-i}
    }{
        1-\boldsymbol g_i^\top
        G_{U,-i}^{-1}\boldsymbol g_i
    },
\]
and therefore
\begin{equation}
\begin{aligned}
    (G_U^{-1}\boldsymbol x)_i-x_i
    &=
    \frac{
        x_i\boldsymbol g_i^\top
        G_{U,-i}^{-1}\boldsymbol g_i
        -
        \boldsymbol g_i^\top
        G_{U,-i}^{-1}\boldsymbol x_{-i}
    }{
        1-\boldsymbol g_i^\top
        G_{U,-i}^{-1}\boldsymbol g_i
    }.
\end{aligned}
\label{eq:inverse-gram-schur}
\end{equation}

We first control the denominator. Since $G_{U,-i}$ is a principal submatrix
of $G_U$, \eqref{eq:GU-op-event} implies, on $\mathcal E$,
$(1-\varepsilon)I_{n-1}\preceq G_{U,-i}\preceq
(1+\varepsilon)I_{n-1}$, and hence
$\|G_{U,-i}^{-1}\|_{\op}\leq(1-\varepsilon)^{-1}\leq2$.
Moreover,
$\|\boldsymbol g_i\|_2\leq\|G_U-I_n\|_{\op}\leq\varepsilon$.
Thus
$0\leq\boldsymbol g_i^\top G_{U,-i}^{-1}\boldsymbol g_i
\leq2\varepsilon^2\leq1/8$, and therefore
\begin{equation}
    1-\boldsymbol g_i^\top
    G_{U,-i}^{-1}\boldsymbol g_i
    \geq
    \frac12.
\label{eq:inverse-gram-denominator}
\end{equation}

It remains to control
$\boldsymbol g_i^\top G_{U,-i}^{-1}\boldsymbol x_{-i}$.
For each $i\in[n]$, define
$\mathcal E_{-i}:=
\{\|G_{U,-i}-I_{n-1}\|_{\op}\leq\varepsilon\}$.
Since $G_{U,-i}-I_{n-1}$ is a principal submatrix of $G_U-I_n$, we have
$\mathcal E\subseteq\mathcal E_{-i}$.

On $\mathcal E_{-i}$,
\[
\begin{aligned}
    \left\|
        U_{-i}G_{U,-i}^{-1}\boldsymbol x_{-i}
    \right\|_2^2
    &=
    \boldsymbol x_{-i}^\top
    G_{U,-i}^{-1}
    U_{-i}^\top U_{-i}
    G_{U,-i}^{-1}
    \boldsymbol x_{-i}\\
    &=
    \boldsymbol x_{-i}^\top
    G_{U,-i}^{-1}\boldsymbol x_{-i}
    \leq
    \frac{\|\boldsymbol x_{-i}\|_2^2}{1-\varepsilon}
    \leq
    2n.
\end{aligned}
\label{eq:leave-one-out-vector-norm}
\]
Furthermore,
\[
    \boldsymbol g_i^\top
    G_{U,-i}^{-1}\boldsymbol x_{-i}
    =
    \left\langle
        \boldsymbol u_i,\,
        U_{-i}G_{U,-i}^{-1}\boldsymbol x_{-i}
    \right\rangle.
\]
Conditionally on $U_{-i}$ and $\boldsymbol\xi$, the vector
$U_{-i}G_{U,-i}^{-1}\boldsymbol x_{-i}$ is fixed, whereas
$\boldsymbol u_i$ remains uniformly distributed on $\mathbb S^{D-1}$.
Standard spherical concentration therefore gives, for every $t>0$,
\[
\mathbb P\left(
    \mathcal E_{-i}
    \cap
    \left\{
        \left|
        \boldsymbol g_i^\top
        G_{U,-i}^{-1}\boldsymbol x_{-i}
        \right|>t
    \right\}
    \,\middle|\,
    U_{-i},\boldsymbol\xi
\right)
\leq
2\exp\left(
    -c\frac{Dt^2}{n}
\right),
\]
where we used \eqref{eq:leave-one-out-vector-norm}.

Set $L:=\log(8n/\delta)$ and
$t:=C_0\sqrt{nL/D}$, where $C_0>0$ is a sufficiently large universal
constant. Since $\mathcal E\subseteq\mathcal E_{-i}$, a union bound over
$i\in[n]$ and $\boldsymbol x\in\{\boldsymbol1,\boldsymbol\xi\}$ gives
\[
\mathbb P\left(
    \mathcal E
    \cap
    \left\{
    \max_{\substack{i\in[n]\\
    \boldsymbol x\in\{\boldsymbol1,\boldsymbol\xi\}}}
    \left|
        \boldsymbol g_i^\top
        G_{U,-i}^{-1}\boldsymbol x_{-i}
    \right|
    >t
    \right\}
\right)
\leq
4n\exp(-cC_0^2L)
\leq
\frac{\delta}{2}.
\label{eq:inverse-gram-linear-term}
\]
Thus, outside an event of probability at most $\delta/2$, simultaneously for
every $i\in[n]$ and
$\boldsymbol x\in\{\boldsymbol1,\boldsymbol\xi\}$,
\begin{equation}
    \left|
        \boldsymbol g_i^\top
        G_{U,-i}^{-1}\boldsymbol x_{-i}
    \right|
    \leq
    C_0\sqrt{\frac{n\log(8n/\delta)}{D}}.
\label{eq:inverse-gram-linear-bound}
\end{equation}

Finally, by \eqref{eq:GU-op-event},
$\boldsymbol g_i^\top G_{U,-i}^{-1}\boldsymbol g_i
\leq2\varepsilon^2
\leq C\{n+\log(2n/\delta)\}/D$.
Under the dimensional assumption
$D\geq c_{\scriptscriptstyle I}n\log(8n/\delta)$,
\[
\frac{n+\log(2n/\delta)}{D}
\leq
C\sqrt{\frac{n\log(8n/\delta)}{D}}.
\]
Combining this estimate with
\eqref{eq:inverse-gram-denominator},
\eqref{eq:inverse-gram-linear-bound}, and
\eqref{eq:inverse-gram-schur}, we obtain
\[
    \left|
        (G_U^{-1}\boldsymbol x)_i-x_i
    \right|
    \leq
    C_{\scriptscriptstyle I}
    \sqrt{\frac{n\log(8n/\delta)}{D}}
\]
simultaneously for all $i\in[n]$ and
$\boldsymbol x\in\{\boldsymbol1,\boldsymbol\xi\}$.
Together with $\prob(\mathcal E)\geq1-\delta/2$ and
\eqref{eq:inverse-gram-linear-term}, this event has probability at least
$1-\delta$, proving \eqref{eq:inverse-gram-entrywise}.
\end{proof}

\begin{lemma}[Spherical bilinear tails]
\label{lem:fresh-bilinear-tail}
Let \(H\in\R^{D\times D}\) be deterministic with
\(\operatorname{rank}(H)=r_H\geq1\). Let
\(\boldsymbol u,\boldsymbol v\sim\operatorname{Unif}(\mathbb S^{D-1})\) be
independent. Then there is a universal constant \(c>0\) such that, for all
\(t>0\),
\begin{equation}
\label{eq:fresh-bilinear-upper-tail}
\prob\left(
\left|
\left\langle H,\boldsymbol u\boldsymbol v^\top\right\rangle_F
\right|
\geq t
\right)
\leq
4\exp\left[
-c\min\left\{
\frac{D^2t^2}{\|H\|_F^2},
\frac{Dt}{\|H\|_{\op}}
\right\}
\right].
\end{equation}
Moreover, there exist universal constants \(K_0,c_0,C_0>0\) such that, if
\(D\geq K_0\|H\|_F^2/\|H\|_{\op}^2\), then, for all
\(0<t\leq\|H\|_F^2/(D\|H\|_{\op})\),
\begin{equation}
\label{eq:fresh-bilinear-lower-tail}
\prob\left(
\left|
\left\langle H,\boldsymbol u\boldsymbol v^\top\right\rangle_F
\right|
\geq t
\right)
\geq
c_0
\exp\left[
-C_0\frac{D^2t^2}{\|H\|_F^2}
\right].
\end{equation}
\end{lemma}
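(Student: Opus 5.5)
The plan is to reduce both bounds to Gaussian computations, conditioning on one of the two spherical vectors and using the one-dimensional estimates of Lemma~\ref{lem:spherical-linear}. Write \(T:=\langle H,\boldsymbol u\boldsymbol v^\top\rangle_F=\boldsymbol u^\top H\boldsymbol v\). Since \(|T|\leq\|H\|_{\op}\), the upper bound \eqref{eq:fresh-bilinear-upper-tail} is trivial for \(t>\|H\|_{\op}\), so I assume \(t\leq\|H\|_{\op}\). In this range \(Dt/\|H\|_{\op}\leq D\), so any additive error of the form \(e^{-cD}\) can be absorbed into the claimed bound.

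\textbf{Upper tail.} Represent \(\boldsymbol u=\boldsymbol g/\|\boldsymbol g\|_2\) and \(\boldsymbol v=\boldsymbol h/\|\boldsymbol h\|_2\) with independent \(\boldsymbol g,\boldsymbol h\sim N(0,I_D)\).
\begin{itemize}
\item By Laurent--Massart, \(\|\boldsymbol g\|_2^2\) and \(\|\boldsymbol h\|_2^2\) are both at least \(D/2\), except with probability \(2e^{-D/16}\).
\item On that event, \(|T|\geq t\) forces \(|\boldsymbol g^\top H\boldsymbol h|\geq tD/2\).
\item Conditionally on \(\boldsymbol h\), the chaos \(\boldsymbol g^\top H\boldsymbol h\) is distributed as \(N(0,\|H\boldsymbol h\|_2^2)\).
\item By Hanson--Wright, \(\|H\boldsymbol h\|_2^2\leq 2\|H\|_F^2+C\|H\|_{\op}^2 s\) except with probability \(e^{-s}\).
\end{itemize}
Combining these gives
\[
\prob\left(|\boldsymbol g^\top H\boldsymbol h|\geq x\right)\leq 2\exp\left(-\frac{x^2}{C(\|H\|_F^2+\|H\|_{\op}^2 s)}\right)+e^{-s}.
\]
I then choose \(s=x/\|H\|_{\op}\) and \(x=tD/2\). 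This yields the sum of \(2\exp[-c\min\{D^2t^2/\|H\|_F^2,\,Dt/\|H\|_{\op}\}]\) and the norm-failure term \(2e^{-D/16}\leq 2e^{-Dt/(16\|H\|_{\op})}\). Adjusting \(c\) gives the factor \(4\) in \eqref{eq:fresh-bilinear-upper-tail}. The main care needed is this bookkeeping of the normalization event, which is exactly why I restrict to \(t\leq\|H\|_{\op}\) at the start.

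\textbf{Lower tail.} I condition on \(\boldsymbol v\). Then \(T=\langle\boldsymbol u,H\boldsymbol v\rangle\), and the third bound of Lemma~\ref{lem:spherical-linear} together with symmetry gives
\[
\prob(|T|\geq t\mid\boldsymbol v)\geq 2c_0\exp\left(-C_0Dt^2/\|H\boldsymbol v\|_2^2\right)\qquad\text{whenever } t\leq\tfrac12\|H\boldsymbol v\|_2.
\]
It therefore suffices to show that \(\|H\boldsymbol v\|_2^2\geq\|H\|_F^2/(2D)\) with probability bounded below by a universal constant. Set \(A:=H^\top H\). Then \(\E\,\boldsymbol v^\top A\boldsymbol v=\operatorname{tr}A/D=\|H\|_F^2/D\). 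For the fourth moment, Lemma~\ref{lem:spherical-fourth-moments}, after diagonalizing \(A\), gives
\[
\E(\boldsymbol v^\top A\boldsymbol v)^2=\frac{(\operatorname{tr}A)^2+2\operatorname{tr}A^2}{D(D+2)}\leq\frac{3(\operatorname{tr}A)^2}{D^2}.
\]
Paley--Zygmund then gives \(\prob\bigl(\|H\boldsymbol v\|_2^2\geq\|H\|_F^2/(2D)\bigr)\geq\tfrac1{12}\).

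On this event the requirement \(t\leq\frac12\|H\boldsymbol v\|_2\) holds for every \(t\leq\|H\|_F^2/(D\|H\|_{\op})\). Indeed,
\[
\frac{\|H\|_F^2}{D\|H\|_{\op}}=\frac{\|H\|_F}{\sqrt D}\cdot\frac{\|H\|_F}{\sqrt D\,\|H\|_{\op}}\leq\frac{\|H\|_F}{\sqrt{K_0D}},
\]
using the hypothesis \(D\geq K_0\|H\|_F^2/\|H\|_{\op}^2\). This is at most \(\|H\|_F/(2\sqrt{2D})\) once \(K_0\geq 8\). Integrating the conditional bound over this event gives
\[
\prob(|T|\geq t)\geq\frac{c_0}{6}\exp\left(-2C_0D^2t^2/\|H\|_F^2\right),
\]
which is \eqref{eq:fresh-bilinear-lower-tail} after renaming the constants.
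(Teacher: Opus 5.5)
Your proof is correct. The upper tail follows the paper's route, and the lower tail is executed differently and more economically. For the upper tail you use the same skeleton as the paper: the Gaussian representation $T=\boldsymbol g^\top H\boldsymbol h/(\|\boldsymbol g\|_2\|\boldsymbol h\|_2)$, a $2e^{-cD}$ denominator event absorbed via $t\le\|H\|_{\op}$, and triviality for $t>\|H\|_{\op}$. The only difference is the chaos bound: the paper rotates to the SVD of $H$ and applies Bernstein to $\sum_j\sigma_jg_jh_j$, whereas you condition on $\boldsymbol h$ and control $\|H\boldsymbol h\|_2^2$ by Hanson--Wright. Your tally drops the Hanson--Wright failure term $e^{-s}=e^{-x/\|H\|_{\op}}$, but it is dominated by the main exponential and disappears when you adjust $c$.

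For the lower tail the paper stays entirely Gaussian. It applies Paley--Zygmund to $\sigma_g^2=\sum_j\sigma_j^2g_j^2$, intersects with $\{\|\boldsymbol g\|_2^2\le 2D\}$, and decomposes $\boldsymbol h=Z\boldsymbol e_g+\boldsymbol h_\perp$. A truncated Gaussian lower tail for $Z$ and a chi-square bound for $\boldsymbol h_\perp$ then control the normalization $\|\boldsymbol g\|_2\|\boldsymbol h\|_2\le 2D$. The hypothesis $D\ge K_0\|H\|_F^2/\|H\|_{\op}^2$ is needed there both to make $\prob(\|\boldsymbol g\|_2^2>2D)$ small and to keep the Gaussian threshold below $b_1\sqrt D$.

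You instead condition on $\boldsymbol v$ and work on the sphere directly. Paley--Zygmund with Lemma~\ref{lem:spherical-fourth-moments} gives the constant-probability event $\|H\boldsymbol v\|_2^2\ge\|H\|_F^2/(2D)$, and you then invoke the third bound of Lemma~\ref{lem:spherical-linear}. This removes all normalization bookkeeping and reuses lemmas already in the appendix. It also uses the dimension hypothesis only to place the stated $t$-range inside $t\le\|H\|_F/\sqrt{8D}$, so your argument in fact gives the bound on that whole range. The paper's version, in exchange, is self-contained in Gaussian terms and does not rely on the explicit spherical density behind Lemma~\ref{lem:spherical-linear}.
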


\begin{proof}
Let $H=U\Sigma V^\top$ be the compact SVD of $H$, where
$\Sigma=\diag(\sigma_1,\ldots,\sigma_{r_H})$ and
$\sigma_1\geq\cdots\geq\sigma_{r_H}>0$. By rotational invariance, we may
assume without loss of generality that
$U=V=(I_{r_H},0)^\top$. Thus, with
$T:=\langle H,\boldsymbol u\boldsymbol v^\top\rangle_F$,
we have $T=\boldsymbol u^\top H\boldsymbol v
=\sum_{j=1}^{r_H}\sigma_ju_jv_j$.

Let $\boldsymbol g,\boldsymbol h\sim N(0,I_D)$ be independent. Then
$\boldsymbol u=\boldsymbol g/\|\boldsymbol g\|_2$ and
$\boldsymbol v=\boldsymbol h/\|\boldsymbol h\|_2$, and hence
\begin{equation}
\label{eq:fresh-bilinear-gaussian-representation-general}
T
=
\frac{S_H}{\|\boldsymbol g\|_2\|\boldsymbol h\|_2},
\qquad
S_H:=\sum_{j=1}^{r_H}\sigma_jg_jh_j.
\end{equation}

We first prove \eqref{eq:fresh-bilinear-upper-tail}. Let
$\mathcal E_{\mathrm{den}}^-:=
\{\|\boldsymbol g\|_2^2\geq D/2,\,
\|\boldsymbol h\|_2^2\geq D/2\}$.
By the standard lower-tail bound for chi-square random variables,
$\prob((\mathcal E_{\mathrm{den}}^-)^c)\leq2e^{-c_1D}$ for a universal
constant $c_1>0$. On $\mathcal E_{\mathrm{den}}^-$,
\eqref{eq:fresh-bilinear-gaussian-representation-general} implies
$|T|\geq t\Rightarrow |S_H|\geq Dt/2$. Therefore
\begin{equation}
\label{eq:fresh-bilinear-upper-union-general}
\prob(|T|\geq t)
\leq
2e^{-c_1D}
+
\prob\left(
|S_H|\geq\frac{Dt}{2}
\right).
\end{equation}
Since $S_H=\sum_{j=1}^{r_H}\sigma_jg_jh_j$ is a weighted sum of independent
centered sub-exponential random variables, Bernstein's inequality gives a
universal constant $c_2>0$ such that, for all $x>0$,
\[
\prob(|S_H|\geq x)
\leq
2\exp\left[
-c_2\min\left\{
\frac{x^2}{\sum_{j=1}^{r_H}\sigma_j^2},
\frac{x}{\sigma_1}
\right\}
\right].
\]
Since $\sum_j\sigma_j^2=\|H\|_F^2$ and
$\sigma_1=\|H\|_{\op}$, taking $x=Dt/2$ gives, after changing the universal
constant,
\[
\prob\left(
|S_H|\geq\frac{Dt}{2}
\right)
\leq
2\exp\left[
-c_3\min\left\{
\frac{D^2t^2}{\|H\|_F^2},
\frac{Dt}{\|H\|_{\op}}
\right\}
\right].
\]
If $0<t\leq\|H\|_{\op}$, then
$\min\{D^2t^2/\|H\|_F^2,Dt/\|H\|_{\op}\}\leq D$, so the denominator error
in \eqref{eq:fresh-bilinear-upper-union-general} is absorbed into the same
exponential. This proves \eqref{eq:fresh-bilinear-upper-tail} for
$0<t\leq\|H\|_{\op}$. If $t>\|H\|_{\op}$, its left-hand side is zero because
$|T|\leq\|H\|_{\op}$. Thus \eqref{eq:fresh-bilinear-upper-tail} holds for
all $t>0$.

We now prove \eqref{eq:fresh-bilinear-lower-tail}. Assume
$D\geq K_0\|H\|_F^2/\|H\|_{\op}^2$, where $K_0\geq1$ will be chosen
sufficiently large, and define
$\sigma_g^2:=\sum_{j=1}^{r_H}\sigma_j^2g_j^2$ and
\[
\mathcal E_g
:=
\left\{
\sigma_g^2\geq\frac{\|H\|_F^2}{2},
\quad
\|\boldsymbol g\|_2^2\leq2D
\right\}.
\]
We first show that $\mathcal E_g$ has probability bounded below by a universal
constant. We have $\mathbb E\sigma_g^2=\|H\|_F^2$, while
\[
\mathbb E(\sigma_g^2)^2
=
\sum_{j=1}^{r_H}\sigma_j^4\mathbb E g_j^4
+
\sum_{i\neq j}\sigma_i^2\sigma_j^2
\mathbb E g_i^2\mathbb E g_j^2
=
\|H\|_F^4+2\sum_{j=1}^{r_H}\sigma_j^4
\leq
3\|H\|_F^4.
\]
Hence, by Paley--Zygmund,
$\prob(\sigma_g^2\geq\|H\|_F^2/2)\geq1/12$.
On the other hand, the standard chi-square upper-tail bound gives
$\prob(\|\boldsymbol g\|_2^2>2D)\leq e^{-cD}$. Since
$D\geq K_0\|H\|_F^2/\|H\|_{\op}^2\geq K_0$, choosing $K_0$ sufficiently
large gives $e^{-cD}\leq1/24$. Therefore
\begin{equation}\label{eq:fresh-bilinear-Eg-probability-general}
\prob(\mathcal E_g)
\geq
\frac1{12}-\frac1{24}
=
\frac1{24}.
\end{equation}

Fix $\boldsymbol g\in\mathcal E_g$, and let
$\boldsymbol e_g\in\R^D$ be the unit vector whose first $r_H$ coordinates are
$(\sigma_1g_1,\ldots,\sigma_{r_H}g_{r_H})/\sigma_g$ and whose remaining
coordinates are zero. Decompose
$\boldsymbol h=Z\boldsymbol e_g+\boldsymbol h_\perp$, where
$Z\sim N(0,1)$, $\boldsymbol h_\perp\sim N(0,I_{D-1})$, and $Z$ is
independent of $\boldsymbol h_\perp$. Then
$S_H=\sigma_gZ$ and
$\|\boldsymbol h\|_2^2=Z^2+\|\boldsymbol h_\perp\|_2^2$.

We use the following elementary Gaussian consequence: there exist universal
constants $b_1,c_1',C_1'>0$ such that, for all $D\geq1$ and
$0\leq a\leq b_1\sqrt D$,
\[
\prob\left(
|Z|\geq a,\quad Z^2\leq\frac D2
\right)
\geq
c_1'e^{-C_1'a^2}.
\]
Also, by a chi-square bound, there is a universal constant $p>0$ such that
$\prob(\|\boldsymbol h_\perp\|_2^2\leq3D/2)\geq p$.
Since $Z$ and $\boldsymbol h_\perp$ are independent, for
$a:=x/\sigma_g\leq b_1\sqrt D$,
\[
\prob\left(
|S_H|\geq x,\quad
\|\boldsymbol h\|_2^2\leq2D
\,\middle|\,
\boldsymbol g
\right)
\geq
pc_1'e^{-C_1'a^2}.
\]
On $\mathcal E_g$, $\sigma_g^2\geq\|H\|_F^2/2$, and hence
$a^2\leq2x^2/\|H\|_F^2$. Thus, after changing universal constants,
\begin{equation}
\label{eq:fresh-bilinear-conditional-lower-general}
\prob\left(
|S_H|\geq x,\quad
\|\boldsymbol h\|_2^2\leq2D
\,\middle|\,
\boldsymbol g
\right)
\geq
c_2'e^{-C_2'x^2/\|H\|_F^2},
\end{equation}
whenever $0<x\leq2\|H\|_F^2/\|H\|_{\op}$, provided $K_0$ is sufficiently
large. Indeed, in this range, on $\mathcal E_g$,
$a=x/\sigma_g
\leq\sqrt2\,x/\|H\|_F
\leq2\sqrt2\,\|H\|_F/\|H\|_{\op}$.
Since
$D\geq K_0\|H\|_F^2/\|H\|_{\op}^2$, this gives
$a\leq2\sqrt2\,\sqrt D/\sqrt{K_0}$. Choosing
$K_0\geq8/b_1^2$ ensures $a\leq b_1\sqrt D$.

Integrating \eqref{eq:fresh-bilinear-conditional-lower-general} over
$\mathcal E_g$ and using
\eqref{eq:fresh-bilinear-Eg-probability-general}, we obtain
\begin{equation}
\label{eq:fresh-bilinear-joint-lower-general}
\prob\left(
|S_H|\geq x,\quad
\|\boldsymbol g\|_2^2\leq2D,\quad
\|\boldsymbol h\|_2^2\leq2D
\right)
\geq
\frac{c_2'}{24}e^{-C_2'x^2/\|H\|_F^2},
\qquad
0<x\leq\frac{2\|H\|_F^2}{\|H\|_{\op}}.
\end{equation}
Absorbing $c_2'/24$ into a universal constant $c_0'>0$ gives
\[
\prob\left(
|S_H|\geq x,\quad
\|\boldsymbol g\|_2^2\leq2D,\quad
\|\boldsymbol h\|_2^2\leq2D
\right)
\geq
c_0'e^{-C_2'x^2/\|H\|_F^2},
\qquad
0<x\leq\frac{2\|H\|_F^2}{\|H\|_{\op}}.
\]

Finally, take $x:=2Dt$. If
$0<t\leq\|H\|_F^2/(D\|H\|_{\op})$, then
$0<x\leq2\|H\|_F^2/\|H\|_{\op}$. On the event in the preceding display,
$\|\boldsymbol g\|_2\|\boldsymbol h\|_2\leq2D$, so
\eqref{eq:fresh-bilinear-gaussian-representation-general} gives
$|T|\geq|S_H|/(2D)\geq x/(2D)=t$. Therefore
\[
\prob(|T|\geq t)
\geq
c_0'\exp\left[
-4C_2'\frac{D^2t^2}{\|H\|_F^2}
\right].
\]
Renaming constants proves \eqref{eq:fresh-bilinear-lower-tail}.
\end{proof}

\subsection{Polar factor}

Throughout this section, for a matrix \(X\), write a compact singular value
decomposition as
\[
X=U\Sigma V^\top,\qquad
\Sigma=\operatorname{diag}(\sigma_1,\ldots,\sigma_r),\qquad
\sigma_i>0.
\]

\begin{lemma}[Basic properties of the compact polar factor]
\label{lem:polar-basic}
Let \(X\neq0\). Then
\[
\|\pol(X)\|_{\op}=1,\qquad
\langle\pol(X),X\rangle_F=\|X\|_1,\qquad
\pol(cX)=\operatorname{sgn}(c)\pol(X)
\]
for every \(c\in\mathbb R\setminus\{0\}\).
\end{lemma}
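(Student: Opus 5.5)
The three identities follow from the compact SVD \(X=U\Sigma V^\top\) with \(\Sigma=\diag(\sigma_1,\ldots,\sigma_r)\), \(\sigma_i>0\), once \(\pol(X)\) is identified with \(UV^\top\). I will first verify this identification from the definition \(\pol(X)=X[(X^\top X)^\dagger]^{1/2}\). Since \(X^\top X=V\Sigma^2V^\top\), its pseudoinverse is \(V\Sigma^{-2}V^\top\), and the positive semidefinite square root of this is \(V\Sigma^{-1}V^\top\): the latter is PSD and squares to \(V\Sigma^{-2}V^\top\) because \(V^\top V=I_r\). Hence
\[
\pol(X)=U\Sigma V^\top V\Sigma^{-1}V^\top=UV^\top .
\]

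With this identification, each claim is a one-line check. For the operator norm, \(UV^\top\) is a partial isometry with \(r\ge1\) singular values, all equal to \(1\) (as \(X\neq0\)), so \(\|\pol(X)\|_{\op}=1\). For the pairing with \(X\), cyclicity of the trace and \(U^\top U=V^\top V=I_r\) give
\[
\langle UV^\top,U\Sigma V^\top\rangle_F=\operatorname{tr}(VU^\top U\Sigma V^\top)=\operatorname{tr}(\Sigma)=\|X\|_1 .
\]

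For homogeneity, suppose \(c\neq0\). Then \(cX=(\operatorname{sgn}(c)U)(|c|\Sigma)V^\top\) is again a compact SVD, since \(\operatorname{sgn}(c)U\) still has orthonormal columns and \(|c|\Sigma\) has positive diagonal. Applying the identification to this decomposition gives \(\pol(cX)=\operatorname{sgn}(c)UV^\top=\operatorname{sgn}(c)\pol(X)\).

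The only step needing care is that \(\pol(X)\) does not depend on which compact SVD is chosen. This is automatic here, because \(\pol(X)\) is defined directly through \(X\) and \((X^\top X)^\dagger\), with no reference to a decomposition. The uniqueness of the PSD square root is what makes \([(X^\top X)^\dagger]^{1/2}\) well defined, so the computation above yields the same matrix for every choice of compact SVD. I therefore expect no real obstacle.
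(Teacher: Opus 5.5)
Your proof is correct and follows the same route as the paper: take a compact SVD, identify \(\pol(X)=UV^\top\), and read off the three properties. For \(c<0\) the paper moves the sign onto \(V\) rather than \(U\), which makes no difference, and your derivation of \(\pol(X)=UV^\top\) from the pseudoinverse definition, together with your remark that this definition does not depend on the chosen SVD, supplies a step the paper simply asserts.
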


\begin{proof}
Let $X=U\Sigma V^\top$ be a compact singular value decomposition. Then
$\pol(X)=UV^\top$, so
$\|\pol(X)\|_{\op}=\|UV^\top\|_{\op}=1$. Moreover,
\[
\langle\pol(X),X\rangle_F
=
\operatorname{tr}\bigl((UV^\top)^\top U\Sigma V^\top\bigr)
=
\operatorname{tr}(\Sigma)
=
\|X\|_1.
\]
If $c>0$, then $cX=U(c\Sigma)V^\top$, so $\pol(cX)=\pol(X)$.
If $c<0$, then $cX=U((-c)\Sigma)(-V)^\top$, so
$\pol(cX)=-\pol(X)$.
\end{proof}

\begin{lemma}[Spectral--Schatten-\(1\) duality and subgradient]
\label{lem:spectral-schatten-duality}
For every matrix \(X\),
\[
\|X\|_1
=
\sup_{\|G\|_{\op}\leq1}
\langle G,X\rangle_F.
\]
The supremum is attained at \(G=\pol(X)\). Moreover,
\(\pol(X)\in\partial\|X\|_1\); that is, for every matrix \(Y\),
\[
\|Y\|_1
\geq
\|X\|_1
+
\langle\pol(X),Y-X\rangle_F.
\]
\end{lemma}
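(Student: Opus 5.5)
The statement to prove is the spectral--Schatten-$1$ duality $\|X\|_1=\sup_{\|G\|_{\op}\leq1}\langle G,X\rangle_F$, attained at $\pol(X)$, together with the subgradient inequality. The plan has two pieces: show that every feasible $G$ gives at most $\|X\|_1$, then use Lemma~\ref{lem:polar-basic} to show that $\pol(X)$ attains this value. The subgradient inequality will follow from these two facts alone.

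For the upper bound, take a compact SVD $X=U\Sigma V^\top$ with $\Sigma=\diag(\sigma_1,\ldots,\sigma_r)$. For any $G$ with $\|G\|_{\op}\leq1$, write
\[
\langle G,X\rangle_F=\operatorname{tr}(G^\top U\Sigma V^\top)=\sum_{i=1}^r\sigma_i\,\boldsymbol u_i^\top G\boldsymbol v_i .
\]
Each coefficient satisfies $|\boldsymbol u_i^\top G\boldsymbol v_i|\leq\|G\|_{\op}\|\boldsymbol u_i\|_2\|\boldsymbol v_i\|_2\leq1$. Hence $\langle G,X\rangle_F\leq\sum_i\sigma_i=\|X\|_1$.

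For attainment, suppose $X\neq0$. Lemma~\ref{lem:polar-basic} gives $\|\pol(X)\|_{\op}=1$ and $\langle\pol(X),X\rangle_F=\|X\|_1$, so $\pol(X)$ is feasible and achieves the bound. If $X=0$, then $\pol(X)=0$ and both sides vanish.

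For the subgradient inequality, fix any $Y$. Applying the upper bound to $Y$ with the feasible choice $G=\pol(X)$ (note $\|\pol(X)\|_{\op}\leq1$) gives $\|Y\|_1\geq\langle\pol(X),Y\rangle_F$. By attainment, $\langle\pol(X),Y\rangle_F=\|X\|_1+\langle\pol(X),Y-X\rangle_F$. Combining the two yields $\|Y\|_1\geq\|X\|_1+\langle\pol(X),Y-X\rangle_F$.

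None of these steps is a real obstacle. The one point needing care is the degenerate case $X=0$, where the compact SVD is empty and $\pol(0)=0$ by the pseudoinverse definition. In that case the sup is $0$, and the subgradient inequality reduces to $\|Y\|_1\geq0$.
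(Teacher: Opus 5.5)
Your proposal is correct and follows the paper's proof essentially step for step. You bound $\langle G,X\rangle_F$ through the compact SVD; your coefficients $\boldsymbol u_i^\top G\boldsymbol v_i$ are exactly the diagonal entries of $U^\top GV$ that the paper bounds. You then obtain attainment from Lemma~\ref{lem:polar-basic} and deduce the subgradient inequality from $\|Y\|_1\geq\langle\pol(X),Y\rangle_F$ together with $\langle\pol(X),X\rangle_F=\|X\|_1$, with the degenerate case $X=0$ handled as in the paper.
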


\begin{proof}
If $X=0$, the claims are immediate. Suppose $X\neq0$, and write
$X=U\Sigma V^\top$. For any $G$ satisfying $\|G\|_{\op}\leq1$,
\[
\langle G,X\rangle_F
=
\operatorname{tr}\bigl((U^\top GV)^\top\Sigma\bigr)
\leq
\sum_{i=1}^r\sigma_i
=
\|X\|_1,
\]
since $\|U^\top GV\|_{\op}\leq1$ implies that each diagonal entry of
$U^\top GV$ has absolute value at most one. Taking
$G=\pol(X)=UV^\top$ gives equality by Lemma~\ref{lem:polar-basic},
proving the duality.

Consequently, for every matrix $Y$,
$\|Y\|_1\geq\langle\pol(X),Y\rangle_F
=\|X\|_1+\langle\pol(X),Y-X\rangle_F$, where the equality follows from
Lemma~\ref{lem:polar-basic}. This is the subgradient inequality.
\end{proof}

\begin{lemma}[Gradient of the Schatten \(1\)-norm at a full-rank matrix]
\label{lem:gradient-schatten-one}
Let \(X\in\mathbb R^{m\times n}\) have full column rank. Then
\(\nabla\|X\|_1=\pol(X)\). Equivalently, for every perturbation
\(E\in\mathbb R^{m\times n}\),
\[
\dd\|\cdot\|_1(X)[E]
=
\left.\frac{d}{dt}\right|_{t=0}\|X+tE\|_1
=
\langle\pol(X),E\rangle_F.
\]
The analogous statement holds for full row rank matrices by applying the
result to \(X^\top\).
\end{lemma}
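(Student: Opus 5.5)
The plan is to write the Schatten $1$-norm through the factorization already used in \eqref{eq:normalized-schatten-factorization}, namely $\|X\|_1=\operatorname{tr}\bigl((X^\top X)^{1/2}\bigr)$, and differentiate it by the chain rule. The point of the full-column-rank hypothesis is that $M:=X^\top X\in\mathbb R^{n\times n}$ is symmetric positive definite. The map $G(Y)=Y^\top Y$ is polynomial. Near $M$, the map $A\mapsto A^{1/2}$ is $C^\infty$ by Lemma~\ref{lem:uniform-square-root-derivatives}, applied with $\ell,u$ equal to the extreme eigenvalues of $M$. Since the trace is linear, $Y\mapsto\|Y\|_1$ is differentiable at $X$, and a formula for the derivative is all that remains.

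\textbf{Step 1 (derivative of the trace of the square root).} Set $S:=M^{1/2}\succ0$ and let $\dot S:=\dd(A^{1/2})(M)[H]$ for a symmetric direction $H$. Differentiating $S^2=A$ gives the Sylvester equation $S\dot S+\dot S S=H$. Multiply on the left by $S^{-1}$ and take traces; cyclicity gives $\operatorname{tr}(S^{-1}\dot S S)=\operatorname{tr}(\dot S)$, so
\[
2\operatorname{tr}(\dot S)=\operatorname{tr}(S^{-1}H),
\qquad\text{that is,}\qquad
\dd\operatorname{tr}(\cdot^{1/2})(M)[H]=\tfrac12\operatorname{tr}(M^{-1/2}H).
\]

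\textbf{Step 2 (chain rule).} Take $H=\dd G(X)[E]=X^\top E+E^\top X$. The two terms contribute equally because $M^{-1/2}$ is symmetric, so
\[
\dd\|\cdot\|_1(X)[E]
=\tfrac12\operatorname{tr}\bigl(M^{-1/2}(X^\top E+E^\top X)\bigr)
=\operatorname{tr}(M^{-1/2}X^\top E)
=\langle XM^{-1/2},E\rangle_F .
\]

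\textbf{Step 3 (identification).} When $X$ has full column rank, $(X^\top X)^\dagger=(X^\top X)^{-1}$. The definition of the polar factor in Section~\ref{sec:framework} then gives $\pol(X)=X(X^\top X)^{-1/2}=XM^{-1/2}$, which proves $\nabla\|X\|_1=\pol(X)$.

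An alternative route avoids the Sylvester computation. Lemma~\ref{lem:spectral-schatten-duality} shows $\pol(X)\in\partial\|X\|_1$. A convex function that is differentiable at a point has that subdifferential equal to the singleton $\{\nabla\|X\|_1\}$, and differentiability was established above, so the identification follows. I would record this as a one-line cross-check.

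\textbf{Full row rank case.} Use $\|X\|_1=\|X^\top\|_1$ and $\pol(X^\top)=\pol(X)^\top$, both visible from the compact SVD.

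The only mildly delicate point is Step 1, where the trace of the Fréchet derivative of the square root must be extracted without solving the Sylvester equation explicitly. The trace trick handles this directly. If one prefers, the holomorphic functional-calculus representation from Lemma~\ref{lem:uniform-square-root-derivatives} gives the same result, since $\operatorname{tr}(R_zHR_z)=\operatorname{tr}(R_z^2H)$ and $\frac{1}{2\pi i}\int_\Gamma z^{1/2}(z-A)^{-2}\,\dd z=\tfrac12A^{-1/2}$.
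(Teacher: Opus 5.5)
Your proposal is correct and follows essentially the same route as the paper: write $\|X\|_1=\operatorname{tr}\bigl((X^\top X)^{1/2}\bigr)$, differentiate the square-root identity to get $S\dot S+\dot S S=X^\top E+E^\top X$, multiply by $S^{-1}$ and use cyclicity of the trace, then identify $X(X^\top X)^{-1/2}$ with $\pol(X)$. The paper differentiates along the line $X+tE$ directly instead of splitting the computation into a chain rule, which makes no difference; your subdifferential cross-check via Lemma~\ref{lem:spectral-schatten-duality} is also valid and not circular, since that lemma is proved directly from the SVD.
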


\begin{proof}
Assume $X$ has full column rank and set $H:=(X^\top X)^{1/2}$.
Then $H\succ0$, $\|X\|_1=\operatorname{tr}(H)$, and
$\pol(X)=XH^{-1}$.

For an arbitrary perturbation $E$, let
$H(t):=((X+tE)^\top(X+tE))^{1/2}$. Since $X$ has full column rank, so does
$X+tE$ for all $t$ sufficiently close to $0$, and hence
$\|X+tE\|_1=\operatorname{tr}(H(t))$ for such $t$.
The square-root map is smooth on the positive definite cone, for instance
by the holomorphic functional calculus representation
\[
A^{1/2}
=
\frac{1}{2\pi i}
\int_\Gamma z^{1/2}(zI-A)^{-1}\,dz,
\]
with $\Gamma$ enclosing the spectrum of $A$ in the right half-plane.
Thus $H(t)$ is differentiable near $t=0$.

Writing $\dot H:=H'(0)$ and differentiating
$H(t)^2=(X+tE)^\top(X+tE)$ gives
$H\dot H+\dot H H=X^\top E+E^\top X$.
Multiplying by $H^{-1}$, taking traces, and using cyclicity yields
\[
2\operatorname{tr}(\dot H)
=
2\operatorname{tr}\bigl((XH^{-1})^\top E\bigr)
=
2\langle\pol(X),E\rangle_F.
\]
Therefore
$\left.\frac{d}{dt}\right|_{t=0}\|X+tE\|_1
=\operatorname{tr}(\dot H)=\langle\pol(X),E\rangle_F$.
Hence $\nabla\|X\|_1=\pol(X)$. The full row rank case follows by applying
the result to $X^\top$.
\end{proof}

\begin{lemma}[Local regularity of the polar factor]
\label{lem:polar-continuity}
Let \(X\in\mathbb R^{m\times n}\) have full column rank. Then the map
\(Y\mapsto\pol(Y)\) is locally Lipschitz in a neighborhood of \(X\).
In particular, if \(X_j\to X\), then
\(\pol(X_j)\to\pol(X)\). The same statements hold for full row rank
matrices.
\end{lemma}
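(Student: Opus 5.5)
The plan is to reduce everything to the compact-SVD representation of the polar factor on a neighborhood where the rank is locally constant. Since $X$ has full column rank, $\sigma_{\min}(X)=:\sigma_0>0$. By Weyl's inequality, every $Y$ with $\|Y-X\|_{\op}\leq\sigma_0/2$ satisfies $\sigma_{\min}(Y)\geq\sigma_0/2$. In particular, $Y$ also has full column rank, and $Y^\top Y\succeq(\sigma_0^2/4)I_n$. On this neighborhood the pseudoinverse in the definition of $\pol$ is an honest inverse, so
\[
\pol(Y)=Y\,(Y^\top Y)^{-1/2}.
\]
This is exactly the representation used in the proof of Lemma~\ref{lem:gradient-schatten-one}.

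The Lipschitz estimate then comes from splitting the difference:
\[
\pol(Y)-\pol(Y')=(Y-Y')(Y^\top Y)^{-1/2}+Y'\bigl[(Y^\top Y)^{-1/2}-(Y'^\top Y')^{-1/2}\bigr].
\]
The first term is bounded by $(2/\sigma_0)\|Y-Y'\|$. For the second term, $\|Y'\|_{\op}\leq\|X\|_{\op}+\sigma_0/2$ is uniformly bounded on the neighborhood. It therefore suffices to show that $A\mapsto A^{-1/2}$ is Lipschitz on the compact set of symmetric matrices whose spectrum lies in $[\sigma_0^2/4,\ (\|X\|_{\op}+\sigma_0/2)^2]$. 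This is the only step needing any care, and I would handle it with the holomorphic functional calculus, as in Lemma~\ref{lem:uniform-square-root-derivatives}. Write
\[
A^{-1/2}=\frac{1}{2\pi i}\int_\Gamma z^{-1/2}(zI-A)^{-1}\,\dd z,
\]
with a fixed contour $\Gamma$ in the right half-plane enclosing that interval. The resolvent identity $(zI-A)^{-1}-(zI-B)^{-1}=(zI-A)^{-1}(A-B)(zI-B)^{-1}$ then yields a Lipschitz constant depending only on $\sigma_0$ and $\|X\|_{\op}$.

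To finish, I would use that $Y\mapsto Y^\top Y$ is Lipschitz on bounded sets, with $\|Y^\top Y-Y'^\top Y'\|\leq(\|Y\|+\|Y'\|)\|Y-Y'\|$. Composing this with the previous step gives $\|\pol(Y)-\pol(Y')\|_{\op}\leq L\|Y-Y'\|_{\op}$ on the neighborhood, where $L$ depends only on $\sigma_0$ and $\|X\|_{\op}$. Any norm works here by finite-dimensional equivalence. Sequential continuity, $\pol(X_j)\to\pol(X)$, follows immediately because $X_j$ eventually lies in the neighborhood.

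For the full-row-rank case, I would use $\pol(Y^\top)=\pol(Y)^\top$, which is evident from the compact SVD, and apply the column case to $Y^\top$. I do not expect a real obstacle. The one point to state explicitly is that rank is locally constant near a full-rank matrix, so the pseudoinverse never jumps; without full rank the polar factor is genuinely discontinuous.
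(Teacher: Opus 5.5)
Your proposal is correct and follows essentially the same route as the paper. Both use the representation $\pol(Y)=Y(Y^\top Y)^{-1/2}$ on the open set of full-column-rank matrices, regularity of $A\mapsto A^{-1/2}$ via the holomorphic functional calculus, and transposition for the full-row-rank case; you make the Lipschitz constant explicit (via Weyl's inequality and the resolvent identity) where the paper simply invokes smoothness.
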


\begin{proof}
The set of full-column-rank matrices is open, and on this set
$\pol(Y)=Y(Y^\top Y)^{-1/2}$. Since
$A\mapsto A^{-1/2}$ is smooth on the positive definite cone
(for instance by the holomorphic functional calculus), the polar map is
smooth, and hence locally Lipschitz, in a neighborhood of $X$.
The continuity assertion follows immediately. The full row rank case follows
by applying the result to $X^\top$.
\end{proof}

\begin{lemma}[Orthonormalization identity]
\label{lem:polar-orthonormalization}
Let \(U\in\mathbb R^{m_1\times n}\) and \(V\in\mathbb R^{m_2\times n}\) have
full column rank. Let \(G_U:=U^\top U\) and \(G_V:=V^\top V\).
Then, for every \(M\in\mathbb R^{n\times n}\), the nonzero singular values of
\(UMV^\top\) coincide with those of \(G_U^{1/2}MG_V^{1/2}\):
\[
    \sigma_j(UMV^\top)
    =
    \sigma_j(G_U^{1/2}MG_V^{1/2}),
    \qquad j\in[n].
\]
Consequently,
\[
    \operatorname{rank}(UMV^\top)
    =
    \operatorname{rank}(M),
    \qquad
    \|UMV^\top\|_1
    =
    \|G_U^{1/2}MG_V^{1/2}\|_1.
\]
Moreover,
\[
\pol(UMV^\top)
=
\overline U\,
\pol(G_U^{1/2}MG_V^{1/2})\,
\overline V^\top,
\qquad
\overline U:=UG_U^{-1/2},\quad
\overline V:=VG_V^{-1/2}.
\]
\end{lemma}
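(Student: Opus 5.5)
The plan is to reduce everything to the $n\times n$ core by producing orthonormal bases for the column spans of $U$ and $V$. Since $U$ and $V$ have full column rank, $G_U$ and $G_V$ are positive definite, so $G_U^{\pm1/2}$ and $G_V^{\pm1/2}$ exist. Set $\overline U:=UG_U^{-1/2}$ and $\overline V:=VG_V^{-1/2}$. A direct computation gives
\[
\overline U^\top\overline U=G_U^{-1/2}G_UG_U^{-1/2}=I_n,
\]
and the same holds for $\overline V$. Thus both matrices have orthonormal columns. The key factorization is then
\[
UMV^\top=\overline U\,Y\,\overline V^\top,\qquad Y:=G_U^{1/2}MG_V^{1/2},
\]
which follows by inserting $G_U^{-1/2}G_U^{1/2}=I$ and $G_V^{1/2}G_V^{-1/2}=I$.

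Next I would take a compact SVD $Y=P\Sigma Q^\top$ with $P,Q$ having orthonormal columns in $\mathbb R^n$. Then $\overline UP$ and $\overline VQ$ again have orthonormal columns, since $(\overline UP)^\top\overline UP=P^\top P=I$. Hence
\[
UMV^\top=(\overline UP)\Sigma(\overline VQ)^\top
\]
is a compact SVD of $UMV^\top$. This immediately yields equality of the nonzero singular values, and therefore equal rank and equal Schatten-$1$ norm.

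For the rank identity, $\operatorname{rank}(Y)=\operatorname{rank}(M)$ because $G_U^{1/2}$ and $G_V^{1/2}$ are invertible. For the polar identity, the compact SVD above gives
\[
\pol(UMV^\top)=\overline UPQ^\top\overline V^\top=\overline U\,\pol(Y)\,\overline V^\top,
\]
using $\pol(Y)=PQ^\top$. This agrees with the pseudoinverse definition $\operatorname{Polar}(A)=A[(A^\top A)^\dagger]^{1/2}$, since that formula equals $\mathcal U\mathcal V^\top$ for any compact SVD $A=\mathcal U\Sigma\mathcal V^\top$. The singular value identity as stated for $j\in[n]$ should be read with the convention that both sides are padded by zeros beyond the rank.

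There is no substantial obstacle here. The only point needing care is that the compact SVD is not unique, so one must check that the polar factor is well defined. I would cover this by the pseudoinverse formula, which depends only on $A$ and not on the choice of SVD.
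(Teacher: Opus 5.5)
Your proposal is correct and follows essentially the same route as the paper: orthonormalize via $\overline U=UG_U^{-1/2}$ and $\overline V=VG_V^{-1/2}$, factor $UMV^\top=\overline U\,(G_U^{1/2}MG_V^{1/2})\,\overline V^\top$, and carry a compact SVD of the $n\times n$ core over to $UMV^\top$. Your extra remarks make explicit two details the paper leaves implicit. The first is the rank equality, via invertibility of $G_U^{1/2}$ and $G_V^{1/2}$. The second is that the polar factor does not depend on the choice of SVD, via the pseudoinverse formula.
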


\begin{proof}
By definition,
$\overline U^\top\overline U
=G_U^{-1/2}U^\top UG_U^{-1/2}=I$, and similarly
$\overline V^\top\overline V=I$. Also
$U=\overline U G_U^{1/2}$ and $V=\overline V G_V^{1/2}$, so
\[
UMV^\top
=
\overline U
\left(
G_U^{1/2}MG_V^{1/2}
\right)
\overline V^\top.
\]
Multiplication on the left and right by matrices with orthonormal columns
preserves all nonzero singular values. Hence
$\|UMV^\top\|_1
=\|G_U^{1/2}MG_V^{1/2}\|_1$, and the same singular value decomposition
shows that the compact polar factor is
$\overline U\,\pol(G_U^{1/2}MG_V^{1/2})\,\overline V^\top$.
\end{proof}

\begin{lemma}[Differential of the polar factor at a positive diagonal matrix]
\label{lem:polar-differential-diagonal}
Let \(\Lambda=\operatorname{diag}(d_1,\ldots,d_n)\), where \(d_i>0\). For a
perturbation \(E\in\mathbb R^{n\times n}\), define \(X(t):=\Lambda+tE\). Then
\[
\pol(X(t))=I+tK+o(t),
\]
where \(K\) is skew-symmetric and is determined by
\(K\Lambda+\Lambda K=E-E^\top\). Equivalently,
\[
K_{ij}=\frac{E_{ij}-E_{ji}}{d_i+d_j}\quad(i\neq j),
\qquad
K_{ii}=0.
\]
\end{lemma}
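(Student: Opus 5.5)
The plan is to compute the polar factor through its factorization $\pol(X)=X(X^\top X)^{-1/2}$, which holds on the open set of invertible matrices. Since $\Lambda$ is invertible, $X(t)=\Lambda+tE$ stays invertible for small $t$. By Lemma~\ref{lem:polar-continuity}, $\pol$ is smooth there, so $\pol(X(t))=I+tK+o(t)$ for $K:=\frac{d}{dt}\pol(X(t))|_{t=0}$. We use $\pol(\Lambda)=I$ because $\Lambda\succ0$.

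\textbf{Skew-symmetry.} Let $Q(t):=\pol(X(t))$. For $t$ small, $X(t)$ is square and invertible, so $Q(t)$ is orthogonal: $Q(t)^\top Q(t)=I$. Differentiating at $t=0$ with $Q(0)=I$ gives $K^\top+K=0$. In particular $K_{ii}=0$.

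\textbf{The Sylvester equation.} Write $X(t)=Q(t)P(t)$ with $P(t):=(X(t)^\top X(t))^{1/2}$, which is symmetric positive definite. We have $P(0)=\Lambda$, and $P$ is differentiable because the square root is smooth on the positive definite cone. Differentiating the factorization at $0$ gives
\[
E=K\Lambda+\dot P,
\]
where $\dot P$ is symmetric. Subtracting the transpose removes $\dot P$:
\[
E-E^\top=K\Lambda-\Lambda K^\top=K\Lambda+\Lambda K,
\]
using $K^\top=-K$. This is the claimed equation.

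\textbf{Entries.} Entrywise the equation reads $K_{ij}d_j+d_iK_{ij}=E_{ij}-E_{ji}$. Since $d_i+d_j>0$, this gives $K_{ij}=(E_{ij}-E_{ji})/(d_i+d_j)$ for $i\neq j$. For $i=j$, the right side is $0$, so $K_{ii}=0$, consistent with skew-symmetry. Because $d_i+d_j>0$ for all pairs, the Sylvester equation has a unique solution, so $K$ is fully determined.

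\textbf{Main obstacle.} The only delicate point is justifying differentiability of $Q$ and $P$ rather than assuming it. Lemma~\ref{lem:polar-continuity} and the smoothness of $A\mapsto A^{1/2}$ from Lemma~\ref{lem:uniform-square-root-derivatives} supply this. Everything else is routine algebra.
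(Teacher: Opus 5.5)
Your proposal is correct and follows essentially the same route as the paper: write $X(t)=Q(t)P(t)$, differentiate $Q^\top Q=I$ to get skew-symmetry, differentiate the factorization to get $E=K\Lambda+\dot P$, and subtract the transpose to eliminate the symmetric $\dot P$. The only difference is that you explicitly justify differentiability of $Q$ and $P$ via smoothness of $Y\mapsto Y(Y^\top Y)^{-1/2}$ and of the square root, whereas the paper takes this for granted.
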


\begin{proof}
For sufficiently small $t$, the matrix $X(t)$ is invertible. Write its polar
decomposition as $X(t)=Q(t)H(t)$, where $Q(t)$ is orthogonal and $H(t)$ is
symmetric positive definite. Since $\Lambda$ is positive diagonal,
$Q(0)=I$ and $H(0)=\Lambda$. Let $K:=Q'(0)$. Differentiating
$Q(t)^\top Q(t)=I$ at $t=0$ gives $K^\top+K=0$, so $K$ is skew-symmetric.

Differentiating $\Lambda+tE=Q(t)H(t)$ at $t=0$ gives
$E=K\Lambda+H'(0)$. Since $H'(0)$ is symmetric, taking transposes gives
$E^\top=-\Lambda K+H'(0)$. Subtracting yields
$E-E^\top=K\Lambda+\Lambda K$. Since $\Lambda$ is diagonal,
$(K\Lambda+\Lambda K)_{ij}=(d_i+d_j)K_{ij}$, and hence
$K_{ij}=(E_{ij}-E_{ji})/(d_i+d_j)$ for $i\neq j$.
For $i=j$, skew-symmetry gives $K_{ii}=0$.
\end{proof}

\begin{lemma}[Hessian of the Schatten \(1\)-norm at a positive diagonal matrix]
\label{lem:schatten-one-hessian-diagonal}
Let \(\Lambda=\operatorname{diag}(d_1,\ldots,d_n)\), where \(d_i>0\). Then,
for every perturbation \(E\in\mathbb R^{n\times n}\),
\[
\nabla^2\|\cdot\|_1(\Lambda)[E,E]
=
\sum_{i<j}
\frac{(E_{ij}-E_{ji})^2}{d_i+d_j}.
\]
Consequently,
\[
\|\Lambda+tE\|_1
=
\operatorname{tr}(\Lambda)
+
t\operatorname{tr}(E)
+
\frac{t^2}{2}
\sum_{i<j}
\frac{(E_{ij}-E_{ji})^2}{d_i+d_j}
+
o(t^2).
\]
\end{lemma}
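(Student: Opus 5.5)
The plan is to derive the Hessian from the first-order expansion of the polar factor, Lemma~\ref{lem:polar-differential-diagonal}, together with the gradient formula, Lemma~\ref{lem:gradient-schatten-one}. Since \(\Lambda\) is positive diagonal, it is invertible, so \(\Lambda+tE\) has full rank for all \(|t|\) small. For such \(t\), the function \(f(t):=\|\Lambda+tE\|_1\) is smooth; this follows as in Lemma~\ref{lem:gradient-schatten-one} by writing \(f(t)=\operatorname{tr}\bigl(((\Lambda+tE)^\top(\Lambda+tE))^{1/2}\bigr)\) and using smoothness of the square root on the positive definite cone. By Lemma~\ref{lem:gradient-schatten-one},
\[
f'(t)=\langle\pol(\Lambda+tE),E\rangle_F .
\]
At \(t=0\) we have \(\pol(\Lambda)=I\), so \(f(0)=\operatorname{tr}(\Lambda)\) and \(f'(0)=\operatorname{tr}(E)\).

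Next I would differentiate \(f'\) once more. By Lemma~\ref{lem:polar-differential-diagonal}, \(\pol(\Lambda+tE)=I+tK+o(t)\), where \(K_{ij}=(E_{ij}-E_{ji})/(d_i+d_j)\) for \(i\neq j\) and \(K_{ii}=0\). Hence
\[
f''(0)=\langle K,E\rangle_F=\sum_{i\neq j}\frac{(E_{ij}-E_{ji})E_{ij}}{d_i+d_j}.
\]
To simplify, I would group the ordered pairs \((i,j)\) and \((j,i)\). Using the symmetry of the denominator \(d_i+d_j\), each unordered pair \(i<j\) contributes \((E_{ij}-E_{ji})E_{ij}+(E_{ji}-E_{ij})E_{ji}=(E_{ij}-E_{ji})^2\). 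This gives
\[
\nabla^2\|\cdot\|_1(\Lambda)[E,E]=f''(0)=\sum_{i<j}\frac{(E_{ij}-E_{ji})^2}{d_i+d_j}.
\]
The expansion \(\|\Lambda+tE\|_1=\operatorname{tr}(\Lambda)+t\operatorname{tr}(E)+\tfrac{t^2}{2}f''(0)+o(t^2)\) then follows from Taylor's theorem applied to the smooth function \(f\).

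The main obstacle is rigor in identifying \(f''(0)\) with the derivative of \(t\mapsto\langle\pol(\Lambda+tE),E\rangle_F\) at \(0\). Lemma~\ref{lem:polar-differential-diagonal} yields only the little-\(o\) expansion of the polar factor. That expansion is nevertheless enough: \(f'\) exists near \(0\), and the difference quotient \((f'(t)-f'(0))/t=\langle K,E\rangle_F+o(1)\) converges, so \(f''(0)\) exists and equals \(\langle K,E\rangle_F\). Taylor's theorem in the Peano form then gives the \(o(t^2)\) remainder. Smoothness of the polar map on full-rank matrices, Lemma~\ref{lem:polar-continuity}, also makes \(f\) \(C^2\), which removes any remaining doubt.
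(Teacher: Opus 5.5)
Your proposal is correct and follows essentially the same route as the paper: the gradient formula gives $f'(t)=\langle\pol(\Lambda+tE),E\rangle_F$, the first-order polar expansion gives $f''(0)=\langle K,E\rangle_F$, and pairing the ordered indices $(i,j)$ and $(j,i)$ yields $\sum_{i<j}(E_{ij}-E_{ji})^2/(d_i+d_j)$. Your remark that the little-$o$ expansion already makes $f''(0)$ exist, which is enough for the Peano remainder, is a slightly more careful version of the paper's closing step.
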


\begin{proof}
Since $\Lambda$ is invertible, the Schatten $1$-norm is differentiable in a
neighborhood of $\Lambda$, and by
Lemma~\ref{lem:gradient-schatten-one},
$\nabla\|X\|_1=\pol(X)$ for $X$ near $\Lambda$. Thus, for
$X(t)=\Lambda+tE$,
\[
\frac{d}{dt}\|X(t)\|_1
=
\langle\pol(X(t)),E\rangle_F.
\]
At $t=0$, $\pol(\Lambda)=I$, so the first derivative is
$\langle I,E\rangle_F=\operatorname{tr}(E)$.
By Lemma~\ref{lem:polar-differential-diagonal},
$\pol(X(t))=I+tK+o(t)$, where
$K_{ij}=(E_{ij}-E_{ji})/(d_i+d_j)$ for $i\neq j$ and $K_{ii}=0$.
Therefore,
\[
\nabla^2\|\cdot\|_1(\Lambda)[E,E]
=
\left.\frac{d}{dt}\right|_{t=0}
\langle\pol(\Lambda+tE),E\rangle_F
=
\langle K,E\rangle_F.
\]
Since $K$ is skew-symmetric,
\[
\langle K,E\rangle_F
=
\sum_{i<j}
K_{ij}(E_{ij}-E_{ji})
=
\sum_{i<j}
\frac{(E_{ij}-E_{ji})^2}{d_i+d_j}.
\]
The Taylor expansion follows from differentiability of the gradient in a
neighborhood of $\Lambda$.
\end{proof}

%% file: app_xp.tex
We provide additional numerical experiments.

\paragraph{Early stopping mitigates overfitting for SpecGD, but not GD in the dispersed shortcut model.}

Figure~\ref{fig:ktau} shows how generalization evolves along training as the shortcut dimension $K$ and the cumulative optimization time
    $\tau := \sum_{\ell=0}^{T}\eta_\ell$ vary. For SpecGD, increasing the training time can substantially degrade test performance, especially as the shortcuts become more dispersed, while an intermediate stopping time preserves good generalization over a much broader range of $K$. In contrast, GD exhibits little comparable benefit from early stopping: once interpolation is reached, its generalization behavior is largely determined by the shortcut geometry. Thus, training time provides an additional control over the implicit bias of SpecGD, whereas it has a much weaker effect on GD.
\begin{figure}
    \centering
    \includegraphics[width=\textwidth]{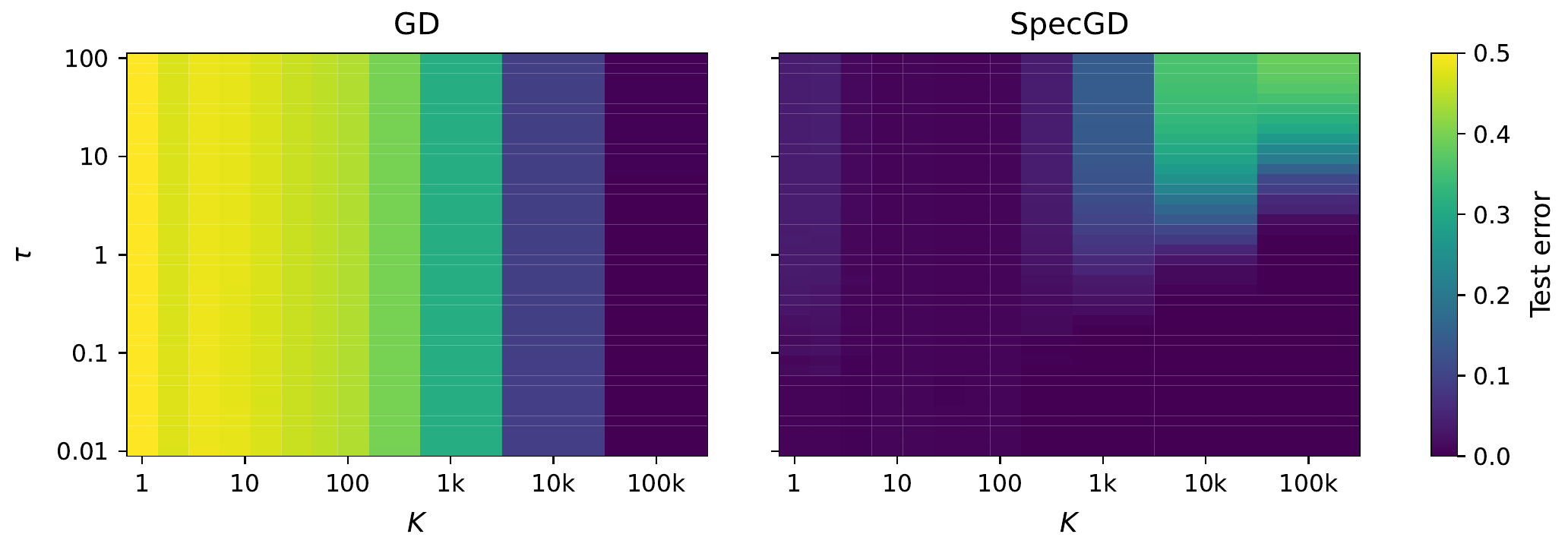}
    \caption{ Test error as a function of shortcut dimension $K$ and cumulative optimization time
    $\tau := \sum_{\ell=0}^{T}\eta_\ell$
    for GD and SpecGD.
    Parameters: $n=100$, $r=1$, $D=d-r=10^5$, $\rho=0.45$, $\lambda=125$,
    $\eta=10^{-2}$, and $T=10^4$}
    \label{fig:ktau}
\end{figure}

\paragraph{Influence of the sample size $n$.} As illustrated by Figure~\ref{fig:nd}, under the dispersed model, increasing the sample size improves the performance of SpecGD.

\begin{figure}
    \centering
    \includegraphics[width=\textwidth]{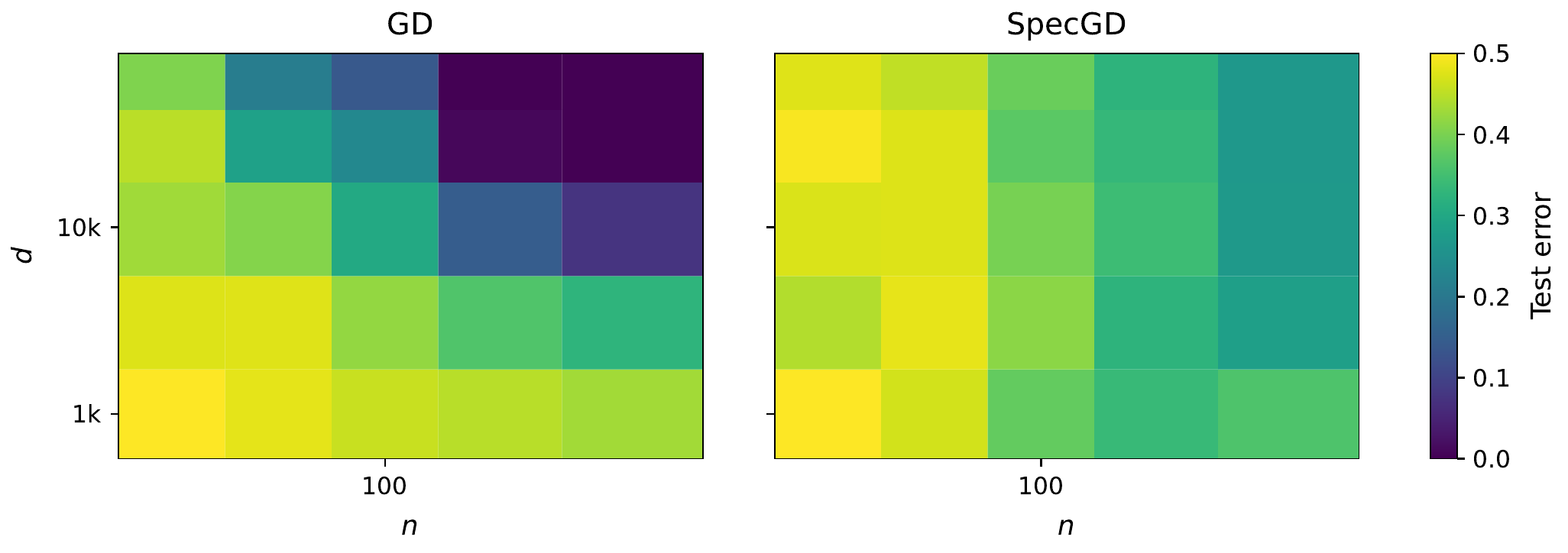}
    \caption{
Test error as a function of the sample size $n$ and ambient dimension $d$ for GD and SpecGD in the random dispersed model, with $r=1$ and $K=D=d-r$. Parameters: $\rho=0.45$, $\lambda=125$, and $\eta=10^{-2}$. We compare all configurations at fixed normalized optimization time $n\tau/D=1.5$. Results are averaged over $10$ independent runs.
}
    \label{fig:nd}
\end{figure}